\newcommand{\overridecommand}[2]{
  \providecommand{#1}{}
  \renewcommand{#1}{#2}
}
\overridecommand{\aa}{\mathbf{a}}
\overridecommand{\bb}{\mathbf{b}}
\overridecommand{\cc}{\mathbf{c}}
\overridecommand{\dd}{\mathbf{d}}
\overridecommand{\ee}{\mathbf{e}}
\overridecommand{\ff}{\mathbf{f}}
\overridecommand{\bmg}{\mathbf{g}}   
\overridecommand{\hh}{\mathbf{h}}
\overridecommand{\ii}{\mathbf{i}}
\overridecommand{\jj}{\mathbf{j}}
\overridecommand{\kk}{\mathbf{k}}
\overridecommand{\bml}{\mathbf{l}}   
\overridecommand{\mm}{\mathbf{m}}
\overridecommand{\nn}{\mathbf{n}}
\overridecommand{\oo}{\mathbf{o}}
\overridecommand{\pp}{\mathbf{p}}
\overridecommand{\qq}{\mathbf{q}}
\overridecommand{\rr}{\mathbf{r}}
\overridecommand{\ss}{\mathbf{s}}
\overridecommand{\tt}{\mathbf{t}}
\overridecommand{\uu}{\mathbf{u}}
\overridecommand{\vv}{\mathbf{v}}
\overridecommand{\ww}{\mathbf{w}}
\overridecommand{\xx}{\mathbf{x}}
\overridecommand{\yy}{\mathbf{y}}
\overridecommand{\zz}{\mathbf{z}}
\overridecommand{\BB}{\mathbf{B}}
\overridecommand{\CC}{\mathbf{C}}
\overridecommand{\DD}{\mathbf{D}}
\overridecommand{\EE}{\mathbf{E}}
\overridecommand{\FF}{\mathbf{F}}
\overridecommand{\GG}{\mathbf{G}}
\overridecommand{\HH}{\mathbf{H}}
\overridecommand{\II}{\mathbf{I}}
\overridecommand{\JJ}{\mathbf{J}}
\overridecommand{\KK}{\mathbf{K}}
\overridecommand{\LL}{\mathbf{L}}
\overridecommand{\MM}{\mathbf{M}}
\overridecommand{\NN}{\mathbf{N}}
\overridecommand{\OO}{\mathbf{O}}
\overridecommand{\PP}{\mathbf{P}}
\overridecommand{\QQ}{\mathbf{Q}}
\overridecommand{\RR}{\mathbf{R}}
\overridecommand{\SS}{\mathbf{S}}
\overridecommand{\TT}{\mathbf{T}}
\overridecommand{\UU}{\mathbf{U}}
\overridecommand{\VV}{\mathbf{V}}
\overridecommand{\WW}{\mathbf{W}}
\overridecommand{\XX}{\mathbf{X}}
\overridecommand{\YY}{\mathbf{Y}}
\overridecommand{\ZZ}{\mathbf{Z}}
\overridecommand{\aalpha}{\boldsymbol{\alpha}}
\overridecommand{\bbeta}{\boldsymbol{\beta}}
\overridecommand{\ggamma}{\boldsymbol{\gamma}}
\overridecommand{\ddelta}{\boldsymbol{\delta}}
\overridecommand{\eepsilon}{\boldsymbol{\epsilon}}
\overridecommand{\vvarepsilon}{\boldsymbol{\varepsilon}}
\overridecommand{\zzeta}{\boldsymbol{\zeta}}
\overridecommand{\eeta}{\boldsymbol{\eta}}
\overridecommand{\ttheta}{\boldsymbol{\theta}}
\overridecommand{\vvartheta}{\boldsymbol{\vartheta}}
\overridecommand{\iiota}{\boldsymbol{\iota}}
\overridecommand{\kkappa}{\boldsymbol{\kappa}}
\overridecommand{\llambda}{\boldsymbol{\lambda}}
\overridecommand{\mmu}{\boldsymbol{\mu}}
\overridecommand{\nnu}{\boldsymbol{\nu}}
\overridecommand{\xxi}{\boldsymbol{\xi}}
\overridecommand{\ppi}{\boldsymbol{\pi}}
\overridecommand{\vvarpi}{\boldsymbol{\varpi}}
\overridecommand{\rrho}{\boldsymbol{\rho}}
\overridecommand{\vvarrho}{\boldsymbol{\varrho}}
\overridecommand{\ssigma}{\boldsymbol{\sigma}}
\overridecommand{\vvarsigma}{\boldsymbol{\varsigma}}
\overridecommand{\ttau}{\boldsymbol{\tau}}
\overridecommand{\uupsilon}{\boldsymbol{\upsilon}}
\overridecommand{\pphi}{\boldsymbol{\phi}}
\overridecommand{\vvarphi}{\boldsymbol{\varphi}}
\overridecommand{\cchi}{\boldsymbol{\chi}}
\overridecommand{\ppsi}{\boldsymbol{\psi}}
\overridecommand{\oomega}{\boldsymbol{\omega}}
\overridecommand{\GGamma}{\boldsymbol{\Gamma}}
\overridecommand{\DDelta}{\boldsymbol{\Delta}}
\overridecommand{\TTheta}{\boldsymbol{\Theta}}
\overridecommand{\LLambda}{\boldsymbol{\Lambda}}
\overridecommand{\XXi}{\boldsymbol{\Xi}}
\overridecommand{\PPi}{\boldsymbol{\Pi}}
\overridecommand{\SSigma}{\boldsymbol{\Sigma}}
\overridecommand{\UUpsilon}{\boldsymbol{\Upsilon}}
\overridecommand{\PPhi}{\boldsymbol{\Phi}}
\overridecommand{\PPsi}{\boldsymbol{\Psi}}
\overridecommand{\OOmega}{\boldsymbol{\Omega}}
\newcommand{\D}{\mathcal{D}}  
\newcommand{\E}{\mathbb{E}}   
\newcommand{\F}{\mathcal{F}}
\newcommand{\I}{\mathcal{I}}
\renewcommand{\L}{\mathcal{L}}   
\newcommand{\M}{\mathcal{M}}
\renewcommand{\P}{\mathcal{P}}
\newcommand{\R}{\mathbb{R}}   
\renewcommand{\S}{\mathcal{S}}
\newcommand{\0}{\mathbf{0}}   
\newcommand{\1}{\mathbf{1}}   
\newcommand{\ind}[1]{\mathbf{1}\{#1\}}
\newcommand{\mis}{\textup{mis}}
\newcommand{\obs}{\textup{obs}}
\newtheorem{theorem}{Theorem}
\newtheorem{proposition}{Proposition}
\newtheorem{corollary}{Corollary}
\newcommand{\misgan}{MisGAN}
\newcommand{\desc}[1]{\textbf{#1}\hspace{.5em}}
\title{MisGAN: Learning from Incomplete Data with Generative Adversarial Networks}
\author{%
  Steven Cheng-Xian Li \\
  University of Massachusetts Amherst \\
  \texttt{cxl@cs.umass.edu}
  \And
  Bo Jiang%
  \\
  Shanghai Jiao Tong University \\
  \texttt{bjiang@sjtu.edu.cn}
  \And
  Benjamin M. Marlin
  \\
  University of Massachusetts Amherst \\
  \texttt{marlin@cs.umass.edu}
}
\begin{document}

\maketitle

\begin{abstract}
  Generative adversarial networks (GANs)
have been shown to provide an effective way to model
complex distributions and have obtained impressive results on various
challenging tasks.
However, typical GANs require fully-observed data during training.
In this paper, we present a GAN-based framework for learning
from complex, high-dimensional incomplete data.
The proposed framework learns a complete
data generator along with a mask generator that models the
missing data distribution.
We further demonstrate how to impute
missing data by equipping our framework with an adversarially
trained imputer.
We evaluate the proposed framework using a series of experiments
with several types of missing data processes under the
missing completely at random assumption.\footnote{
Our implementation is available at \url{https://github.com/steveli/misgan}}

\end{abstract}

\section{Introduction}%
\label{sec:intro}

Generative adversarial networks (GANs) \citep{goodfellow2014generative}
provide a powerful modeling framework for learning complex high-dimensional
distributions.
Unlike likelihood-based methods, GANs are referred to as
implicit probabilistic models \citep{mohamed2016learning}.
They represent a probability distribution through a
generator that learns to directly produce samples from the desired distribution.
The generator is trained adversarially by optimizing a minimax objective
together with a discriminator.
In practice,
GANs have been shown to be very successful in a range of applications including
generating photorealistic images \citep{karras2017progressive}.
Other than generating samples, many downstream tasks
require a good generative model, such as image inpainting
\citep{pathakCVPR16context,yeh2017semantic}.

Training GANs normally requires access to a large collection of fully-observed
data.
However, it is not always possible to obtain a large amount of
fully-observed data.
Missing data is well-known to be prevalent in many real-world application
domains where different data cases might have different missing entries.
This arbitrary missingness poses a significant challenge to
many existing machine learning models.

Following \citet{little2014statistical},
the generative process for incompletely observed
data can be described as shown below where $\xx\in\mathbb{R}^n$
is a complete data vector and $\mm\in\{0,1\}^n$
is a binary mask\footnote{
The complement $\bar{\mm}$ is usually referred to as
the missing data indicator in the literature.}
that determines which entries in $\xx$ to reveal:
\begin{equation}
  \xx\sim p_\theta(\xx), \quad \mm\sim p_\phi(\mm|\xx).
  \label{eq:genprocess}
\end{equation}
Let $\xx_\obs$ denote the observed elements of $\xx$, and
$\xx_\mis$ denote the missing elements according to the mask $\mm$.
In addition, let $\theta$ denote the unknown parameters of the data distribution,
and $\phi$ denote the unknown parameters for the mask distribution, which
are usually assumed to be independent of $\theta$.
In the standard maximum likelihood setting,
the unknown parameters are estimated by
maximizing the following marginal likelihood,
integrating over the unknown missing data values:
\[
  p(\xx_\obs,\mm) =
  \int p_\theta(\xx_\obs,\xx_\mis)p_\phi(\mm|\xx_\obs,\xx_\mis)d\xx_\mis.
\]

\citet{little2014statistical}
characterize the missing data mechanism $p_\phi(\mm|\xx_\obs,\xx_\mis)$
in terms of independence relations between the complete data
$\xx=[\xx_\obs,\xx_\mis]$ and the masks $\mm$:
\begin{itemize}
  \item Missing completely at random (MCAR): $p_\phi(\mm|\xx)=p_\phi(\mm)$,
  \item Missing at random (MAR): $p_\phi(\mm|\xx)=p_\phi(\mm|\xx_\obs)$,
  \item Not missing at random (NMAR):
    $\mm$ depends on $\xx_\mis$ and possibly also $\xx_\obs$.
\end{itemize}
Most work on incomplete data assumes MCAR or MAR
since under these assumptions $p(\xx_\obs,\mm)$ can be factorized into
$p_\theta(\xx_\obs)p_\phi(\mm|\xx_\obs)$.
With such decoupling,
the missing data mechanism can be ignored when learning the
data generating model while yielding correct estimates for $\theta$.
When $p_\theta(\xx)$ does not admit efficient marginalization over $\xx_\mis$,
estimation of $\theta$ is usually performed by maximizing a variational lower bound,
as shown below, using the EM algorithm or a more general approach
\citep{little2014statistical,ghahramani1994supervised}:
\begin{equation}
  \log p_\theta(\xx_\obs)
  \ge
  \E_{q(\xx_\mis|\xx_\obs)}
  \left[
    \log p_\theta(\xx_\obs, \xx_\mis) - \log q(\xx_\mis|\xx_\obs)
  \right].
  \label{eq:elbo}
\end{equation}
The primary contribution of this paper is the development of
a GAN-based framework for learning
high-dimensional data distributions in the presence of incomplete observations.
Our framework introduces an auxiliary GAN for learning a mask distribution
to model the missingness.
The masks are used to ``mask'' generated
complete data by filling the indicated missing entries with a constant value.
The complete data generator is trained so that the resulting masked data
are indistinguishable from real incomplete data that are masked similarly.

Our framework builds on the ideas of AmbientGAN \citep{bora2018ambientgan}.
AmbientGAN modifies the discriminator of a GAN to distinguish
corrupted real samples from corrupted generated samples under a range of
corruption processes (or measurement processes).
For images, examples of the measurement processes include
random dropout, blur, block-patch, and so on.
Missing data can be seen as a special type of corruption,
except that we have access to the missing pattern in addition to the corrupted
measurements.
Moreover,
AmbientGAN assumes the measurement process is known or parameterized only
by a few parameters, which is not the case in general missing data problems.

We provide empirical evidence that the proposed framework is able to
effectively learn complex, high-dimensional data distributions
from highly incomplete data when the GAN generator
incorporates suitable priors on the data generating process.
We further show how the architecture can be used to generate high-quality
imputations.

\section{{\misgan}: A GAN for missing data}%
\label{sec:model}

In the missing data problem, we know exactly which entries in
each data examples are missing.
Therefore, we can represent an incomplete data case as a pair of
a partially-observed data vector $\xx\in\R^n$ and
a corresponding mask $\mm\in\{0,1\}^n$
that indicates which entries in $\xx$ are observed:
$x_d$ is observed if $m_d=1$ otherwise $x_d$ is missing and might
contain an arbitrary value that we should ignore.
With this representation,
an incomplete dataset is denoted
$\D=\{(\xx_i,\mm_i)\}_{i=1,\dots,N}$
(we assume instances are i.i.d.~samples).
We choose this representation instead of $\xx_\obs$ because
it leads to a cleaner description of the proposed {\misgan} framework.
It also suggests how {\misgan} can be implemented efficiently
in practice as both $\xx$ and $\mm$ are fixed-length vectors.

We begin by defining a masking operator $f_\tau$
that fills in missing entries with a
constant value $\tau$:
\begin{equation}
  f_\tau(\xx, \mm) = \xx \odot \mm + \tau\bar{\mm},
  \label{eq:patch}
\end{equation}
where $\bar{\mm}$ denotes the complement of $\mm$
and $\odot$ denotes element-wise multiplication.

\begin{figure}[t]
  \centering
  \includegraphics[width=3.6in]{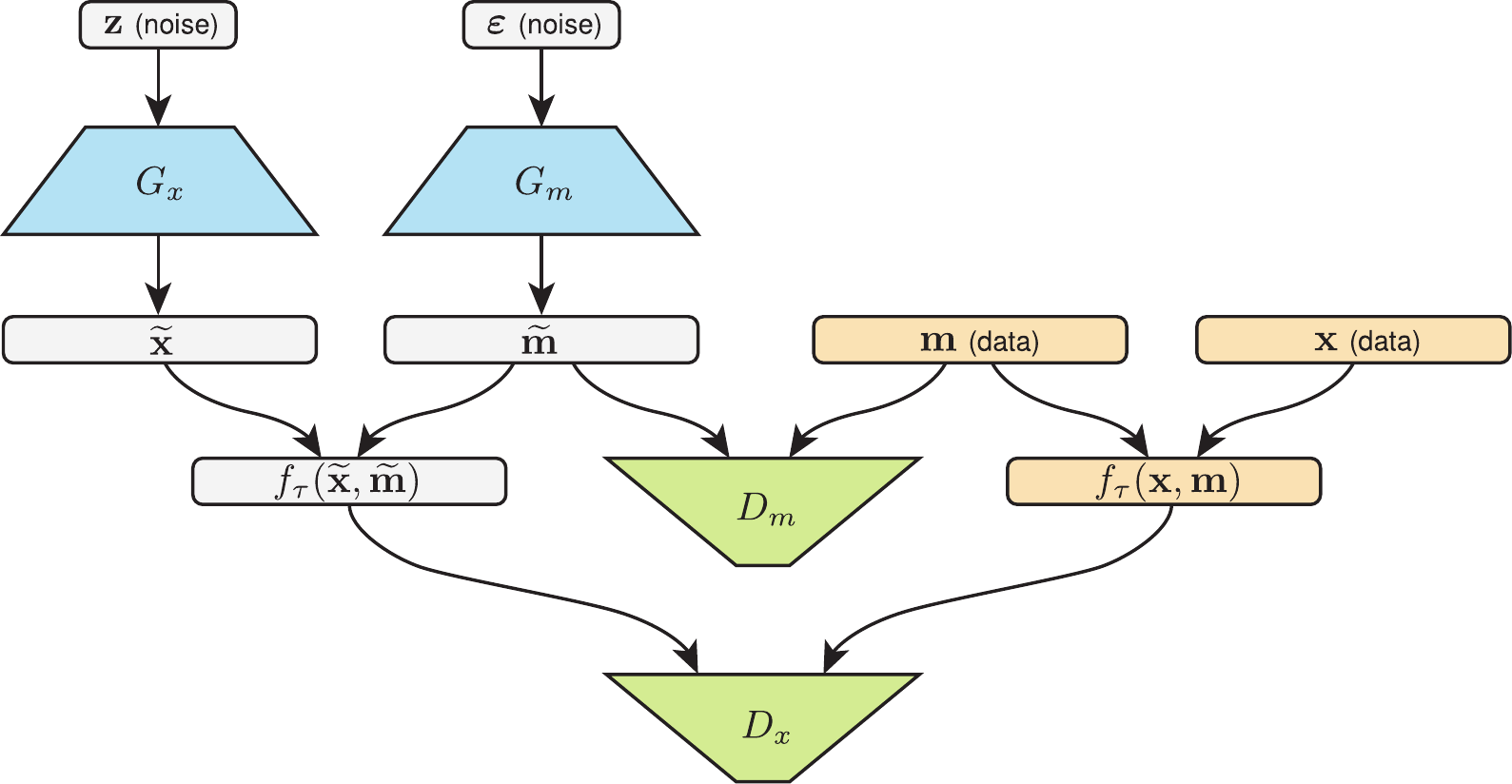}
  \caption{Overall structure of the {\misgan} framework}%
  \label{fig:archmisgan}
\end{figure}

Two key ideas underlie the {\misgan} framework.
First,
in addition to the complete data generator,
we explicitly model the missing data process using a mask generator.
Since the masks in the incomplete dataset are fully observed,
we can estimate their distribution using a standard GAN.
Second, we train the complete data generator adversarially
by masking its outputs using generated masks and $f_\tau$ and comparing
to real incomplete data that are similarly masked by $f_\tau$.

Specifically, we use two generator-discriminator pairs
$(G_m, D_m)$ and $(G_x, D_x)$
for the masks and data respectively.
In this paper, we focus on the missing completely
at random (MCAR) case, where the two generators are independent of each other
and have their own noise distributions $p_z$ and $p_\varepsilon$.
We define the following two loss functions,
one for the masks and the other for the data:
\begin{align}
  \L_m(D_m, G_m) &=
  \E_{(\xx,\mm)\sim p_\D}\left[D_m(\mm)\right] -
  \E_{\vvarepsilon\sim p_\varepsilon}\left[D_m(G_m(\vvarepsilon))\right],
  \label{eq:lossmask}
  \\
  \L_x(D_x, G_x, G_m) &=
  \E_{(\xx,\mm)\sim p_\D}\left[D_x(f_\tau(\xx,\mm))\right] -
  \E_{\vvarepsilon\sim p_\varepsilon, \zz\sim p_z}\left[
  D_x\left(f_\tau\left(G_x(\zz),G_m(\vvarepsilon)\right)\right)\right].
  \label{eq:lossdata}
\end{align}
The losses above follow the
Wasserstein GAN formulation \citep{arjovsky2017wasserstein},
although the proposed framework is compatible with many GAN variations
\citep{goodfellow2014generative,berthelot2017began,
gulrajani2017improved}.
We optimize the generators and the discriminators
according to the following objectives:
\begin{align}
  &\min_{G_x}\max_{D_x\in\F_x} \L_x(D_x, G_x, G_m),
  \label{eq:objdata}
  \\
  &\min_{G_m}\max_{D_m\in\F_m} \L_m(D_m, G_m) + \alpha \L_x(D_x, G_x, G_m),
  \label{eq:objmisgan}
\end{align}
where $\F_x, \F_m$ are defined such that $D_x, D_m$ are both 1-Lipschitz
for Wasserstein GANs \citep{arjovsky2017wasserstein}.
Practically, we follow the common practice of
alternating between a few steps of optimizing the discriminators
and one step of optimizing the generators~\citep{%
goodfellow2014generative,arjovsky2017wasserstein,gulrajani2017improved}.
The coefficient $\alpha$ is introduced when optimizing
the mask generator $G_m$ with the aim of minimizing
a combination of $\L_m$ and $\L_x$.
Although in theory we could choose $\alpha=0$ to train
$G_m$ and $D_m$ without using the data,
we find that choosing a small value such as $\alpha=0.2$
improves performance.
This encourages the generated masks to match the distribution
of the real masks and the masked generated complete samples
to match masked real data.
The overall structure of {\misgan} is illustrated in
Figure~\ref{fig:archmisgan}.

Note that the data discriminator $D_x$ takes as input the masked
samples as if the data are fully-observed.
This allows us to use any existing architecture designed for complete data
to construct the data discriminator.
There is no need to develop
customized neural network modules for dealing with missing data.
For example, $D_x$ can be a standard convolutional network
for image applications.

Note that the masks are binary-valued.
Since discrete data generating processes have zero gradient almost everywhere,
to carry out gradient-based training for GANs, we relax the output
of the mask generator $G_m$ from $\{0,1\}^n$ to $[0, 1]^n$.
We employ a sigmoid activation
$\sigma_\lambda(x) = 1 / (1 + \exp(-x / \lambda))$
with a low temperature $0 < \lambda < 1$ to encourage saturation
and make the output closer to zero or one.

Finally, we note that
the discriminator $D_x$ in {\misgan} is unaware of which entries
are missing in the masked input samples, and does not
even need to know which value $\tau$ is used for masking.
In the next section, we present a theoretical analysis
providing support for the idea that this type of masking process does not necessarily
make it more difficult to recover the complete data distribution.
The experiments provide compelling empirical evidence for the
effectiveness of the proposed framework.

\section{Theoretical results}
\label{sec:theory}

In Section~\ref{sec:model}
we described how the discriminator $D_x$ in {\misgan}
takes as input the masked samples using \eqref{eq:patch}
without knowing what value $\tau$ is used
or which entries in the input vector are missing.
In this section, we discuss the following two important
questions:
\begin{enumerate*}[label={\roman*)}]
  \item
    Does the choice of the filled-in value $\tau$
    affect the ability to recover the data distribution?
  \item Does information about the location of missing values
    affect the ability to recover the data distribution?
\end{enumerate*}

We address these questions in a simplified scenario
where each dimension of the data vector takes values from a finite set $\P$.
For $n$-dimensional data, let $\M=\{0,1\}^n$ be the set of
all possible masks and $\I=\P^n$ be the set of all possible data vectors.
Also let $\D_\M$ and $\D_\I$ be the set of all possible probability
distributions on $\M$ and $\I$ respectively,
whose elements are non-negative and sum to one.
We first discuss the case where the filled-in value $\tau$ is chosen from $\P$.

Given $\tau\in\P$ and $\qq\in\D_\M$,
we can construct a left transition matrix
$T_{\qq,\tau}\in\R^{\I\times\I}$ defined below where
the $(\tt,\ss)$-th entry specifies the transition probability from
a data vector $\ss\in\I$ to
an outcome $\tt\in\I$ masked by $f_\tau$,
which involves all possible masks under which $\ss$
is converted into $\tt$ by filling in the indicated missing entries
with $\tau$:
\[
  T_{\qq,\tau}(\tt,\ss)
  = \sum_{\mm\in\M: f_\tau(\ss,\mm)=\tt}\qq(\mm).
\]

Let $\pp_x^*\in\D_\I$ be the unknown true data distribution we want to
estimate.
In the presence of missing data specified by $\qq$, the masked samples
then follow the distribution $\pp_y = T_{\qq,\tau}\pp_x^*$.
Without imposing extra application-specific constraints,
{\misgan} with a fixed mask generator
can be viewed as solving the linear system
$\pp_y = T_{\qq,\tau}\pp_x$, where $\pp_x\in\D_\I$
is the unknown data distribution to solve for.
Here we assume that $\pp_y$ and $T_{\qq,\tau}$ are given, as those can be
estimated separately from a collection of fully-observed
masks and masked samples.

Note that a  transition matrix preserves the sum of the vectors it is applied to
since $\1^\top T_{\qq,\tau}=\1^\top$. For $\pp_x$ to be a valid
distribution vector, we only need the non-negativity constraint
because any solution $\pp_x$ automatically sums to one.
That is, estimating the data generating process
in the presence of missing data
based on the masking scheme used in {\misgan} 
is equivalent to
solving the linear system
\begin{equation}
  T_{\qq,\tau}\pp_x = \pp_y\ \text{ subject to }\pp_x\succeq\0.
\label{eq:linsys}
\end{equation}

In Theorem \ref{thm:nullspace}, we state a key property of the transition matrix
$T_{\qq,\tau}$ that leads to the answer to our questions.
The proof of Theorem \ref{thm:nullspace} is in Appendix~\ref{sec:proof}.
\begin{theorem}%
  \label{thm:nullspace}
  Given $\qq\in\D_\M$,
  all transition matrices
  $T_{\qq,\tau}$ with $\tau\in\P$ have the same null space.
\end{theorem}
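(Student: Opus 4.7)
The plan is to show $\text{null}(T_{\qq,\tau})$ equals the $\tau$-independent subspace
\[
N := \bigcap_{\mm \in \text{supp}(\qq)} \text{null}(M_\mm),
\]
where $M_\mm \colon \R^\I \to \R^{\P^{\obs(\mm)}}$ is the marginalization operator $(M_\mm\pp)(\xx_\obs) = \sum_{\xx_\mis}\pp(\xx)$ over $Z(\mm) := \{d : m_d = 0\}$. Since $N$ has no $\tau$-dependence, the theorem follows. The key rearrangement is
\[
(T_{\qq,\tau}\pp)(\tt) \;=\; \sum_{\mm \in \M \,:\, Z(\mm)\subseteq Z_\tau(\tt)} \qq(\mm)\,(M_\mm\pp)(\tt_{\obs(\mm)}), \qquad Z_\tau(\tt) := \{d : t_d = \tau\},
\]
valid because $f_\tau(\ss,\mm)=\tt$ iff $Z(\mm)\subseteq Z_\tau(\tt)$ and $\ss_{\obs(\mm)}=\tt_{\obs(\mm)}$. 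The inclusion $N\subseteq\text{null}(T_{\qq,\tau})$ is then immediate.

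For the reverse inclusion, fix $\pp$ with $T_{\qq,\tau}\pp=0$ and set $g_\mm := M_\mm\pp$. I prove the stronger claim that $g_\mm=0$ for every $\mm$ in the downward closure $\overline{\text{supp}(\qq)} := \{\mm : \mm\le\mm'\text{ for some }\mm'\in\text{supp}(\qq)\}$, where $\mm\le\mm' \iff \obs(\mm)\subseteq\obs(\mm')$. The proof is by induction on $|\obs(\mm)|$. The base $\mm=\0$ comes from summing $(T_{\qq,\tau}\pp)(\tt)=0$ over all $\tt\in\I$: after swapping the order of summation, each mask $\mm''$ contributes $\qq(\mm'')\cdot\text{total}(\pp)$, yielding $\text{total}(\pp)=g_\0=0$ because $\sum_{\mm''}\qq(\mm'')=1$.

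For the inductive step, fix $\mm\in\overline{\text{supp}(\qq)}$. For $\vv\in(\P\setminus\{\tau\})^{\obs(\mm)}$ (no $\tau$-coordinate), summing $(T_{\qq,\tau}\pp)(\tt)=0$ over $\tt$ with $\tt_{\obs(\mm)}=\vv$ and invoking the marginal-consistency identity (the marginal of $g_{\mm''}$ over $\obs(\mm'')\setminus\obs(\mm)$ equals $g_{\mm''\wedge\mm}$) shows that only masks $\mm''\ge\mm$ contribute (for any $\mm''\not\ge\mm$, the required $\tau$-values at $\obs(\mm)\cap Z(\mm'')\ne\emptyset$ are absent in $\vv$), each with $\qq(\mm'')g_\mm(\vv)$; hence $\big[\sum_{\mm''\ge\mm}\qq(\mm'')\big]g_\mm(\vv)=0$. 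The bracketed sum is positive since $\mm\le\mm'$ for some $\mm'\in\text{supp}(\qq)$, so $g_\mm(\vv)=0$.

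To extend to $\vv\in\P^{\obs(\mm)}$ with $\tau$-coordinates at a nonempty set $K\subseteq\obs(\mm)$, I run a sub-induction on $|K|$. Let $\mm^K$ be the mask with $\obs(\mm^K)=\obs(\mm)\setminus K$; then $|\obs(\mm^K)|<|\obs(\mm)|$ and $\mm^K\in\overline{\text{supp}(\qq)}$ (the same witness $\mm'$), so the outer inductive hypothesis gives $g_{\mm^K}=0$. Marginal consistency yields
\[
\sum_{\xx_K\in\P^K} g_\mm(\vv|_{\obs(\mm)\setminus K},\xx_K) \;=\; g_{\mm^K}(\vv|_{\obs(\mm)\setminus K}) \;=\; 0;
\]
the terms with $\xx_K$ having no $\tau$-coordinate vanish by the no-$\tau$ case (applied at $(\vv|_{\obs(\mm)\setminus K},\xx_K)\in(\P\setminus\{\tau\})^{\obs(\mm)}$), the terms with $\xx_K$ having a proper nonempty subset of $\tau$-coordinates vanish by the sub-induction, leaving only $\xx_K=\tau_K$, which forces $g_\mm(\vv)=0$. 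The main obstacle is the bookkeeping of marginal-consistency identities together with the interplay of the outer induction on $|\obs(\mm)|$ and the sub-induction on the number of $\tau$-coordinates of $\vv$: one must verify at each step that every value of $g_\mm$ or $g_{\mm^K}$ invoked has already been established to vanish, and that closure membership is preserved under the operations $\mm\mapsto\mm''\wedge\mm$ and $\mm\mapsto\mm^K$.
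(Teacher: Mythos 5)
Your proof is correct, and it rests on the same underlying characterization as the paper's --- namely $\text{Null}(T_{\qq,\tau})=\bigcap_{\mm\in\S_\qq}\text{Null}(M_\mm)$, which is exactly the content of Propositions~\ref{prop:p1} and~\ref{prop:p2} rephrased in terms of your marginalization operators $M_\mm\pp = \{\pp([\vv]_\mm)\}_\vv$ --- but you prove the hard direction by a genuinely different induction. The paper fixes $\tau$ and inducts on $|\M_{\tau,\vv}|$, the number of support masks consistent with a target vector, and in the inductive step manipulates the meet $\bigwedge_\ell \mm_\ell$ to derive an explicit recursive formula expressing each marginal in terms of $T_{\qq,\tau}\xx$; this is essentially a reconstruction procedure. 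You instead induct on $|\obs(\mm)|$ over the downward closure of $\S_\qq$, with a sub-induction on the number of $\tau$-coordinates of the argument. Your key device --- summing the equations $(T_{\qq,\tau}\pp)(\tt)=0$ over the fiber $\{\tt:\tt_{\obs(\mm)}=\vv\}$ for a $\tau$-free $\vv$, so that only masks $\mm''$ with $\obs(\mm)\subseteq\obs(\mm'')$ survive and each contributes $\qq(\mm'')\,g_\mm(\vv)$ --- isolates a single unknown against the strictly positive coefficient $\sum_{\mm'':\,\obs(\mm)\subseteq\obs(\mm'')}\qq(\mm'')$, which is lighter bookkeeping than the paper's meet/join manipulations and also yields the marginally stronger conclusion that the marginals vanish on the whole downward closure of $\S_\qq$ rather than only on $\S_\qq$. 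What you give up is the paper's explicit inversion formula for $\xx([\vv_0]_\mm)$, which the appendix reuses to argue that MisGAN recovers $p(\xx_\obs,\mm)$; your argument still delivers Theorem~\ref{thm:marginals} by linearity, applying the vanishing claim to the difference of two distributions. All the steps I checked --- the rearrangement of $T_{\qq,\tau}$ as a sum of marginals over masks with $Z(\mm)\subseteq Z_\tau(\tt)$, the exclusion of masks $\mm''$ with $\obs(\mm)\cap Z(\mm'')\neq\emptyset$ when $\vv$ avoids $\tau$, the marginal-consistency identities, and the ordering of the outer and inner inductions so that every invoked value of $g$ has already been shown to vanish --- go through, including the degenerate case $\P=\{\tau\}$.
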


Theorem~\ref{thm:nullspace} implies that
if the solution to the constrained linear system~\eqref{eq:linsys} is not
unique for a given $\tau_0\in\P$, that is,
there exists some non-negative $\pp_x\ne\pp_x^*$ such that
$T_{\qq,\tau_0}\pp_x = T_{\qq,\tau_0}\pp_x^*$,
then we must have
$T_{\qq,\tau}\pp_x = T_{\qq,\tau}\pp_x^*$ for all $\tau\in\P$.
In other words, we have the following corollary:
\begin{corollary}%
  Whether the true data distribution is uniquely recoverable
  is independent of the choice of the filled-in value $\tau$.
\end{corollary}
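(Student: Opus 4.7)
The plan is to invoke Theorem~\ref{thm:nullspace} through a direct contrapositive argument, since the corollary is essentially a one-line consequence once the kernels of all $T_{\qq,\tau}$ are known to agree. Unique recoverability at a fixed $\tau$ means that $\pp_x^*$ is the only non-negative solution of the linear system~\eqref{eq:linsys} with right-hand side $\pp_y = T_{\qq,\tau}\pp_x^*$. Equivalently, there is no non-negative $\pp_x\neq\pp_x^*$ with $\pp_x-\pp_x^*\in\ker(T_{\qq,\tau})$.

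First I would fix any $\tau_0\in\P$ and suppose recoverability fails at $\tau_0$: there exists a non-negative $\pp_x\neq\pp_x^*$ such that $T_{\qq,\tau_0}(\pp_x-\pp_x^*)=\0$, so $\pp_x-\pp_x^*$ is a nonzero vector in $\ker(T_{\qq,\tau_0})$. By Theorem~\ref{thm:nullspace}, $\ker(T_{\qq,\tau_0})=\ker(T_{\qq,\tau})$ for every $\tau\in\P$, hence $T_{\qq,\tau}\pp_x = T_{\qq,\tau}\pp_x^*$ for every $\tau$. The non-negativity constraint $\pp_x\succeq\0$ does not involve $\tau$, so the same $\pp_x$ witnesses non-unique recoverability at every $\tau\in\P$. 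Reversing the roles of $\tau_0$ and $\tau$ yields the converse, establishing that unique recoverability either holds for all $\tau\in\P$ or for none, which is the corollary.

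I do not expect any real obstacle: the entire mathematical content is carried by Theorem~\ref{thm:nullspace}. The only subtlety worth noting is that the observation distribution $\pp_y = T_{\qq,\tau}\pp_x^*$ itself depends on $\tau$, so one is not comparing solutions to a single fixed linear system across different $\tau$. However, the question of \emph{uniqueness} reduces to a kernel condition on $\pp_x-\pp_x^*$, and that condition is $\tau$-independent by Theorem~\ref{thm:nullspace}, which is exactly what makes the argument go through cleanly.
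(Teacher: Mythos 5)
Your proof is correct and follows essentially the same route as the paper: the paper likewise derives the corollary directly from Theorem~\ref{thm:nullspace} by observing that a non-negative witness $\pp_x\neq\pp_x^*$ to non-uniqueness at one $\tau_0$ satisfies $T_{\qq,\tau}\pp_x=T_{\qq,\tau}\pp_x^*$ for every $\tau\in\P$ because the null spaces coincide. Your added remark that uniqueness is a condition on $\pp_x-\pp_x^*$ lying in the ($\tau$-independent) kernel, rather than a comparison of solutions to a single fixed system, is a correct and worthwhile clarification of the same argument.
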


Here we only discuss the case when the probability of observing all features
$\qq(\1)$ is zero, where $\qq(\1)$ denotes the scalar entry of $\qq$
indexed by $\1\in\M$.
Otherwise, the linear system is uniquely solvable
as the transition matrix $T_{\qq,\tau_0}$ has full rank.
With the non-negativity constraint,
it is possible that the solution for the linear system~\eqref{eq:linsys}
is unique when the true data distribution $\pp_x^*$ is sparse.
Specifically,
if there exists two indices $\ss_1,\ss_2\in\I$ such that
$\pp_x^*(\ss_1) = \pp_x^*(\ss_2) = 0$
and also
$\vv(\ss_1) > 0$ and $\vv(\ss_2) < 0$ for all
$\vv\in\text{Null}(T_{\qq,\tau})\setminus\{\0\}$,
then the solution to~\eqref{eq:linsys} is unique.

Sparsity of the data distribution is a reasonable assumption in many
situations.
For example, natural images are typically considered to lie on a
low dimensional manifold, which means most of the instances in $\I$
should have almost zero probability.
On the other hand, when the missing rate is high, that is,
if the masks in $\M$ that have many zeros are more probable,
the null space of $T_{\qq,\tau}$
will be larger and therefore it is more likely that the
non-negative solution is not unique.
\citet{bruckstein2008uniqueness} proposed a sufficient condition
on the sparsity of the non-negative solutions to a general
underdetermined linear system that guarantees unique optimality.

Next we note that in the case of $\tau\in\P$,
an entry with value $\tau$ in a masked sample $\tt\in\I$
may come either from an observed entry with value $\tau$ in the unmasked sample
or from an unobserved entry through the masking operation in \eqref{eq:patch}.
One might wonder if this prevents an algorithm from recovering the
true distribution when it is otherwise possible to do so. In other words,
if we take the location of the missing values into account, would
that make the missing data problem less ill-posed?
However, this is not the case, as we state in Corollary~\ref{cor:newtau}.
The proof is in Appendix~\ref{sec:detailedcor}
where we discuss the case of $\tau\notin\P$.
\begin{corollary}%
  \label{cor:newtau}
  If the linear system $T_{\qq,\tau}\pp_x = T_{\qq,\tau}\pp_x^*$
  does not have a unique non-negative solution,
  then for this missing data problem,
  we cannot uniquely recover the true data distribution even if
  we take the location of the missing values into account.
\end{corollary}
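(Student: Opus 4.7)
The plan is to reduce the corollary to an extension of Theorem~\ref{thm:nullspace} by identifying the location-aware observation model with $\tau^*$-masking for a fresh symbol $\tau^* \notin \P$. The first step is to observe that if the fill-in value $\tau^*$ does not belong to $\P$, then every masked output $\tt = f_{\tau^*}(\xx, \mm) \in (\P \cup \{\tau^*\})^n$ records the mask $\mm$ exactly (as the set of coordinates equal to $\tau^*$) and the observed entries $\xx_\obs$ verbatim (in the remaining coordinates). Consequently, ``observing $\tt$'' with $\tau^* \notin \P$ carries the same information as the location-aware pair $(\mm, \xx_\obs)$, and the location-aware recovery task is exactly the constrained linear system $T_{\qq, \tau^*}\pp_x = T_{\qq, \tau^*}\pp_x^*$ with $\pp_x \succeq \0$.

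Next, I would enlarge the alphabet to $\P' = \P \cup \{\tau^*\}$ and the state space to $\I' = (\P')^n$, so that $\tau^* \in \P'$ and Theorem~\ref{thm:nullspace} applies directly in this larger setting, yielding $\text{Null}(T_{\qq, \tau^*}) = \text{Null}(T_{\qq, \tau})$ for every $\tau \in \P'$ when all matrices are viewed as acting on $\R^{\I'}$. For vectors supported on $\I \subset \I'$ (e.g., any $\pp_x \in \D_\I$ extended by zeros), both $T_{\qq, \tau}$ for $\tau \in \P$ and $T_{\qq, \tau^*}$ act through the same $\I$-indexed columns and produce outputs consistent with their original definitions over $\P^n$ and $(\P')^n$ respectively. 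Therefore the null-space equality descends to the original domain: for $\vv \in \R^\I$, one has $T_{\qq, \tau} \vv = 0$ iff $T_{\qq, \tau^*} \vv = 0$. Taking $\vv = \pp_x - \pp_x^*$ with $\pp_x \ne \pp_x^*$ any non-negative witness to non-uniqueness in the original $\tau$-system immediately gives the same witness for the $\tau^*$-system, proving the corollary.

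The main obstacle I anticipate is the verification that the proof of Theorem~\ref{thm:nullspace} is genuinely alphabet-agnostic, so that it transfers to $\P'$ without modification, together with the bookkeeping showing that zero-extending a distribution on $\I$ to $\D_{\I'}$ does not alter the transition probabilities relevant to recovery. Both are routine once the reformulation is in place; no new algebraic idea is required beyond the identification of location-awareness with an out-of-alphabet fill-in value.
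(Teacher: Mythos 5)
Your proposal is correct and matches the paper's own argument essentially step for step: the paper likewise augments the alphabet with a fresh symbol $\psi\notin\P$, identifies location-aware recovery with $f_\psi$-masking on $\I'=(\P\cup\{\psi\})^n$, and invokes Theorem~\ref{thm:nullspace} on the enlarged alphabet to transfer the non-uniqueness witness. Your added remarks on restricting the null-space identity to vectors supported on $\I$ just make explicit a step the paper leaves implicit.
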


Note that the analysis in this section
characterizes how difficult the missing data
problem is, which is independent of the choice of the algorithm
that solves it.
In practice,
it is useful to incorporate application-specific prior knowledge
into the model to regularize the problem when it is ill-posed.
For example, for modeling natural images,
convolutional networks are commonly used
to exploit the local structure of the data.
In addition, decoder-based deep generative models such as GANs
implicitly enforce some
sparsity constraints due to the use of low dimensional latent codes
in the generator, which also helps to regularize the problem.

Finally, the following theorem justifies the training objective
\eqref{eq:objdata} of {\misgan} for the missing data problem
(see Appendix~\ref{sec:proof} for details).
\begin{theorem}
  \label{thm:marginals}
  Given a mask distribution $p_\phi(\mm)$,
  two distributions $p_\theta(\xx)$ and $p_{\theta'}(\xx)$
  induce the same distribution
  for $f_\tau(\xx,\mm)$
  if and only if they have the same marginals
  $p_\theta(\xx_\obs | \mm) = p_{\theta'}(\xx_\obs | \mm)$
  for all masks $\mm$ with $p_\phi(\mm) > 0$.\footnote{
    $p_\theta(\xx_\obs | \mm)$ is technically equivalent to $p_\theta(\xx_\obs)$
    as the random variable $\xx_\obs = \{x_d : m_d = 1\}$
    is defined with a known mask $\mm$.
  }
\end{theorem}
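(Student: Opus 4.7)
My plan is to split into the two directions. The ($\Leftarrow$) direction is a direct calculation: conditional on $\mm$, the vector $f_\tau(\xx, \mm) = \xx \odot \mm + \tau \bar\mm$ is a deterministic function of $\xx_\mm$, so its conditional law depends on $p_\theta$ only through the marginal $p_\theta(\xx_\mm \mid \mm)$, and averaging over $\mm \sim p_\phi$ yields the joint law of $f_\tau(\xx, \mm)$. Hence matching marginals on every $\mm$ with $p_\phi(\mm) > 0$ forces matching distributions of $f_\tau$.

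For the harder ($\Rightarrow$) direction I will use characteristic functions, which cope uniformly with atomic, continuous, or mixed $p_\theta$. Expanding $\kk^\top f_\tau(\xx, \mm) = \kk^\top(\xx \odot \mm) + \tau\,\kk^\top \bar\mm$ and using MCAR independence of $\xx$ and $\mm$ gives
\begin{equation*}
  \E[e^{i \kk^\top f_\tau(\xx,\mm)}]
  = \sum_\mm p_\phi(\mm)\, e^{i\tau \kk^\top \bar\mm}\, \hat p_{\theta, \mm}(\kk_\mm),
\end{equation*}
where $\hat p_{\theta, \mm}$ is the characteristic function of the marginal $p_\theta(\xx_\mm)$ and $\kk_\mm$ denotes $\kk$ restricted to coordinates in $\mm$. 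Equality in law of $f_\tau(\xx,\mm)$ under $\theta$ and $\theta'$ becomes
\begin{equation*}
  \sum_\mm p_\phi(\mm)\, e^{i\tau \kk^\top \bar\mm}\, \Delta_\mm(\kk_\mm) = 0 \quad \forall\, \kk,
\end{equation*}
with $\Delta_\mm := \hat p_{\theta,\mm} - \hat p_{\theta',\mm}$. The goal becomes to show $\Delta_\mm \equiv 0$ whenever $p_\phi(\mm) > 0$; by uniqueness of characteristic functions this is equivalent to the claimed equality of marginals.

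To isolate $\Delta_{\mm_0}$ for a fixed target $\mm_0$, I will restrict the identity to $\kk$ with $k_d = 0$ for $d \notin \mm_0$. Since zeroing coordinates of a characteristic function corresponds to marginalization, each $\Delta_\mm$ collapses to $\Delta_{\mm \cap \mm_0}$, and grouping by $\mm' = \mm \cap \mm_0$ produces
\begin{equation*}
  \sum_{\mm' \subseteq \mm_0} q(\mm')\, e^{i\tau \sum_{d \in \mm_0 \setminus \mm'} k_d}\, \Delta_{\mm'}(\kk_{\mm'}) = 0,
\end{equation*}
where $q(\mm') = \sum_{\mm : \mm \cap \mm_0 = \mm'} p_\phi(\mm)$, and in particular $q(\mm_0) = \sum_{\mm \supseteq \mm_0} p_\phi(\mm) \ge p_\phi(\mm_0)$. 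I will then induct on $|\mm_0|$: the base $\mm_0 = \emptyset$ is trivial (the zero-dimensional marginal has constant CF $1$), and in the inductive step the hypothesis kills every $\mm' \subsetneq \mm_0$ term, leaving only $q(\mm_0)\, \Delta_{\mm_0}(\kk_{\mm_0}) = 0$, so $\Delta_{\mm_0} \equiv 0$ whenever $p_\phi(\mm_0) > 0$.

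The main obstacle will be setting up the induction correctly. The inductive step for $\mm_0$ needs $\Delta_{\mm'} \equiv 0$ for \emph{every} $\mm' \subsetneq \mm_0$, not just those with $p_\phi(\mm') > 0$, which is all the theorem statement would directly deliver. I will handle this by strengthening the inductive claim to ``$\Delta_\mm \equiv 0$ whenever $\sum_{\mm_2 \supseteq \mm} p_\phi(\mm_2) > 0$,'' a class that is closed under taking subsets of any $\mm_0$ with $p_\phi(\mm_0) > 0$ and still implies the theorem's conclusion. Beyond this bookkeeping the argument is routine.
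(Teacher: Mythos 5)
Your proof is correct, but it takes a genuinely different route from the paper's. The paper proves Theorem~\ref{thm:marginals} entirely in the finite-alphabet setting via the transition operator $T_{\qq,\tau}$: the easy direction is Proposition~\ref{prop:p1} (the marginals $\xx([\vv]_\mm)$ determine $T_{\qq,\tau}\xx$ through $T_{\qq,\tau}\xx(\vv)=\sum_{\mm\in\M_{\tau,\vv}}\qq(\mm)\xx([\vv]_\mm)$), and the hard direction is Proposition~\ref{prop:p2}, an induction on $|\M_{\tau,\vv}|$ --- the number of supported masks consistent with a given masked outcome $\vv$ --- which untangles the ambiguity created when an observed entry happens to equal $\tau$ via a recursion over the meet $\bigwedge_\ell \mm_\ell$. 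Your argument replaces that combinatorial bookkeeping with a characteristic-function identity and inducts instead on the number of observed coordinates $|\mm_0|$, isolating the top marginal by zeroing frequencies outside $\mm_0$; the grouping by $\mm'=\mm\cap\mm_0$ is valid because both the phase $e^{i\tau\kk^\top\bar\mm}$ and the collapsed marginal depend on $\mm$ only through $\mm\cap\mm_0$ once $\kk$ is so restricted, and your strengthened inductive class $\{\mm:\sum_{\mm_2\supseteq\mm}p_\phi(\mm_2)>0\}$ is downward closed, which is exactly what is needed to kill the lower-order terms. What your approach buys: it applies verbatim to arbitrary distributions on $\R^n$ (continuous, atomic, or mixed) and to any real $\tau$, and the $\tau$-collision issue never arises since $\tau$ enters only as a deterministic phase; the paper instead restricts to a finite value set with $\tau\in\P$ and handles $\tau\notin\P$ separately in Appendix~\ref{sec:detailedcor}. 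What the paper's approach buys: staying within the linear-algebraic framework lets the same two propositions simultaneously deliver Theorem~\ref{thm:nullspace} (the null space of $T_{\qq,\tau}$ is independent of $\tau$), which your Fourier argument would need a short additional remark to recover.
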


\section{Missing data imputation}
\label{sec:impute}
Missing data imputation is an important task when dealing with
incomplete data.
In this section,
we show how to impute missing data according to $p(\xx_\mis|\xx_\obs)$
by equipping {\misgan} with an imputer $G_i$ accompanied by
a corresponding discriminator $D_i$.
The imputer is a function of the incomplete example $(\xx,\mm)$ and
a random vector $\oomega$ drawn from a noise distribution $p_\omega$.
It outputs the completed sample with the observed part in $\xx$ kept
intact.

To train the imputer-equipped {\misgan},
we define the loss for the imputer in addition to \eqref{eq:lossmask} and
\eqref{eq:lossdata}:
\[
  \L_i(D_i, G_i, G_x) =
  \E_{\zz\sim p_z}\left[D_i(G_x(\zz))\right] -
  \E_{(\xx,\mm)\sim p_\D, \oomega\sim p_\omega}
  \left[D_i(G_i(\xx,\mm,\oomega))\right].
  \label{eq:lossimputer}
\]
We jointly learn the data generating process and
the imputer according to the following objectives:
\begin{align}
\min_{G_i}\max_{D_i\in\F_i}\ &\L_i(D_i, G_i, G_x),
\label{eq:objimputer}\\
\min_{G_x}\max_{D_x\in\F_x}\ &\L_x(D_x, G_x, G_m) + \beta \L_i(D_i, G_i, G_x),
\\
\min_{G_m}\max_{D_m\in\F_m}\ &\L_m(D_m, G_m) + \alpha \L_x(D_x, G_x, G_m),
\notag
\label{eq:objm}
\end{align}
where we use $\beta=0.1$ in the experiments when optimizing $G_x$.
This encourages the generated complete data to match the distribution
of the imputed real data in addition to
having the masked generated data match the masked real data.
The overall structure for {\misgan} imputation is illustrated in
Figure~\ref{fig:archimputer}.

\begin{figure}
  \centering
  \includegraphics[width=4.68in]{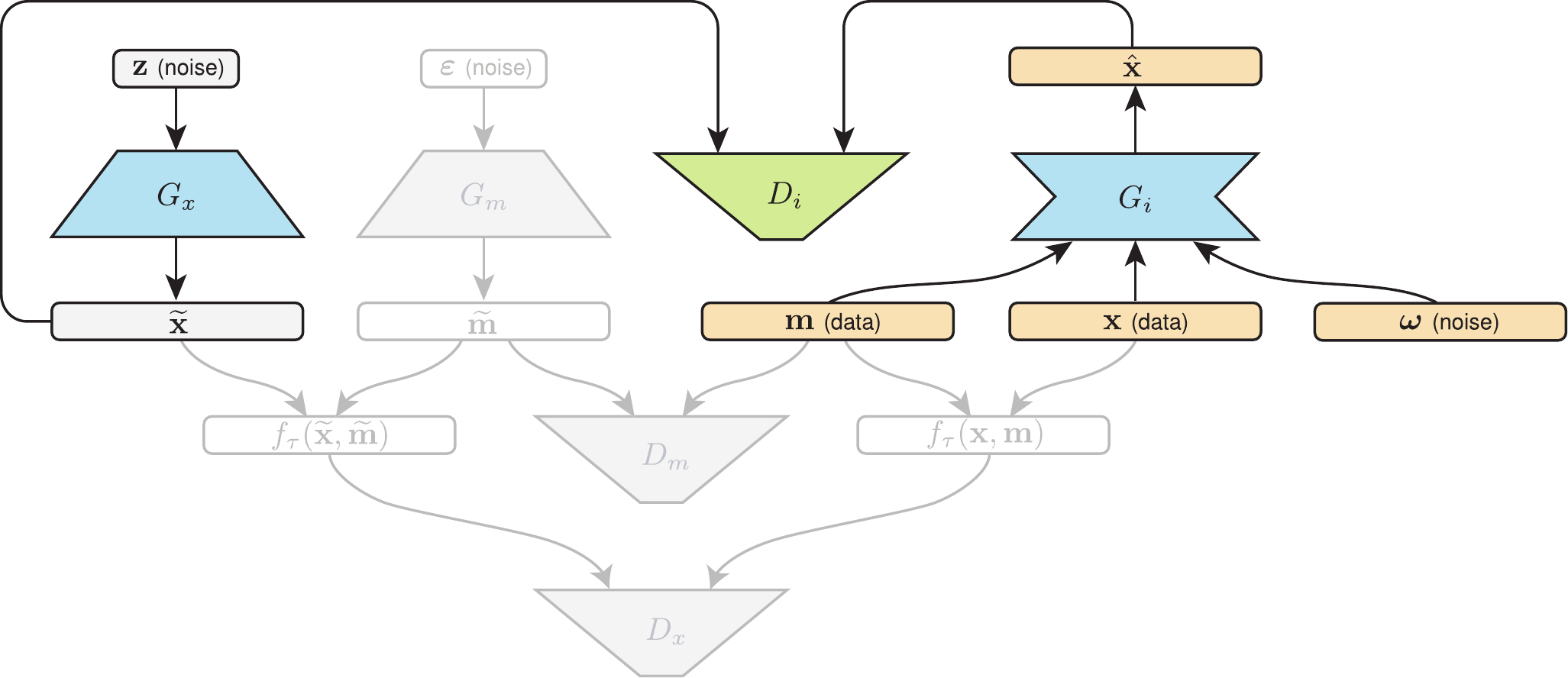}
  \caption{Architecture for {\misgan} imputation.
    The complete data generator $G_x$ and the imputer $G_i$ can be trained
    jointly with all the components.
    We can also independently train the imputer $G_i$
    without the faded parts
    if the data generator $G_x$ has been pre-trained.
  }
  \label{fig:archimputer}
\end{figure}

We can also train a stand-alone imputer using only \eqref{eq:objimputer}
with a pre-trained data generator $G_x$. The architecture is as shown in
Figure~\ref{fig:archimputer} with the faded parts removed.
Moreover, it is also possible to train the imputer to target a different
missing distribution $p_m$ with a pre-trained data generator $G_x$ alone
without access to the original (incomplete) training data:
\begin{equation}
  \min_{G_i}\max_{D_i\in\F_i}
  \E_{\zz\sim p_z}\left[D_i(G_x(\zz))\right] -
  \E_{\mm\sim p_m, \zz\sim p_z, \oomega\sim p_\omega}
  \left[D_i(G_i(G_x(\zz),\mm,\oomega))\right].
  \label{eq:imputeronly}
\end{equation}
We construct the imputer $G_i(\xx,\mm,\oomega)$ as follows:
\begin{equation}
G_i(\xx,\mm,\oomega)
=\xx\odot\mm+\widehat{G}_i(\xx\odot\mm + \oomega\odot\bar{\mm})\odot\bar{\mm},
\label{eq:imputer}
\end{equation}
where $\widehat{G}_i$ generates the imputed result
with the same dimensionality as its input,
$\xx\odot\mm + \oomega\odot\bar{\mm}$, which could be
implemented by a deep neural network.
The masking outside of $\widehat{G}_i$
ensures that the observed part of $\xx$ stays the same
in the output of the imputer $G_i$.
The similar masking on the input of $\widehat{G}_i$,
$\xx\odot\mm + \oomega\odot\bar{\mm}$,
ensures that the amount of noise injected to $\widehat{G}_i$
scales with the number of missing dimensions.
This is intuitive in the sense that when a data case is
almost fully-observed, we expect less variety in $p(\xx_\mis|\xx_\obs)$
and vice versa.
Note that the noise $\oomega$ needs to have the same dimensionality as $\xx$.

\section{Experiments}%
\label{sec:experiments}

In this section, we first assess various properties of {\misgan}
on the MNIST dataset:
we demonstrate qualitatively how {\misgan} behaves under different missing
patterns and different architectures.
We then conduct an ablation study to justify the construction of {\misgan}.
Finally, we compare {\misgan} with various baseline methods on the
missing data imputation task over three datasets under a series of
missingness settings.

\desc{Data}
We evaluate {\misgan} on three datasets:
MNIST, CIFAR-10 and CelebA.
MNIST is a dataset of 
handwritten digits images of size 28$\times$28
\citep{lecun1998gradient}. We use the provided 60,000 training examples
for the experiments.
CIFAR-10 is a dataset of 32$\times$32 color images from 10 classes
\citep{krizhevsky2009learning}.
Similarly, we use 50,000 training examples for the experiments.
CelebA is a large-scale face attributes dataset
\citep{liu2015faceattributes} that
contains 202,599 face images, where we use the provided
aligned and cropped images and resize them to 64$\times$64.
For all three datasets, the range of pixel values of each
image is rescaled to $[0,1]$.

\desc{Missing data distributions}
We consider three types of missing data distribution:
\begin{enumerate*}[label={\roman*)}]
  \item \emph{Square observation}:
    all pixels are missing except for a square occurring at a random
    location on the image.
  \item \emph{Dropout}:
    each pixel is independently missing according to a Bernoulli distribution.
  \item \emph{Variable-size rectangular observation}:
    all pixels are missing except for a rectangular observed region.
    The width and height of the rectangle are independently drawn from
    25\% to 75\% of the image length uniformly at random,
    which results in a 75\% missing rate on average.
    In this missing data distribution,
    each example may have a different number of missing pixels.
    The highest per-example missing data rate under this mechanism is 93.75\%.
\end{enumerate*}

\desc{Evaluation metric}
We use the Fr\'echet Inception Distance (FID) \citep{heusel2017gans}
to evaluate the quality of the learned generative model.
For MNIST,
instead of the Inception network trained on ImageNet
\citep{salimans2016improved}, we use a basic LeNet
model \footnote{%
\url{https://github.com/pytorch/examples/tree/master/mnist}}
trained on the complete MNIST training set, and then
take the 50-dimensional output from the second-to-last
fully-connected layer as the features to compute the FID.
For CIFAR-10 and CelebA, we follow the procedure described in
\citet{heusel2017gans} to compute the FID using
the pretrained Inception-v3 model.
When evaluating generative models using the FID, we use the same
number of generated samples as the size of the training set.

\subsection{Empirical study of {\misgan} on MNIST}
\label{sec:mnist}

In this section, we study various properties of {\misgan}
using the MNIST dataset.

\desc{Architectures}
We consider two kinds of architecture for {\misgan}:
convolutional networks and fully connected networks.
We follow the DCGAN architecture~\citep{radford2015unsupervised}
for (de)convolutional
generators and discriminators to exploit
the local structures of images. We call this model Conv-{\misgan}.

To demonstrate the performance of {\misgan} in the absence
of the implicit structural regularization provided by the
use of a convolutional network, we construct another {\misgan} with
only fully-connected layers for both the generators and
the discriminators, which we call FC-{\misgan}.

In the experiments, both Conv-{\misgan} and FC-{\misgan} are trained
using the improved procedure for the Wasserstein GAN with gradient penalty
\citep{gulrajani2017improved}.
Throughout we use $\tau=0$ for the masking operator and
the temperature $\lambda=0.66$ for the mask activation $\sigma_\lambda(x)$
described in Section~\ref{sec:model}.

\desc{Baseline}
We compare {\misgan} to a baseline model
that is capable of learning from large-scale incomplete data:
the generative convolutional arithmetic circuit (ConvAC)
\citep{sharir2016tractable}.
ConvAC is an expressive mixture model similar to sum-product networks
\citep{Poon2011} with
a compositional structure similar to deep convolutional networks.
Most importantly, ConvAC admits tractable marginalization
due to the product form of the base distributions for the mixtures,
which makes it readily capable of learning with missing data.

\def\figwidth{1.3in}
\begin{figure}
  \centering
  \begin{subfigure}[b]{0.24\textwidth}
    \centering
    \includegraphics[width=\figwidth]{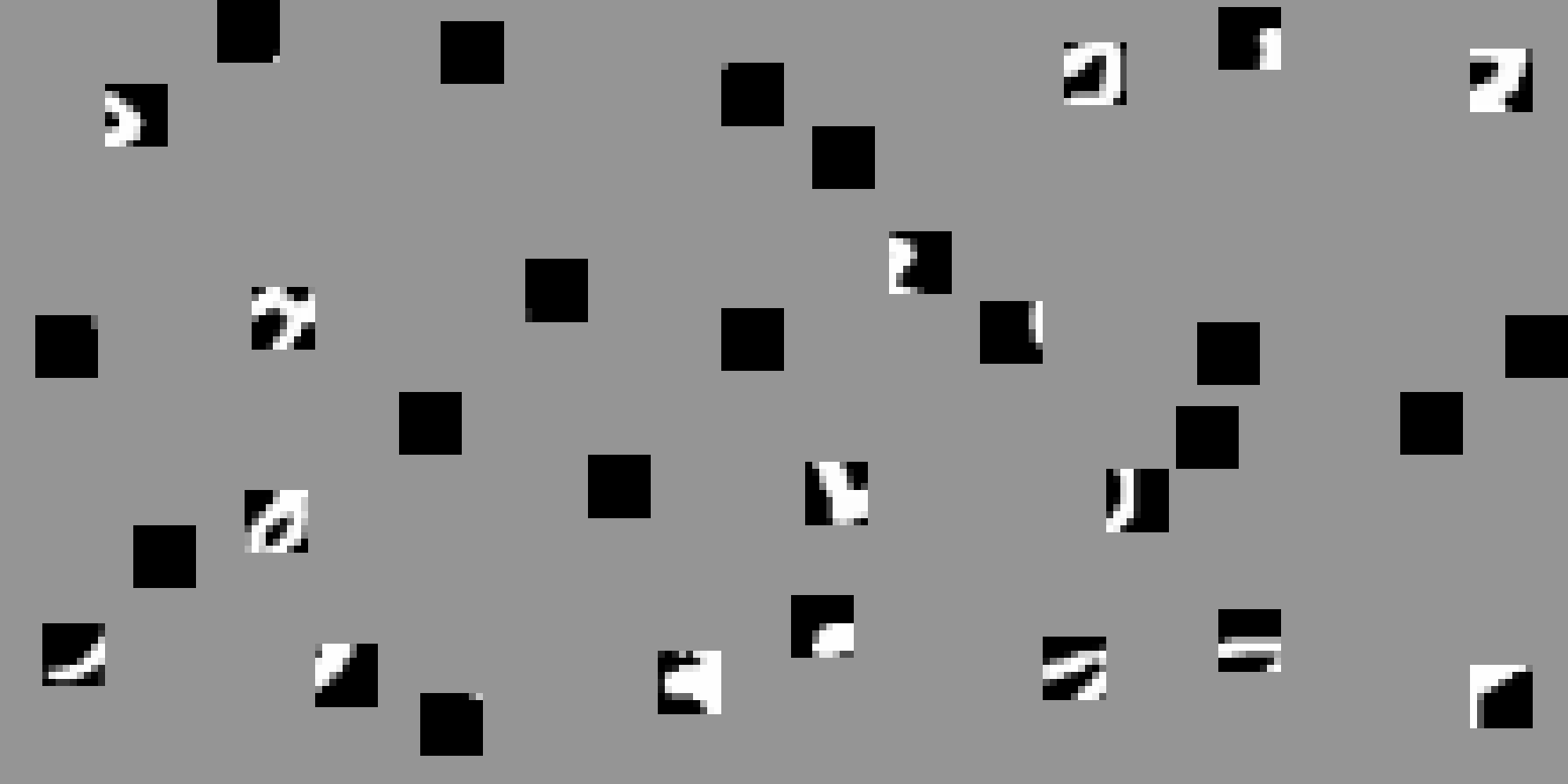} \\
    \includegraphics[width=\figwidth]{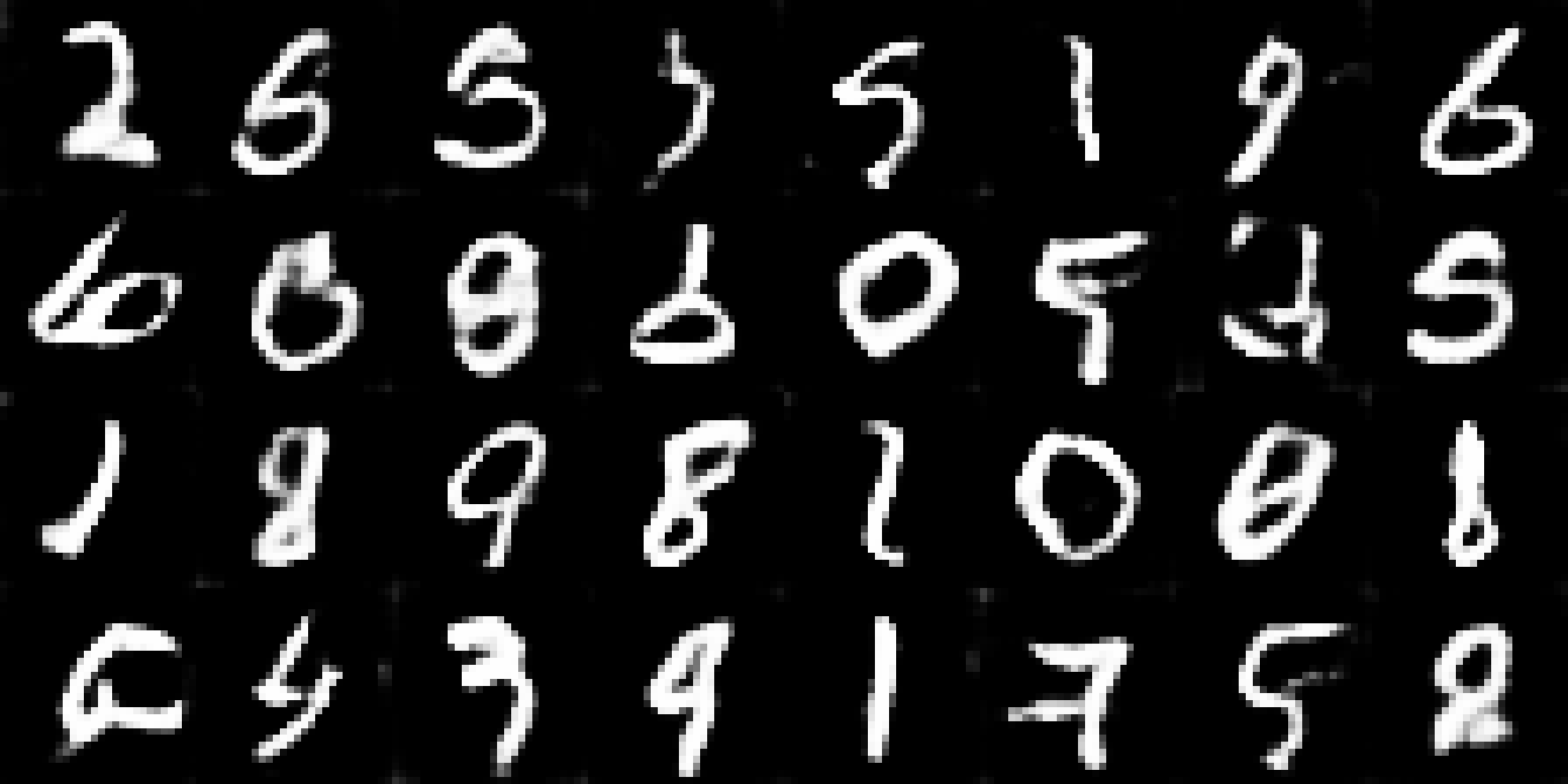} \\
    \includegraphics[width=\figwidth]{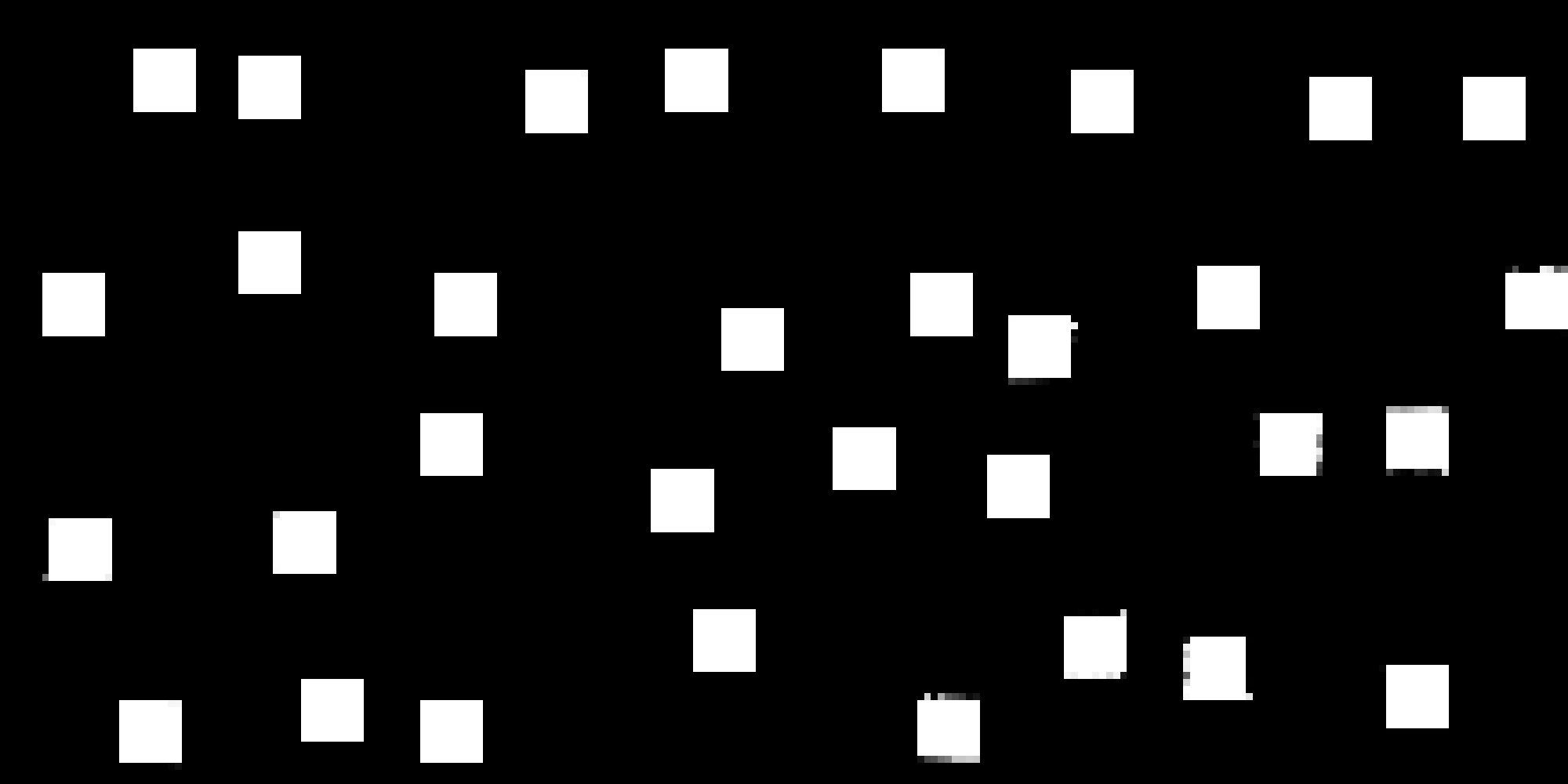}
    \caption{9$\times$9 (90\% missing)}
  \end{subfigure}
  \begin{subfigure}[b]{0.24\textwidth}
    \centering
    \includegraphics[width=\figwidth]{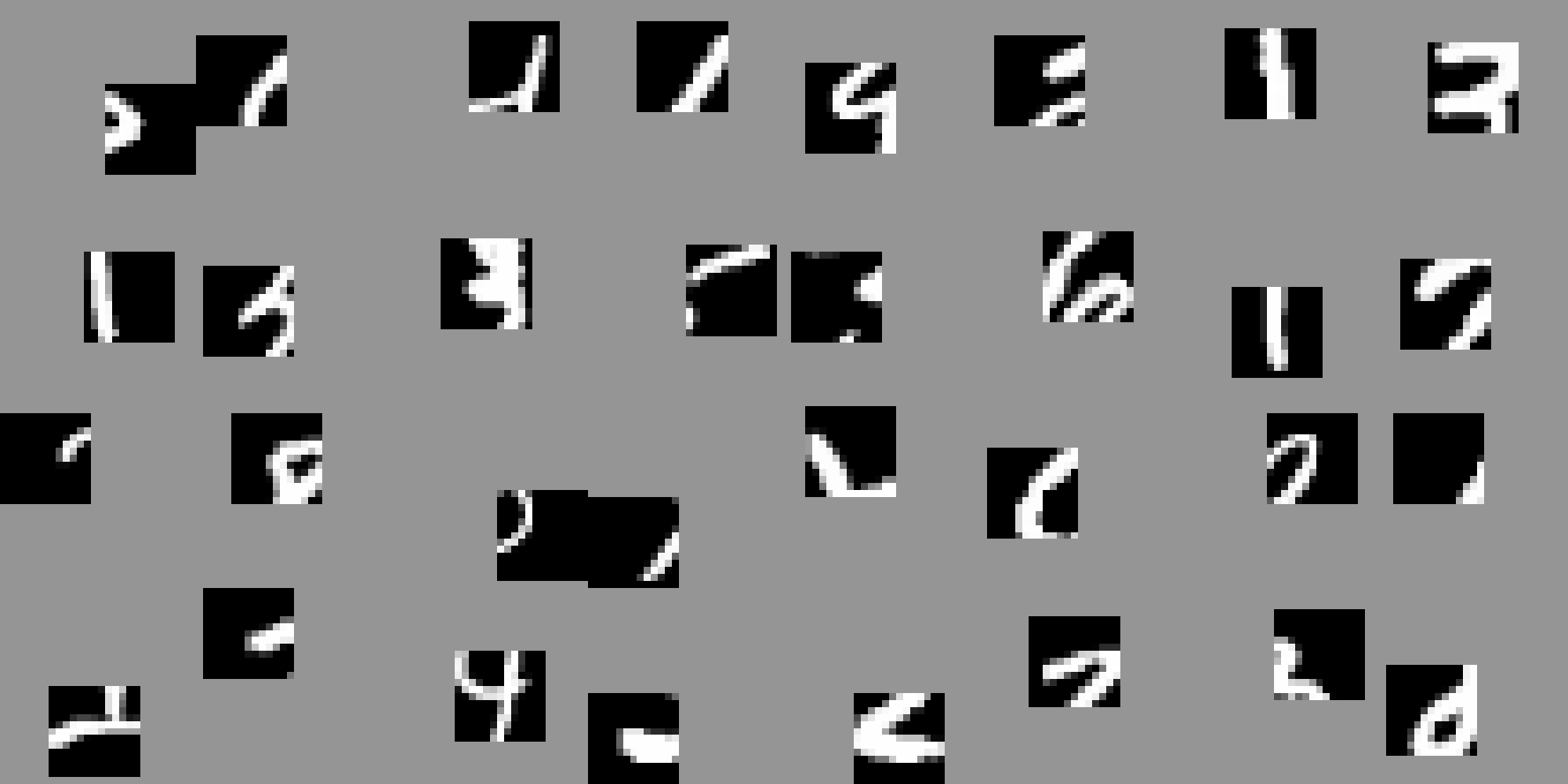} \\
    \includegraphics[width=\figwidth]{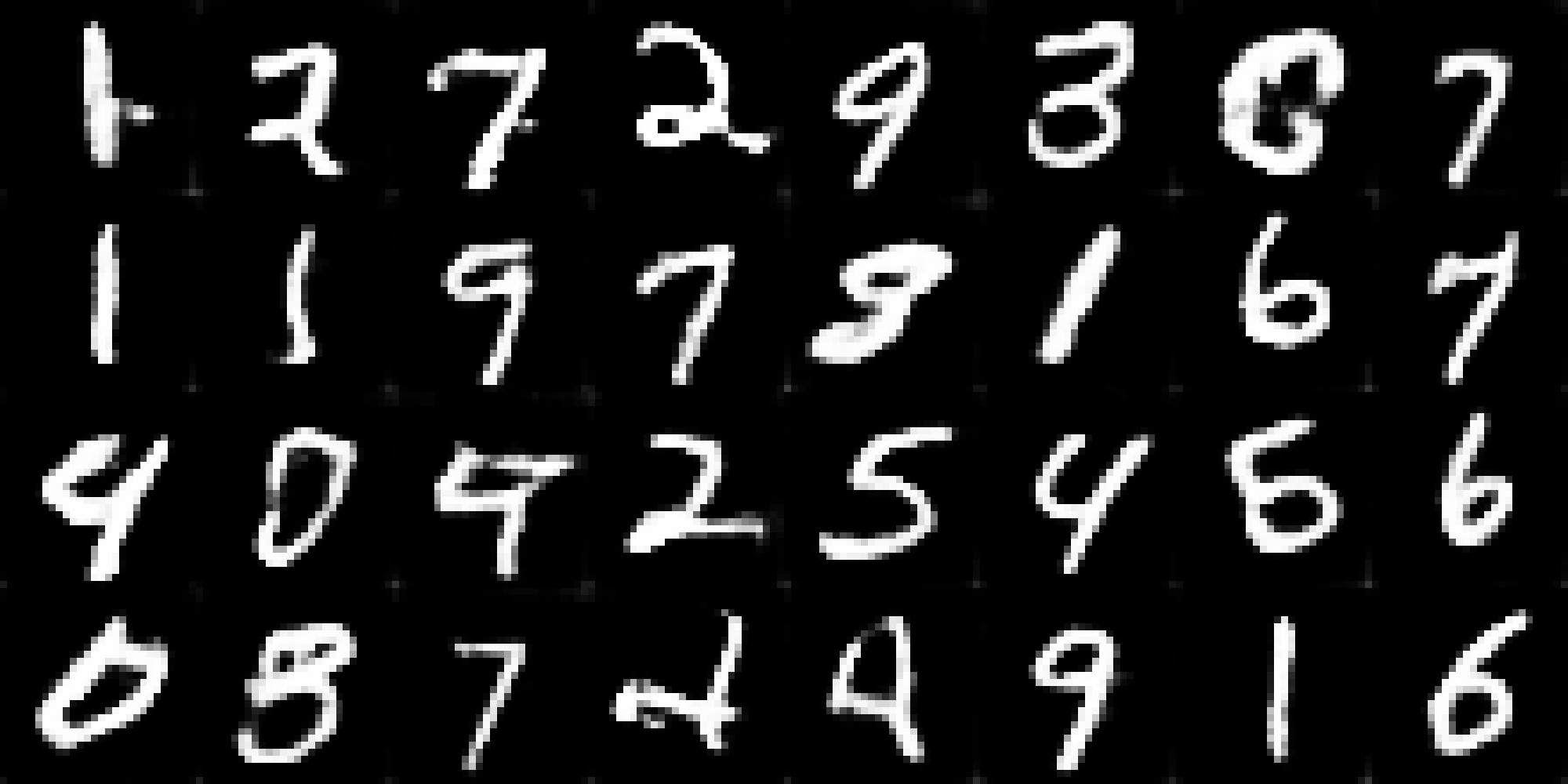} \\
    \includegraphics[width=\figwidth]{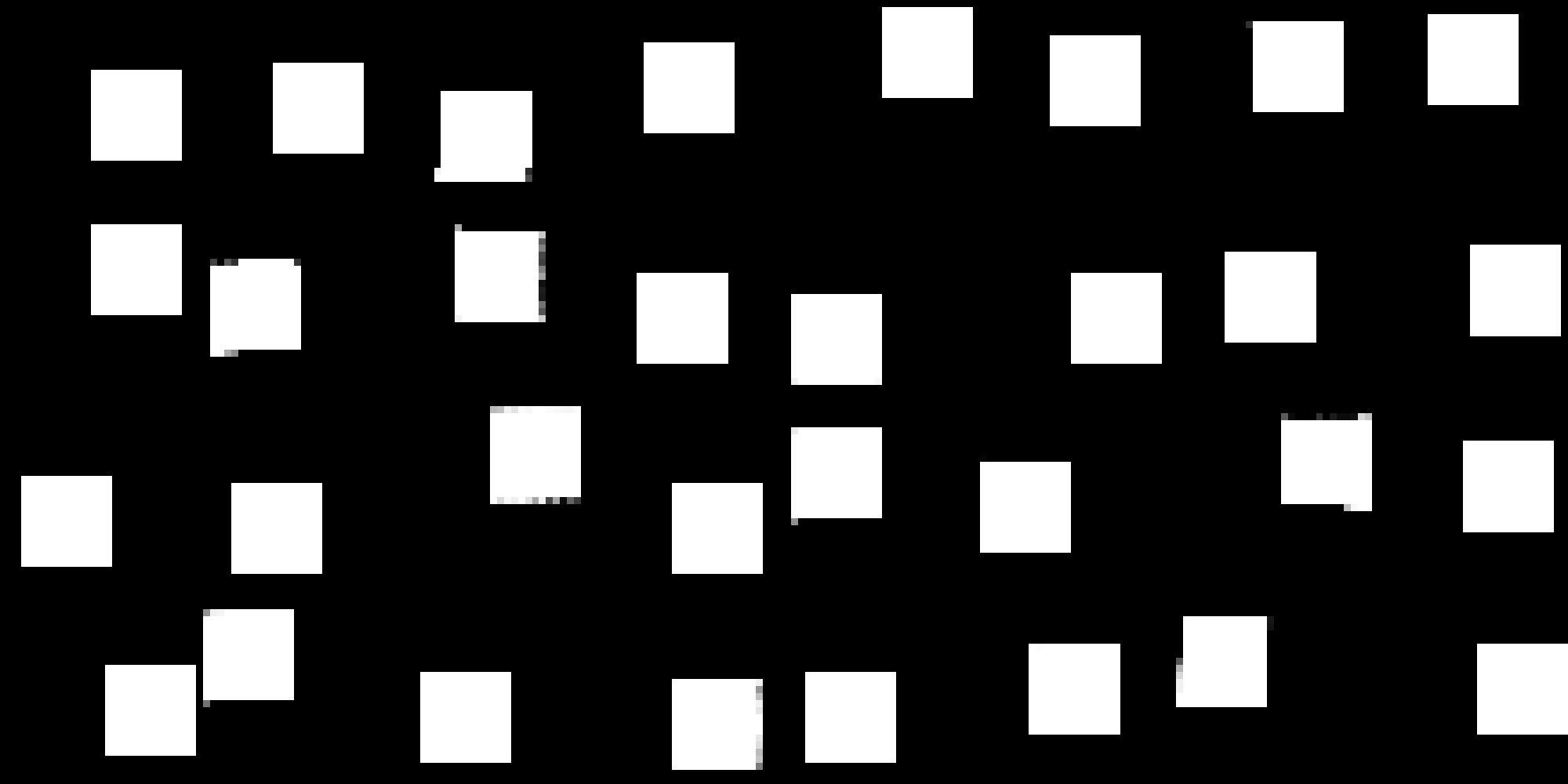}
    \caption{13$\times$13 (80\% missing)}
  \end{subfigure}
  \begin{subfigure}[b]{0.24\textwidth}
    \centering
    \includegraphics[width=\figwidth]{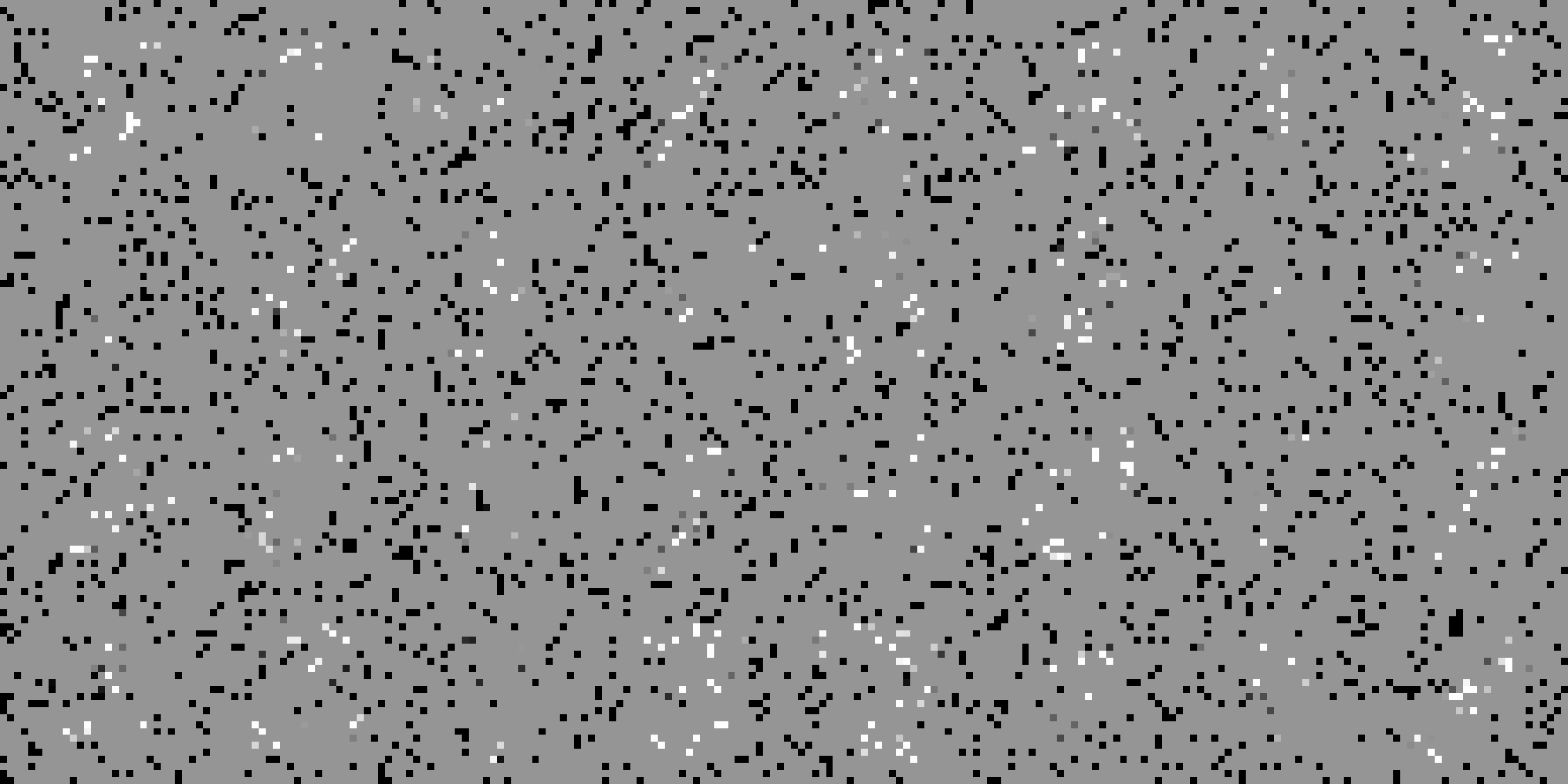} \\
    \includegraphics[width=\figwidth]{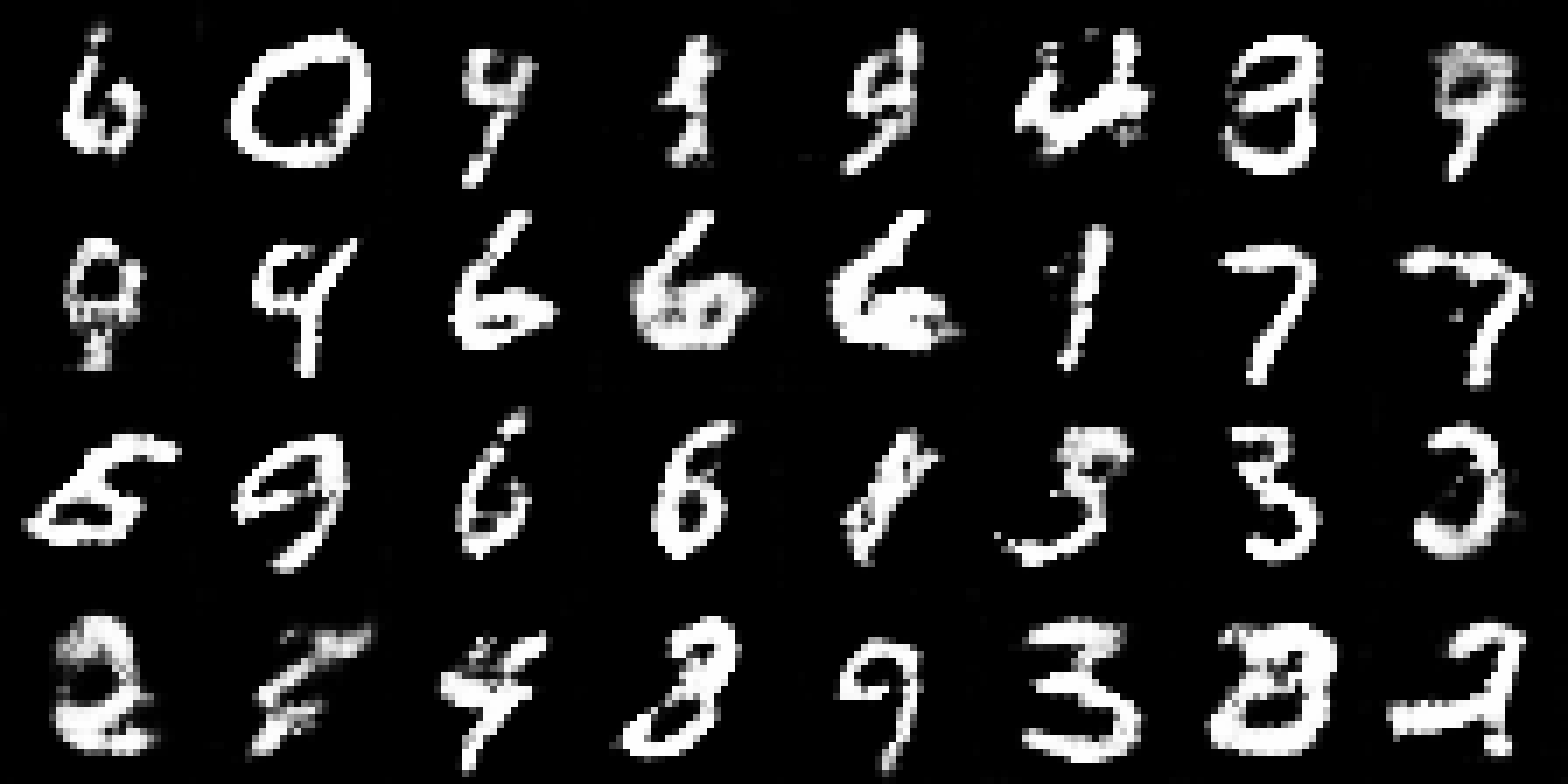} \\
    \includegraphics[width=\figwidth]{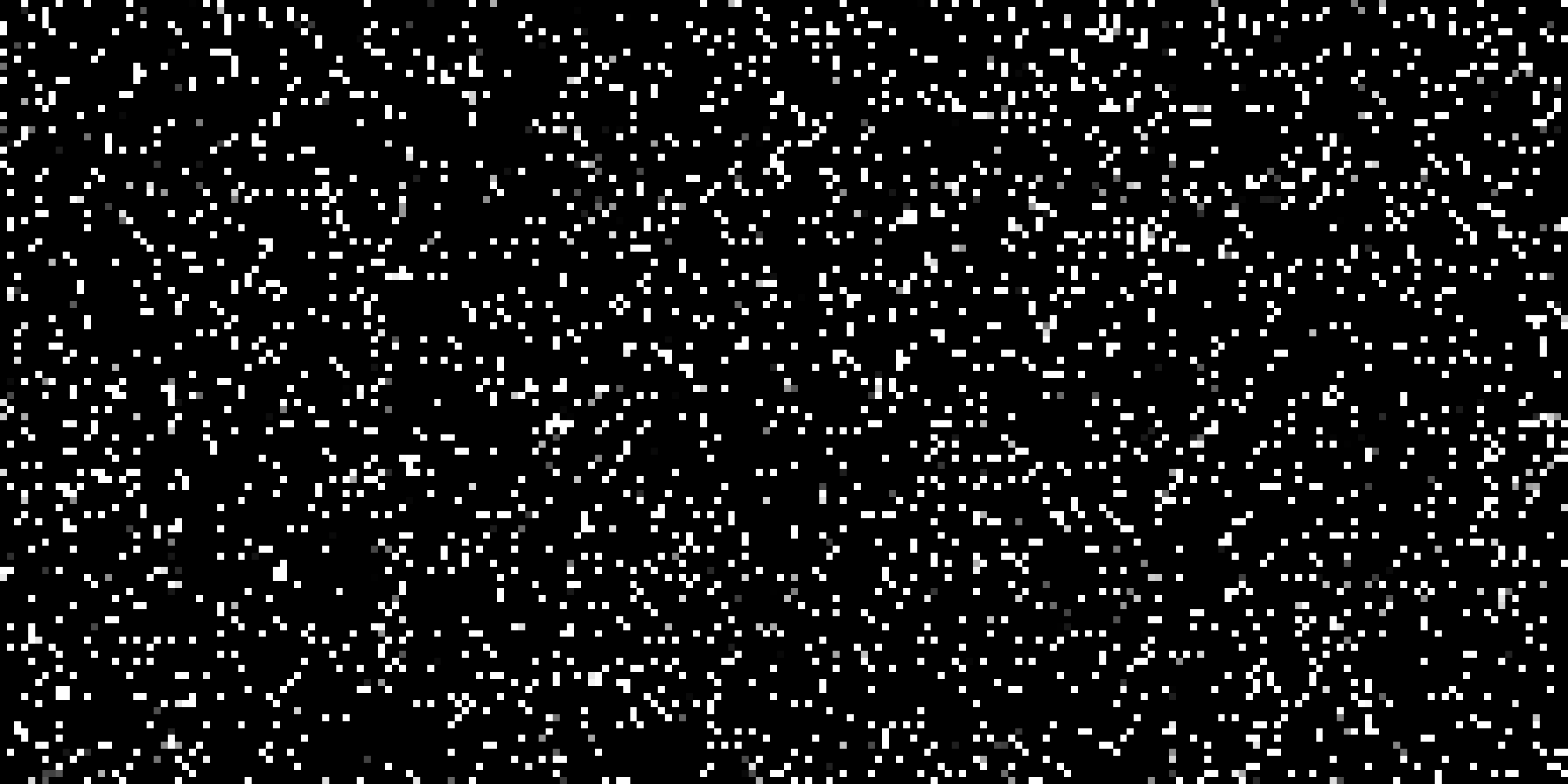}
    \caption{90\% dropout}
  \end{subfigure}
  \begin{subfigure}[b]{0.24\textwidth}
    \centering
    \includegraphics[width=\figwidth]{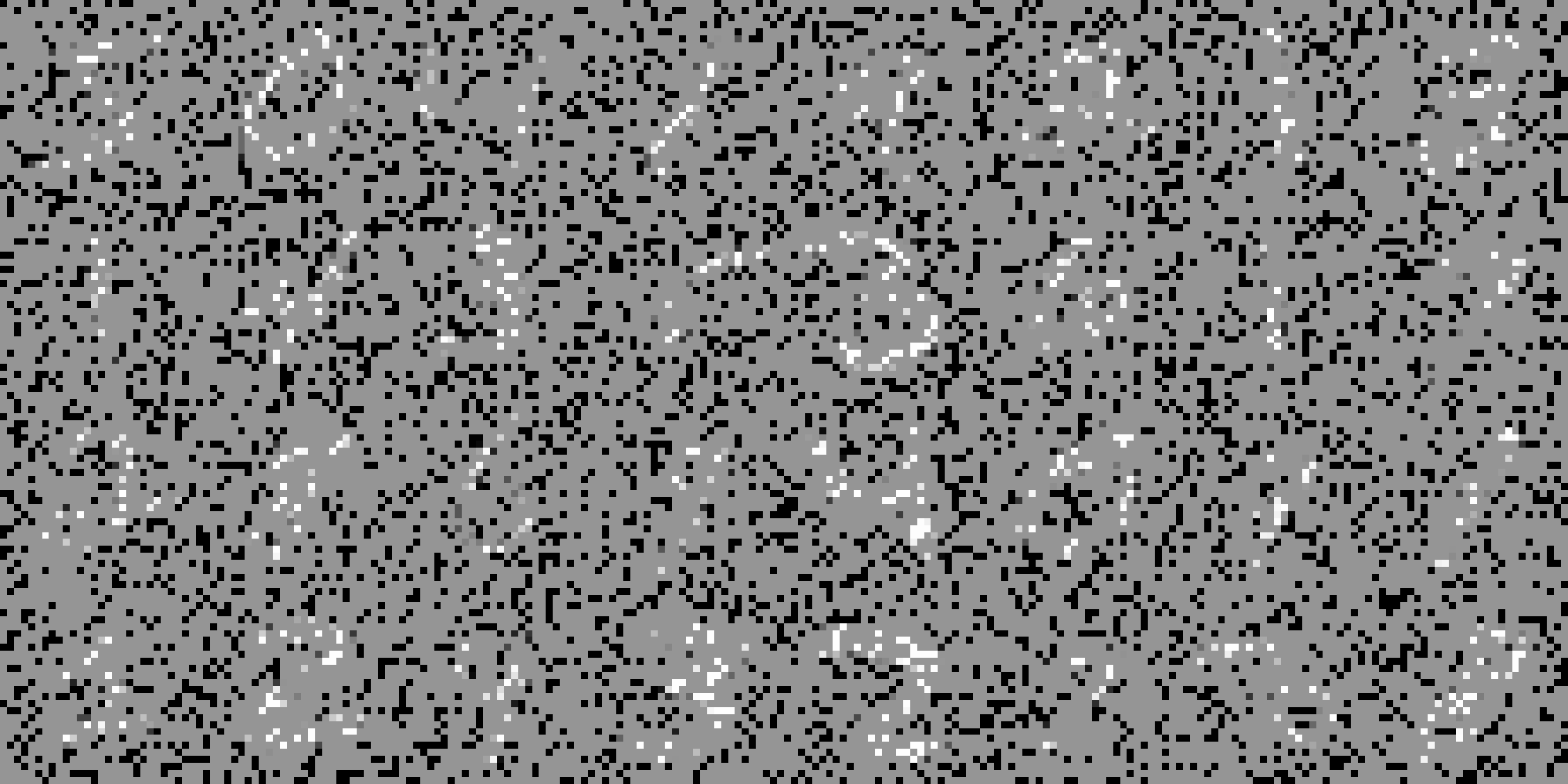} \\
    \includegraphics[width=\figwidth]{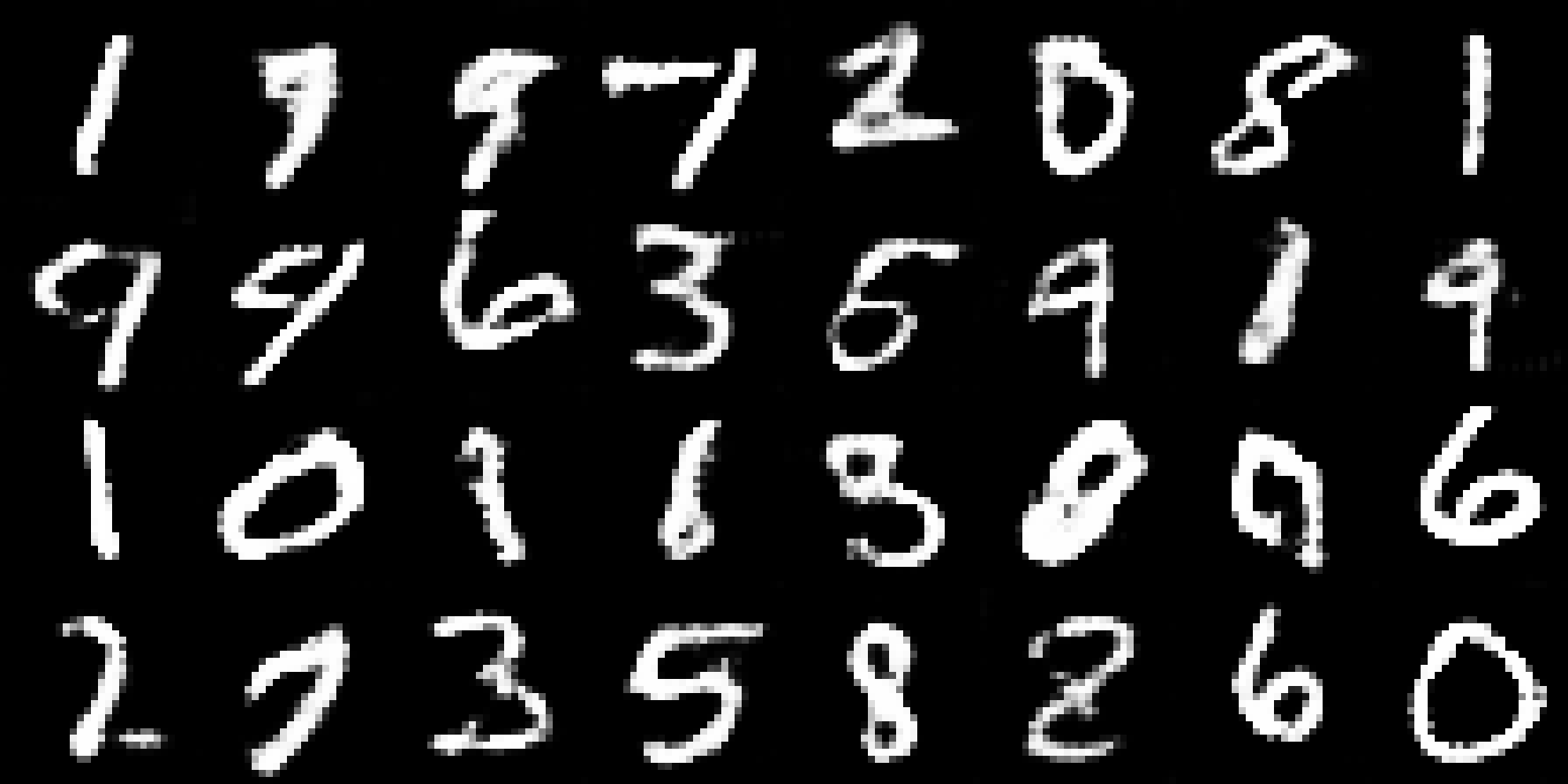} \\
    \includegraphics[width=\figwidth]{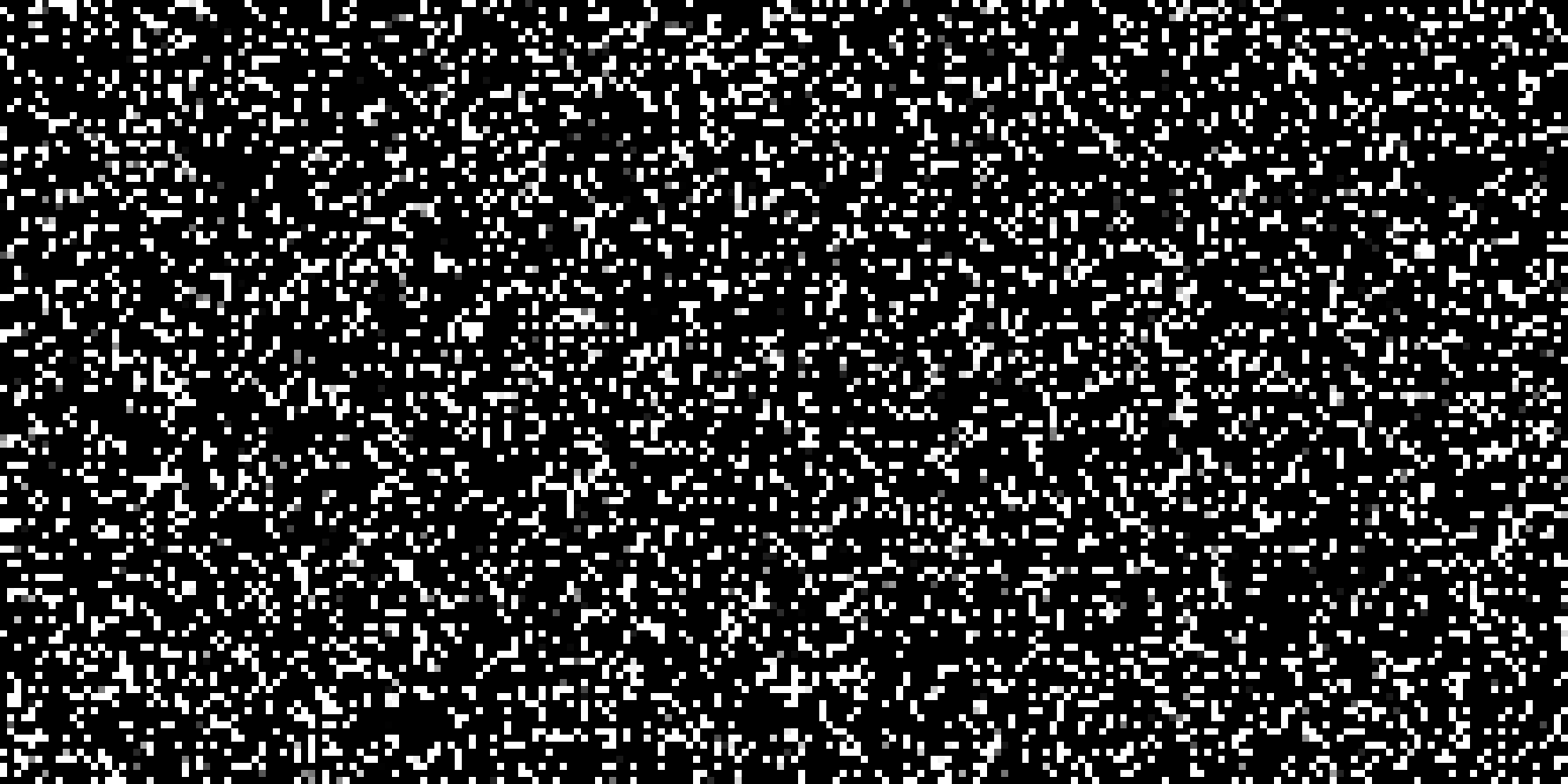}
    \caption{80\% dropout}
  \end{subfigure}
  \caption{Conv-{\misgan} results under different missing data
    processes.
    Top: training samples where gray pixels indicate missing data.
    Middle: data samples generated by $G_x$.
  Bottom: mask samples generated by $G_m$.}%
  \label{fig:convmisgan}
\end{figure}

\def\figwidth{1.3in}
\begin{figure}
  \centering
  \begin{subfigure}[b]{0.24\textwidth}
    \centering
    \includegraphics[width=\figwidth]{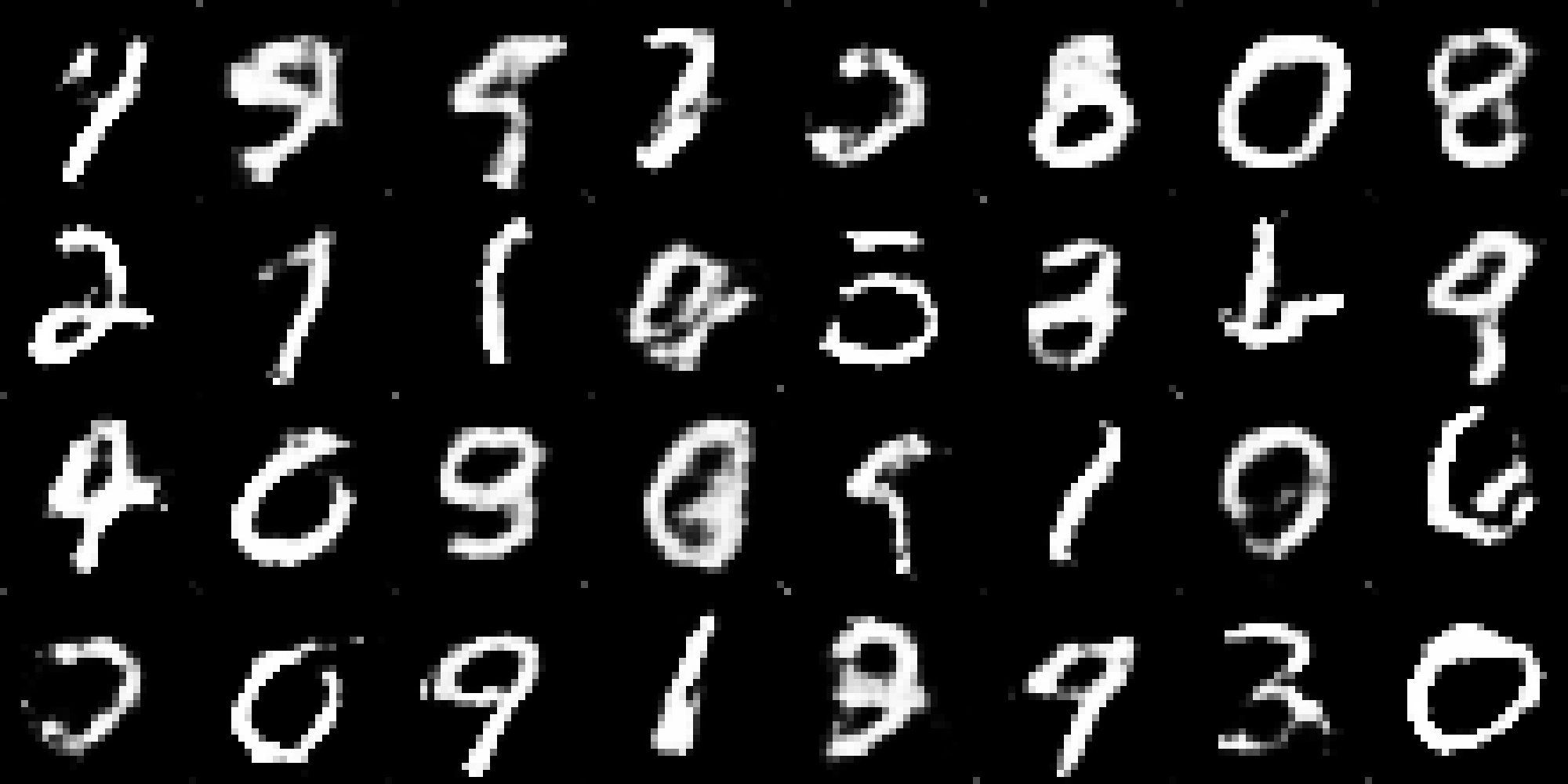}
    \caption{9$\times$9 (90\% missing)}
  \end{subfigure}
  \begin{subfigure}[b]{0.24\textwidth}
    \centering
    \includegraphics[width=\figwidth]{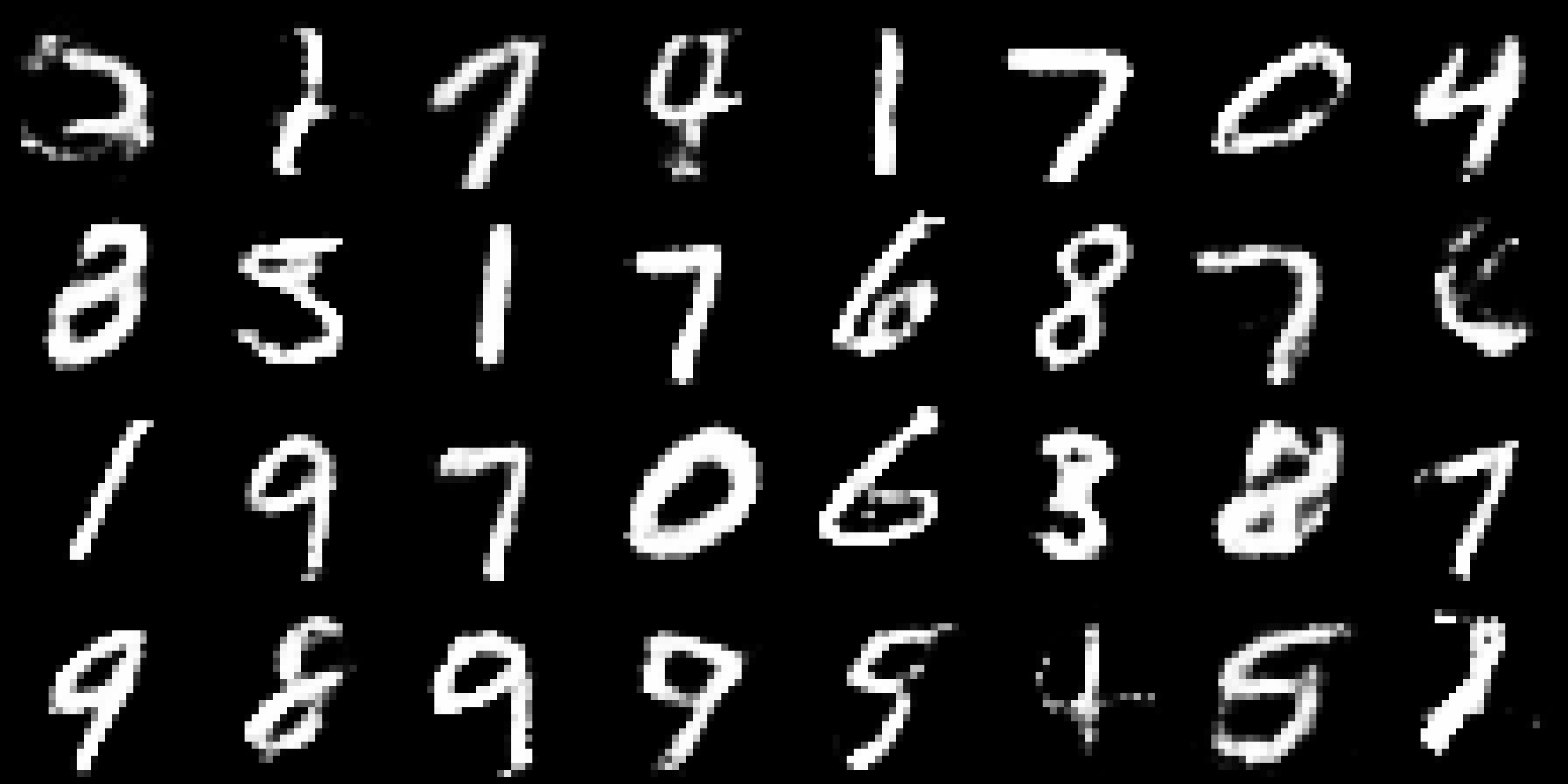}
    \caption{13$\times$13 (80\% missing)}
  \end{subfigure}
  \begin{subfigure}[b]{0.24\textwidth}
    \centering
    \includegraphics[width=\figwidth]{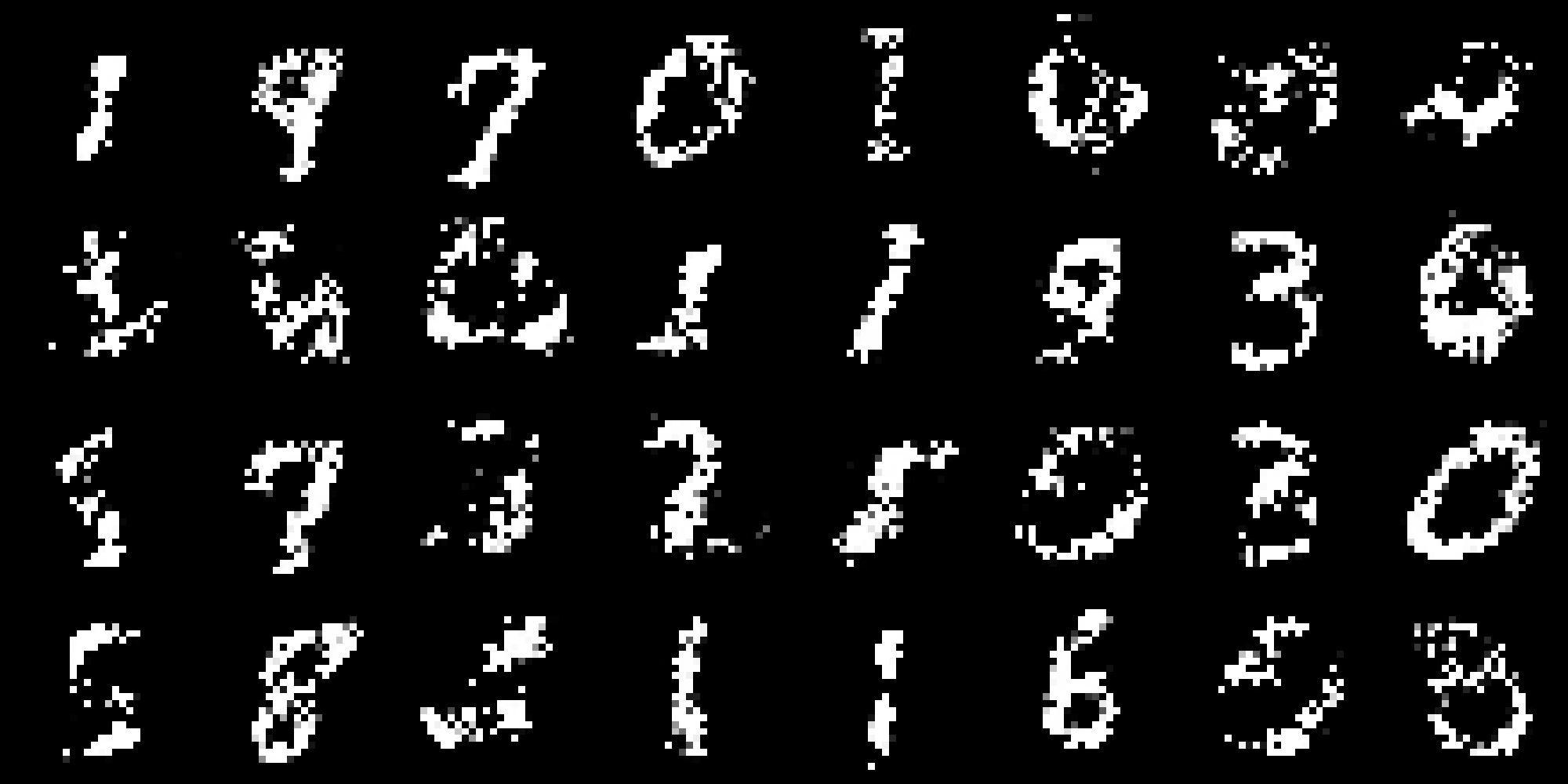}
    \caption{90\% dropout}
  \end{subfigure}
  \begin{subfigure}[b]{0.24\textwidth}
    \centering
    \includegraphics[width=\figwidth]{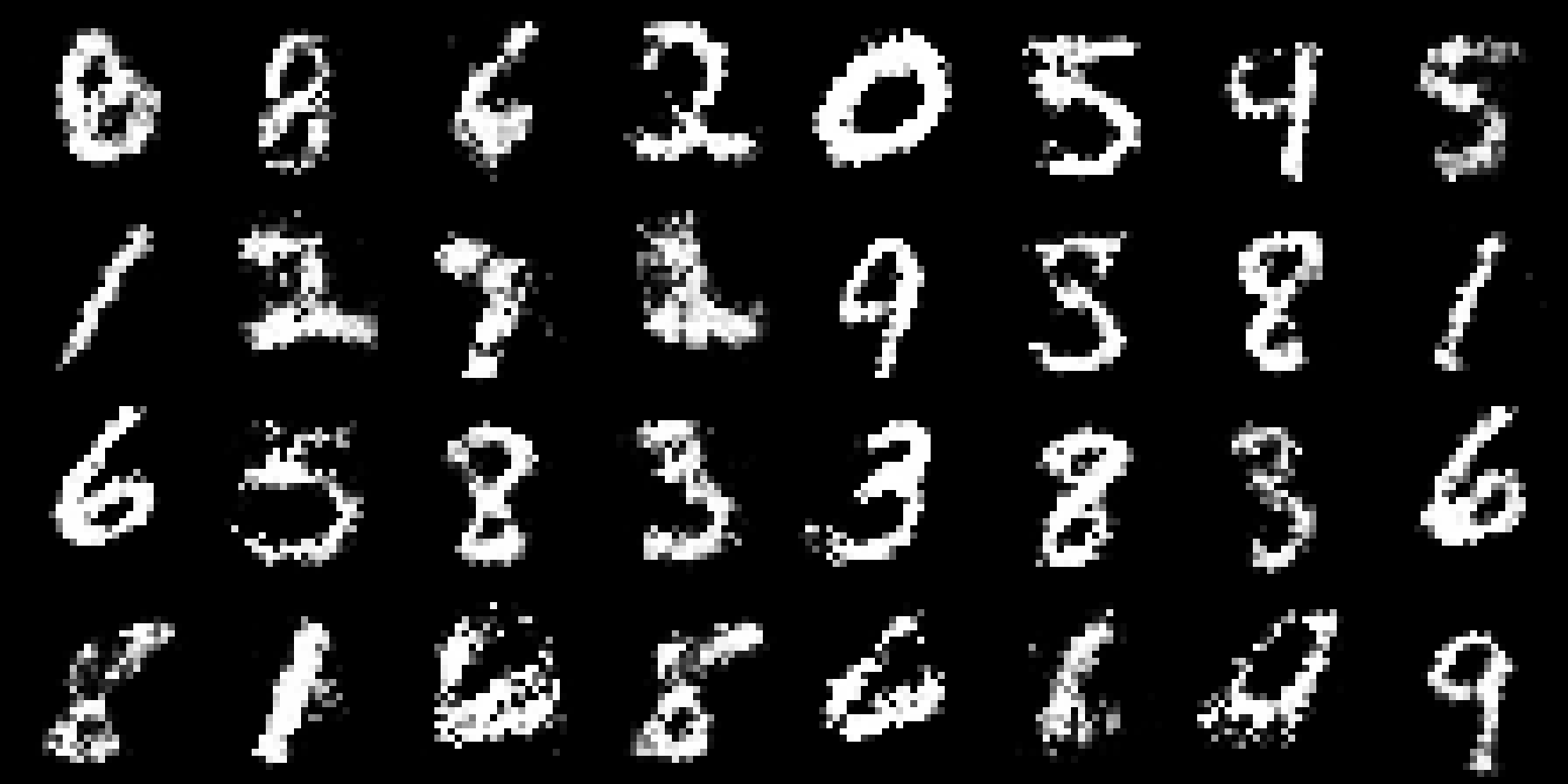}
    \caption{80\% dropout}
  \end{subfigure}
  \caption{Data samples generated by FC-{\misgan}.}%
  \label{fig:fcmisgan}
\end{figure}

\desc{Results}
Figures~\ref{fig:convmisgan} and~\ref{fig:fcmisgan}
show the generated data samples as well as
the learned mask samples produced by Conv-{\misgan} and FC-{\misgan}
under the square observation and independent dropout missing mechanisms.
From these results, we can see that Conv-{\misgan}
produces visually better samples than FC-{\misgan} on this problem.
On the other hand, under the same missing rate, independent dropout leads to
worse samples than square observations.
Samples generated by ConvAC are shown in Figure~\ref{fig:convac} in
Appendix~\ref{sec:convac}.

We quantitatively evaluate Conv-{\misgan}, FC-{\misgan} and
ConvAC under two missing patterns with
missing rates from 10\% to 90\% with a step of 10\%.
Figure~\ref{fig:sparsityfid} shows that {\misgan} in general outperforms ConvAC
as ConvAC tends to generate samples with aliasing artifacts as
shown in Figure~\ref{fig:convac}.
It also shows that in the square observation case,
Conv-{\misgan} and FC-{\misgan} have similar performance in terms of their FIDs.
However, under independent dropout, the performance of FC-{\misgan}
degrades significantly as the missing rate increases compared to
Conv-{\misgan}.
This is because independent dropout with high missing rate
makes the problem more challenging as it induces less overlapping
co-occurrence among pixels, which degrades the signal for understanding
the overall structure.

This is illustrated in Figure~\ref{fig:partition}
where the observed pattern comes from one of four equally probable
14$\times$14 square quadrants with no overlap.
Clearly this missing data problem is ill-posed and we could never
uniquely determine the correlation between pixels across different quadrants
without additional assumptions.
The samples generated by the FC-{\misgan} produce obvious discontinuity
across the boundary of the quadrants as it does not rely on any prior
knowledge about how pixels are correlated.
The discontinuity artifact is less severe with Conv-{\misgan}
since the convolutional layers encourage local smoothness.
This shows the importance of incorporating prior knowledge into the model when
the problem is highly ill-posed.

\def\figwidth{1.3in}
\begin{figure}
  \centering
  \begin{subfigure}[b]{0.75\textwidth}
    \includegraphics[width=2in]{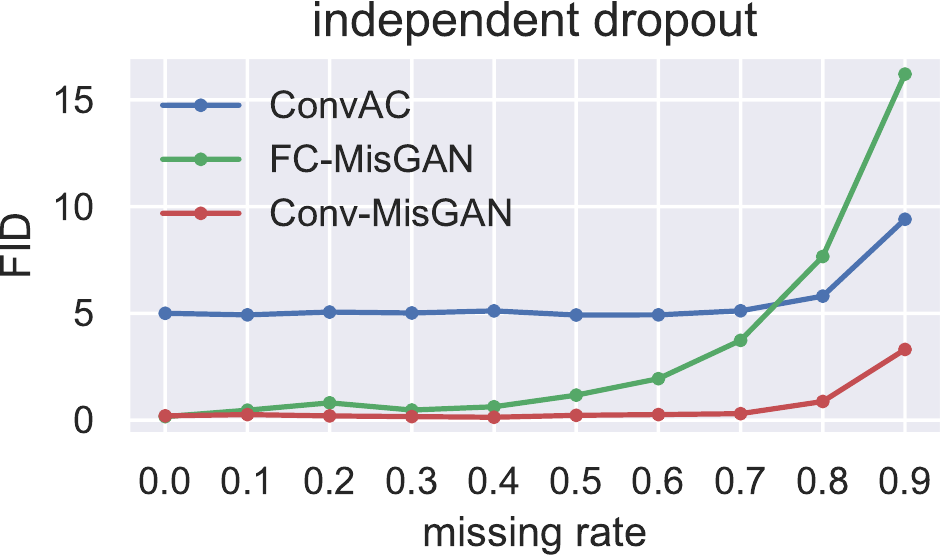}
    \includegraphics[width=2in]{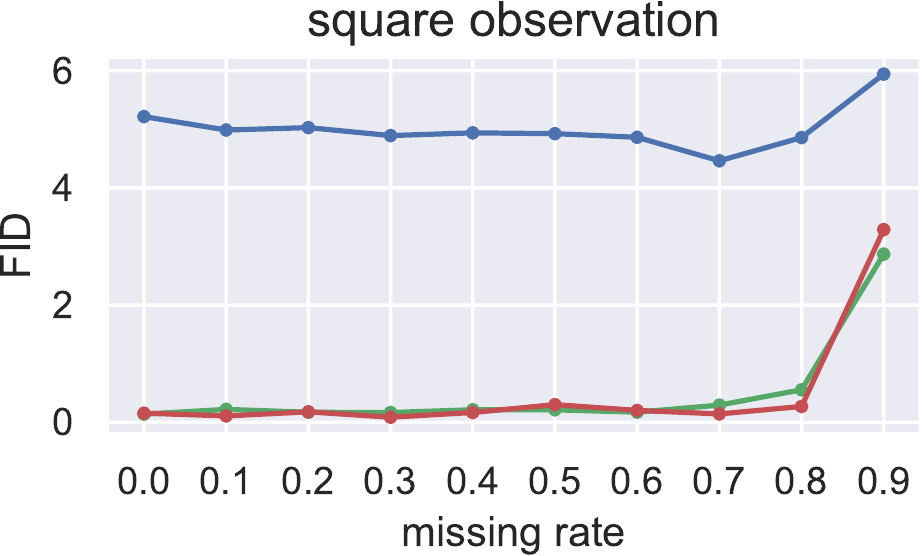}
  \end{subfigure}
  \begin{subfigure}[b]{0.24\textwidth}
    \centering
    \includegraphics[width=\figwidth]{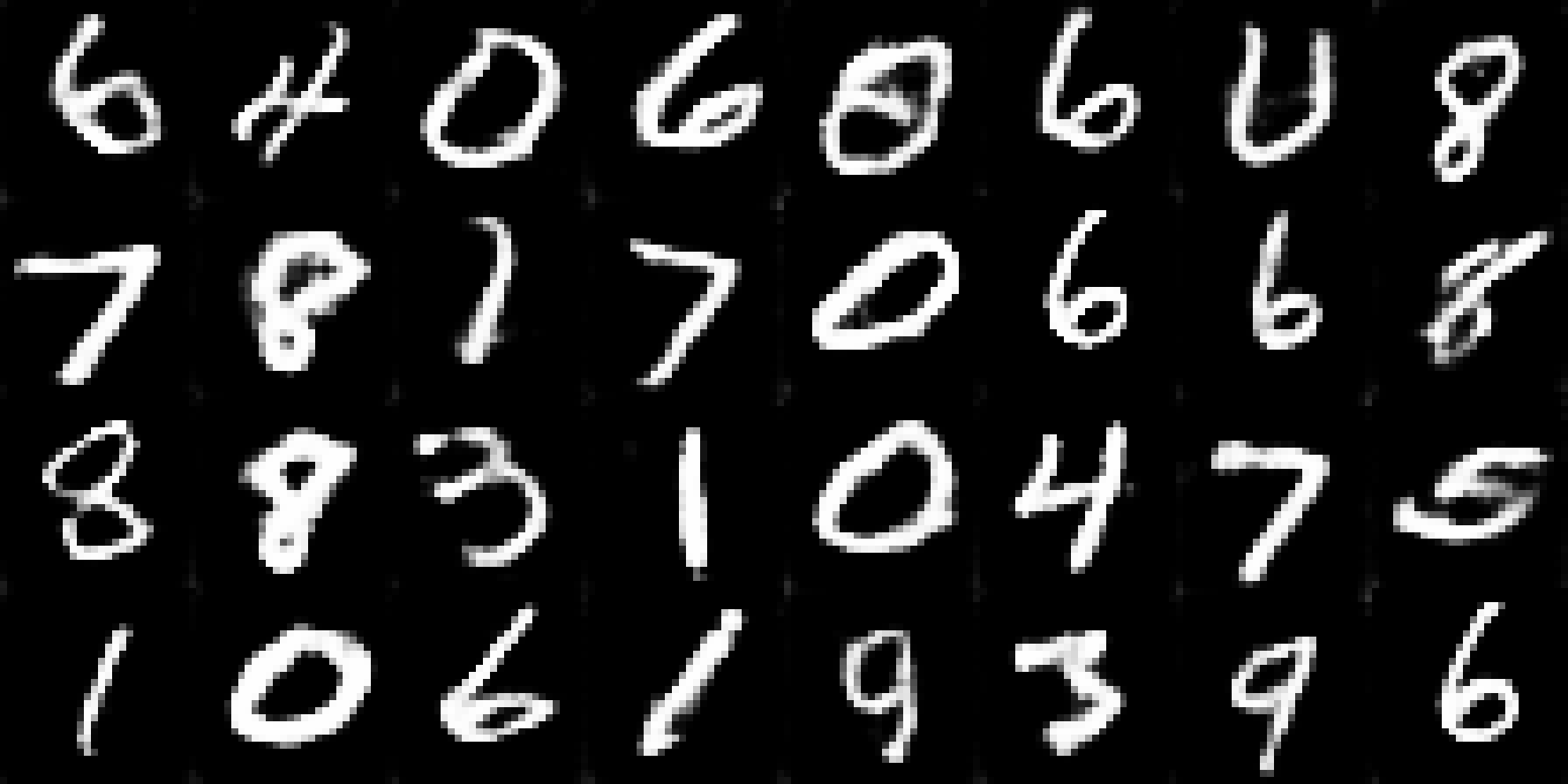} \\
    \includegraphics[width=\figwidth]{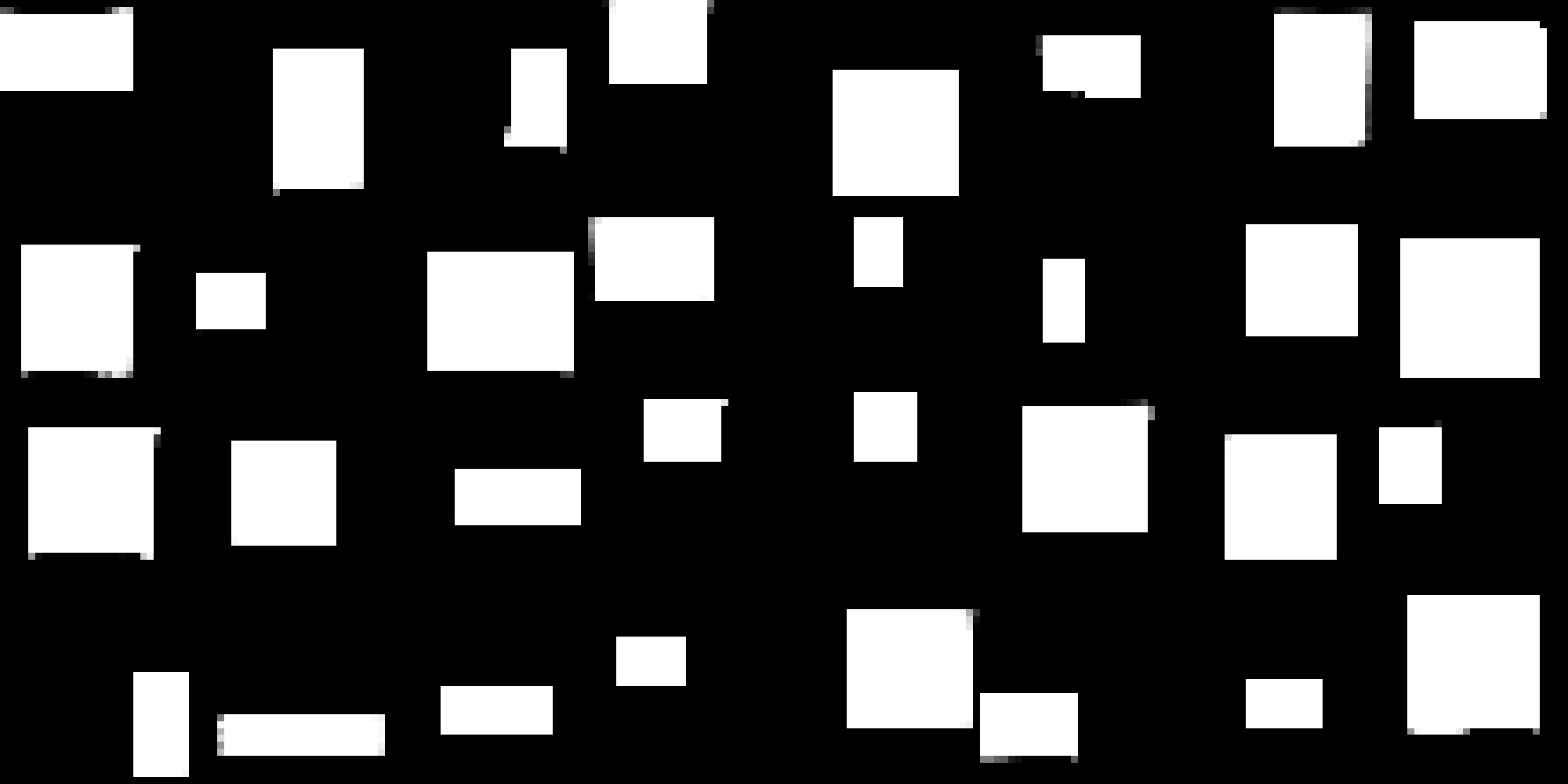}
  \end{subfigure}
  \caption{Left \& Middle: Missing rate versus FID (The lower the better)
  with different missing data processes.
  Right: Data samples (top) and mask samples (bottom)
  generated by Conv-{\misgan} learned with variable-size observations.}%
  \label{fig:sparsityfid}
\end{figure}


\begin{figure}
  \centering
  \begin{subfigure}[b]{0.3\textwidth}
    \centering
    \includegraphics[width=\textwidth]{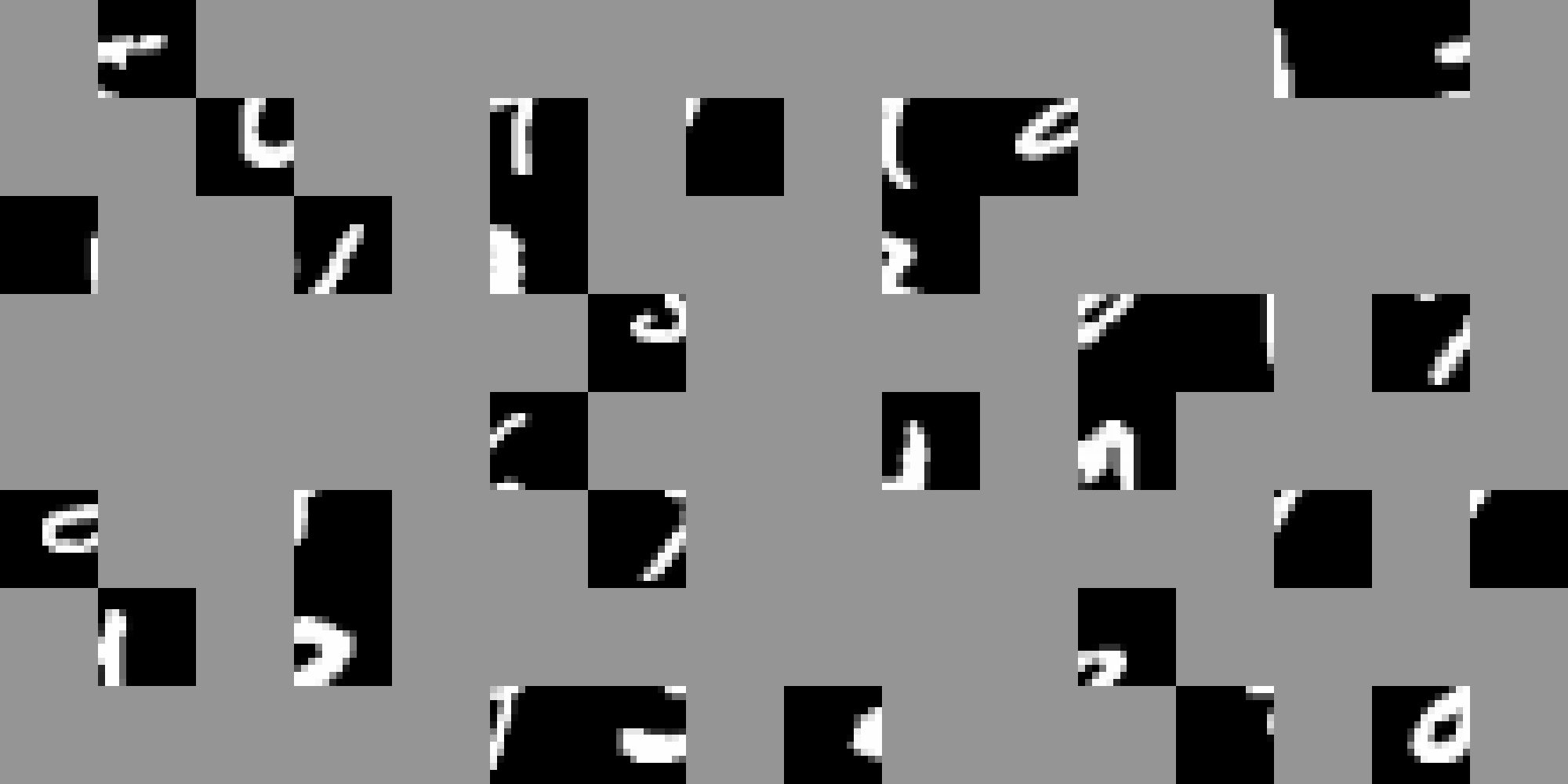}
    \caption{incomplete training data}
  \end{subfigure}
  \begin{subfigure}[b]{0.3\textwidth}
    \centering
    \includegraphics[width=\textwidth]{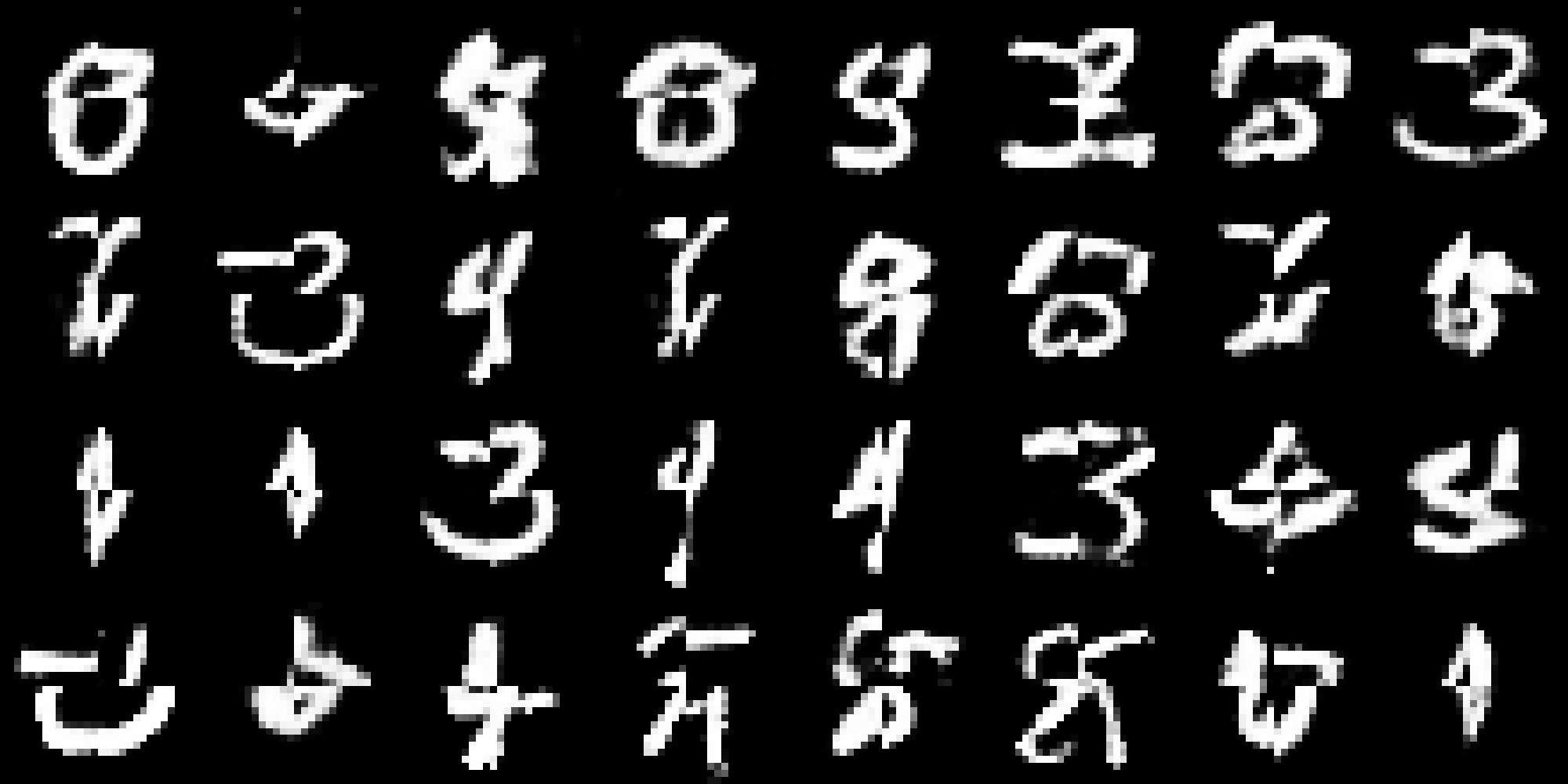}
    \caption{FC-{\misgan} (FID: 20.6)}
  \end{subfigure}
  \begin{subfigure}[b]{0.3\textwidth}
    \centering
    \includegraphics[width=\textwidth]{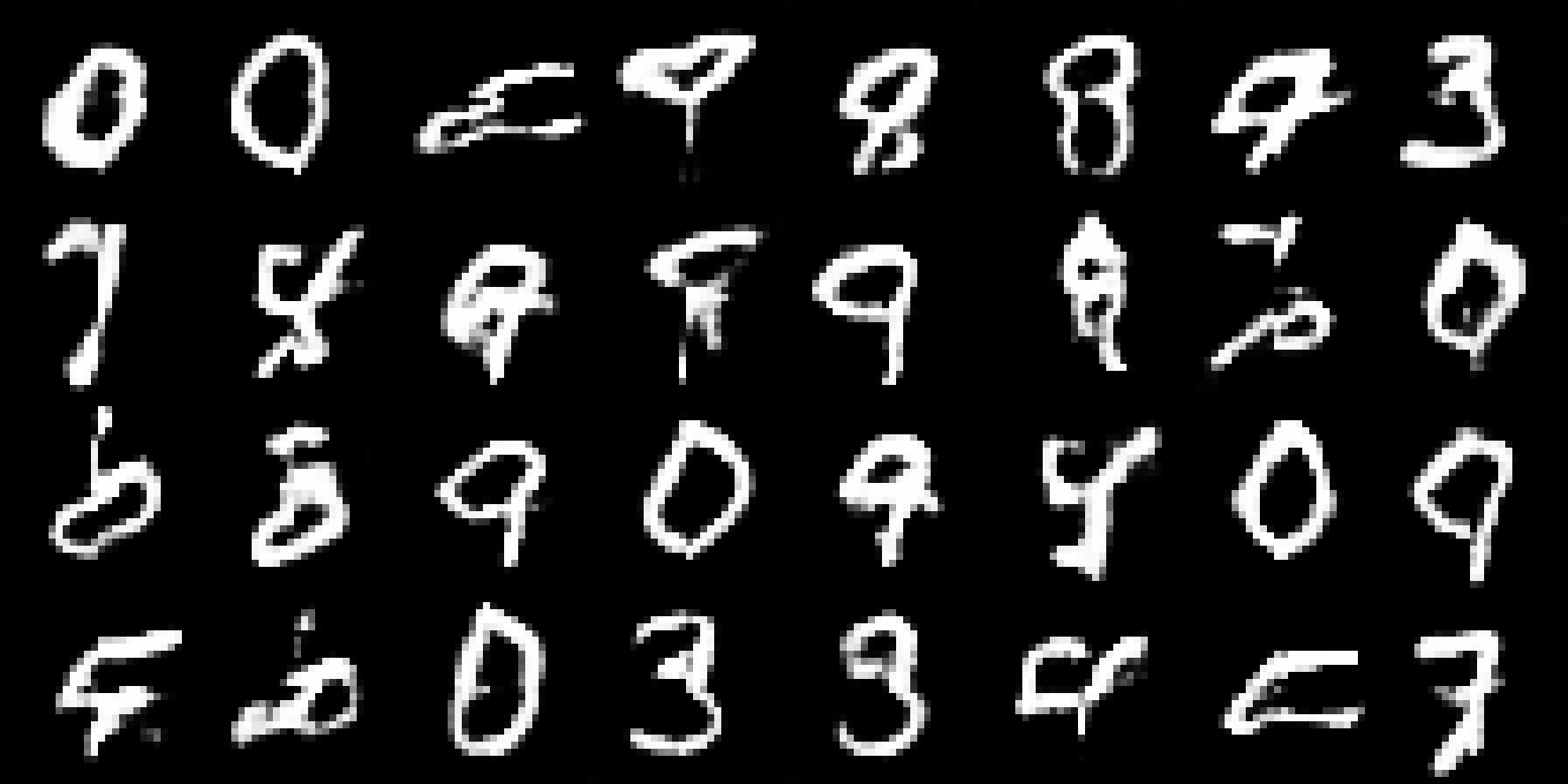}
    \caption{Conv-{\misgan} (FID: 10.8)}
  \end{subfigure}
  \caption{Data samples generated by {\misgan} when trained on
    missing data distributions with non-overlapping samples
    (square quadrants).
  }%
  \label{fig:partition}
\end{figure}


\desc{Ablation study}
We point out that the mask discriminator in {\misgan} is important
for learning the correct distribution robustly.
Figure~\ref{fig:fail} shows two common failure scenarios
that frequently happen with an AmbientGAN,
which is essentially equivalent to a {\misgan} without the mask discriminator.
Figure~\ref{fig:fail} (left) shows a case where AmbientGAN produces
perfectly consistent masked outputs, but the learned mask distribution
is completely wrong.
Since we use $f_{\tau=0}(\xx,\mm)=\xx\odot\mm$, it
makes the role of $\xx$ and $\mm$ interchangeable when considering
only the masked outputs.
Even if we rescale the range of pixel values from $[0,1]$ to $[-1,1]$
to avoid this situation,
AmbientGAN still fails often as shown in Figure~\ref{fig:fail} (right).
In contrast,
{\misgan} avoids learning such degenerate solutions due to explicitly
modeling the mask distribution.


\begin{figure}
  \def\figwidth{1.3in}
  \centering
  \begin{subfigure}[b]{\textwidth}
    \centering
    \includegraphics[width=\figwidth]{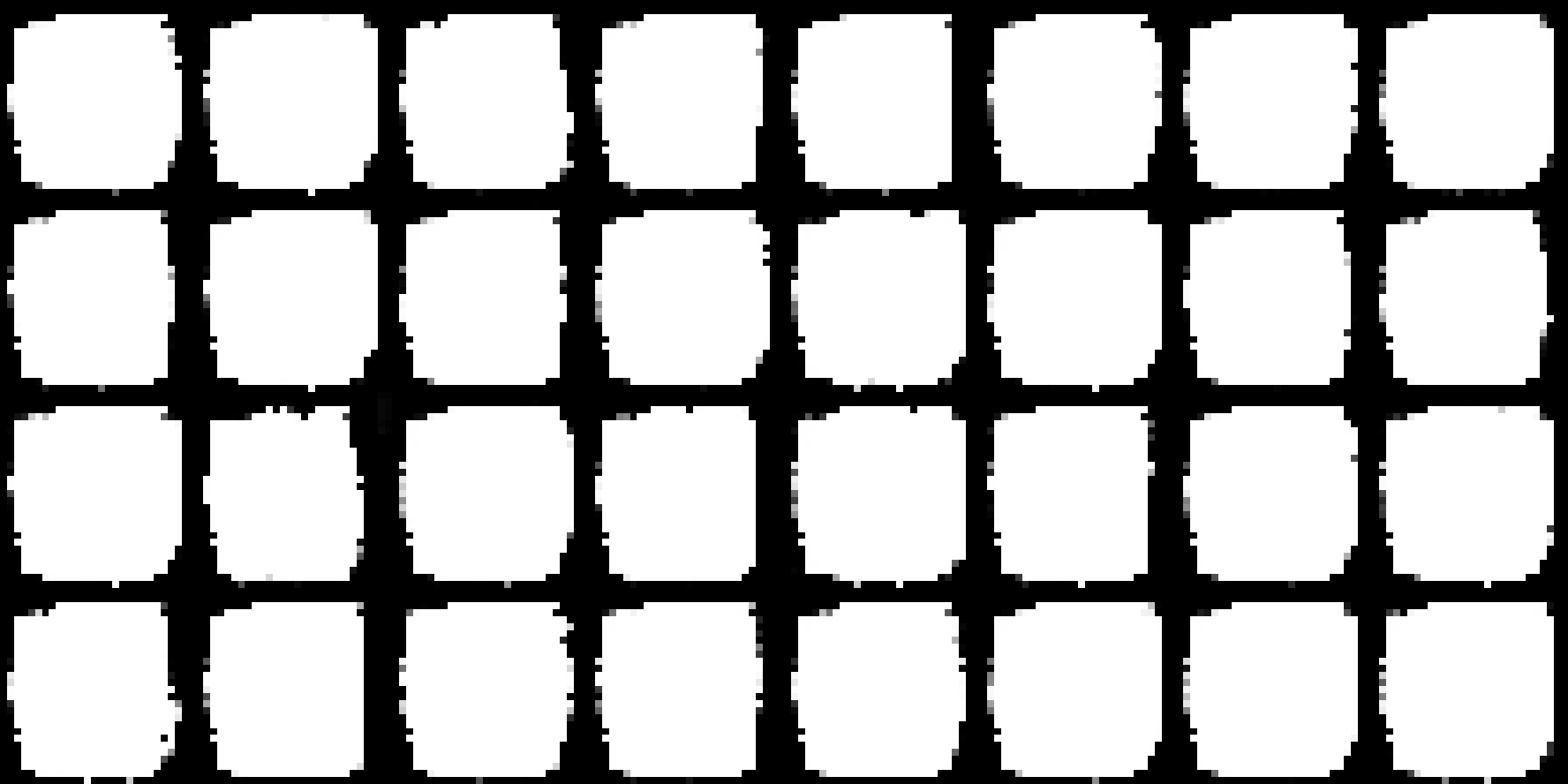}
    \includegraphics[width=\figwidth]{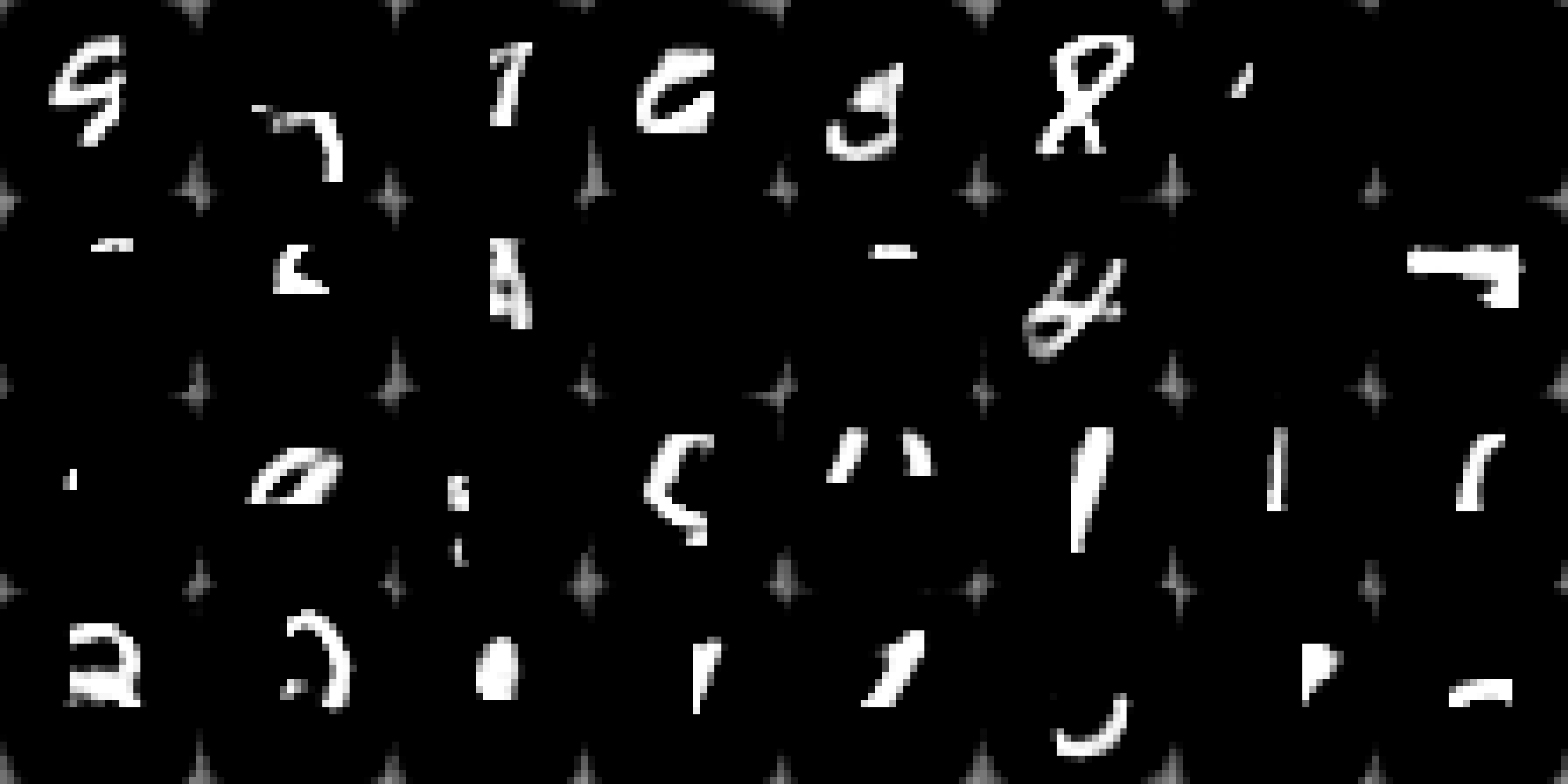}
    \hspace{1em}
    \includegraphics[width=\figwidth]{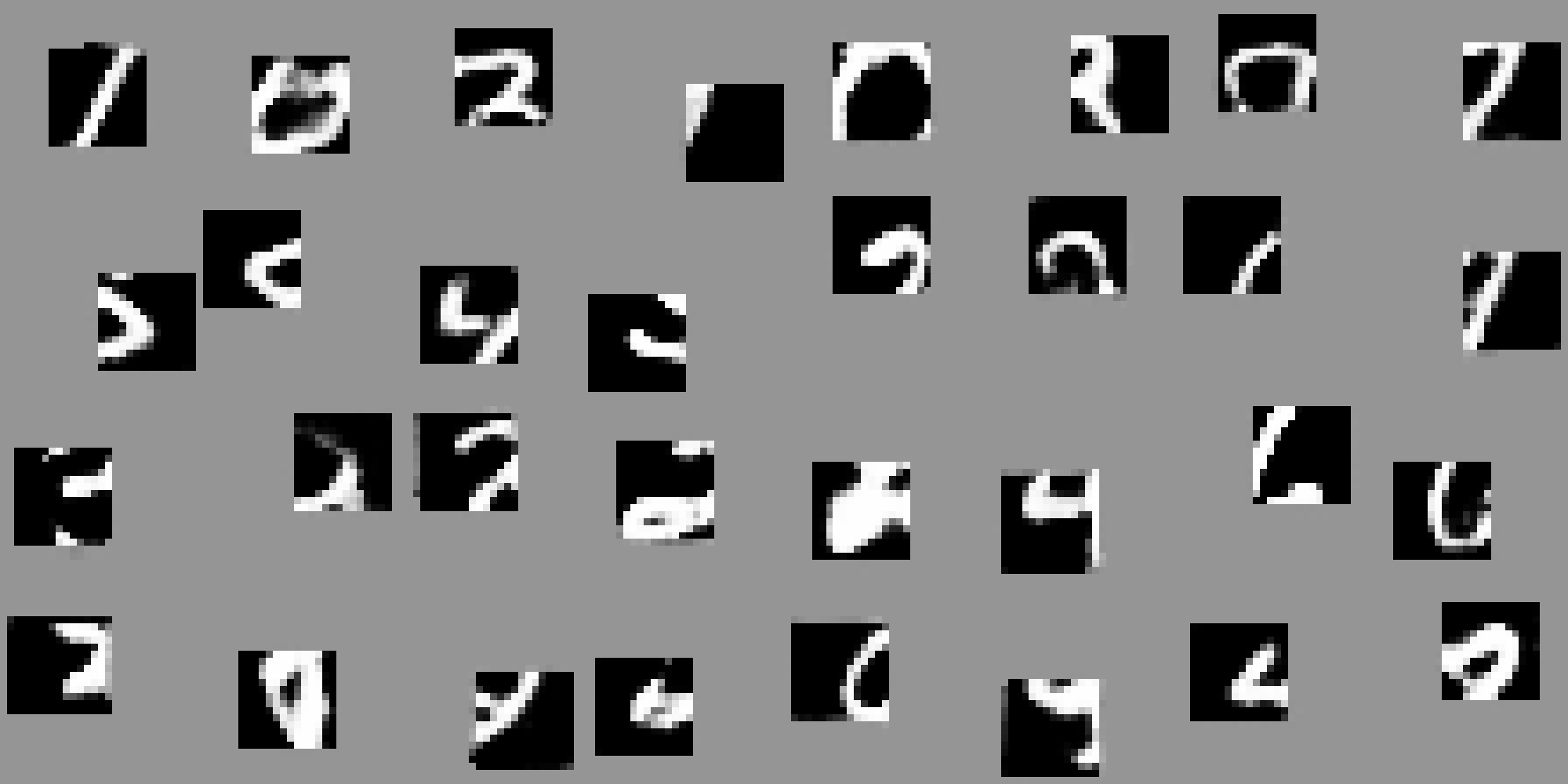}
    \includegraphics[width=\figwidth]{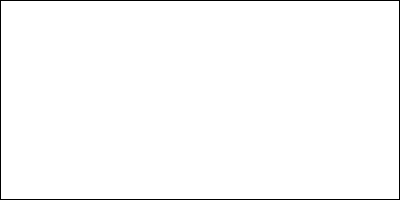}
  \end{subfigure}
  \caption{Two failure cases of AmbientGAN.
    In each pair, data samples produced by $G_x$ are on the left,
    mask samples from $G_m$ are on the right.
    In the right panels,
    the range of pixel values is rescaled to $[-1, 1]$ so
    gray pixels correspond to $\tau=0$. It learns the masks with all ones.
  }%
  \label{fig:fail}
\end{figure}

\desc{Missing data imputation}
We construct the imputer network $\widehat{G}_i$
defined in~\eqref{eq:imputer}
using a three-layer fully-connected network
with 500 hidden units in the middle layers.
Figure~\ref{fig:imputation} (left) shows the imputation results
on different examples applying novel masks randomly drawn according to
the same distribution.
Figure~\ref{fig:imputation} (right) shows the imputation results where
each row corresponds to the same incomplete input.
It demonstrates that the imputer can produce a variety of different
imputed results due to the random noise input to the imputer.
We also note that if we modify~\eqref{eq:imputeronly} to train
the imputer together with the data generator from scratch without
the mask generator/discriminator, it fails most of the time for a similar
reason to why AmbientGAN fails. The learning problem is highly ill-posed
without the agreement on the mask distribution.

\begin{figure}
  \centering
  \begin{subfigure}[b]{0.55\textwidth}
    \centering
    \includegraphics[width=2.5in]{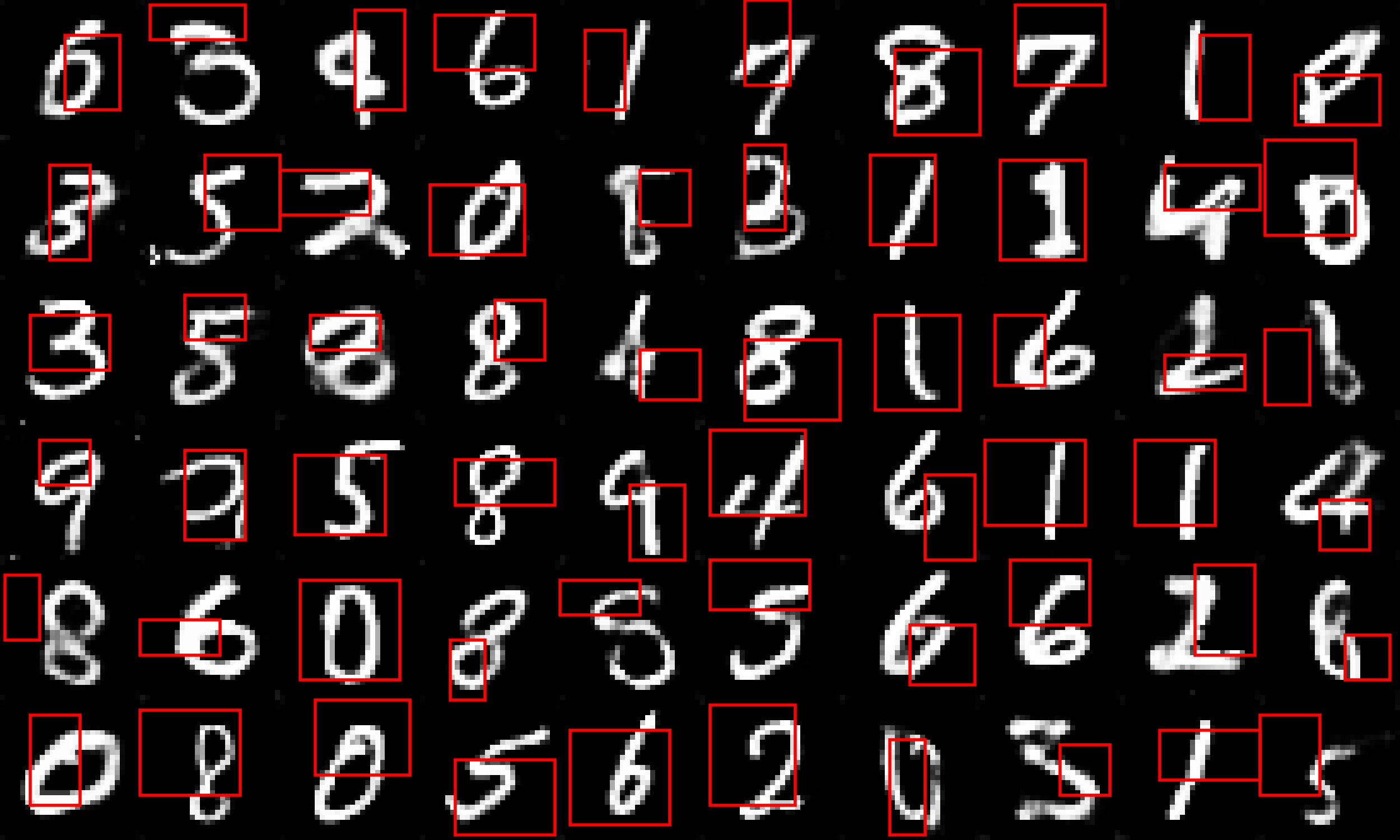}
  \end{subfigure}
  \begin{subfigure}[b]{0.44\textwidth}
    \centering
    \def\imgw{1.62in}
    \includegraphics[width=\imgw]{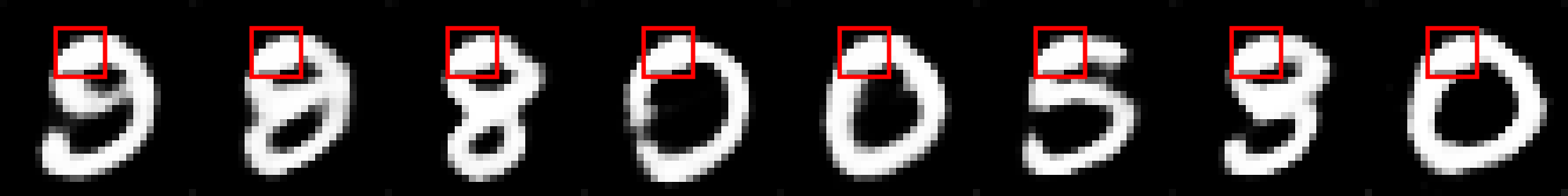}
    \\
    \includegraphics[width=\imgw]{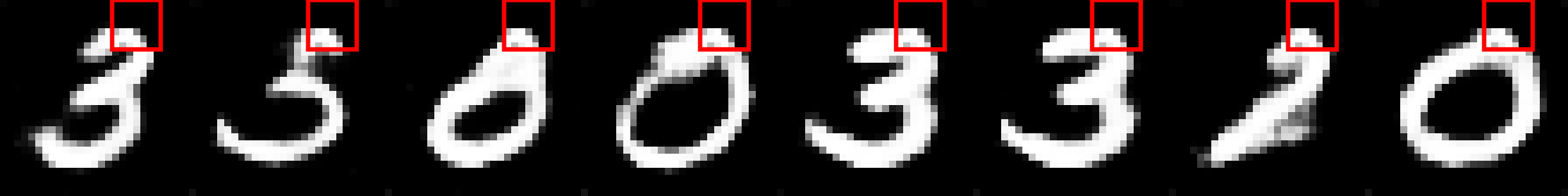}
    \\
    \includegraphics[width=\imgw]{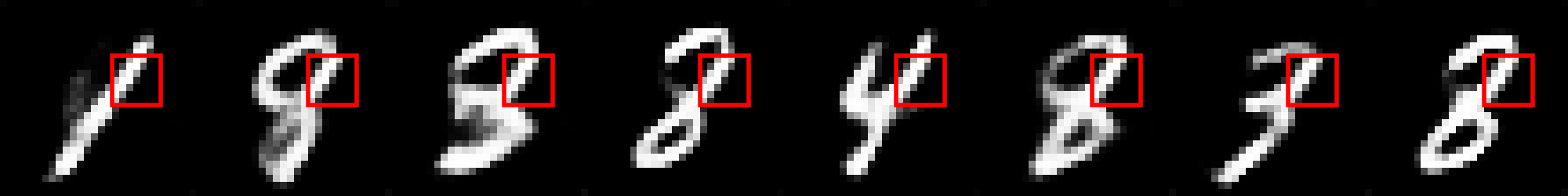}
    \\
    \includegraphics[width=\imgw]{multi-impute/7-4-8-18-3.png}
    \\
    \includegraphics[width=\imgw]{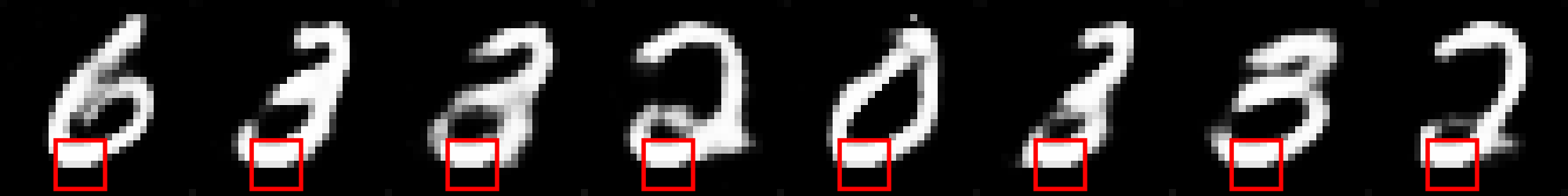}
    \\
    \includegraphics[width=\imgw]{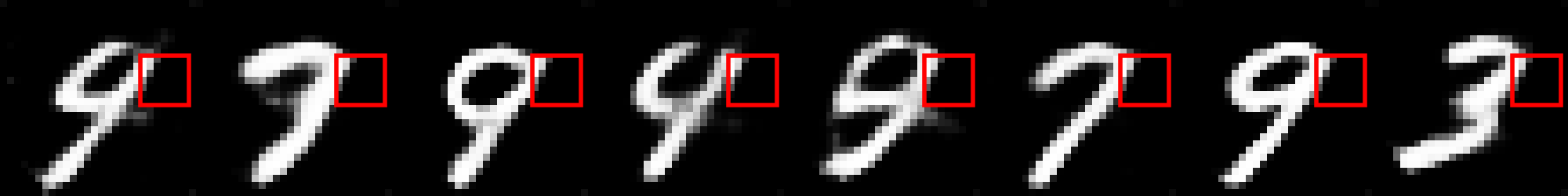}
    \\
    \includegraphics[width=\imgw]{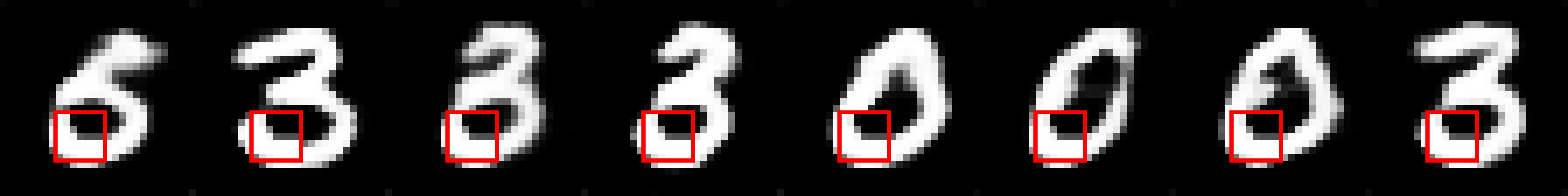}
  \end{subfigure}
  \caption{%
    Inside of each red box are the
    observed pixels; the pixels outside of the box
    are generated by the imputer.
    Right: each row corresponds to the same incomplete input,
    marked by the red box.
  }%
  \label{fig:imputation}
\end{figure}

\subsection{Quantitative evaluation}
In this section, we quantitatively evaluate the performance of
{\misgan} on three datasets: MNIST, CIFAR-10, and CelebA.
We focus on evaluating {\misgan} on the missing data
imputation task as it is widely studied and many baseline methods are
available.

\begin{figure}[t]
  \def\imgw{.35\textwidth}
  \hspace*{-.25in}
  \centering
  \includegraphics[width=\imgw]{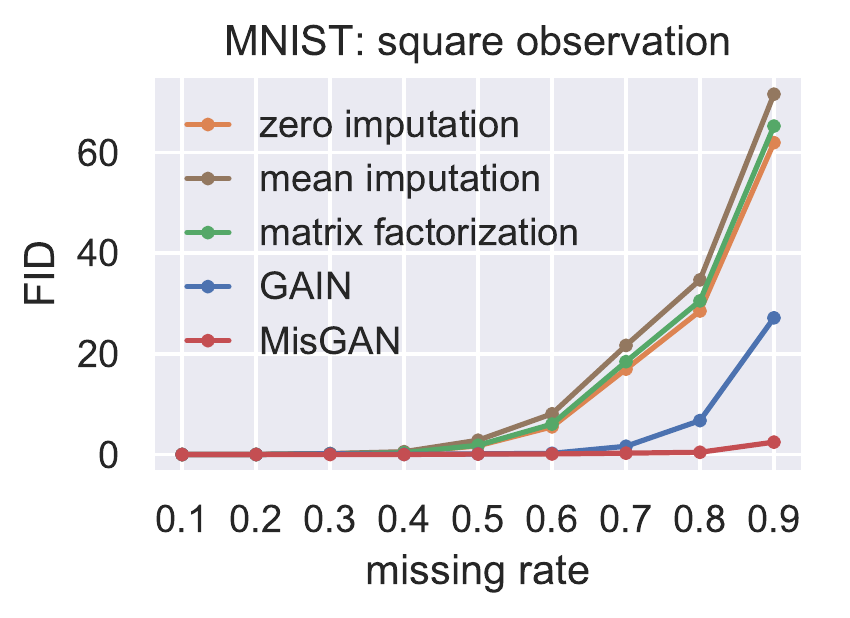}
  \hspace*{-.1in}
  \includegraphics[width=\imgw]{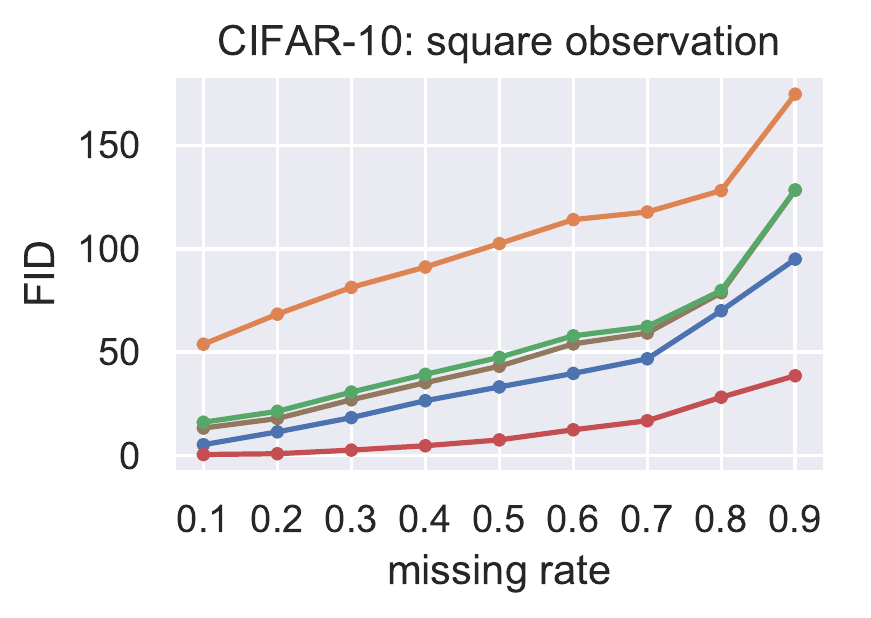}
  \hspace*{-.1in}
  \includegraphics[width=\imgw]{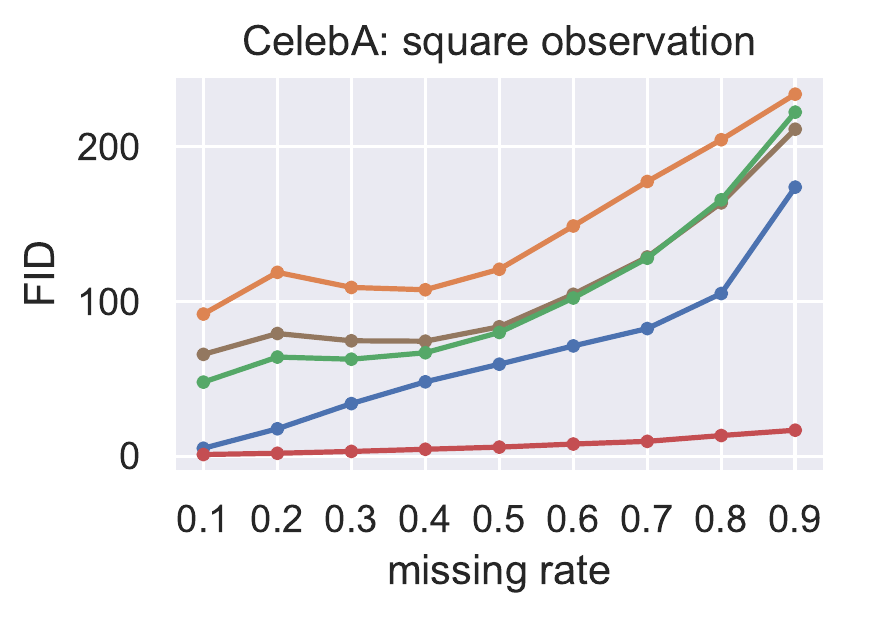}
  \\[.1em]
  \hspace*{-.25in}
  \centering
  \includegraphics[width=\imgw]{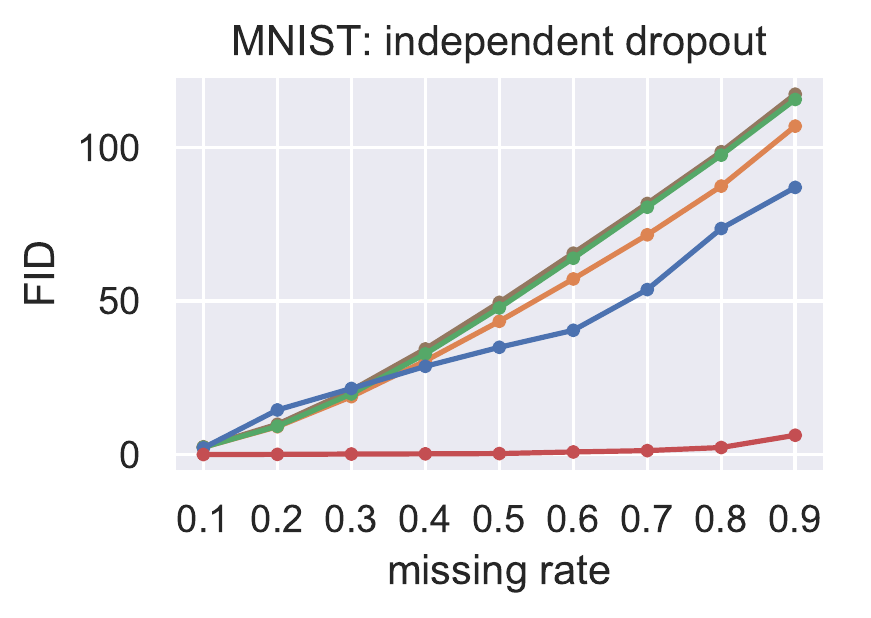}
  \hspace*{-.1in}
  \includegraphics[width=\imgw]{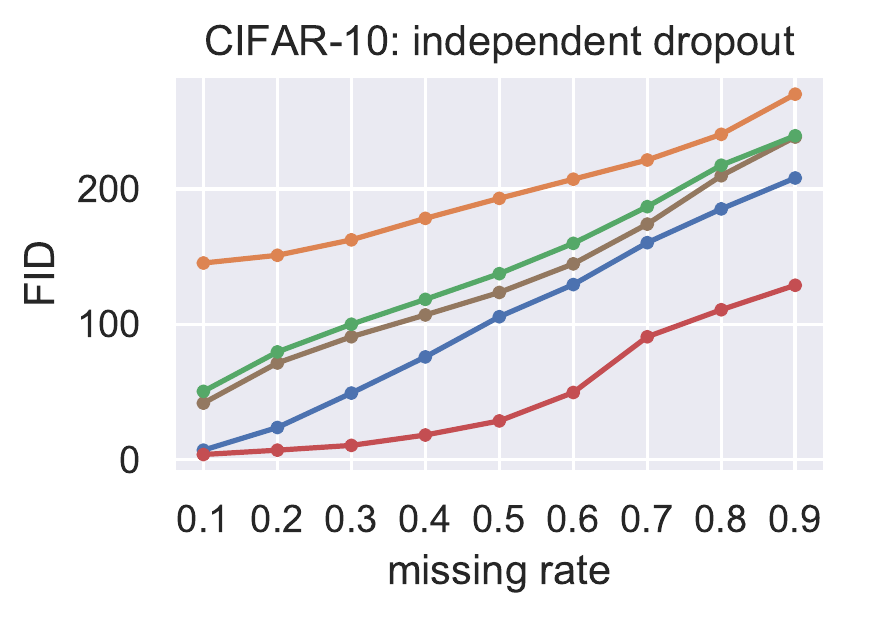}
  \hspace*{-.1in}
  \includegraphics[width=\imgw]{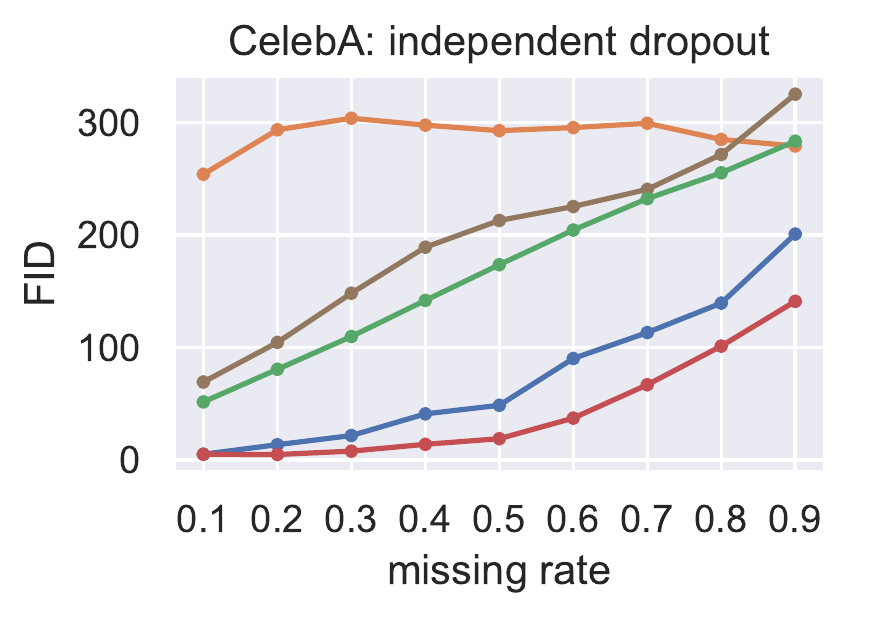}
  \caption{%
    Comparison of FID across different missing rates.
  }%
  \label{fig:fidcompare}
\end{figure}

\desc{Baselines}
We compare the {\misgan} imputer to a range of baseline methods
including the basic zero/mean imputation, matrix factorization,
and the recently proposed
Generative Adversarial Imputation Network (GAIN) \citep{yoon2018gain}.
GAIN is an imputation model that employs an imputer network to complete the
missing data. It is trained adversarially with a discriminator that
determines which entries in the completed data were actually observed
and which were imputed.
It has shown to outperform many state-of-the-art imputation methods.

\desc{Evaluation of imputation}
We impute all of the incomplete examples in the training set and
use the FID between the imputed data and the original fully-observed data
as the evaluation metric.\footnote{
See Appendix~\ref{sec:rmse} for a discussion of why we favor this
metric over evaluating metrics like RMSE between the imputed
missing values and the ground truth.}

\desc{Architecture}
We use convolutional generators and discriminators for {\misgan}
for all experiments in this section.
For MNIST, we use the same fully-connected imputer network
as described in the previous section;
for CIFAR-10 and CelebA, we use a five-layer U-Net architecture
\citep{ronneberger2015u}
for the imputer network $\widehat{G}_i$ in {\misgan}.

\desc{Results}
We compare all the methods under two missing patterns,
square observation and independent dropout, with
missing rates from 10\% to 90\%.
Figure~\ref{fig:fidcompare} shows that
{\misgan} consistently outperforms other methods
in all cases, especially under high missing rates.
In our experiments, we found GAIN training to be quite unstable
for the block missingness.
We also observed that there is a ``sweet spot''
for the number of training epochs when training GAIN.
If trained longer, the imputation behavior will gradually become similar
to constant imputation (see Appendix~\ref{sec:gain} for details).
On the other hand,
we find that training {\misgan} is more stable than training GAIN
across all scenarios
in the experiments.
The imputation results of {\misgan} and GAIN are shown in
Appendix~\ref{sec:cifar10misgan}, \ref{sec:celebamisgan}, and \ref{sec:gain}.

\section{Discussion and future work}%
\label{sec:discussion}

This work presents and evaluates a highly flexible framework
for learning standard GAN data generators in the presence
of missing data.
Although we only focus on the MCAR case in this work, {\misgan}
can be easily extended to cases where the
output of the data generator is provided to the mask generator.
These modifications can capture both MAR and NMAR mechanisms.
The question of learnability requires further
investigation as the analysis in Section~\ref{sec:theory} no longer
holds due to dependence between the transition matrix and
the data distribution under MAR and NMAR.
We have tried this modified architecture in our experiments
and it showed similar results as to the original {\misgan}.
This suggests that the extra dependencies may not adversely
affect learnability.
We leave the formal evaluation of this modified framework
for future work.

\bibliography{paper}
\bibliographystyle{iclr2019_conference}

\newpage
\clearpage
\appendix
\label{appendix}

\section{Proof of Theorem~\ref{thm:nullspace} and Theorem~\ref{thm:marginals}}
\label{sec:proof}

Let $\P$ be the finite set of feature values.
For the $n$-dimensional case,
let $\M=\{0,1\}^n$ be the set of
masks and $\I=\P^n$ be the set of all possible feature vectors.
Also let $\D_\M$ be the set of probability
distributions on $\M$, which implies
$\mm\succeq\0$ and $\sum_{\vv\in\I}\mm(\vv)=1$
for all $\mm\in\M$, where $\mm(\vv)$ denotes the entry of $\mm$ indexed
by $\vv$.

Given $\tau\in \P$ and $\qq\in \D_\M$, define the transformation
\begin{align*}
  T_{\qq,\tau} : \R^\I & \to \R^\I \\
  \xx &\mapsto \yy = T_{\qq,\tau} \xx
\end{align*}
by
\begin{equation}\label{eq:T}
\yy(\vv) =
(T_{\qq,\tau} \xx)(\vv)=\sum_{\mm\in \M}
\sum_{\uu\in \I} \qq(\mm) \xx(\uu) \ind{\uu\odot\mm + \tau \bar\mm = \vv},
\
\text{ for all }\vv\in\I
\end{equation}
where $\odot$ is the entry-wise multiplication and
$\1\{\cdot\}$ is the indicator function.

Given $\mm\in \M$,
define an equivalent relation $\sim_\mm$ on $\I$ by $\vv\sim_\mm \uu$
iff $\vv\odot \mm = \uu\odot \mm$, and denote by $[\vv]_\mm$
the equivalence class containing $\vv$.

Given $\qq\in \D_\M$, let $\S_\qq\subset \M$
be the support of $\qq$, that is,
\[
  \S_\qq = \{\mm\in \M: \qq(\mm) > 0\}.
\]

Given $\tau\in \P$ and $\vv\in \I$,
let $\M_{\tau,\vv}$ denote the set of masks
consistent with $\vv$ in the sense that
$\qq(\mm)>0$ and $\vv\odot \bar \mm= \tau \bar
\mm$, that is,
\[
  \M_{\tau,\vv} = \{\mm\in \S_\qq: \vv\odot \bar \mm= \tau \bar \mm\}.
\]

\begin{proposition}
  \label{prop:p1}
  \mbox{
    For any $\qq\in \D_\M$ and $\xx\in \R^\I$, the collection of marginals
    $\left\{\xx([\vv]_\mm): \vv\in\I, \mm\in \S_\qq\right\}$
  }
  determines $T_{\qq,\tau}\xx$ for all $\tau\in \P$
  where $\xx([\vv]_\mm):=\sum_{\uu\in [\vv]_\mm} \xx(\uu)$.
\end{proposition}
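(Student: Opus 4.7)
The plan is to show directly, by manipulating the defining formula~\eqref{eq:T}, that each coordinate $(T_{\qq,\tau}\xx)(\vv)$ is expressible as a finite linear combination of values of the form $\xx([\vv]_\mm)$, with coefficients depending only on $\qq$ and $\tau$ (and not on $\xx$). This gives the required determination.

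Concretely, I will split the indicator $\ind{\uu\odot\mm+\tau\bar\mm=\vv}$ into a product of two coordinate-wise indicators according to whether $m_d=1$ or $m_d=0$. On the coordinates where $m_d=1$, the equality $\uu\odot\mm+\tau\bar\mm=\vv$ reduces to $u_d=v_d$, which is exactly the condition $\uu\in[\vv]_\mm$ (i.e.\ $\uu\odot\mm=\vv\odot\mm$). On the coordinates where $m_d=0$, the equality reduces to $v_d=\tau$, which is the condition $\vv\odot\bar\mm=\tau\bar\mm$, i.e.\ $\mm\in\M_{\tau,\vv}$ (noting that the factor $\qq(\mm)$ already restricts $\mm$ to $\S_\qq$). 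Thus
\[
  \ind{\uu\odot\mm+\tau\bar\mm=\vv}
  = \ind{\uu\in[\vv]_\mm}\,\ind{\vv\odot\bar\mm=\tau\bar\mm}.
\]

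Substituting this factorization into~\eqref{eq:T} and interchanging the finite sums gives
\[
  (T_{\qq,\tau}\xx)(\vv)
  = \sum_{\mm\in\M_{\tau,\vv}} \qq(\mm)\sum_{\uu\in[\vv]_\mm} \xx(\uu)
  = \sum_{\mm\in\M_{\tau,\vv}} \qq(\mm)\,\xx([\vv]_\mm).
\]
Since $\M_{\tau,\vv}\subseteq\S_\qq$, the right-hand side is a linear combination of the quantities $\{\xx([\vv]_\mm):\mm\in\S_\qq\}$ with coefficients $\qq(\mm)$ that depend only on $\qq$ and $\tau$ (via the selection of $\M_{\tau,\vv}$). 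This establishes that knowing the marginals $\{\xx([\vv]_\mm):\vv\in\I,\mm\in\S_\qq\}$ suffices to compute $T_{\qq,\tau}\xx$ for every $\tau\in\P$.

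There is no real obstacle; the content is a careful bookkeeping exercise. The one spot that requires attention is the transition from the unrestricted sum over $\mm\in\M$ to the sum over $\mm\in\M_{\tau,\vv}$: one must justify that the contributions with $\qq(\mm)=0$ can be dropped (trivial) and that the remaining $\mm\in\S_\qq$ contributing a nonzero indicator are precisely those in $\M_{\tau,\vv}$. Once this is in hand, the proposition falls out immediately, and it also immediately sets up Theorem~\ref{thm:nullspace}, since the marginals $\xx([\vv]_\mm)$ do not depend on $\tau$: two vectors $\xx,\xx'$ agreeing on all these marginals must satisfy $T_{\qq,\tau}\xx=T_{\qq,\tau}\xx'$ for every $\tau\in\P$ simultaneously.
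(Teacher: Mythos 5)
Your proof is correct and is essentially identical to the paper's: the paper likewise factors the indicator $\ind{\uu\odot\mm+\tau\bar\mm=\vv}$ into $\ind{\uu\odot\mm=\vv\odot\mm}\,\ind{\tau\bar\mm=\vv\odot\bar\mm}$, interchanges the finite sums, and arrives at $(T_{\qq,\tau}\xx)(\vv)=\sum_{\mm\in\M_{\tau,\vv}}\qq(\mm)\,\xx([\vv]_\mm)$, which is exactly your display. The bookkeeping points you flag (dropping $\qq(\mm)=0$ terms and identifying the surviving masks with $\M_{\tau,\vv}$) are handled the same way there.
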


\begin{proof} This is clear from the following equation
  \begin{equation}
    \label{eq:T2} T_{\qq,\tau}\xx(\vv) =\sum_{\mm\in   \M_{\tau,\vv}} \qq(\mm)
    \xx([\vv]_\mm),
  \end{equation}
  which can be obtained from \eqref{eq:T} as follows,
  \begin{align*}
    T_{\qq,\tau}\xx(\vv)
    &=
    \sum_{\mm\in \S_\qq} \sum_{\uu\in \I} \qq(\mm) \xx(\uu)
    \ind{\uu\odot \mm = \vv\odot \mm} \ind{\tau \bar \mm = \vv\odot \bar \mm}
    \\
    &=
    \sum_{\mm\in \S_\qq}
    \qq(\mm) \ind{\tau \bar \mm = \vv\odot \bar \mm}
    \sum_{\uu\in \I}\xx(\uu)
    \ind{\uu\odot \mm = \vv\odot \mm}
    \\
    &=
    \sum_{\mm\in \S_\qq}  \qq(\mm) \ind{\tau \bar \mm = \vv\odot \bar \mm}
    \xx([\vv]_\mm)
    \\
    &=
    \sum_{\mm\in  \M_{\tau,\vv}} \qq(\mm) \xx([\vv]_\mm).
\end{align*} \end{proof}

\begin{proposition}
  \label{prop:p2}
  For any $\tau\in \P$, $\qq\in \D_\M$ and $\xx\in \R^\I$,
  the vector $T_{\qq,\tau}\xx$ determines the collection of marginals
  $\{\xx([\vv]_\mm):\vv\in\I, \mm\in \S_\qq\}$.
\end{proposition}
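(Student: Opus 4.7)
My plan is to give an explicit recursive inversion formula that extracts each marginal $\xx([\vv^*]_{\mm^*})$ from $\yy := T_{\qq,\tau}\xx$, and to close the recursion by induction on $k := |T|$, the number of coordinates of $\vv^*|_{\mm^*}$ equal to $\tau$, where $T := \{d \in \mm^* : v^*_d = \tau\}$ and $A := \mm^* \setminus T$.

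First I would rewrite \eqref{eq:T2} as $\yy(\vv) = \sum_{\mm \in \S_\qq,\ \mm \supseteq N(\vv)} \qq(\mm)\,\xx([\vv]_\mm)$ with $N(\vv) := \{d : v_d \ne \tau\}$, and introduce
\[
G(\mm^*, \vv^*) \;:=\; \sum_{\vv \in \I,\ \vv|_{\mm^*} = \vv^*|_{\mm^*}} \yy(\vv),
\]
a linear functional of $\yy$ that is therefore determined by $T_{\qq,\tau}\xx$. Swapping the two sums in $G$ and organizing by $C := (\mm \cap \mm^*) \cap T$, the joint constraints $\vv|_{\mm^*} = \vv^*|_{\mm^*}$ and $v_d = \tau$ for $d \notin \mm$ force $\mm \supseteq A$, and a short bookkeeping step---summing $\xx([\vv]_\mm)$ over the free coordinates $\bar{\mm^*} \cap \mm$---collapses each inner sum to $\xx([\vv^*]_{A \cup C})$, yielding
\[
G(\mm^*, \vv^*) \;=\; \sum_{C \subseteq T} Q(C)\,\xx([\vv^*]_{A \cup C}),
\qquad Q(C) := \sum_{\mm \in \S_\qq,\ \mm \cap \mm^* = A \cup C} \qq(\mm).
\]

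Since $A \cup C \subseteq \mm^*$, I would further expand each coarser marginal via $\xx([\vv^*]_{A\cup C}) = \sum_{\bb \in \P^{T \setminus C}} \xx([\vv^{(\bb)}]_{\mm^*})$, where $\vv^{(\bb)}$ agrees with $\vv^*$ on $A \cup C$ and takes value $\bb$ on $T\setminus C$. The single choice $\bb \equiv \tau$ recovers $\vv^*$ itself, while every other $\bb$ strictly lowers the number of $\tau$'s appearing in the observed pattern on $\mm^*$. Collecting the coefficients of $\xx([\vv^*]_{\mm^*})$ therefore gives
\[
G(\mm^*,\vv^*) \;=\; Q_{\mathrm{tot}}\,\xx([\vv^*]_{\mm^*}) \;+\; (\text{known}),
\]
with $Q_{\mathrm{tot}} := \sum_{C \subseteq T} Q(C) = \sum_{\mm \in \S_\qq,\ \mm \supseteq A} \qq(\mm) \ge \qq(\mm^*) > 0$, and the ``known'' part is a linear combination of marginals $\xx([\vv']_{\mm^*})$ whose $\vv'|_{\mm^*}$ contain strictly fewer than $k$ coordinates equal to $\tau$.

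The recursion closes by induction on $k$: the base case $k = 0$ has an empty ``known'' part and yields $\xx([\vv^*]_{\mm^*}) = G(\mm^*,\vv^*)/Q_{\mathrm{tot}}$ directly, and the inductive step subtracts the previously recovered marginals and divides by $Q_{\mathrm{tot}} > 0$. The main obstacle I anticipate is the combinatorial accounting in the expansion step---verifying that the coefficient of $\xx([\vv^*]_{\mm^*})$ collects exactly to $Q_{\mathrm{tot}}$ with no stray contributions, and that every other term lies strictly lower in the induction---after which the entire argument is linear in $\xx$ and so applies to arbitrary $\xx \in \R^\I$, not merely to probability vectors.
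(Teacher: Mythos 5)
Your proposal is correct, and it takes a genuinely different route from the paper. The paper fixes a masked outcome $\vv_0$ and inducts on $|\M_{\tau,\vv_0}|$, the number of support masks consistent with $\vv_0$; the inductive step untangles the sum $\sum_\ell \qq(\mm_\ell)\xx([\vv_0]_{\mm_\ell})$ by introducing the meet $\mm=\bigwedge_\ell\mm_\ell$ and expressing each $\xx([\vv_0]_{\mm_\ell})$ through the coarser marginal $\xx([\vv_0]_\mm)$, recovering several marginals at once. You instead target a single pair $(\mm^*,\vv^*)$, aggregate many entries of $\yy$ into the linear functional $G(\mm^*,\vv^*)$, and induct on the number of observed coordinates equal to $\tau$. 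I checked the bookkeeping you flagged: for fixed $\mm$ the inner sum over admissible $\vv$ does collapse to $\xx([\vv^*]_{\mm\cap\mm^*})$ (the free coordinates $\bar{\mm^*}\cap\mm$ enumerate each class exactly once, and the constraint on $\mm^*\cap\bar\mm$ forces $\mm\supseteq A$), the refinement of $\xx([\vv^*]_{A\cup C})$ into $\mm^*$-marginals contributes $\xx([\vv^*]_{\mm^*})$ exactly once per $C$ (via $\bb\equiv\tau$), so the diagonal coefficient is $Q_{\mathrm{tot}}=\sum_{\mm\in\S_\qq,\,\mm\supseteq A}\qq(\mm)\ge\qq(\mm^*)>0$, and every off-diagonal term strictly decreases the $\tau$-count, so the system is triangular and the induction closes. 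What your approach buys is an explicit, self-contained inversion formula for each marginal with a transparently positive pivot, and it avoids the paper's slightly delicate containment $\M_{\tau,\vv}\subset\M_{\tau,\vv_0}\setminus\{\mm_\ell\}$; what the paper's approach buys is brevity, since it never needs the $C\subseteq T$ combinatorics and works entry-by-entry on $\yy$ rather than through aggregated functionals. Both arguments are linear in $\xx$ and apply to all of $\R^\I$, as required for the null-space conclusion in Theorem~\ref{thm:nullspace}.
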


\begin{proof}
  Fix $\tau\in \P$, $\qq\in \D_\M$ and $\xx\in \R^\I$.
  Since $\vv\odot \mm + \tau\bar \mm \in [\vv]_\mm$,
  it suffices to show that we can solve for $\xx([\vv]_\mm)$
  in terms of $T_{\qq,\tau}\xx$ for $\mm\in \M_{\tau,\vv}\neq \emptyset$.
  We use induction on the size of $\M_{\tau,\vv}$.

  First consider the base case $|\M_{\tau,\vv}| = 1$.
  Consider $\vv_0\in\I$ with $\M_{\tau, \vv_0} = \{\mm_0\}$.
  By \eqref{eq:T2},
  \[
    T_{\qq,\tau} \xx(\vv_0) = \qq(\mm_0) \xx([\vv_0]_{\mm_0}).
  \]
  Hence $\xx([\vv_0]_{\mm_0}) =T_{\qq,\tau}\xx(\vv_0)/\qq(\mm_0)$,
  which proves the base case.

  Now assume we can solve for $\xx([\vv]_\mm)$
  in terms of $T_{\qq,\tau}\xx$ for $\mm\in \S_\qq$
  and $\vv\in\I$ with $|\M_{\tau,\vv}|\leq k$.
  Consider $\vv_0\in\I$ with $|\M_{\tau,\vv_0}| = k+1$;
  if no such $\vv_0$ exists, the conclusion holds trivially.
  Let $\M_{\tau, \vv_0} = \{\mm_0, \mm_1, \dots, \mm_k\}$.
  We need to show that $T_{\qq,\tau}\xx$ determines
  $\xx([\vv_0]_{\mm_\ell})$ for $\ell=0,1,\dots,k$.
  By \eqref{eq:T2} again,
  \begin{equation}\label{eq:T3}
    T_{\qq,\tau}\xx(\vv_0) =\sum_{\ell=0}^k \qq(\mm_\ell)
    \xx([\vv_0]_{\mm_\ell}).
  \end{equation}
  Let $\mm = \bigwedge_{\ell=0}^k \mm_{\ell}$,
  which may or may not belong to $\S_\qq$.
  Note that
  \[
    \xx([\vv_0]_\mm)
    = \sum_{\vv\in [\vv_0]_{\mm\vee \bar \mm_\ell}} \xx([\vv]_{\mm_\ell})
    = \xx([\vv_0]_{\mm_\ell}) +
      \sum_{\vv\in [\vv_0]_{\mm\vee \bar \mm_\ell} \setminus \{\vv_0\}}
    \xx([\vv]_{\mm_\ell}),
  \]
  and hence
  \begin{equation}\label{eq:m_ell}
    \xx([\vv_0]_{\mm_\ell})
    = \xx([\vv_0]_\mm) -
    \sum_{\vv\in [\vv_0]_{\mm\vee \bar \mm_\ell} \setminus \{\vv_0\}}
    \xx([\vv]_{\mm_\ell}).
  \end{equation}
  Plugging \eqref{eq:m_ell} into \eqref{eq:T3} yields
  \begin{equation}\label{eq:m}
    \xx([\vv_0]_\mm)
    = \frac{1}{\sum_{\ell'=0}^k \qq(\mm_{\ell'})}
    \left[T_{\qq,\tau}
      \xx(\vv_0) + \sum_{\ell=0}^k \qq(\mm_\ell)
      \sum_{\vv\in [\vv_0]_{\mm\vee \bar \mm_\ell} \setminus
    \{\vv_0\}} \xx([\vv]_{\mm_\ell})\right].
  \end{equation}
  Note that $\M_{\tau,\vv} \subset \M_{\tau,\vv_0} \setminus \{\mm_\ell\}$
  for $\vv\in [\vv_0]_{\mm\vee \bar \mm_\ell} \setminus \{\vv_0\}$,
  so $|\M_{\tau,\vv}|\leq k$.
  By the induction hypothesis,
  $\xx([\vv]_{\mm_\ell})$ is determined by $T_{\qq,\tau} \xx$.
  It follows from \eqref{eq:m} and \eqref{eq:m_ell} that $\xx([\vv_0]_\mm)$
  and $\xx([\vv_0]_{\mm_\ell})$ are also determined by
  $T_{\qq,\tau}\xx$. This completes the induction step.
\end{proof}

Theorem~\ref{thm:nullspace} is a direct consequence of
Proposition~\ref{prop:p1} and Proposition~\ref{prop:p2}
as the collection of marginals
$\left\{\xx([\vv]_\mm): \vv\in\I, \mm\in \S_\qq\right\}$
is independent of $\tau$.
Therefore,
if $\xx_1,\xx_2\in\R^\I$ satisfy
$T_{\qq,\tau_0}\xx_1=T_{\qq,\tau_0}\xx_2$ for some $\tau_0\in\P$,
then $T_{\qq,\tau}\xx_1=T_{\qq,\tau}\xx_2$ for all $\tau\in\P$.
Theorem~\ref{thm:nullspace} is a special case when $\xx_1=\0$.

Moreover, Proposition~\ref{prop:p2}
also shows that {\misgan} overall learns the distribution
$p(\xx_\obs, \mm)$, as
$\xx([\vv]_\mm)$ is equivalent to $p(\xx_\obs | \mm)$ and
$T_{\qq,\tau}\xx$ is essentially the distribution of
$f_\tau(\xx, \mm)$ under the optimally learned missingness $\qq=p(\mm)$.
Theorem~\ref{thm:marginals} basically restates Proposition~\ref{prop:p1}
and Proposition~\ref{prop:p2}.
This is also true when $\tau\notin\P$ according to
Appendix~\ref{sec:detailedcor}.

\section{Proof of Corollary~\ref{cor:newtau}}
\label{sec:detailedcor}

Corollary~\ref{cor:newtau}
can be shown by augmenting the set of feature values
by $\P' = \P\cup\{\psi\}$ with
a novel symbol $\psi\notin\P$.
If we choose $\tau=\psi$ for the masking operator,
whenever we spot a $\psi$ in a masked sample,
we know that it corresponds to a missing entry.
We can also construct the corresponding transition matrix
$T_{\qq,\psi}'\in\R^{\I'\times\I'}$ where $\I'=(\P')^n$
given the mask distribution $\qq\in\D_\M$ before.
In this setting, the generative model for missing data is equivalent to
solving the linear system $T_{\qq,\psi}\pp_x' = T_{\qq,\psi}{\pp_x^*}'$
so that $\pp_x'\in\R^{\I'}$ is non-negative and
$\pp_x'(\ss)=0$ for all $\ss\in\I'\setminus\I$,
where the true distribution ${\pp_x^*}'$ is given by
${\pp_x^*}'(\ss)=\pp_x^*(\ss)$ for all $\ss\in\I$ and zeros elsewhere.
Theorem~\ref{thm:nullspace} implies that if the solution to
original problem \eqref{eq:linsys} is not unique, the
non-negative solution to the augmented linear system
with the extra constraint on $\I'\setminus\I$
with $\tau=\psi$ is not unique either.

\section{Evaluation of imputation using root mean square error}
\label{sec:rmse}
Root mean square error (RMSE) is a commonly used metric for evaluating
the performance of missing data imputation, which
computes the RMSE of the imputed missing values against the ground truth.
However, in a complex system,
the conditional distribution $p(\xx_\mis | \xx_\obs)$
is likely to be highly multimodal.
It's not guaranteed that the ground truth of the missing values
in the incomplete dataset created under
the missing completely at random (MCAR) assumption
correspond to the global mode of $p(\xx_\mis | \xx_\obs)$.
A good imputation model might produce samples from
$p(\xx_\mis | \xx_\obs)$ associated with a higher density
than the ground truth (or from other modes that are similarly probable).
In this case, it will lead to a large error in terms of metrics like RMSE
as multiple modes might be far away from each other in a complex distribution.
Therefore, we instead compute the FID between the distribution
of the completed data and the distribution of the
originally fully-observed data as our evaluation metric.
This provides a practical way
to assess how close a model imputes according to $p(\xx_\mis | \xx_\obs)$
by comparing two groups of samples collectively.

As a concrete example, Figure~\ref{fig:fidrmse} compares the two evaluation
metrics on MNIST, our distribution-based FID and the ground truth-based RMSE.
It shows that the rankings on most of the missing rates
are not consistent across the two metrics.
In particular, under 90\% missing rate,
MisGAN is worse than GAIN and matrix factorization in terms of RMSE,
but significantly better in terms of FID.
Figure~\ref{fig:rmse_samples}
plots the imputation results of the three methods mentioned above.
We can clearly see that MisGAN produces the best completion
even though its RMSE is much higher than the other two.
It's not surprising as the mean of $p(\xx_\mis | \xx_\obs)$
minimizes the squared error in expectation,
even if the mean might have low density.
This probably explains why the blurry completion results produced by
matrix factorization achieve the lowest RMSE.

\begin{figure}[tb]
  \def\imgw{.49\textwidth}
  \def\imgh{1.8in}
  \hspace*{-.3in}
  \centering
  \includegraphics[height=\imgh]{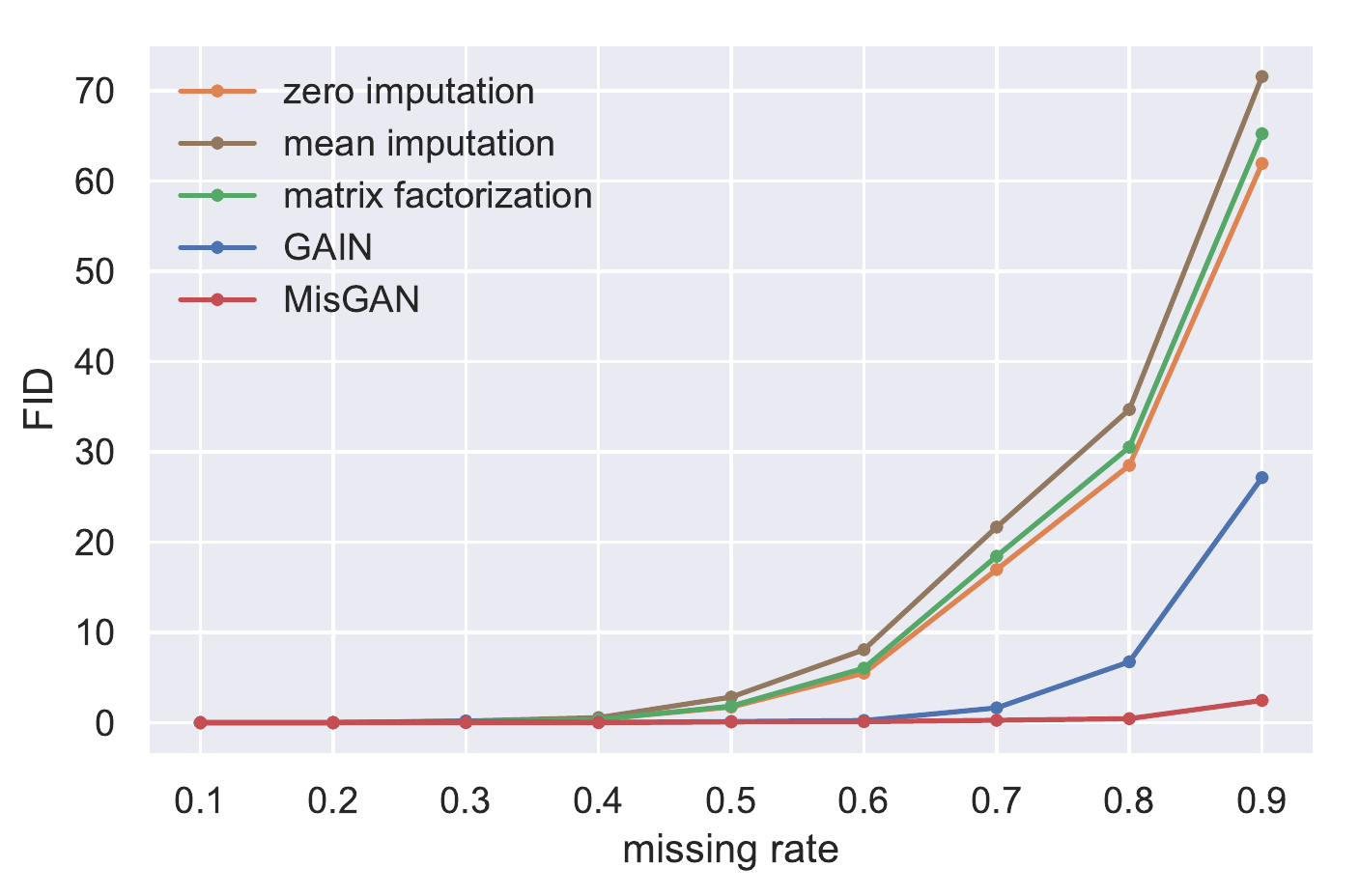}
  \includegraphics[height=\imgh]{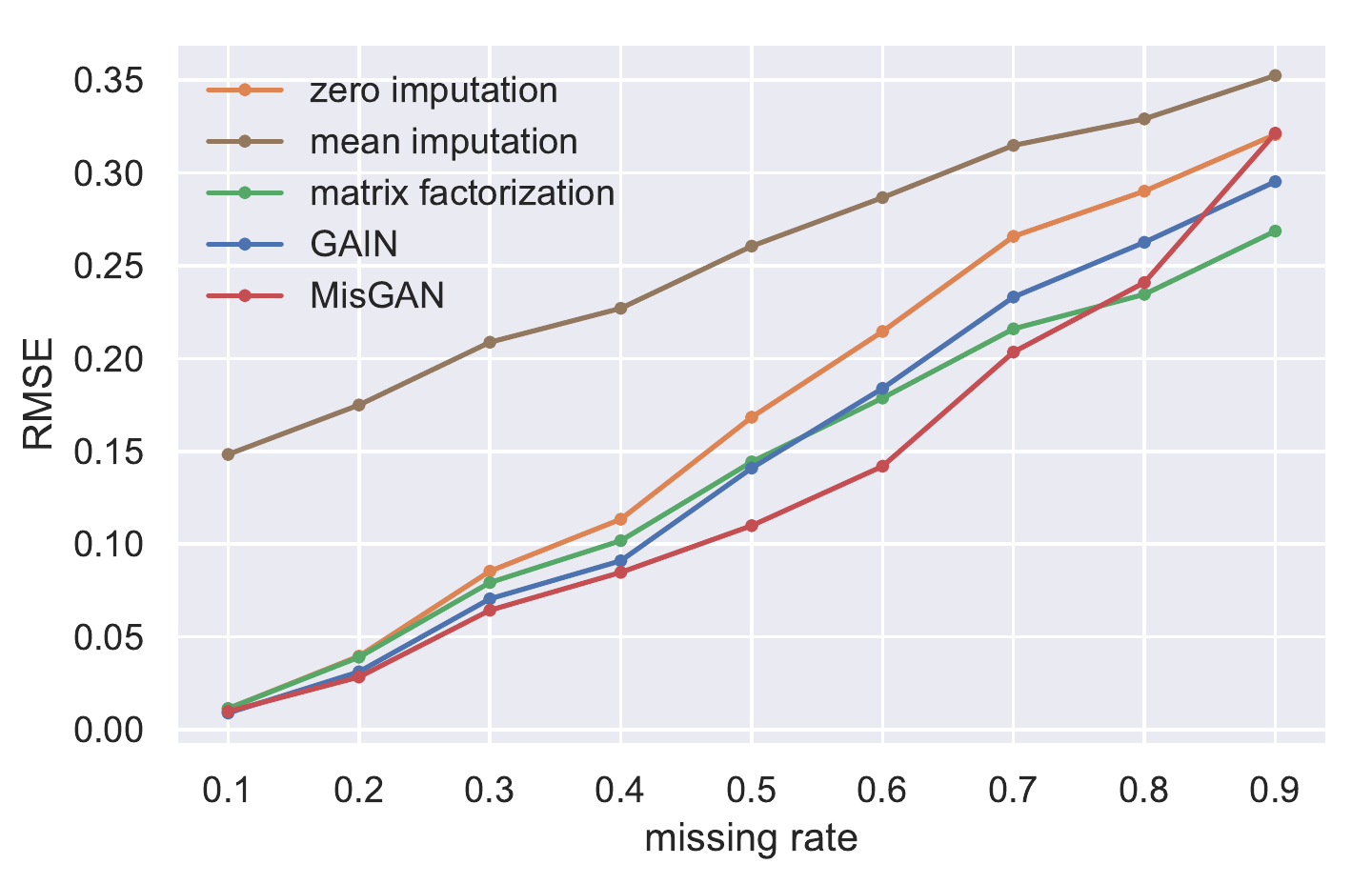}
  \caption{%
    Comparison of evaluating imputation using FID and RMSE
    (both the lower the better)
    on the MNIST dataset with block observation missingness.
    The rankings of the imputation methods are not consistent across
    the two metrics under most of the assessed missing rates.
  }%
  \label{fig:fidrmse}
\end{figure}

\begin{figure}[tb]
  \def\figwidth{.42\textwidth}
  \def\figgap{\figwidth+3em}
  \centering
  \begin{tikzpicture}
    \node[inner sep=0pt,
    label=below:{\small Ground truth samples}] (11) {
      \includegraphics[width=\figwidth]{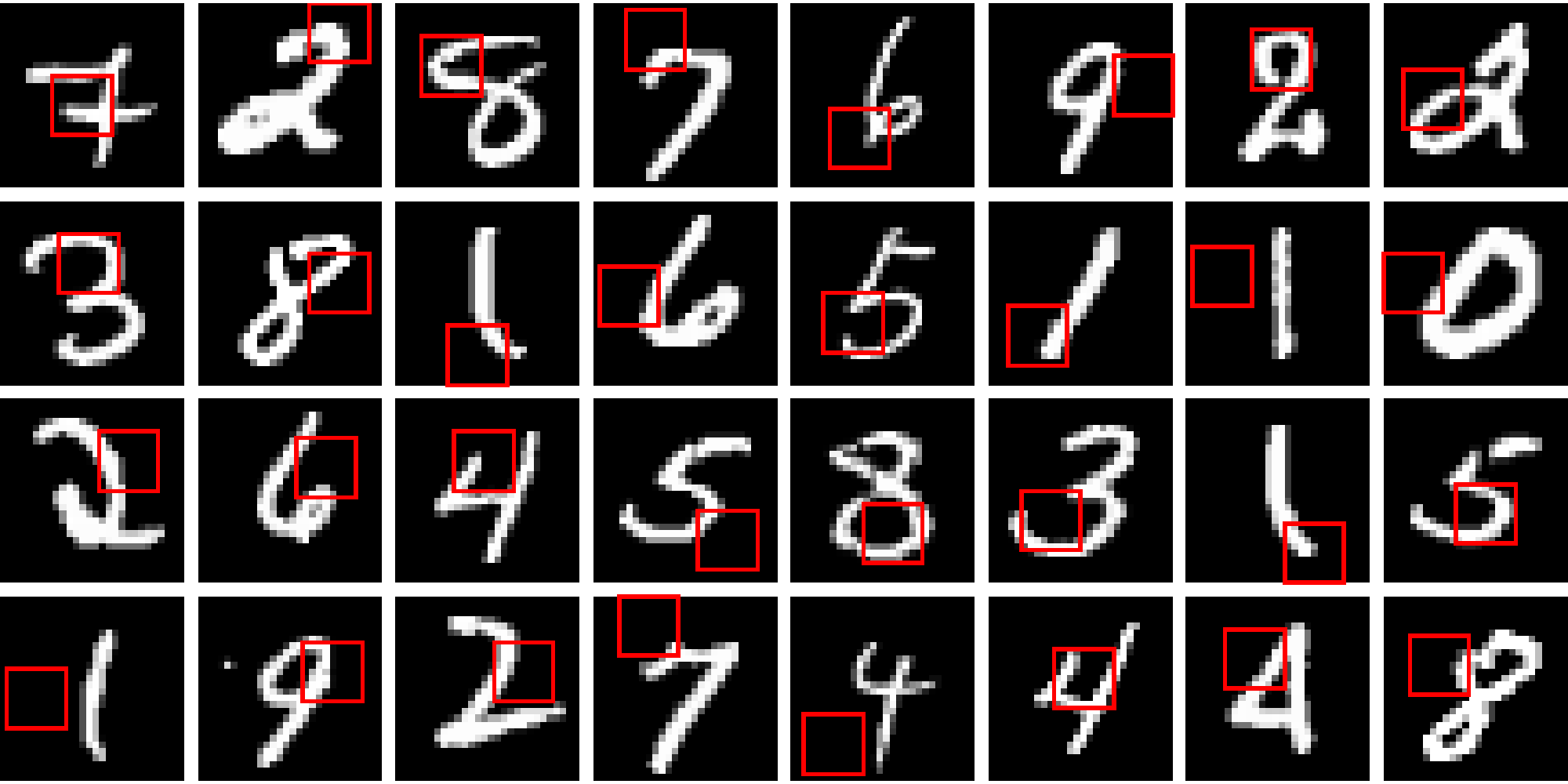}
    };
    \node[inner sep=0pt,right of=11,node distance=\figgap,
    label=below:{\small MisGAN (RMSE: 0.3214)}] (12) {
      \includegraphics[width=\figwidth]{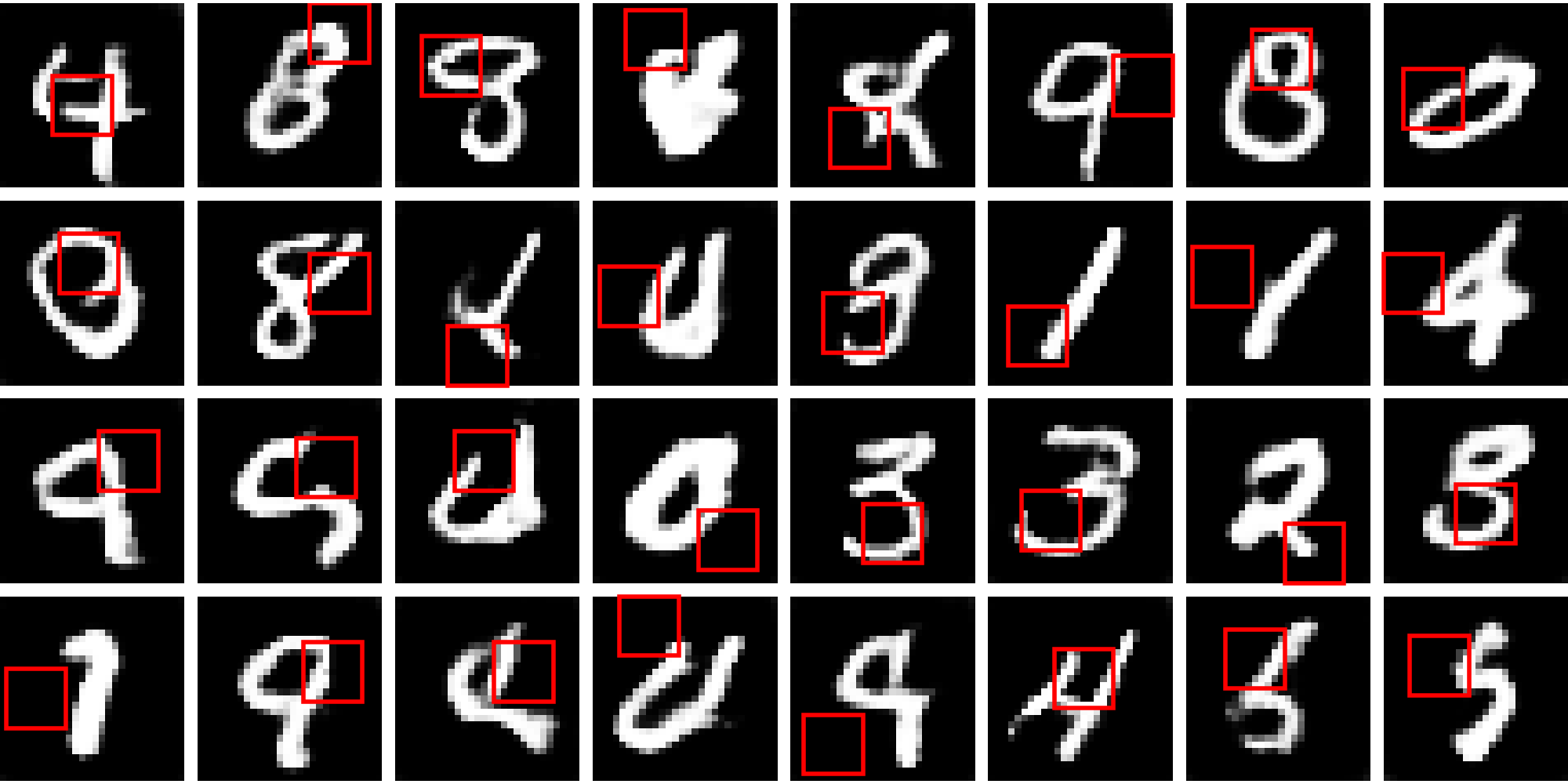}
    };
  \end{tikzpicture}
  \vspace{1em}

  \begin{tikzpicture}
    \node[inner sep=0pt,
    label=below:{\small GAIN (RMSE: 0.2953)}] (21) {
      \includegraphics[width=\figwidth]{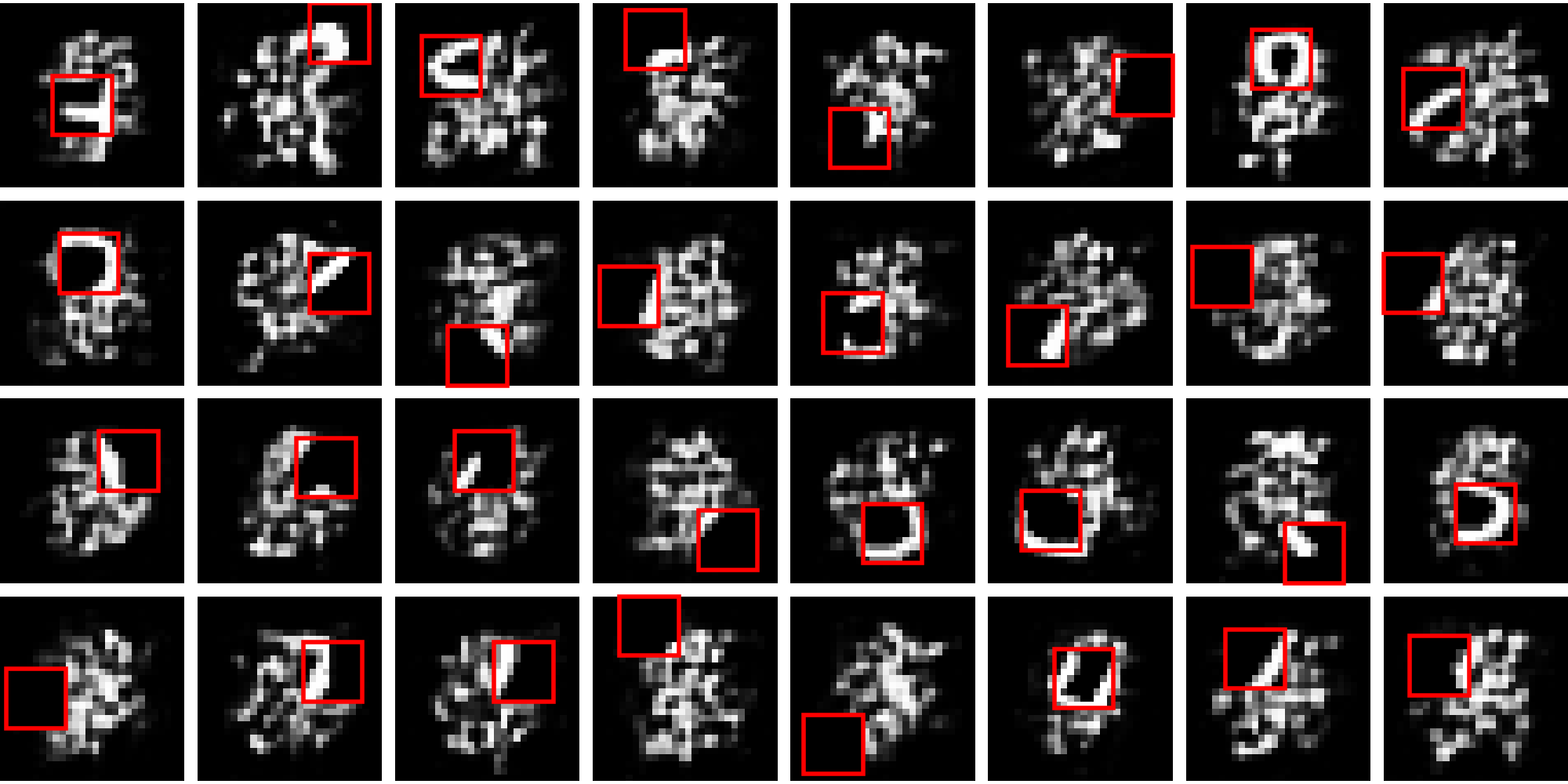}
    };
    \node[inner sep=0pt,right of=21,node distance=\figgap,
    label=below:{\small Matrix factorization (RMSE: 0.2686)}] (22) {
      \includegraphics[width=\figwidth]{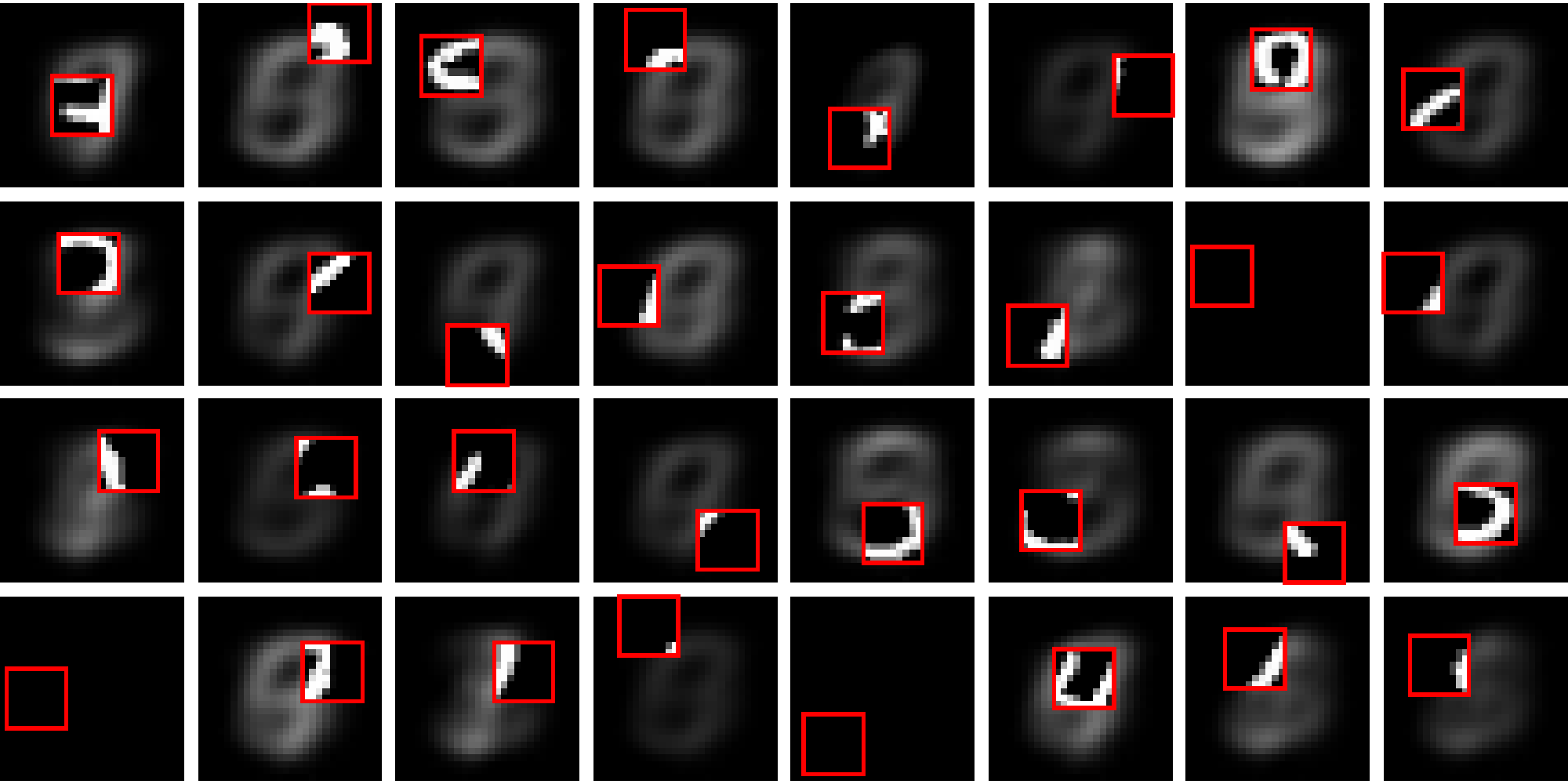}
    };
  \end{tikzpicture}
  \caption{%
    Imputation results by MisGAN, GAIN and matrix factorization
    along with the corresponding RMSE with block observation missingness
    under 90\% missing rate.
    Inside of each red box are the
    observed pixels; the pixels outside of the box
    are generated by the imputation methods.
  }%
  \label{fig:rmse_samples}
\end{figure}

\section{Architectural details and hyperparameters}
All of the generators and discriminators in Conv-{\misgan} follow the
architecture used by the DCGAN model
\citep{radford2015unsupervised} with 128-dimensional latent code.

As for FC-{\misgan}, the architecture of the generators is
\[
\text{FC}(128,256)\text{--}\text{FC}(256,512)\text{--}\text{FC}(512,784)
\]
with ReLUs in between.
The discriminators are of the structure
\[
\text{FC}(784,512)\text{--}\text{FC}(512,256)\text{--}\text{FC}(256,
128)\text{--}\text{FC}(128,1)
\]
also with ReLUs in between.

For the imputer network for {\misgan} trained on CIFAR-10 and CelebA,
we follow the U-Net implementation of the CycleGAN and pix2pix work\footnote{
\url{https://github.com/junyanz/pytorch-CycleGAN-and-pix2pix}}.
In the experiments, we use 5-layer U-Nets for both CIFAR-10 and CelebA.

For training Wasserstein GAN with gradient penalty,
We use all the default hyperparameters reported in
\citet{gulrajani2017improved}.
For all the datasets, {\misgan} is trained for 300 epochs.
We train {\misgan} imputer for 1000 epochs for MNIST and CIFAR-10 as
the networks are smaller and 600 epochs for CelebA.

For ConvAC, we use the same architecture described in
\citet{sharir2016tractable}.
We train ConvAC for 1000 epochs using Adam optimizer with learning rate
$10^{-4}$.

\section{{\misgan} on CIFAR-10}
\label{sec:cifar10misgan}
Figure~\ref{fig:cifar10block}, \ref{fig:cifar10indep} and
\ref{fig:cifar10impute}
show the results of {\misgan} trained on CIFAR-10
for the two extreme missing rates, namely
90\% and 80\%, as well as the case of 10\% that is
close to full observation.

\begin{figure}
  \def\figwidth{.32\textwidth}
  \def\figgap{\figwidth+.3em}

  \centering
  \begin{subfigure}[b]{\textwidth}
    \centering
    \begin{tikzpicture}
      \node[inner sep=0pt,
            label=below:{\small training samples}] (11) {
        \includegraphics[width=\figwidth]{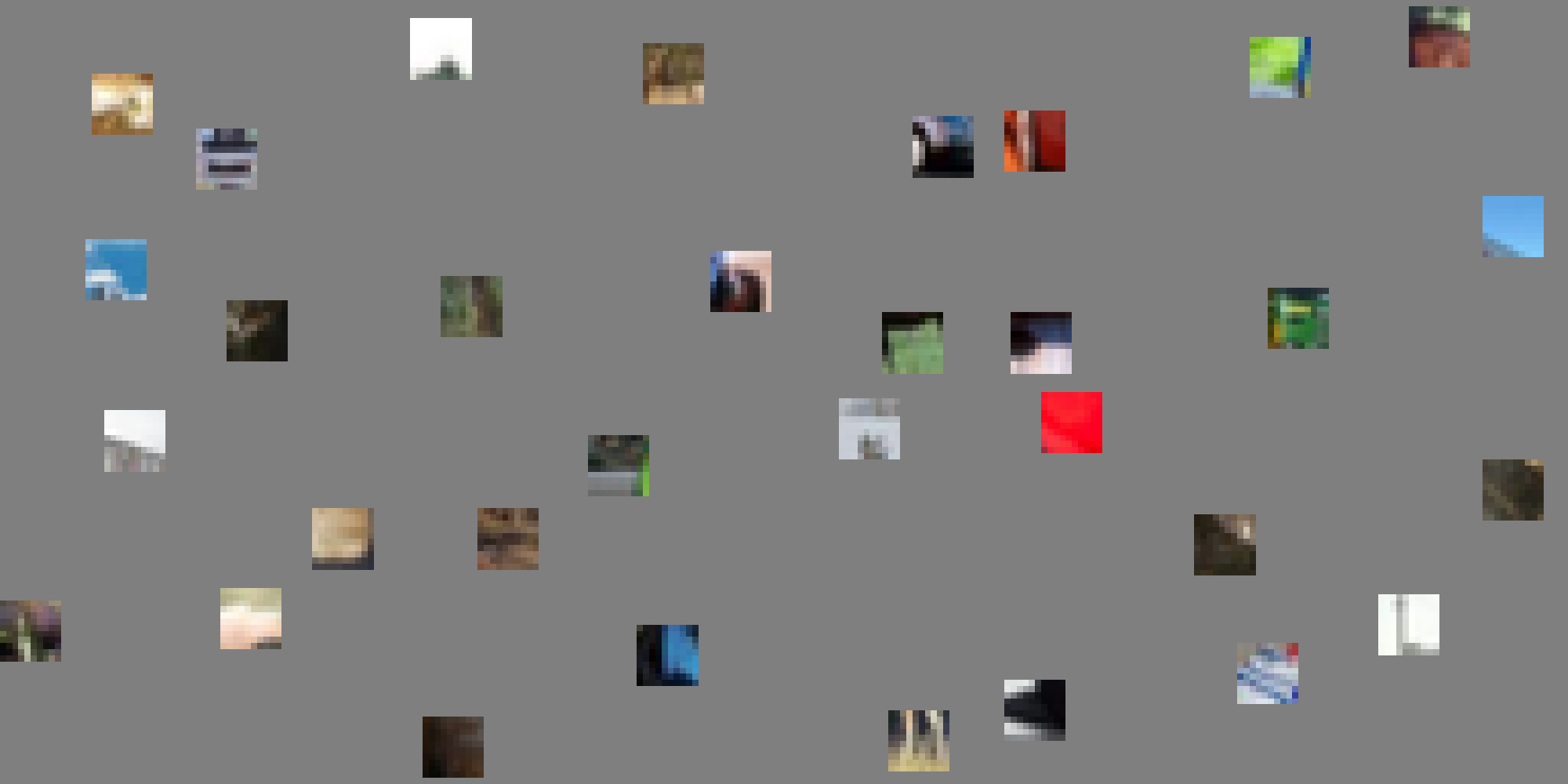}
      };
      \node[inner sep=0pt,right of=11,node distance=\figgap,
            label=below:{\small $G_x$ samples}] (12) {
        \includegraphics[width=\figwidth]{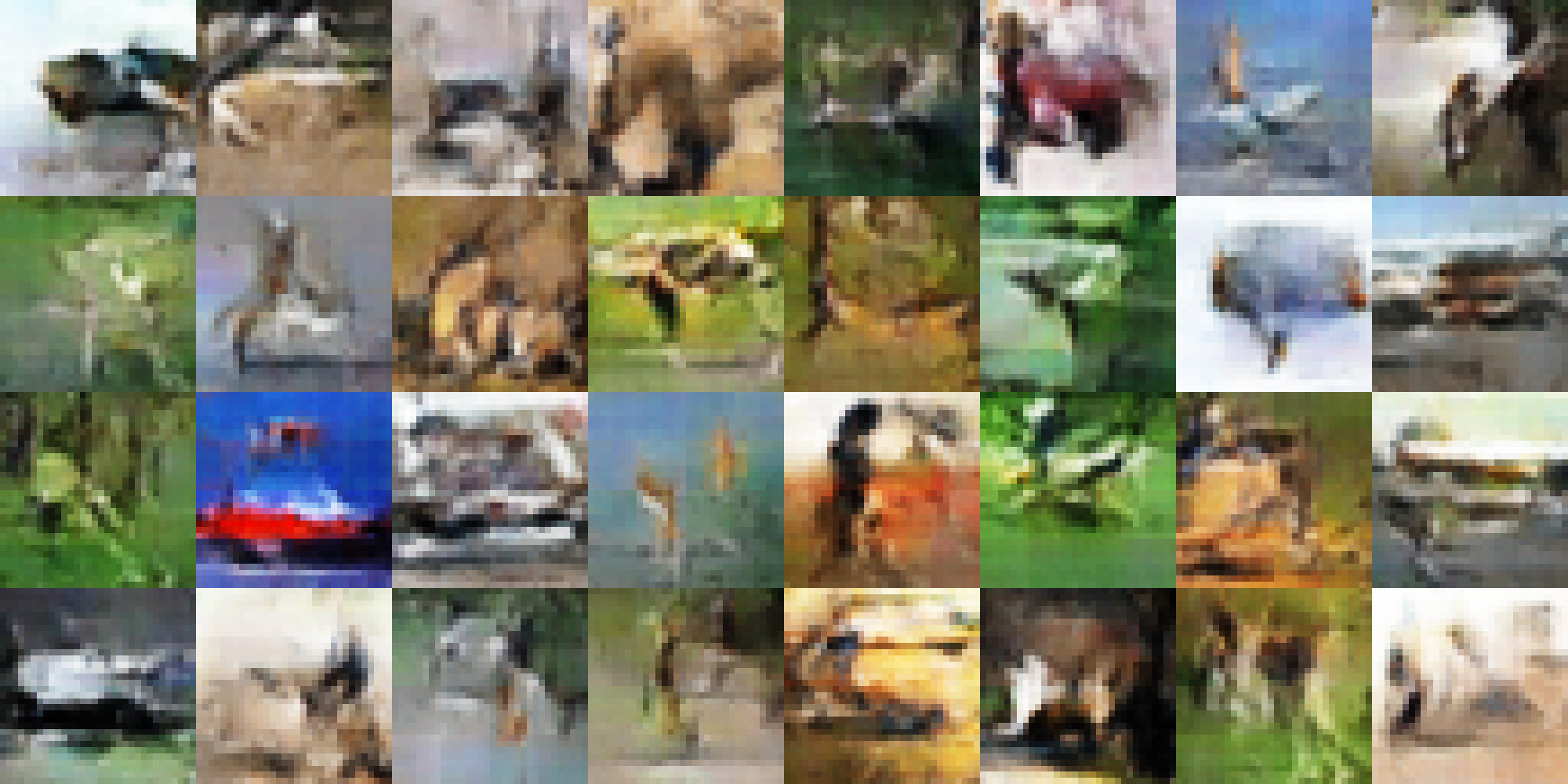}
      };
      \node[inner sep=0pt,right of=12,node distance=\figgap,
            label=below:{\small $G_m$ samples}] (13) {
        \includegraphics[width=\figwidth]{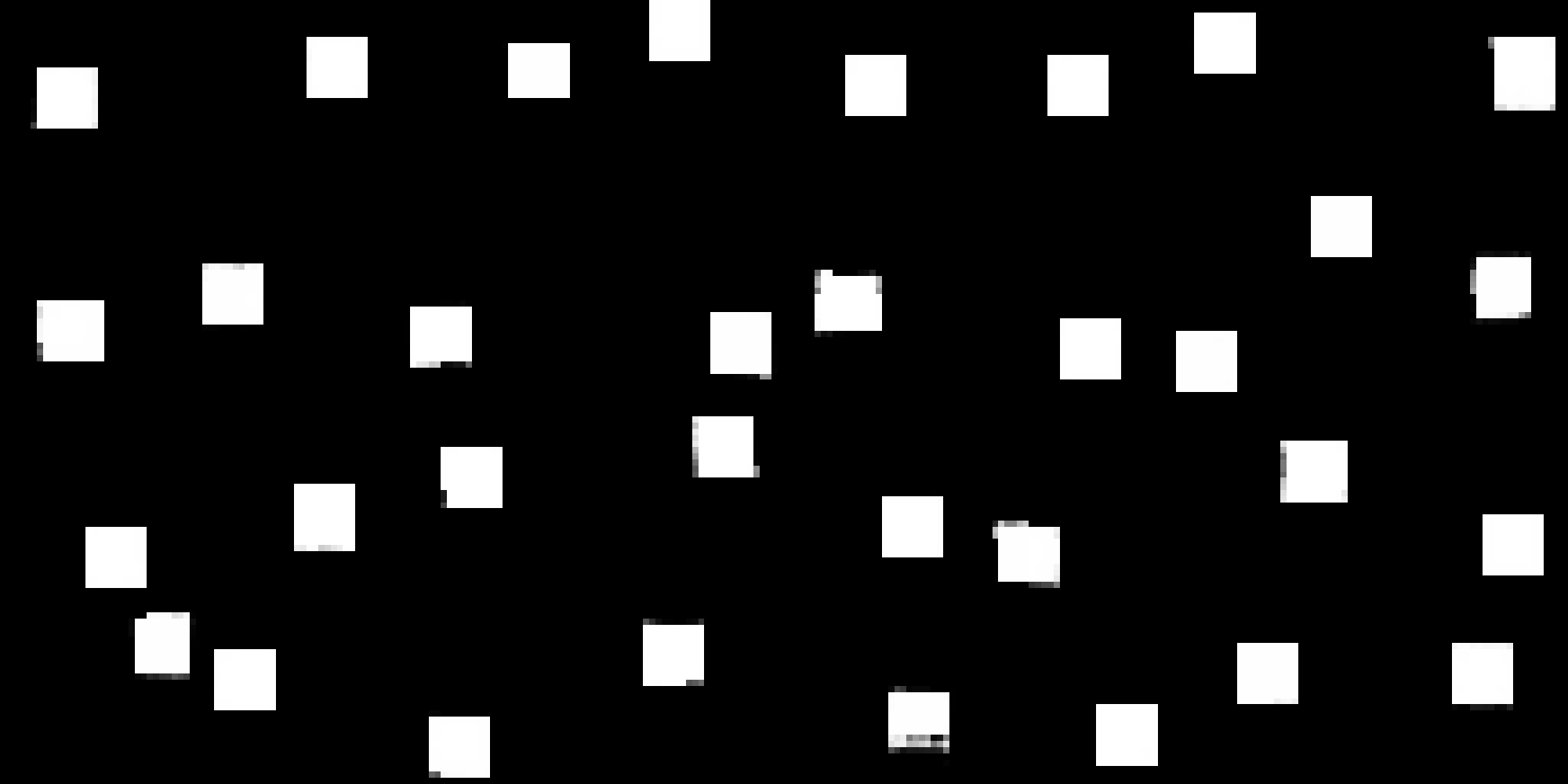}
      };
    \end{tikzpicture}
    \vspace*{-.5em}
    \caption{10$\times$10 block (90\% missing)}
  \end{subfigure}

  \vspace{1em}
  \begin{subfigure}[b]{\textwidth}
    \centering
    \begin{tikzpicture}
      \node[inner sep=0pt,
            label=below:{\small training samples}] (21) {
        \includegraphics[width=\figwidth]{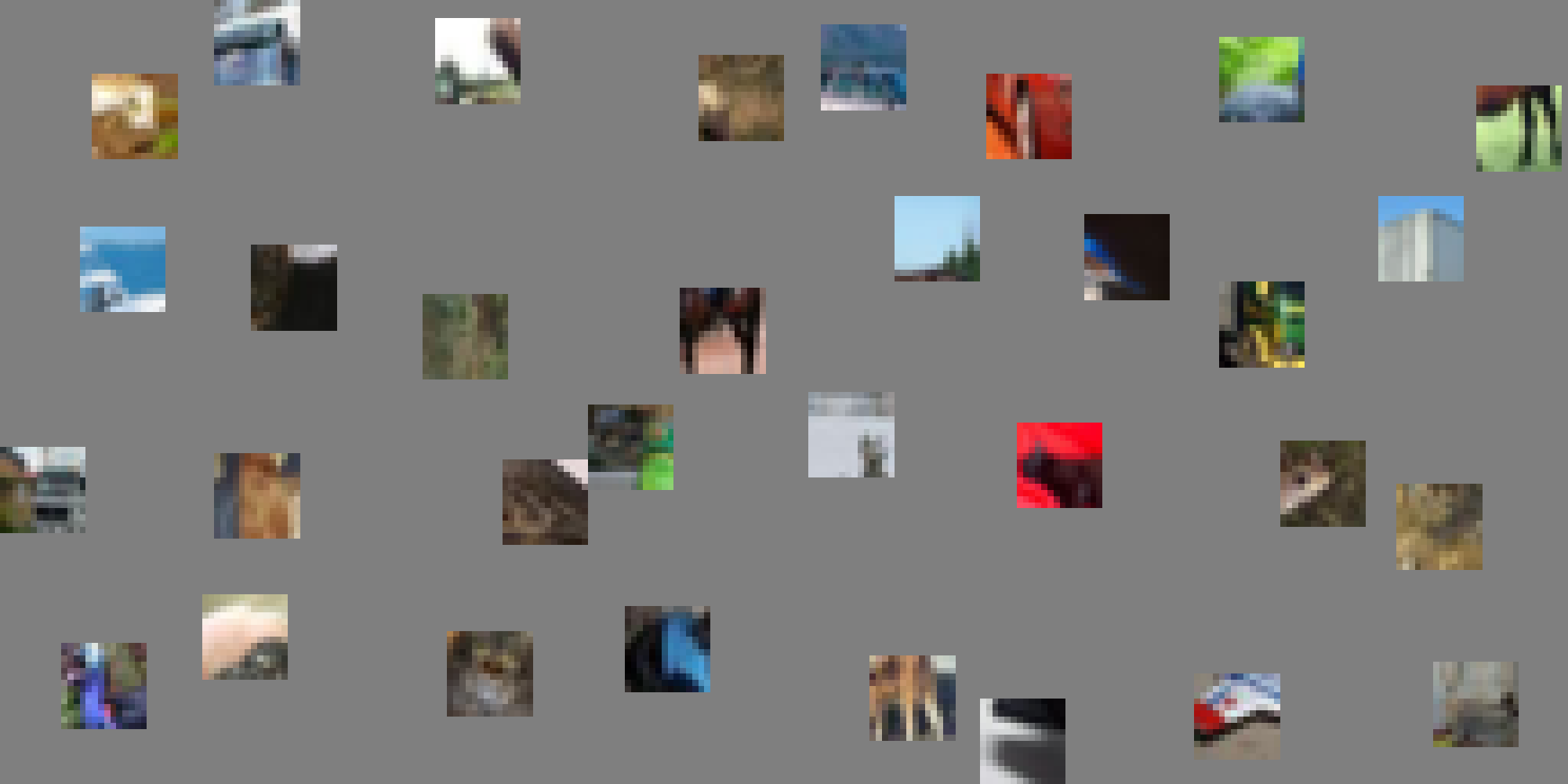}
      };
      \node[inner sep=0pt,right of=21,node distance=\figgap,
            label=below:{\small $G_x$ samples}] (22) {
        \includegraphics[width=\figwidth]{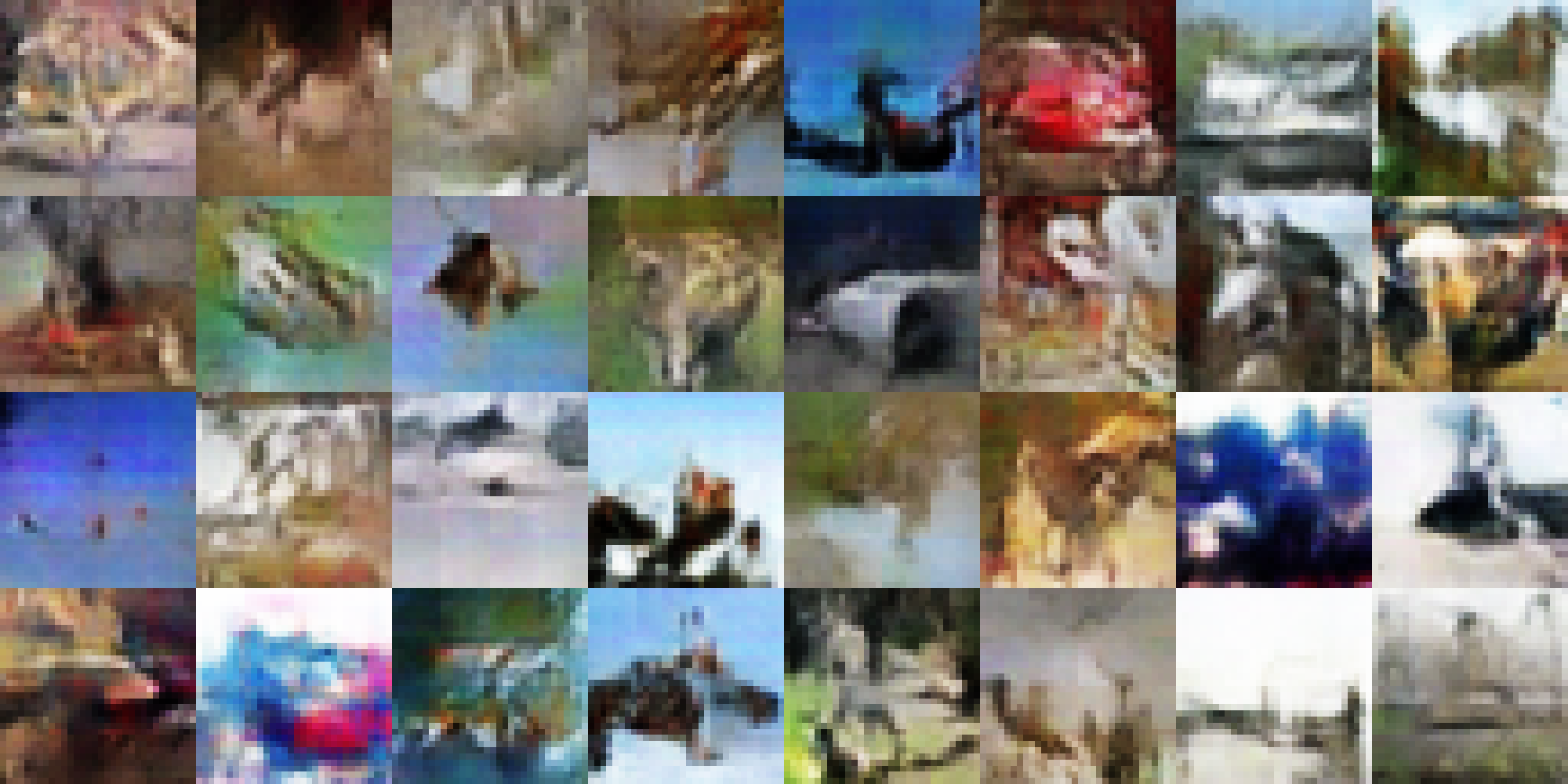}
      };
      \node[inner sep=0pt,right of=22,node distance=\figgap,
            label=below:{\small $G_m$ samples}] (23) {
        \includegraphics[width=\figwidth]{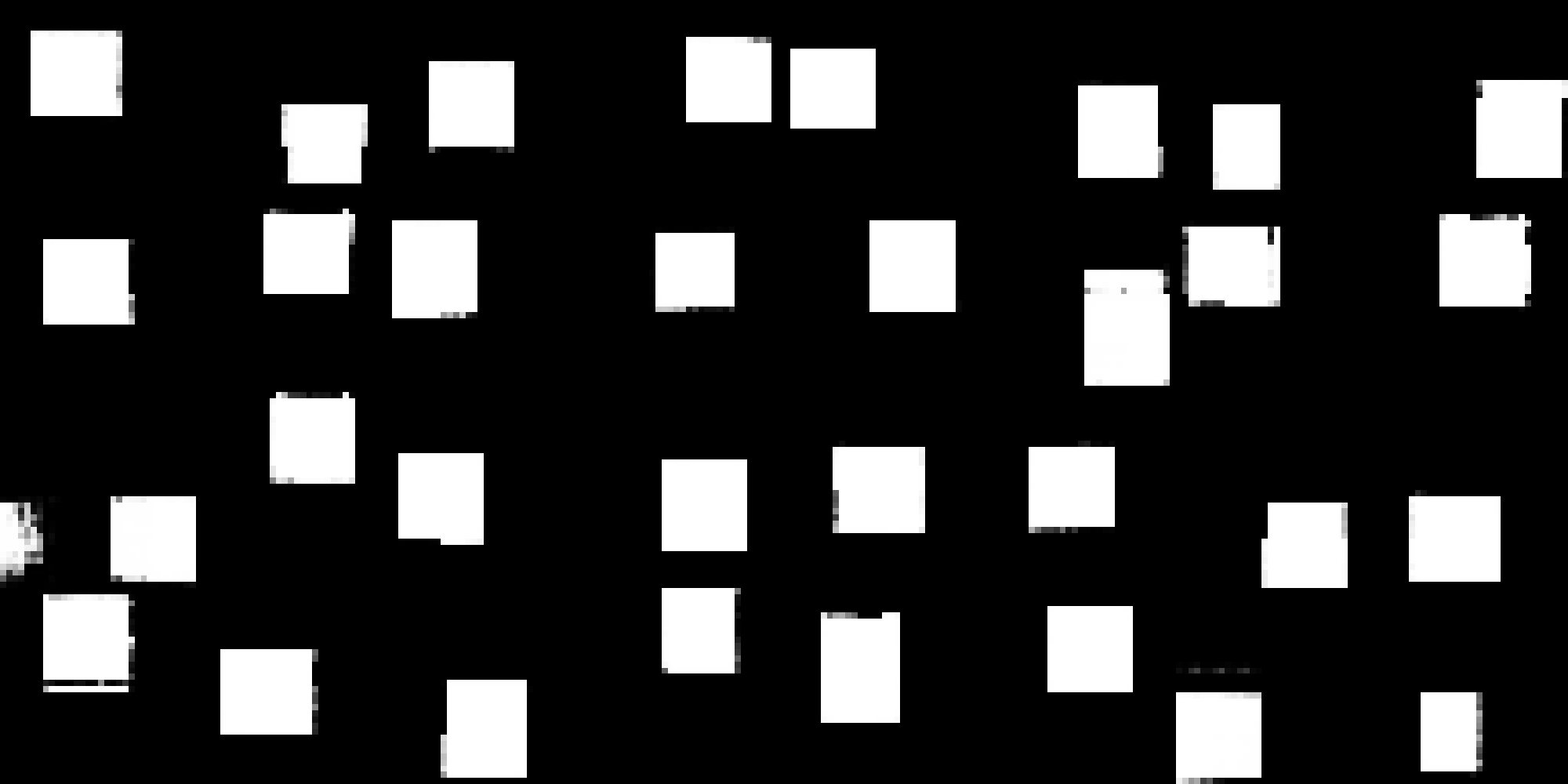}
      };
    \end{tikzpicture}
    \vspace*{-.5em}
    \caption{14$\times$14 block (80\% missing)}
  \end{subfigure}

  \vspace{1em}
  \begin{subfigure}[b]{\textwidth}
    \centering
    \begin{tikzpicture}
      \node[inner sep=0pt,
            label=below:{\small training samples}] (21) {
        \includegraphics[width=\figwidth]{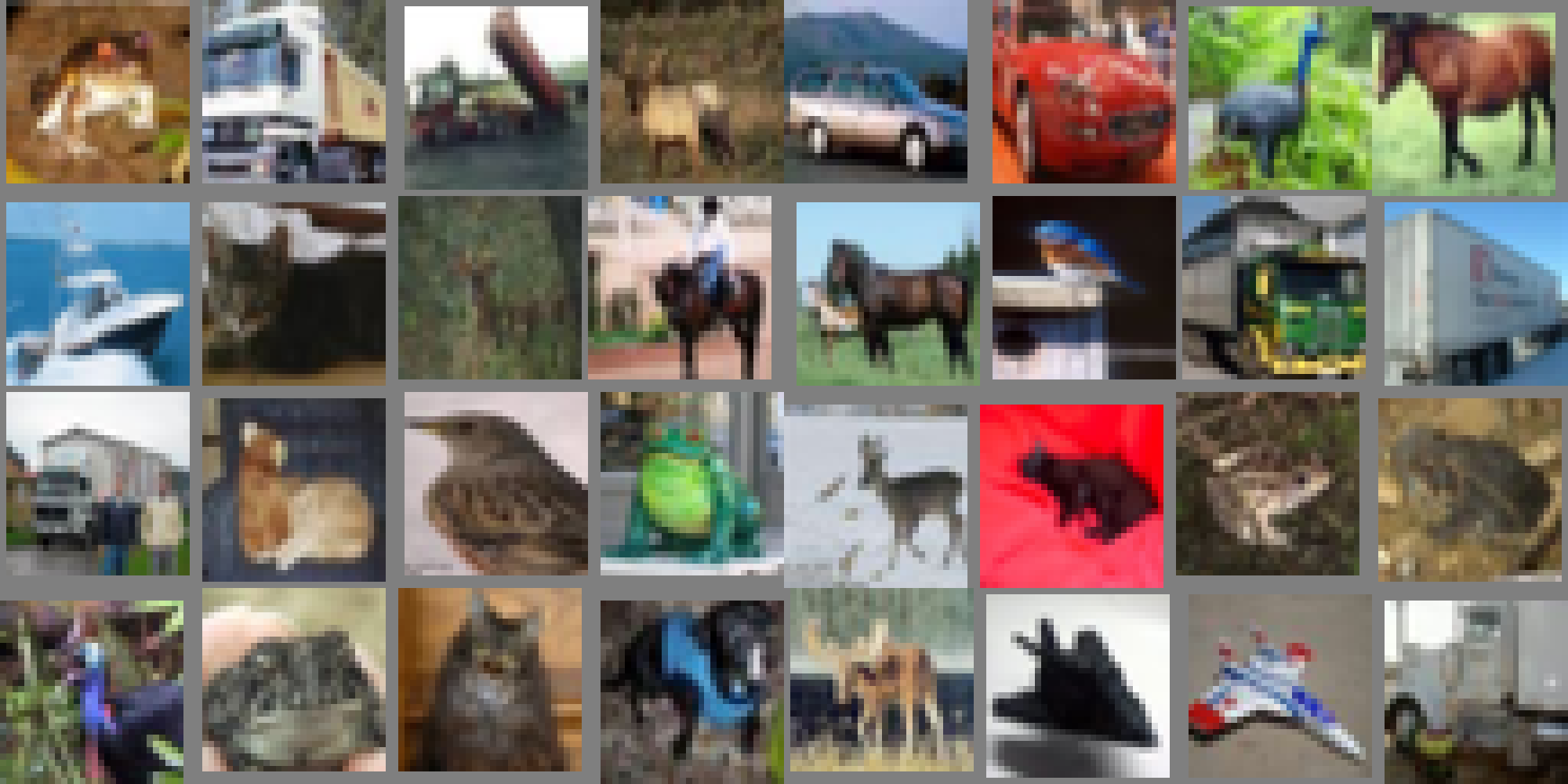}
      };
      \node[inner sep=0pt,right of=21,node distance=\figgap,
            label=below:{\small $G_x$ samples}] (22) {
        \includegraphics[width=\figwidth]{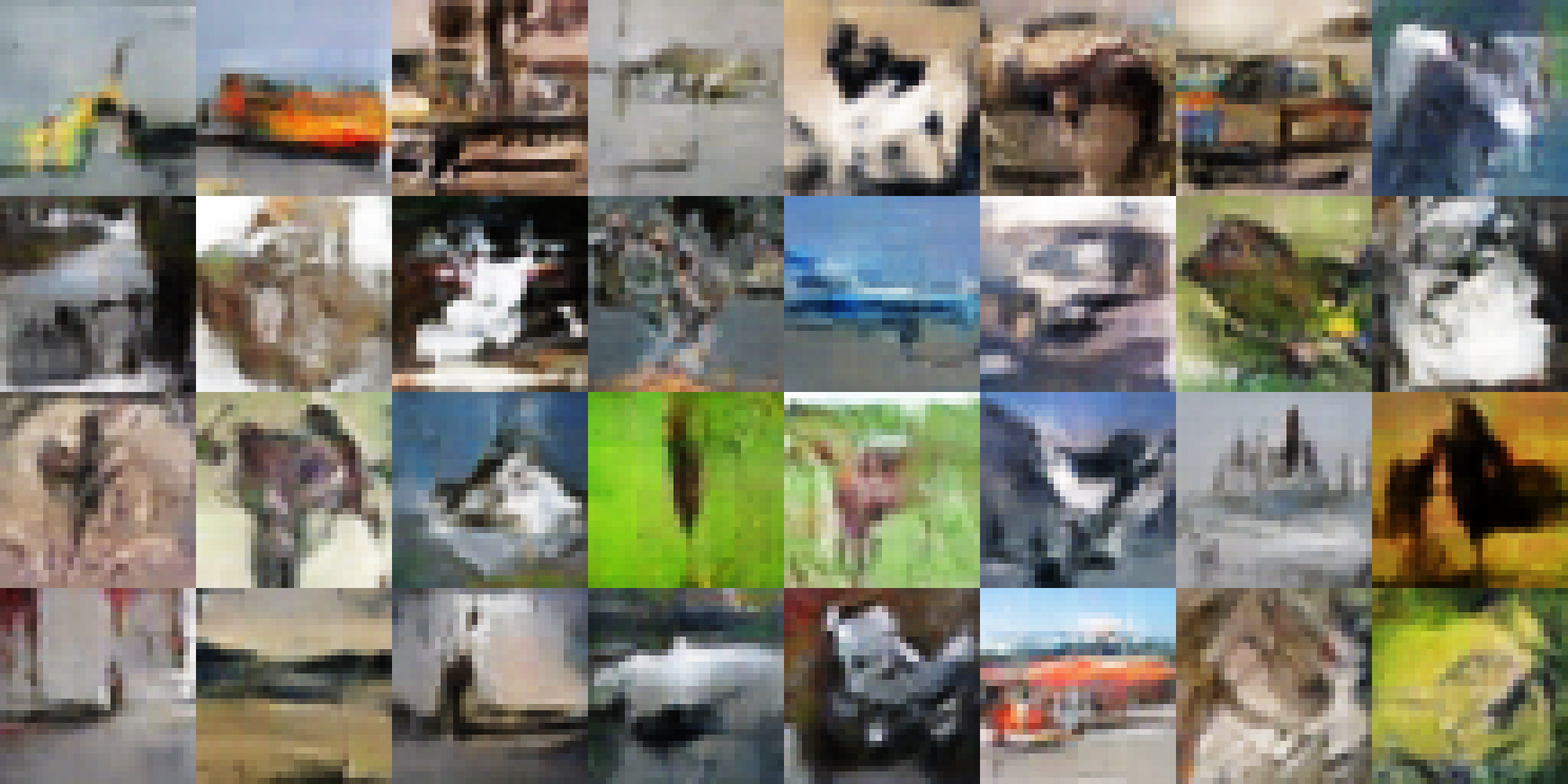}
      };
      \node[inner sep=0pt,right of=22,node distance=\figgap,
            label=below:{\small $G_m$ samples}] (23) {
        \includegraphics[width=\figwidth]{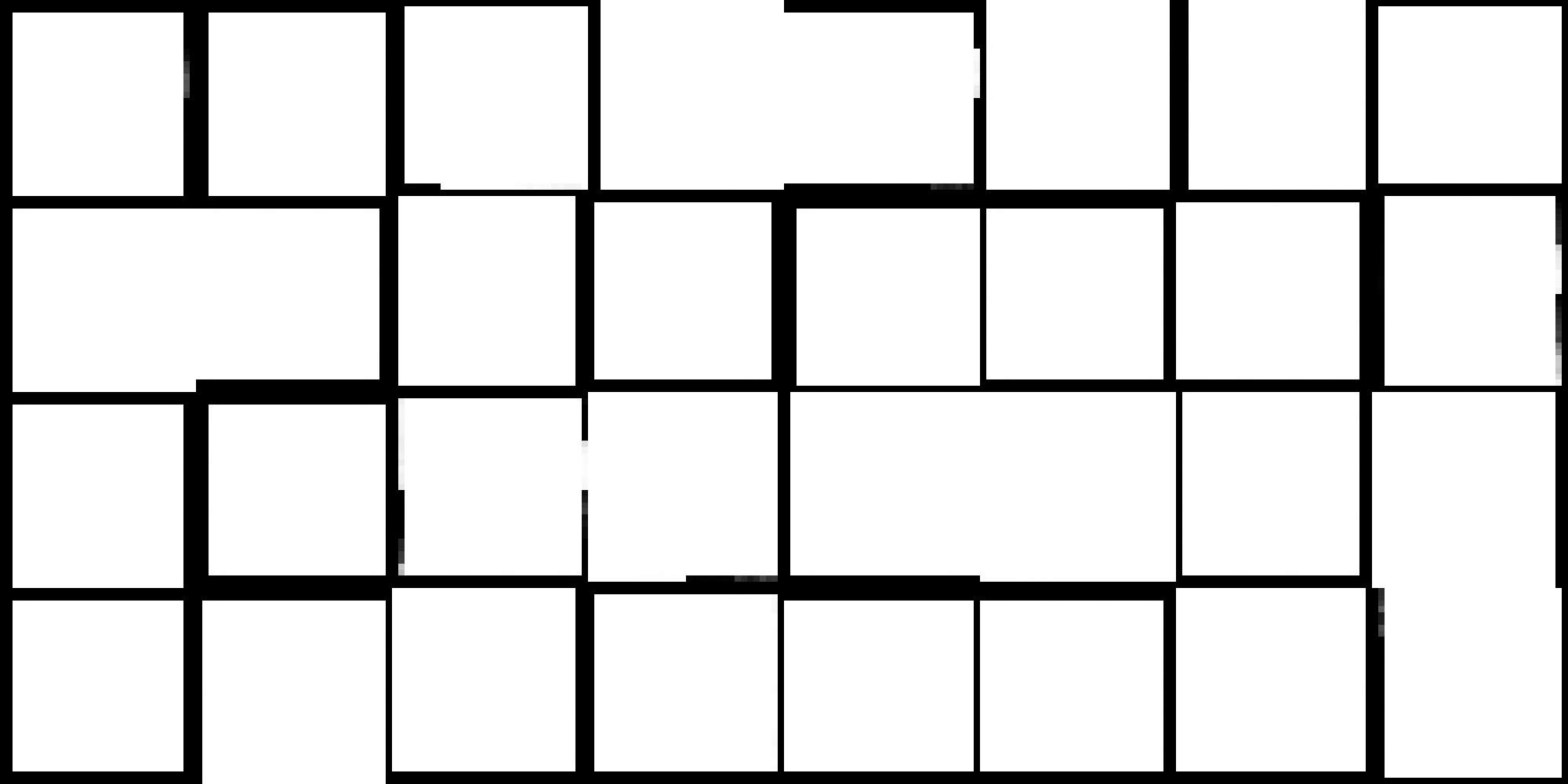}
      };
    \end{tikzpicture}
    \vspace*{-.5em}
    \caption{30$\times$30 block (10\% missing)}
  \end{subfigure}
  \caption{{\misgan} on CIFAR-10 with block observation missingness}
\label{fig:cifar10block}
\end{figure}

\begin{figure}
  \def\figwidth{.32\textwidth}
  \def\figgap{\figwidth+.3em}

  \centering
  \begin{subfigure}[b]{\textwidth}
    \centering
    \begin{tikzpicture}
      \node[inner sep=0pt,
            label=below:{\small training samples}] (11) {
        \includegraphics[width=\figwidth]{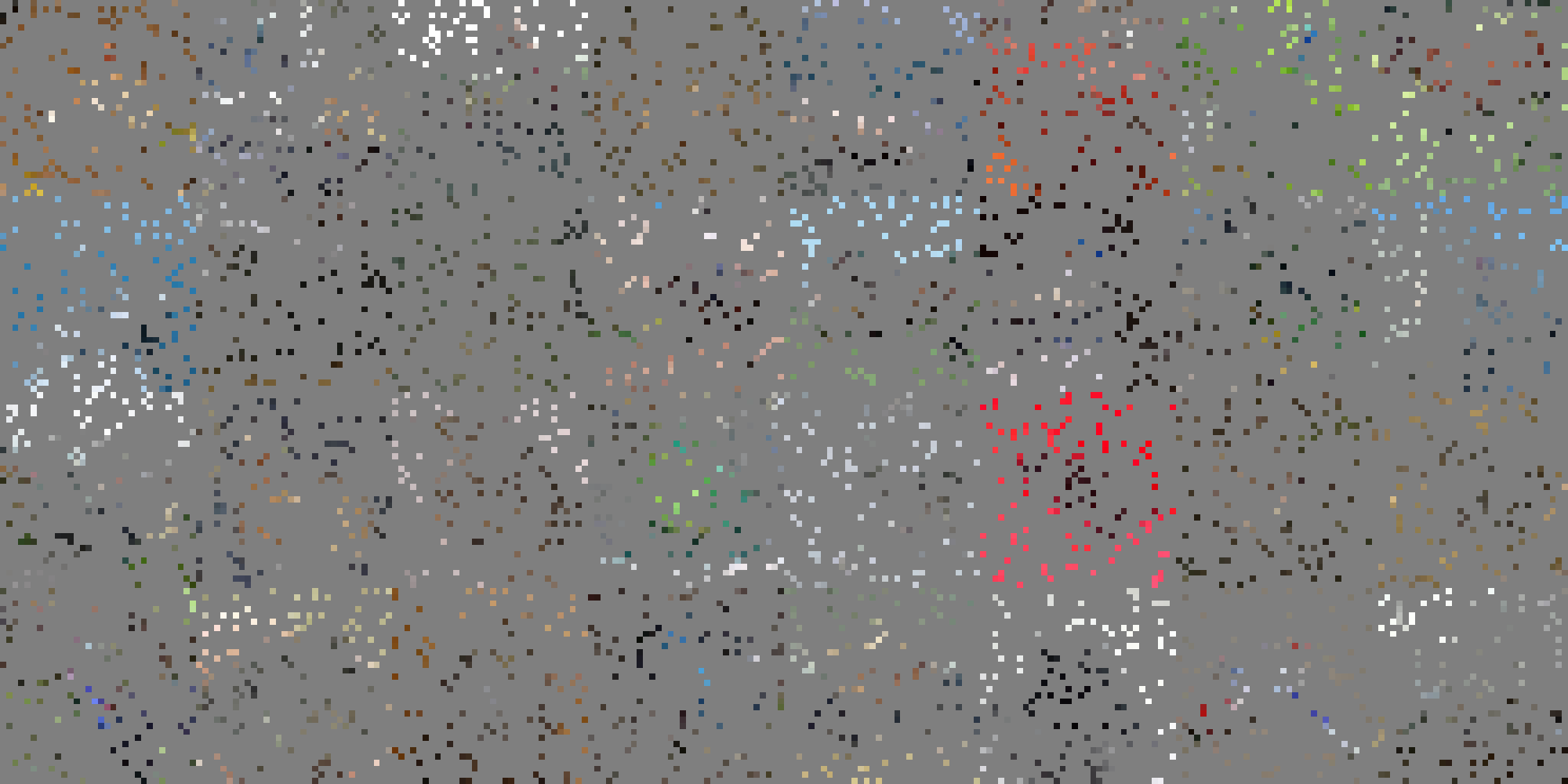}
      };
      \node[inner sep=0pt,right of=11,node distance=\figgap,
            label=below:{\small $G_x$ samples}] (12) {
        \includegraphics[width=\figwidth]{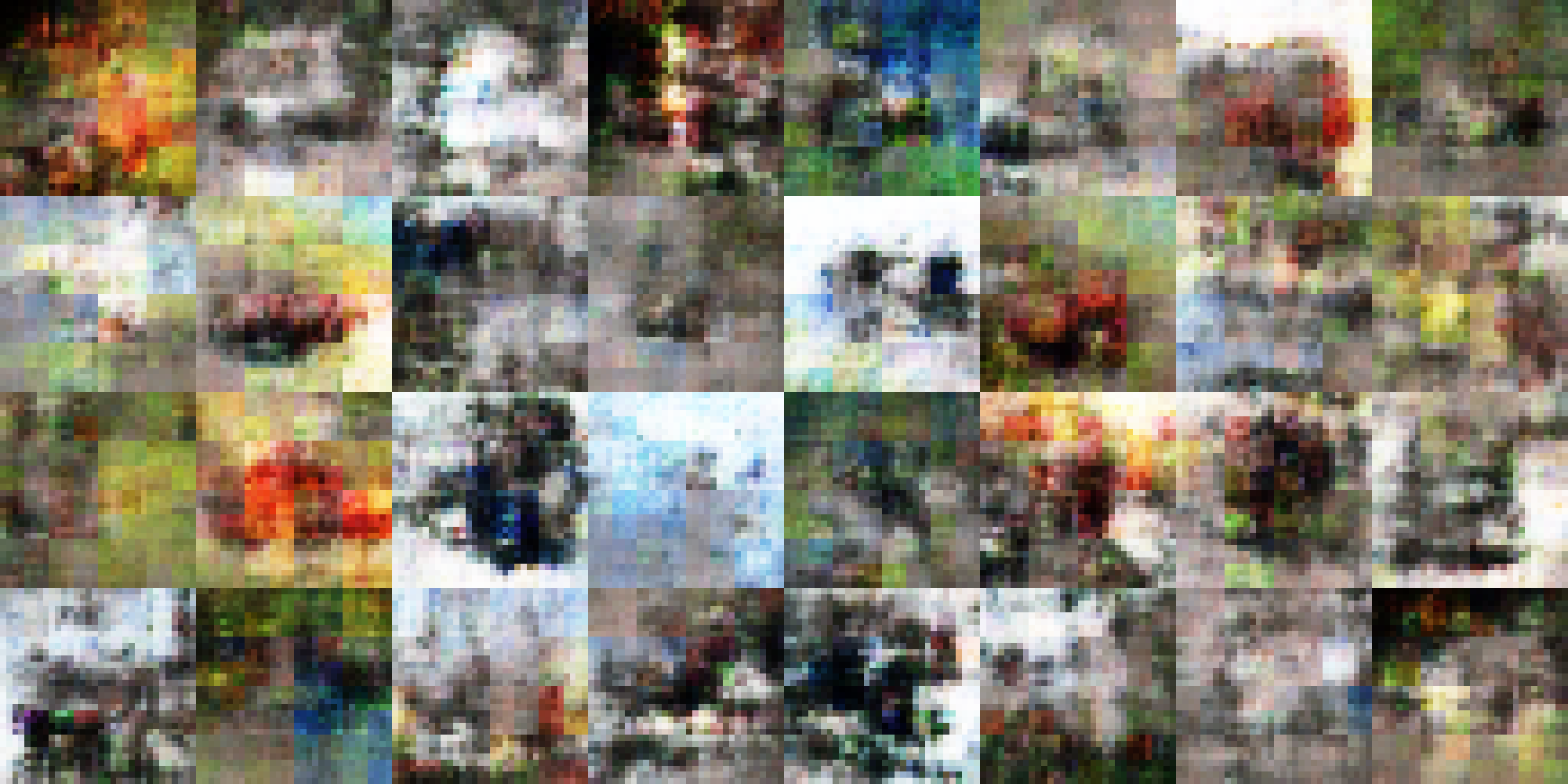}
      };
      \node[inner sep=0pt,right of=12,node distance=\figgap,
            label=below:{\small $G_m$ samples}] (13) {
        \includegraphics[width=\figwidth]{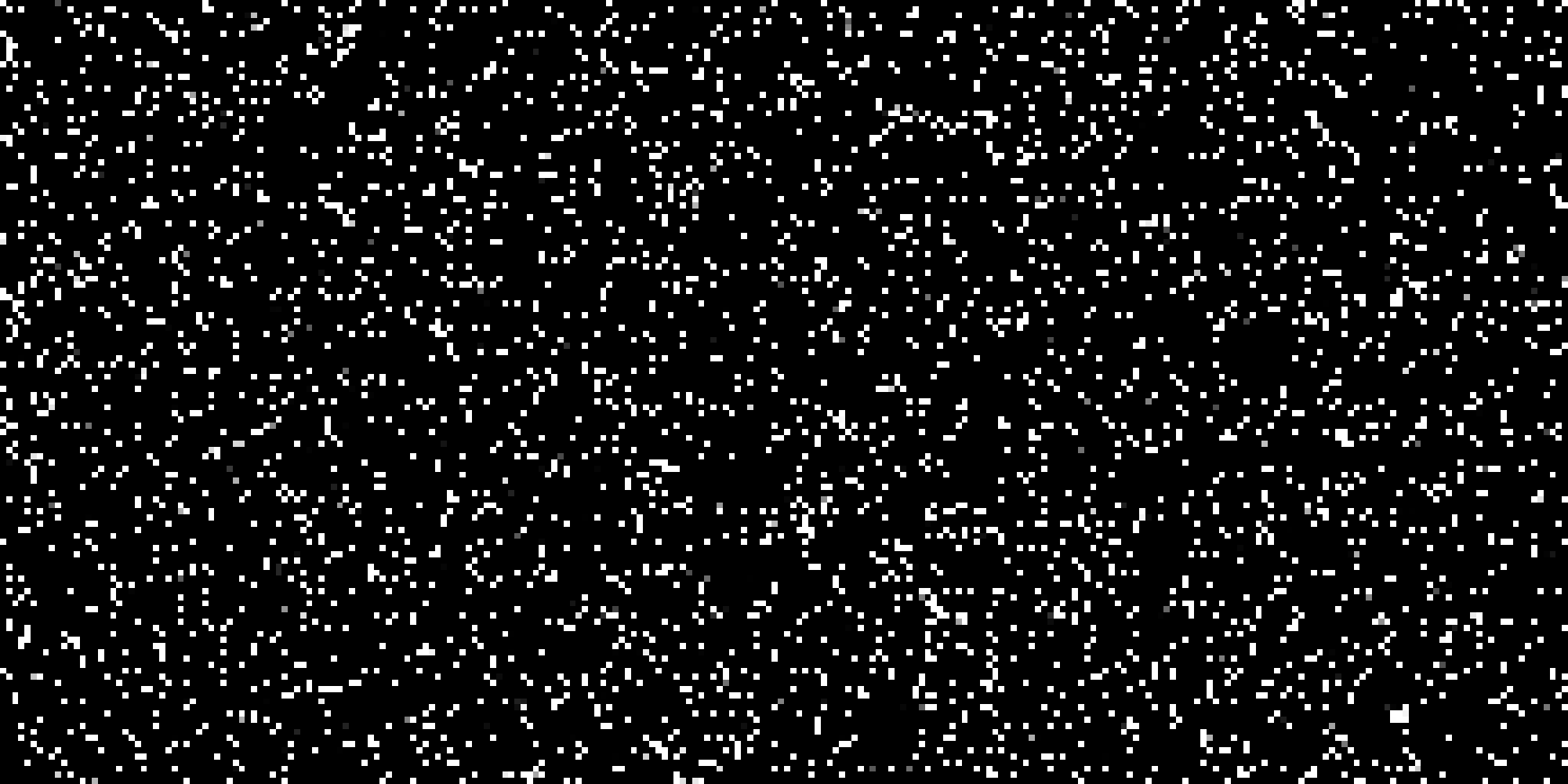}
      };
    \end{tikzpicture}
    \vspace*{-.5em}
    \caption{90\% missing}
  \end{subfigure}

  \vspace{1em}
  \begin{subfigure}[b]{\textwidth}
    \centering
    \begin{tikzpicture}
      \node[inner sep=0pt,
            label=below:{\small training samples}] (21) {
        \includegraphics[width=\figwidth]{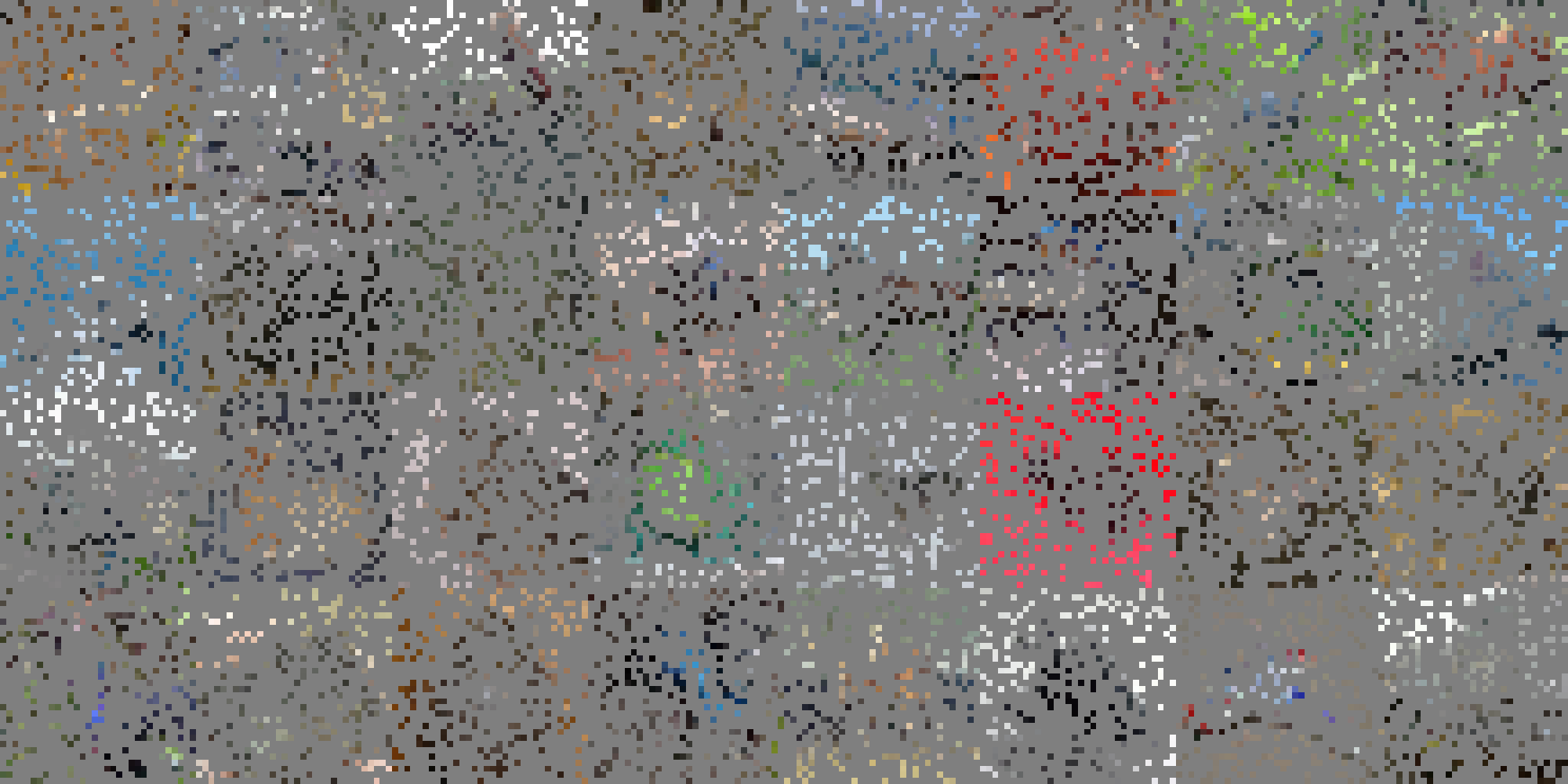}
      };
      \node[inner sep=0pt,right of=21,node distance=\figgap,
            label=below:{\small $G_x$ samples}] (22) {
        \includegraphics[width=\figwidth]{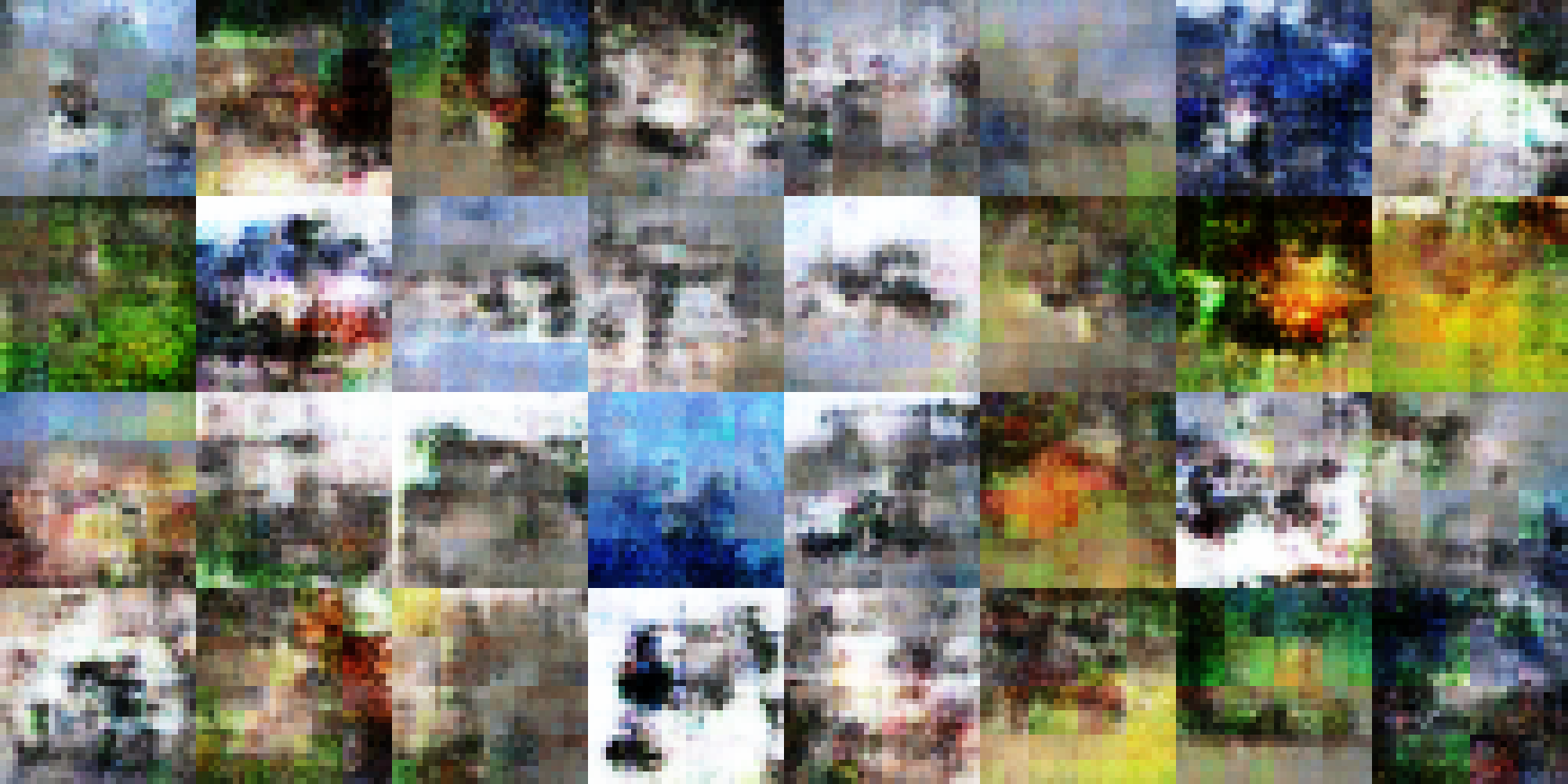}
      };
      \node[inner sep=0pt,right of=22,node distance=\figgap,
            label=below:{\small $G_m$ samples}] (23) {
        \includegraphics[width=\figwidth]{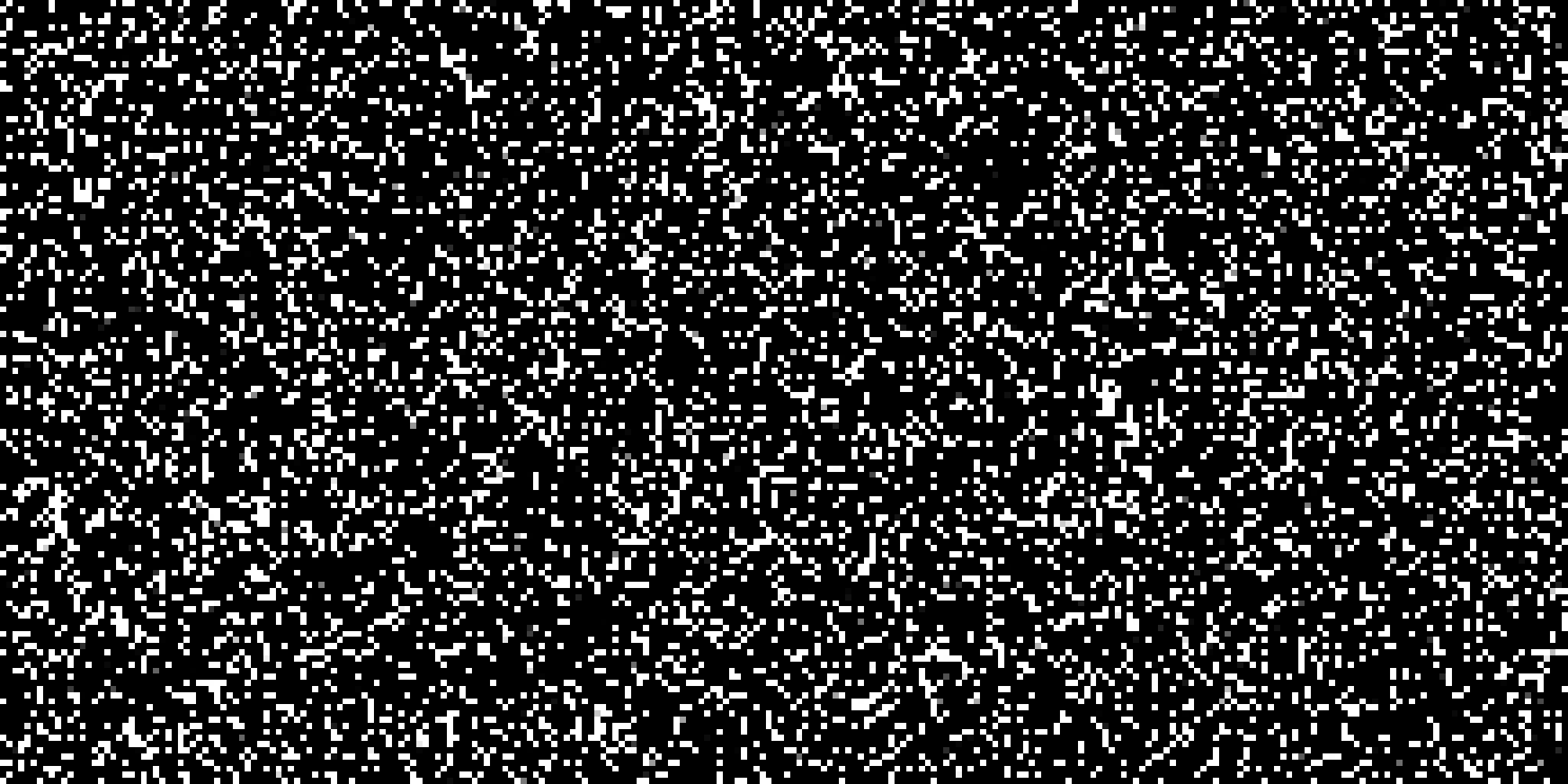}
      };
    \end{tikzpicture}
    \vspace*{-.5em}
    \caption{80\% missing}
  \end{subfigure}

  \vspace{1em}
  \begin{subfigure}[b]{\textwidth}
    \centering
    \begin{tikzpicture}
      \node[inner sep=0pt,
            label=below:{\small training samples}] (21) {
        \includegraphics[width=\figwidth]{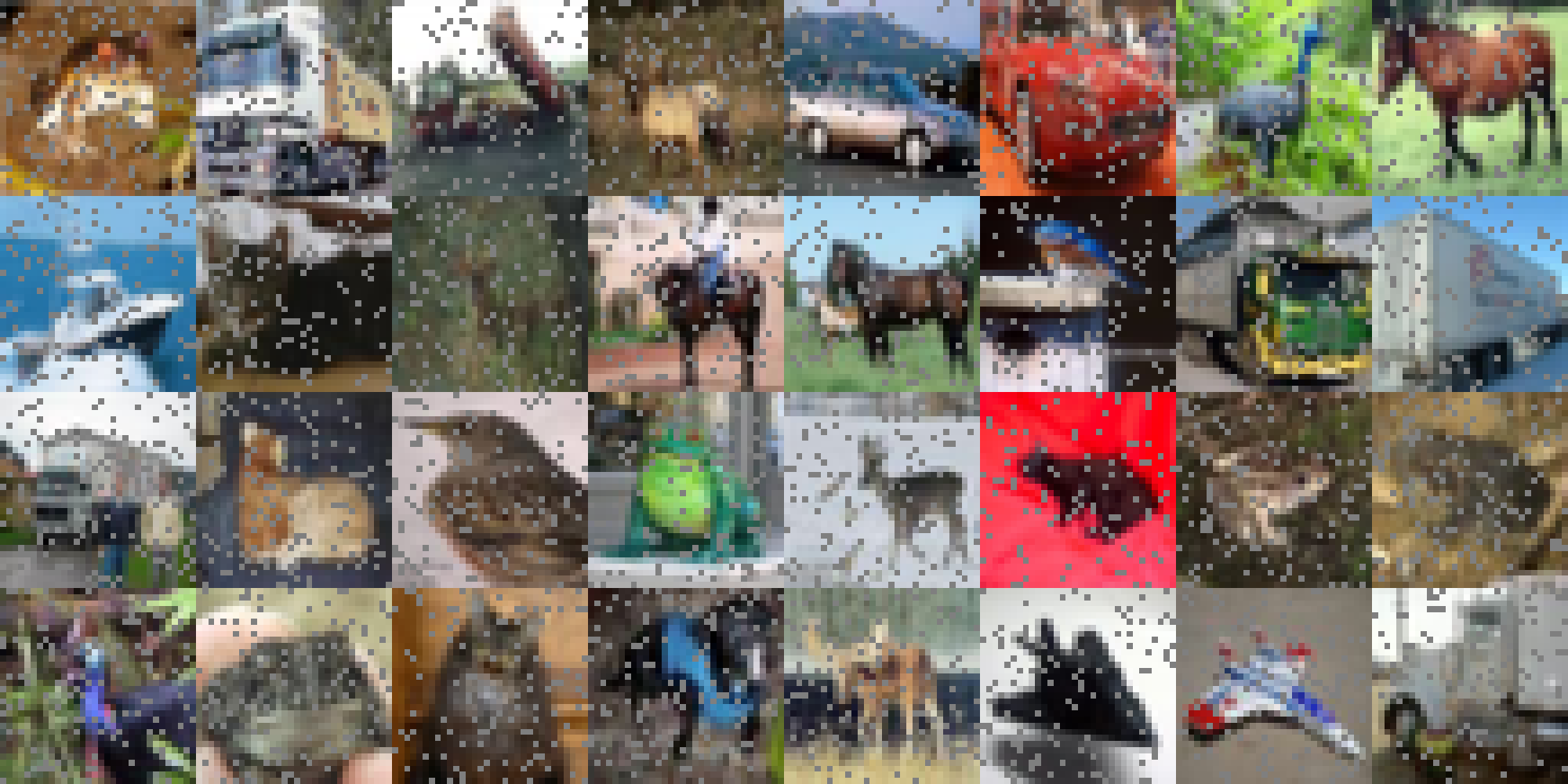}
      };
      \node[inner sep=0pt,right of=21,node distance=\figgap,
            label=below:{\small $G_x$ samples}] (22) {
        \includegraphics[width=\figwidth]{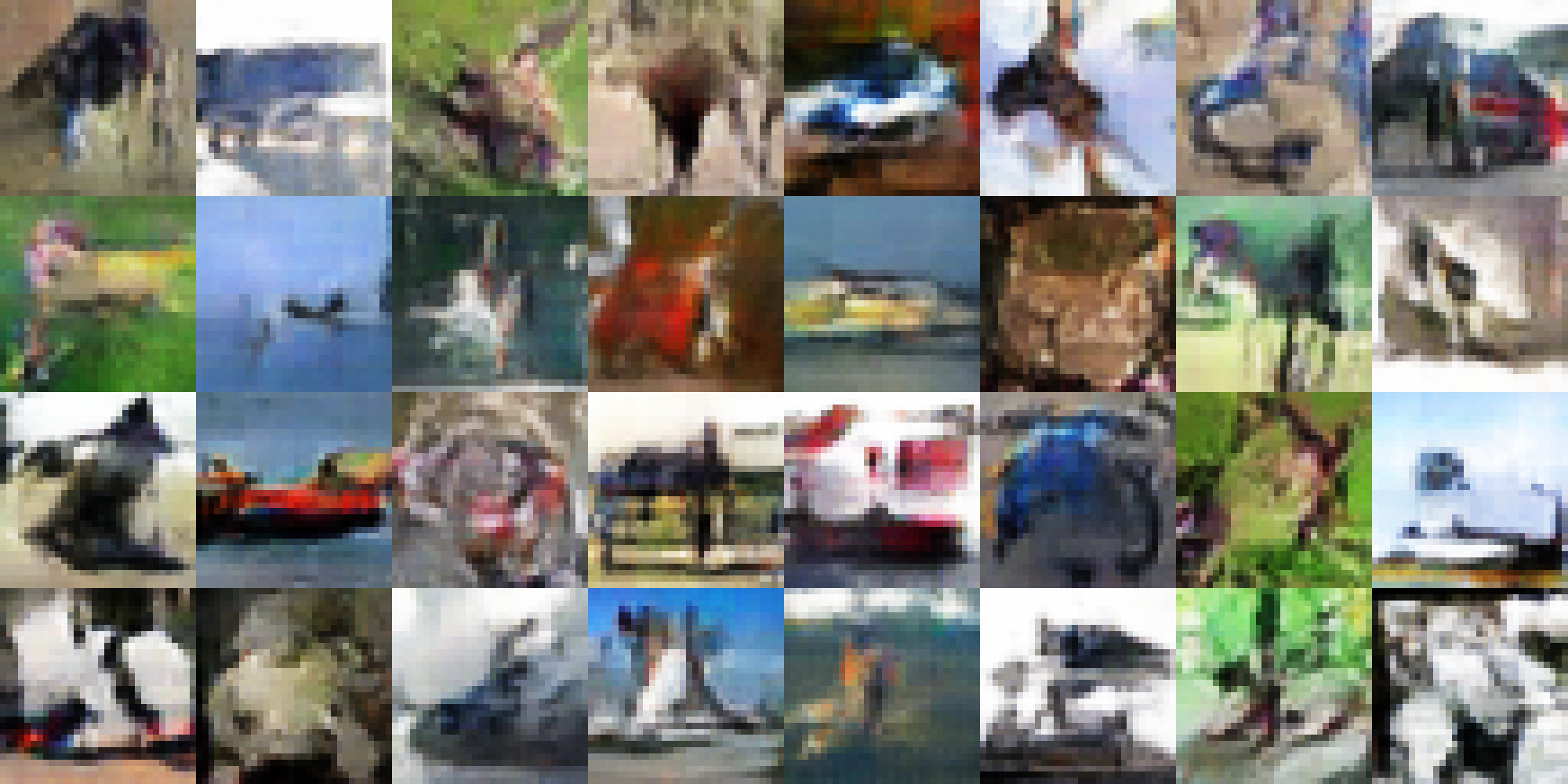}
      };
      \node[inner sep=0pt,right of=22,node distance=\figgap,
            label=below:{\small $G_m$ samples}] (23) {
        \includegraphics[width=\figwidth]{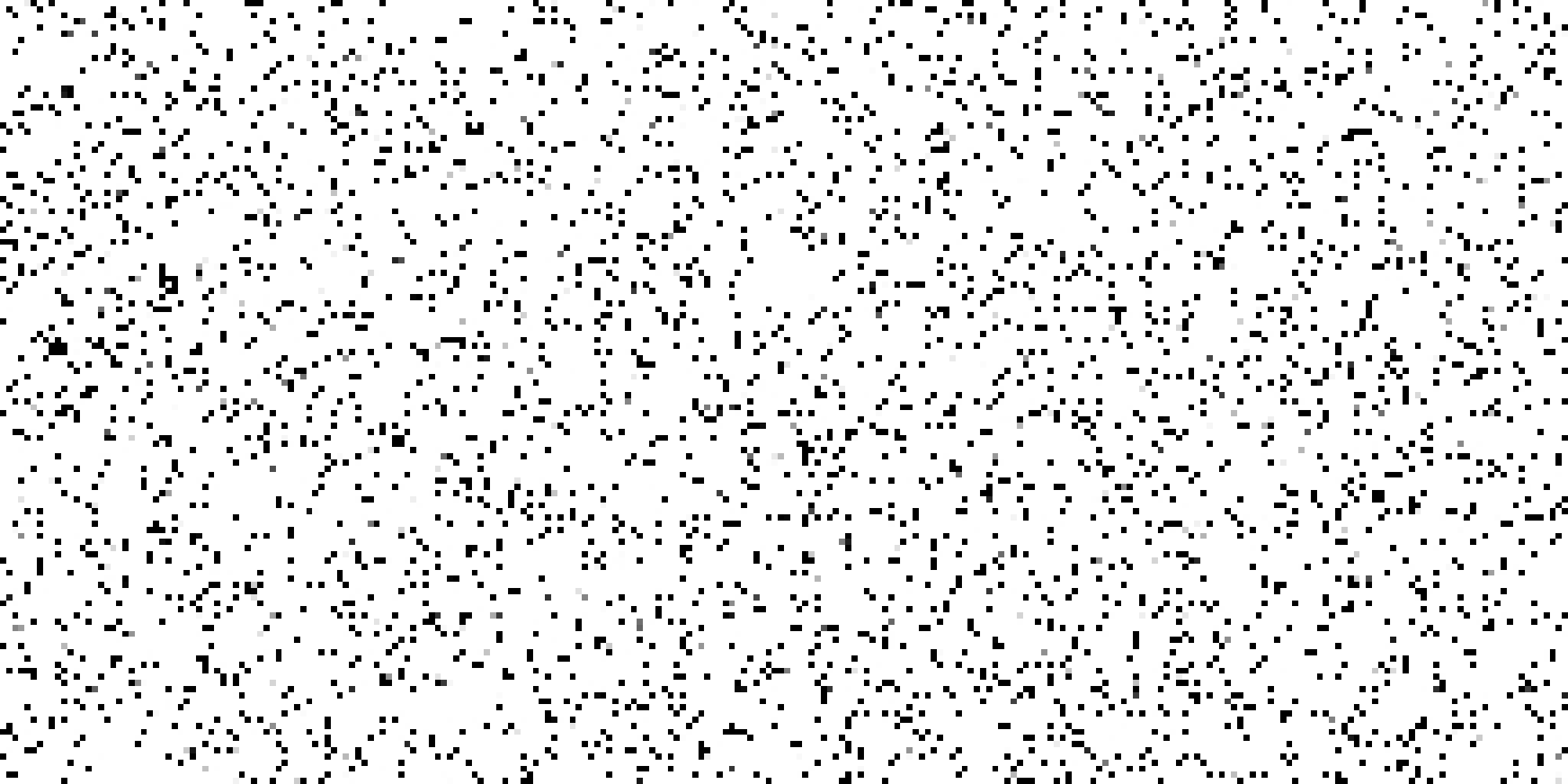}
      };
    \end{tikzpicture}
    \vspace*{-.5em}
    \caption{10\% missing}
  \end{subfigure}
  \caption{{\misgan} on CIFAR-10 with independent dropout missingness}
\label{fig:cifar10indep}
\end{figure}

\begin{figure}
  \def\figwidth{.32\textwidth}
  \def\figgap{\figwidth+.3em}

  \centering
  \begin{subfigure}[b]{\textwidth}
    \centering
    \begin{tikzpicture}
      \node[inner sep=0pt,
            label=below:{\small 10$\times$10 (90\% missing)}] (21) {
        \includegraphics[width=\figwidth]{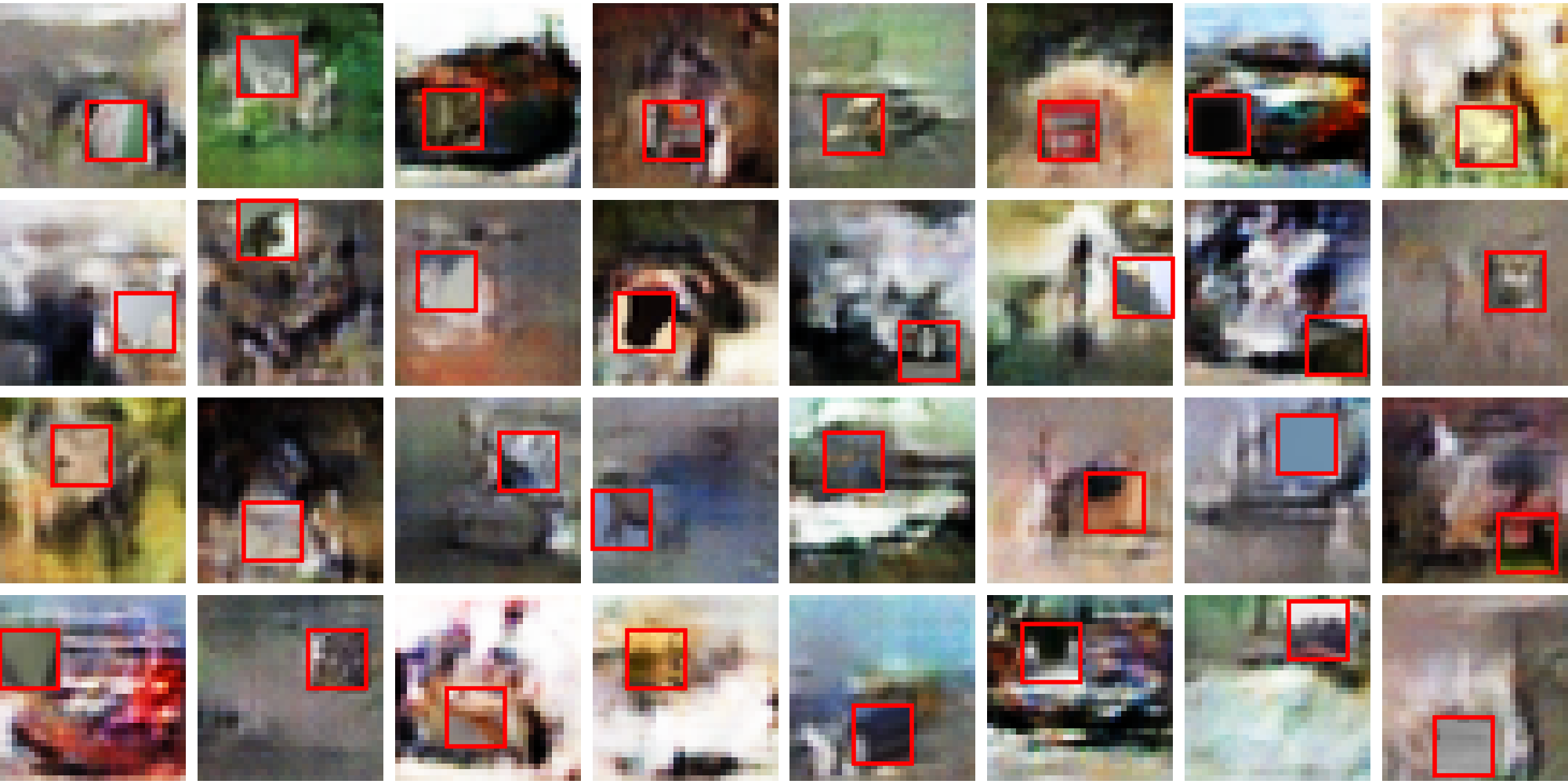}
      };
      \node[inner sep=0pt,right of=21,node distance=\figgap,
            label=below:{\small 14$\times$14 (80\% missing)}] (22) {
        \includegraphics[width=\figwidth]{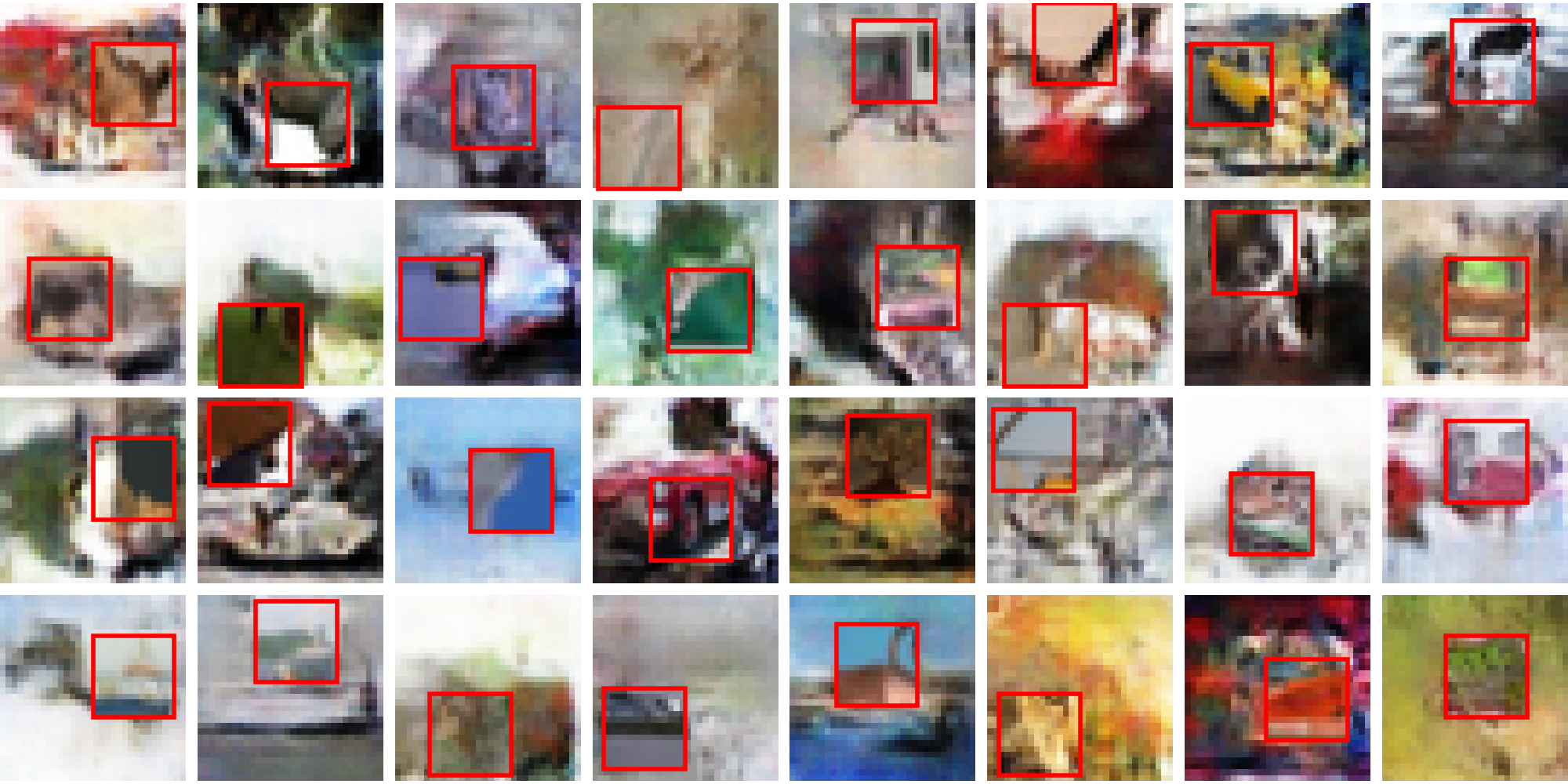}
      };
      \node[inner sep=0pt,right of=22,node distance=\figgap,
            label=below:{\small 30$\times$30 (10\% missing)}] (23) {
        \includegraphics[width=\figwidth]{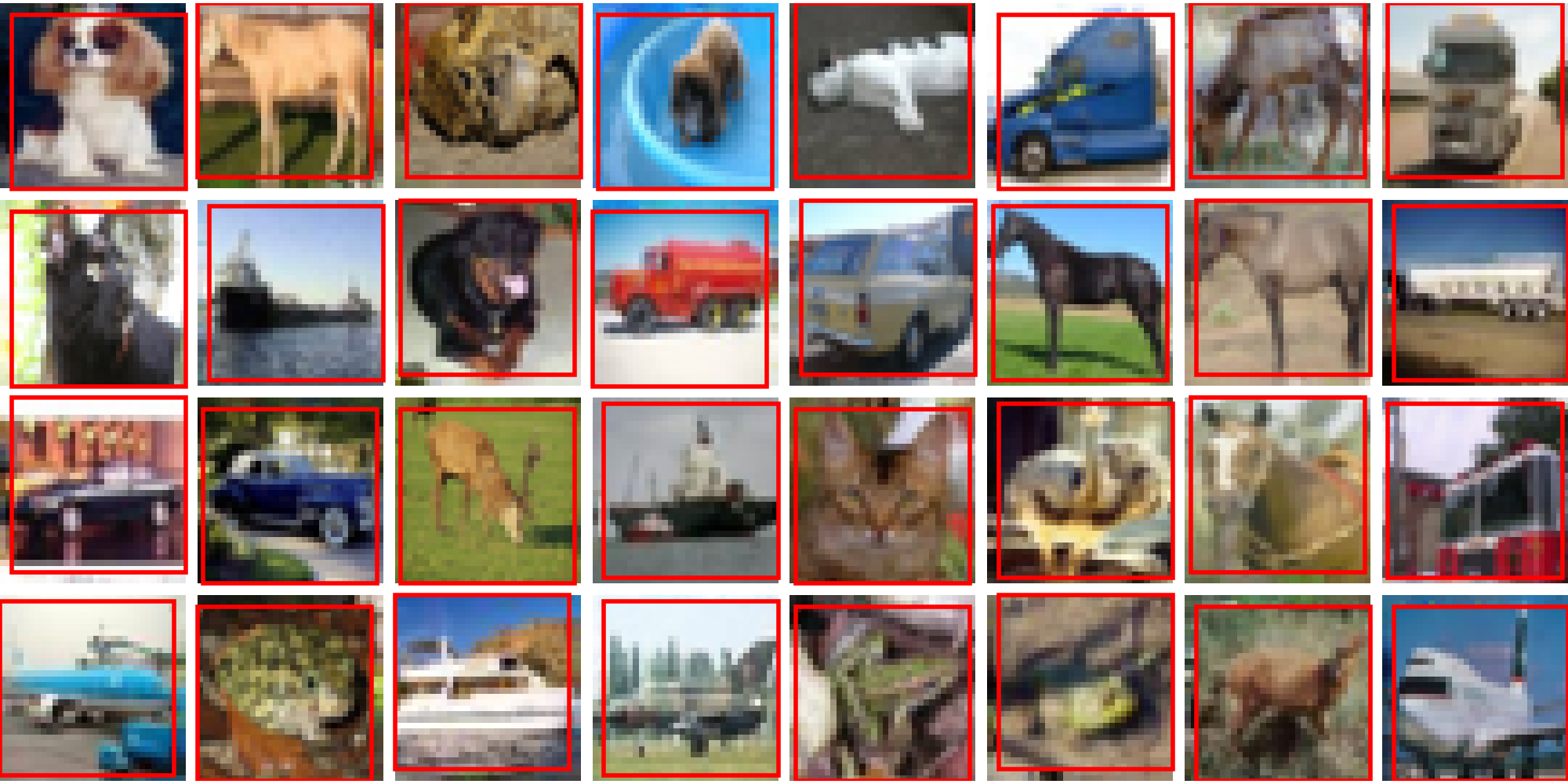}
      };
    \end{tikzpicture}
    \vspace*{-.5em}
    \caption{square observation missingness}
  \end{subfigure}

  \vspace{1em}
  \begin{subfigure}[b]{\textwidth}
    \centering
    \begin{tikzpicture}
      \node[inner sep=0pt,
            label=below:{\small 90\% missing}] (11) {
        \includegraphics[width=\figwidth]{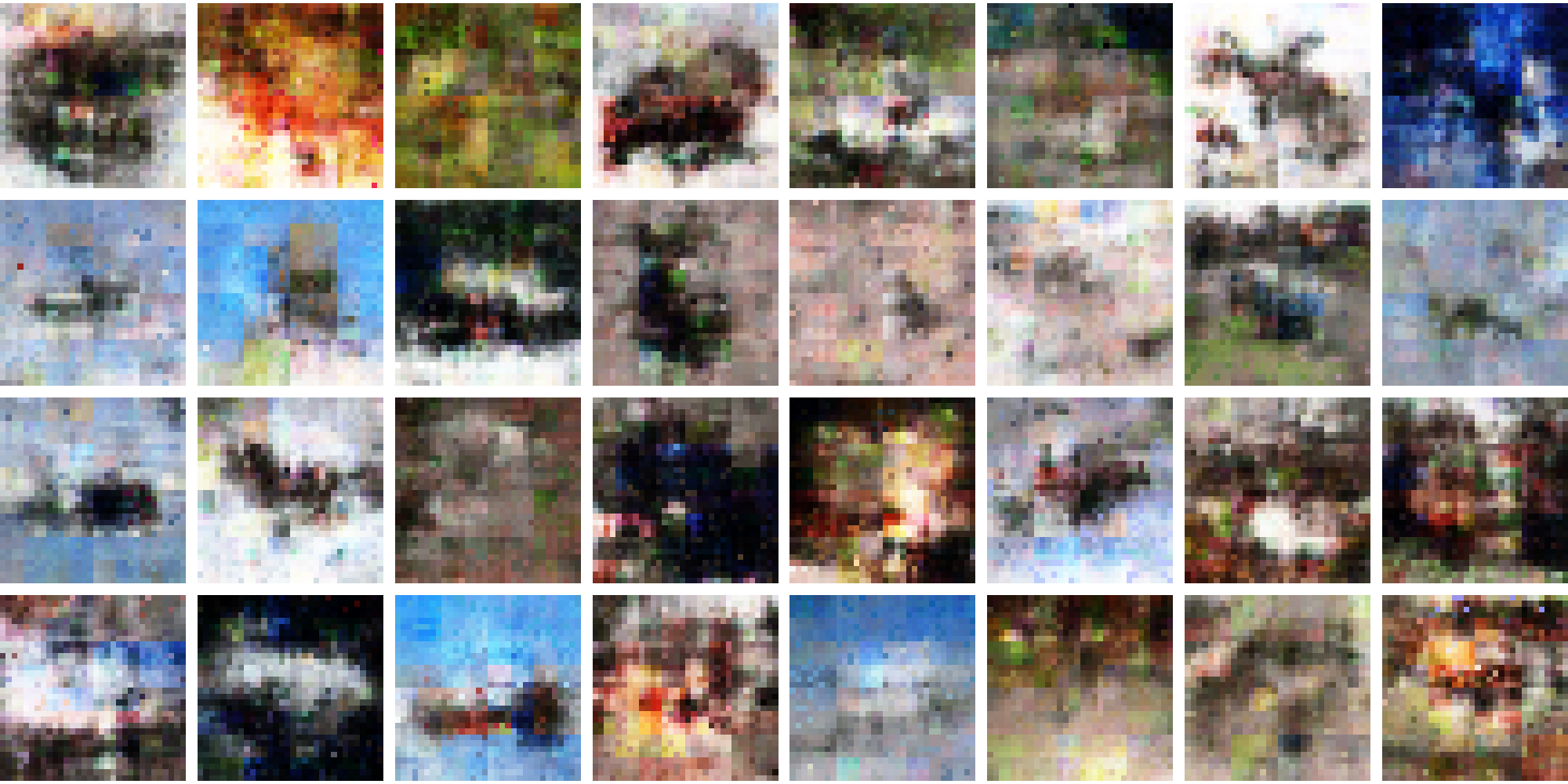}
      };
      \node[inner sep=0pt,right of=11,node distance=\figgap,
            label=below:{\small 80\% missing}] (12) {
        \includegraphics[width=\figwidth]{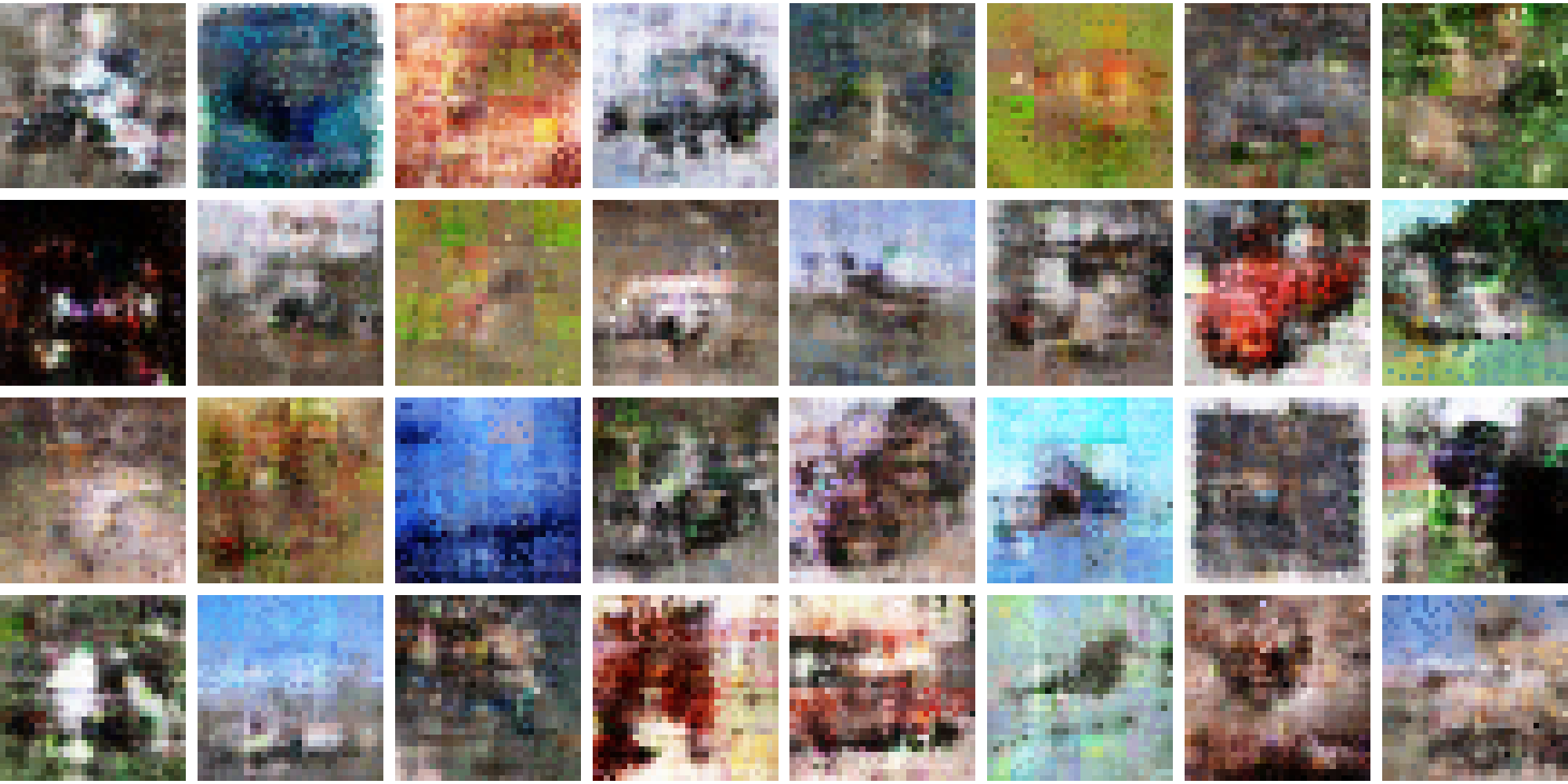}
      };
      \node[inner sep=0pt,right of=12,node distance=\figgap,
            label=below:{\small 10\% missing}] (13) {
        \includegraphics[width=\figwidth]{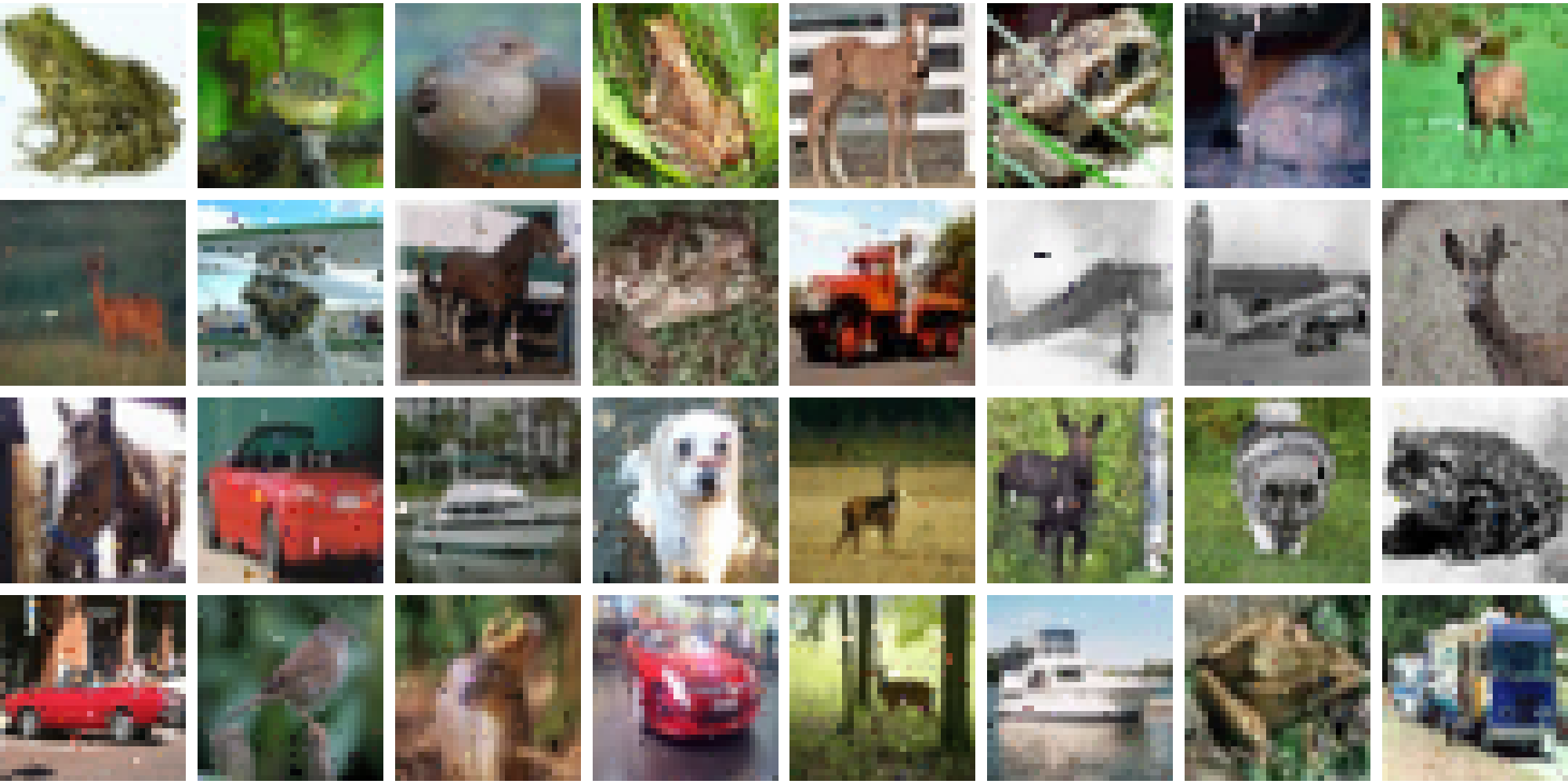}
      };
    \end{tikzpicture}
    \vspace*{-.5em}
    \caption{independent dropout missingness}
  \end{subfigure}

  \caption{{\misgan} imputation on CIFAR-10}
\label{fig:cifar10impute}
\end{figure}

\section{{\misgan} on CelebA}
\label{sec:celebamisgan}
Figure~\ref{fig:celebablock},
\ref{fig:celebaindep}
and \ref{fig:celebaimpute}
show the results of {\misgan} trained on CelebA
for the two extreme missing rates, namely
90\% and 80\%, as well as the case of 10\% that is
close to full observation.

\begin{figure}
  \def\figwidth{.32\textwidth}
  \def\figgap{\figwidth+.3em}

  \centering
  \begin{subfigure}[b]{\textwidth}
    \centering
    \begin{tikzpicture}
      \node[inner sep=0pt,
            label=below:{\small training samples}] (11) {
        \includegraphics[width=\figwidth]{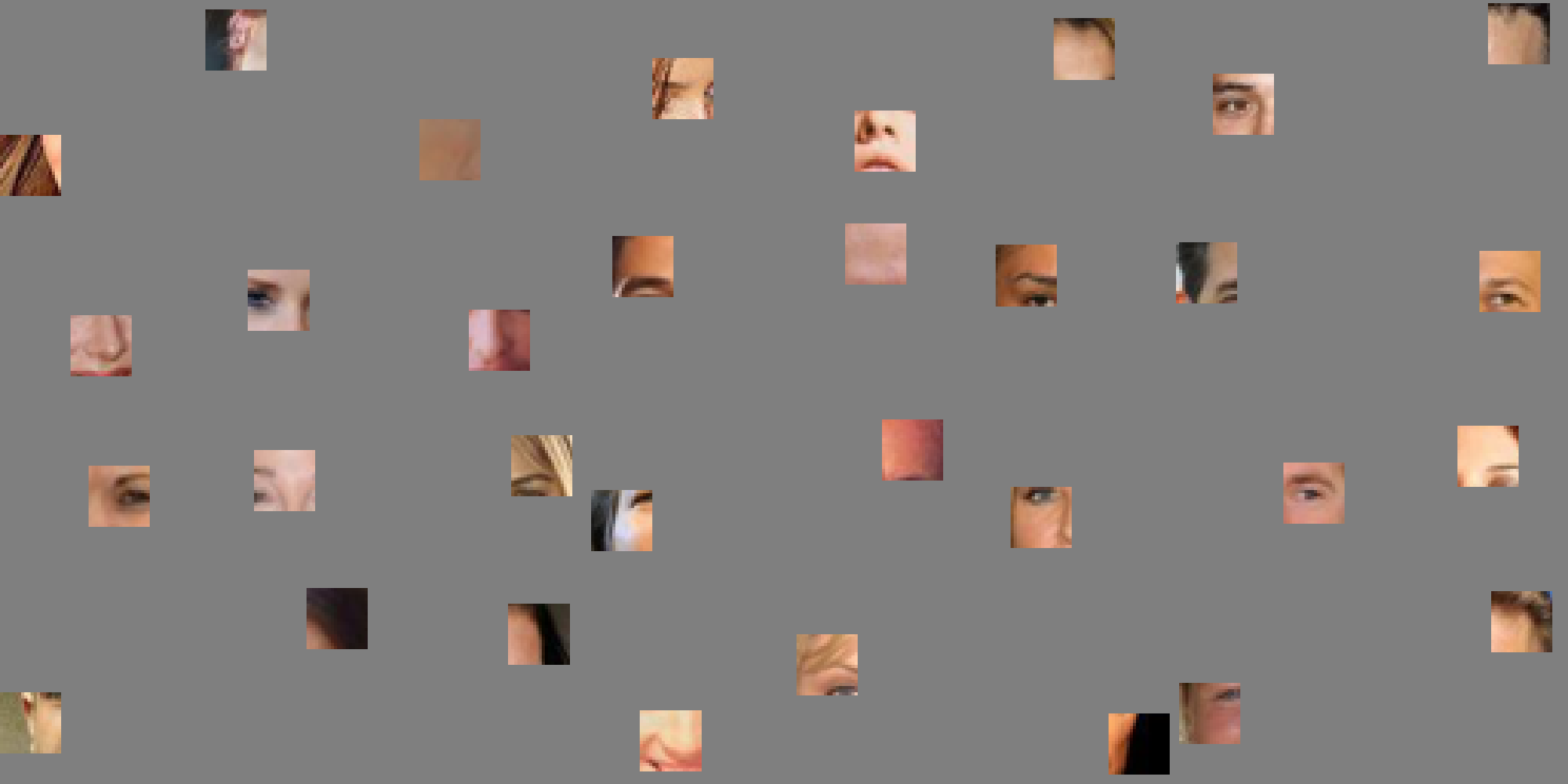}
      };
      \node[inner sep=0pt,right of=11,node distance=\figgap,
            label=below:{\small $G_x$ samples}] (12) {
        \includegraphics[width=\figwidth]{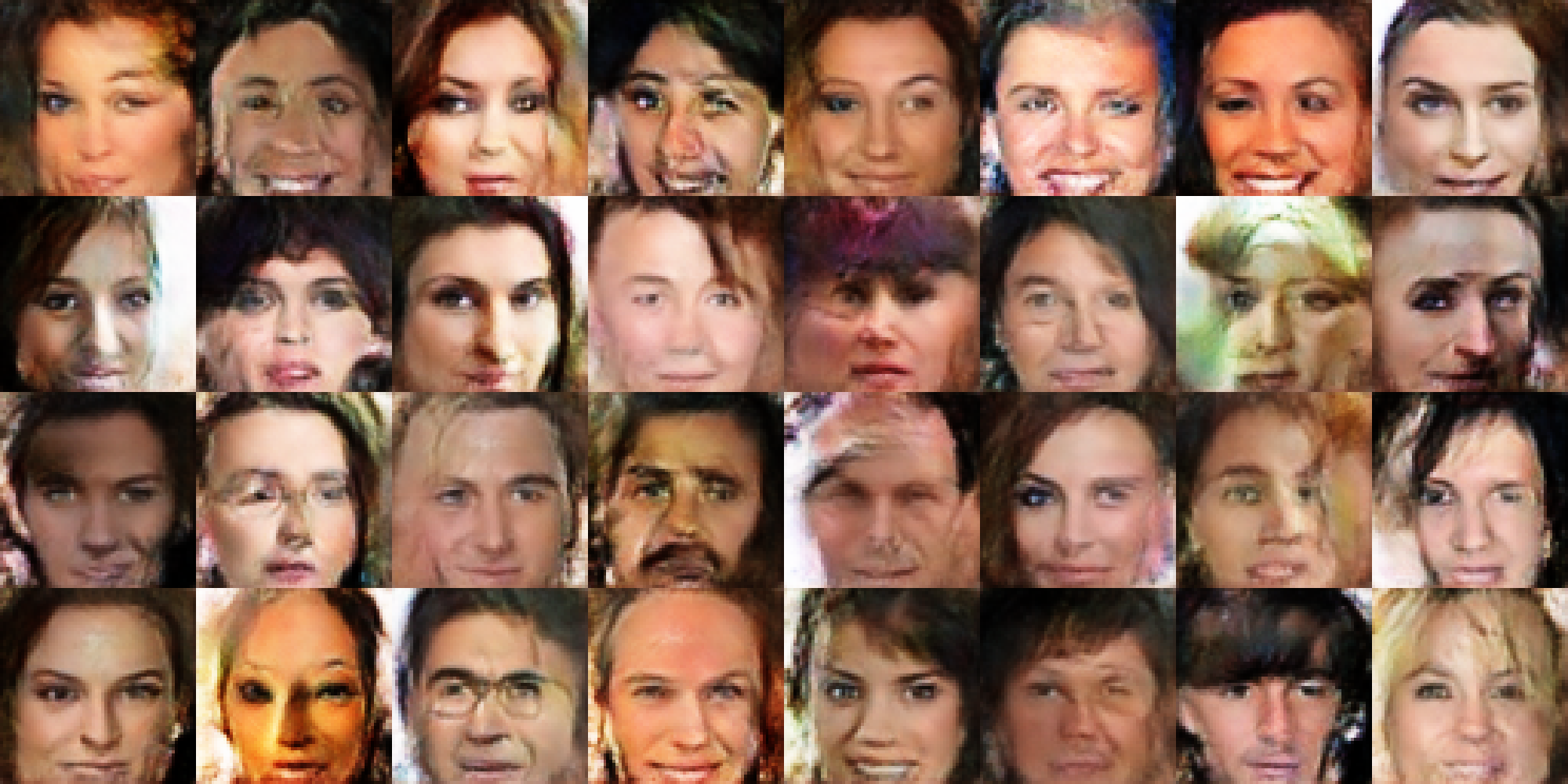}
      };
      \node[inner sep=0pt,right of=12,node distance=\figgap,
            label=below:{\small $G_m$ samples}] (13) {
        \includegraphics[width=\figwidth]{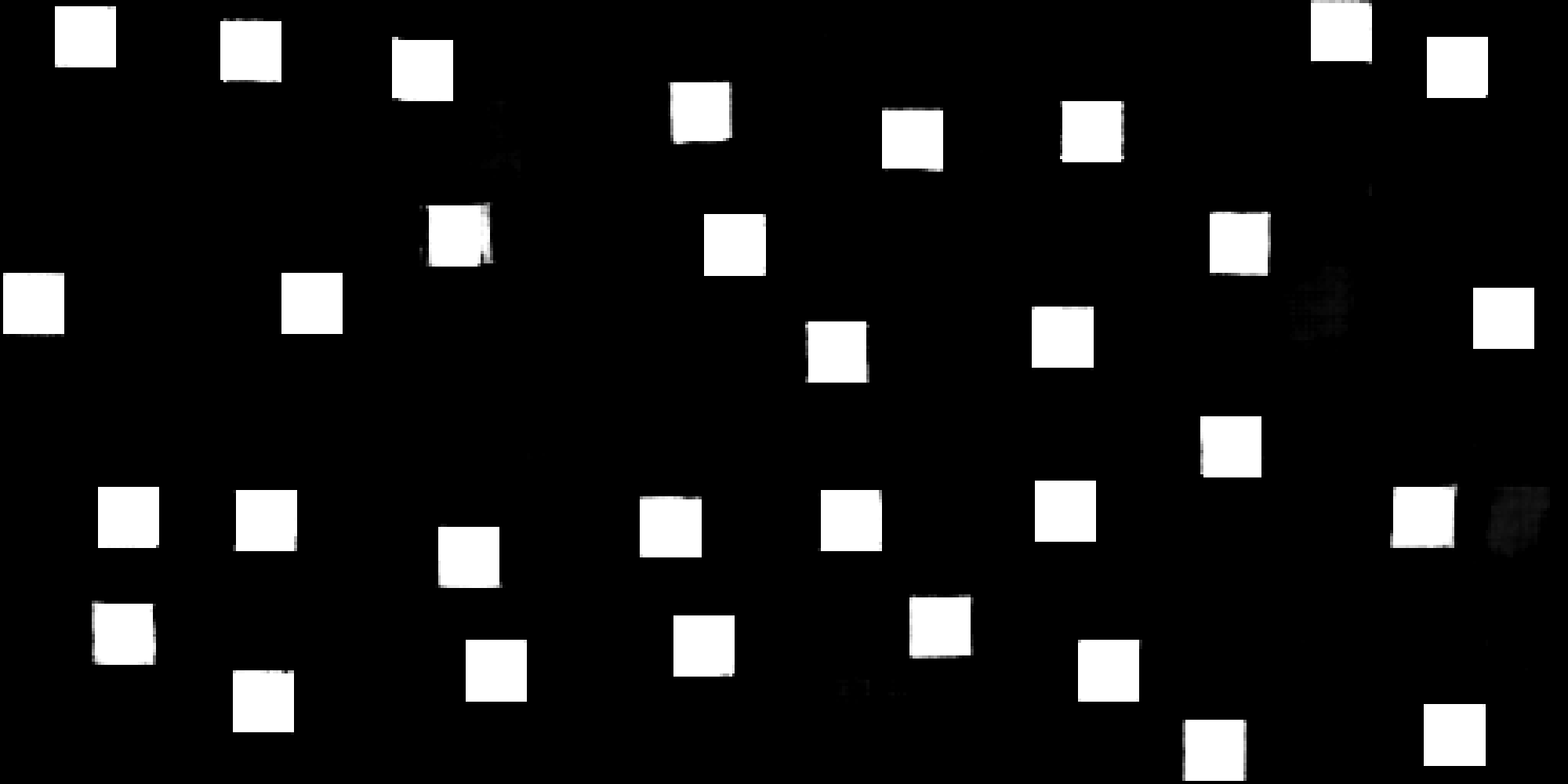}
      };
    \end{tikzpicture}
    \vspace*{-.5em}
    \caption{20$\times$20 block (90\% missing)}
  \end{subfigure}

  \vspace{1em}
  \begin{subfigure}[b]{\textwidth}
    \centering
    \begin{tikzpicture}
      \node[inner sep=0pt,
            label=below:{\small training samples}] (21) {
        \includegraphics[width=\figwidth]{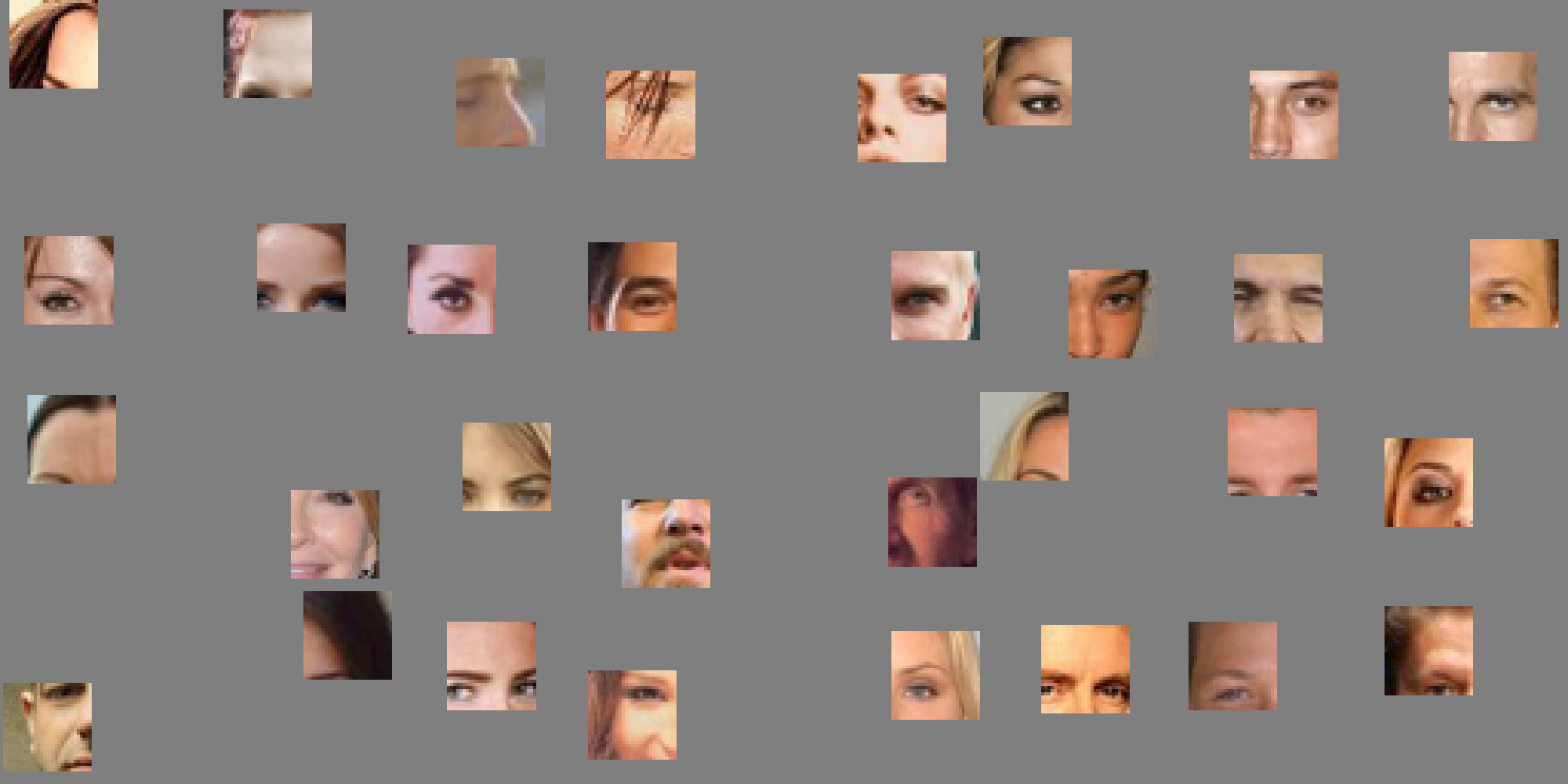}
      };
      \node[inner sep=0pt,right of=21,node distance=\figgap,
            label=below:{\small $G_x$ samples}] (22) {
        \includegraphics[width=\figwidth]{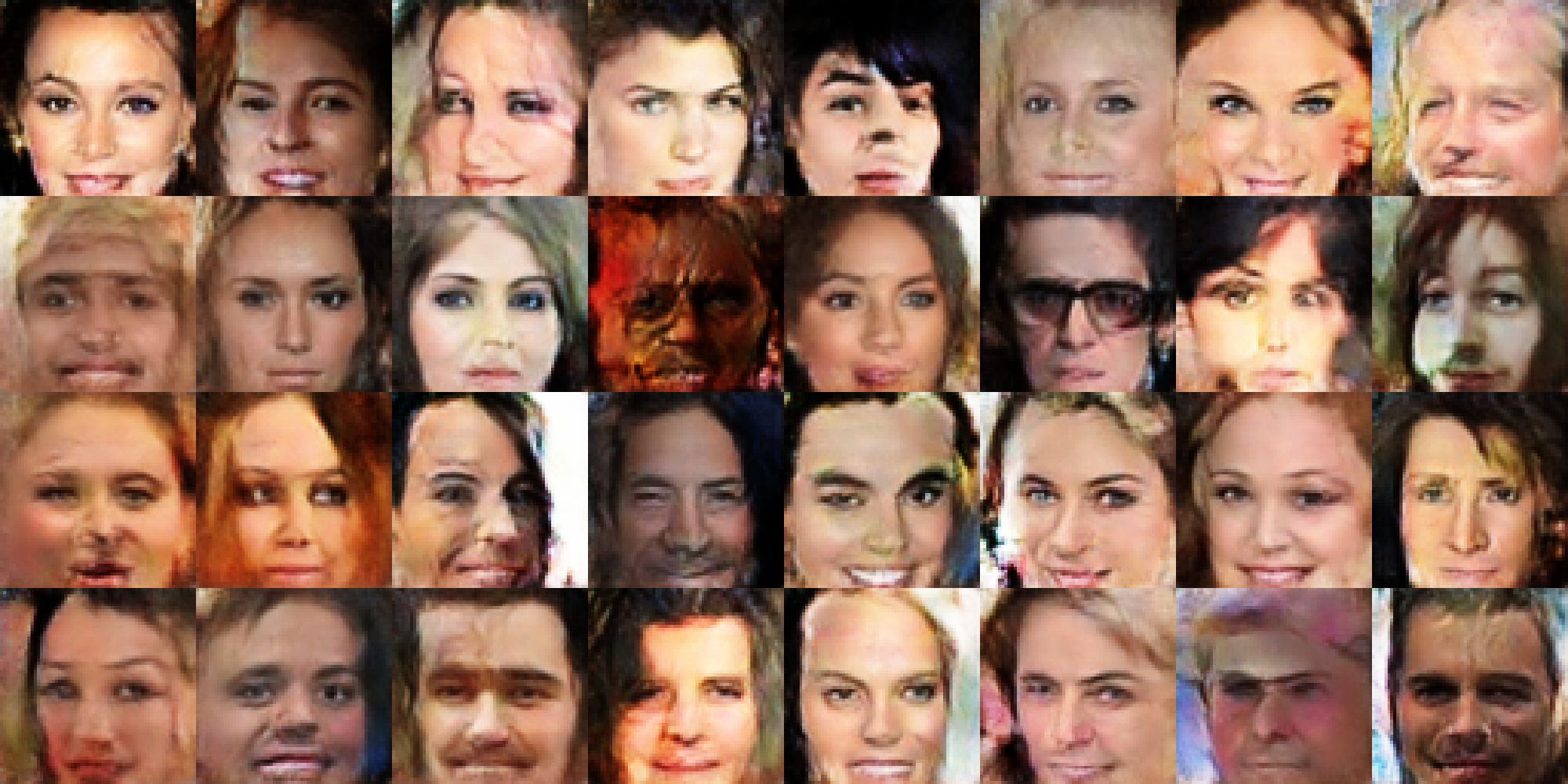}
      };
      \node[inner sep=0pt,right of=22,node distance=\figgap,
            label=below:{\small $G_m$ samples}] (23) {
        \includegraphics[width=\figwidth]{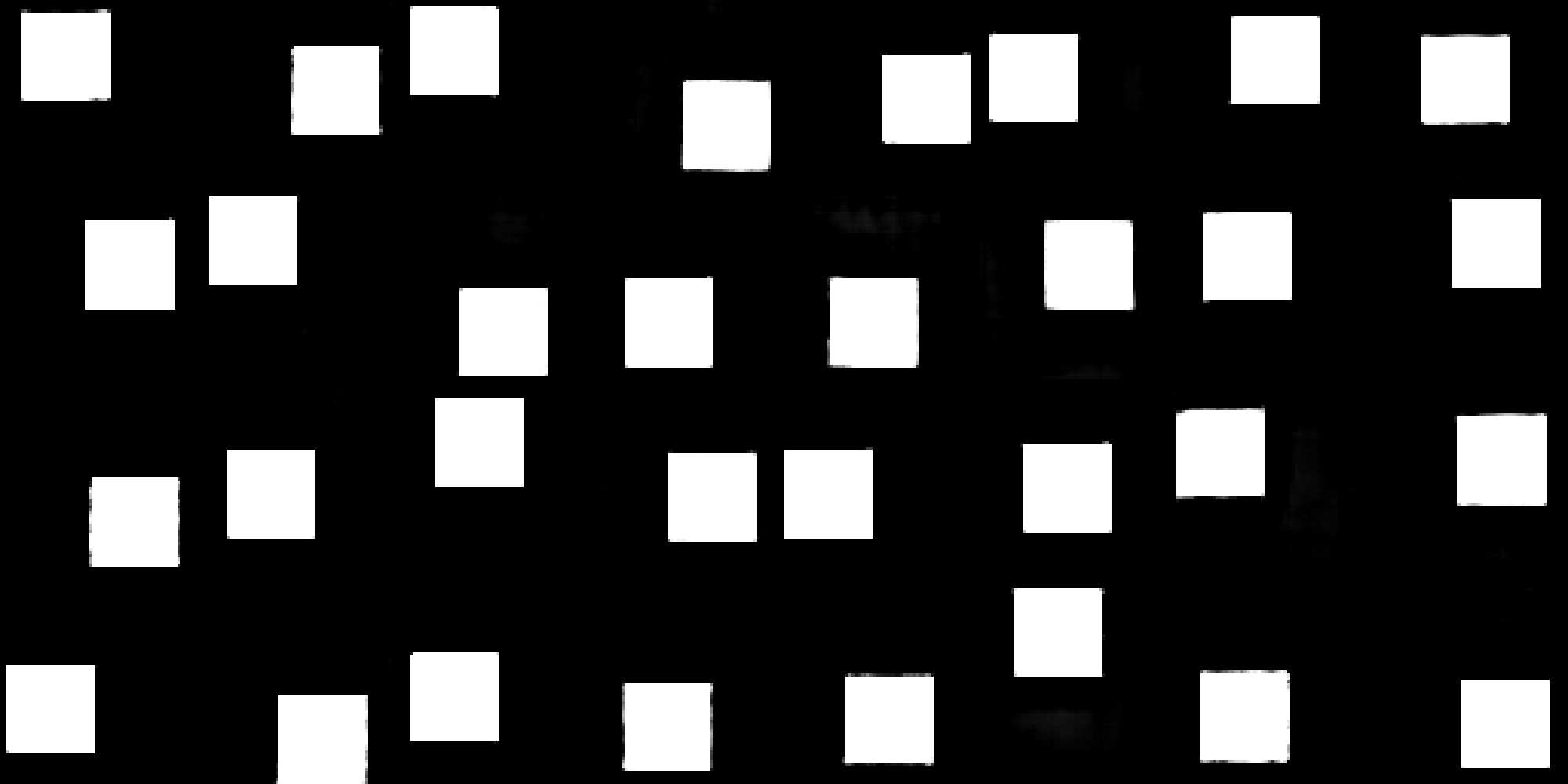}
      };
    \end{tikzpicture}
    \vspace*{-.5em}
    \caption{29$\times$29 block (80\% missing)}
  \end{subfigure}

  \vspace{1em}
  \begin{subfigure}[b]{\textwidth}
    \centering
    \begin{tikzpicture}
      \node[inner sep=0pt,
            label=below:{\small training samples}] (21) {
        \includegraphics[width=\figwidth]{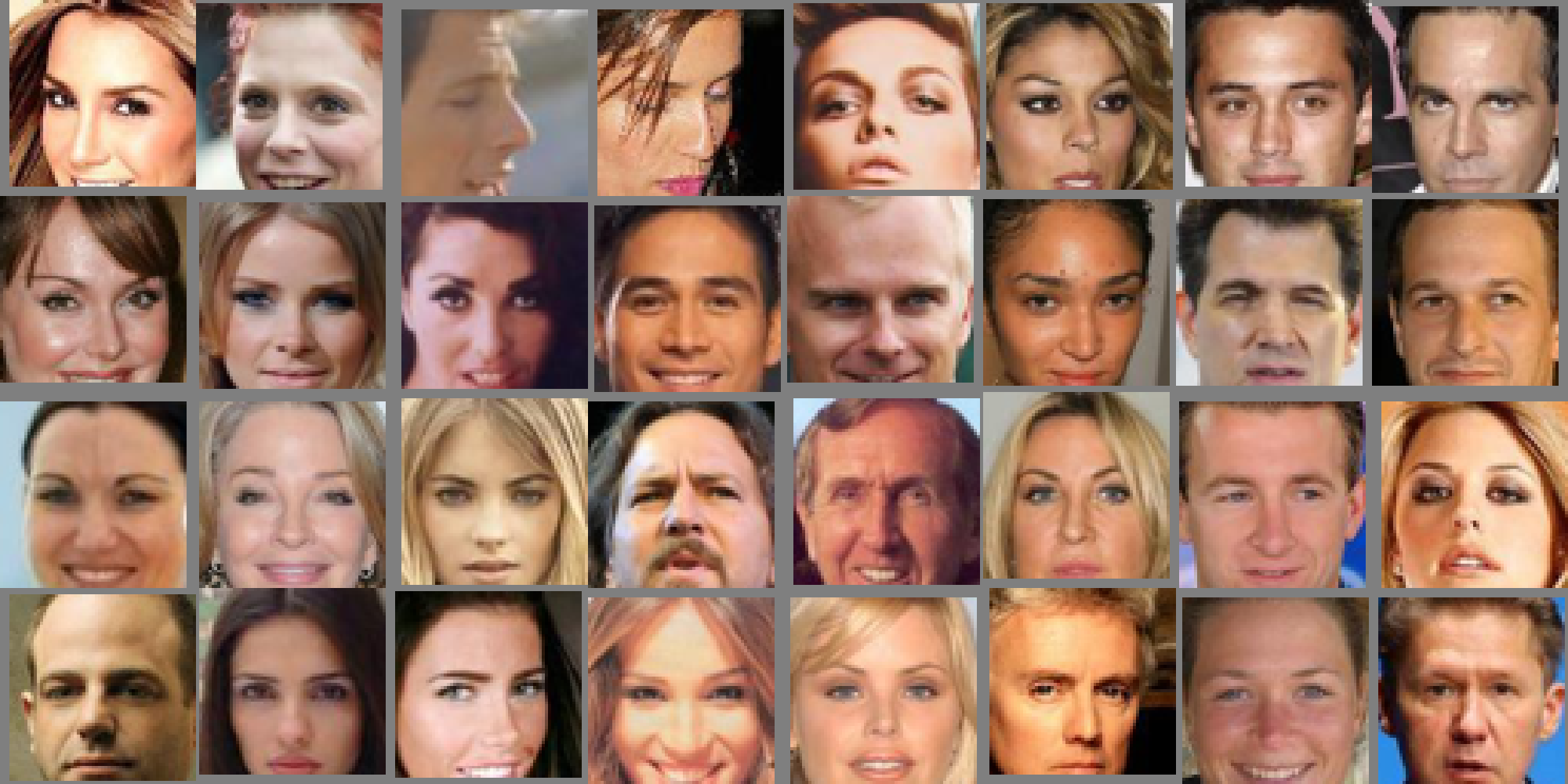}
      };
      \node[inner sep=0pt,right of=21,node distance=\figgap,
            label=below:{\small $G_x$ samples}] (22) {
        \includegraphics[width=\figwidth]{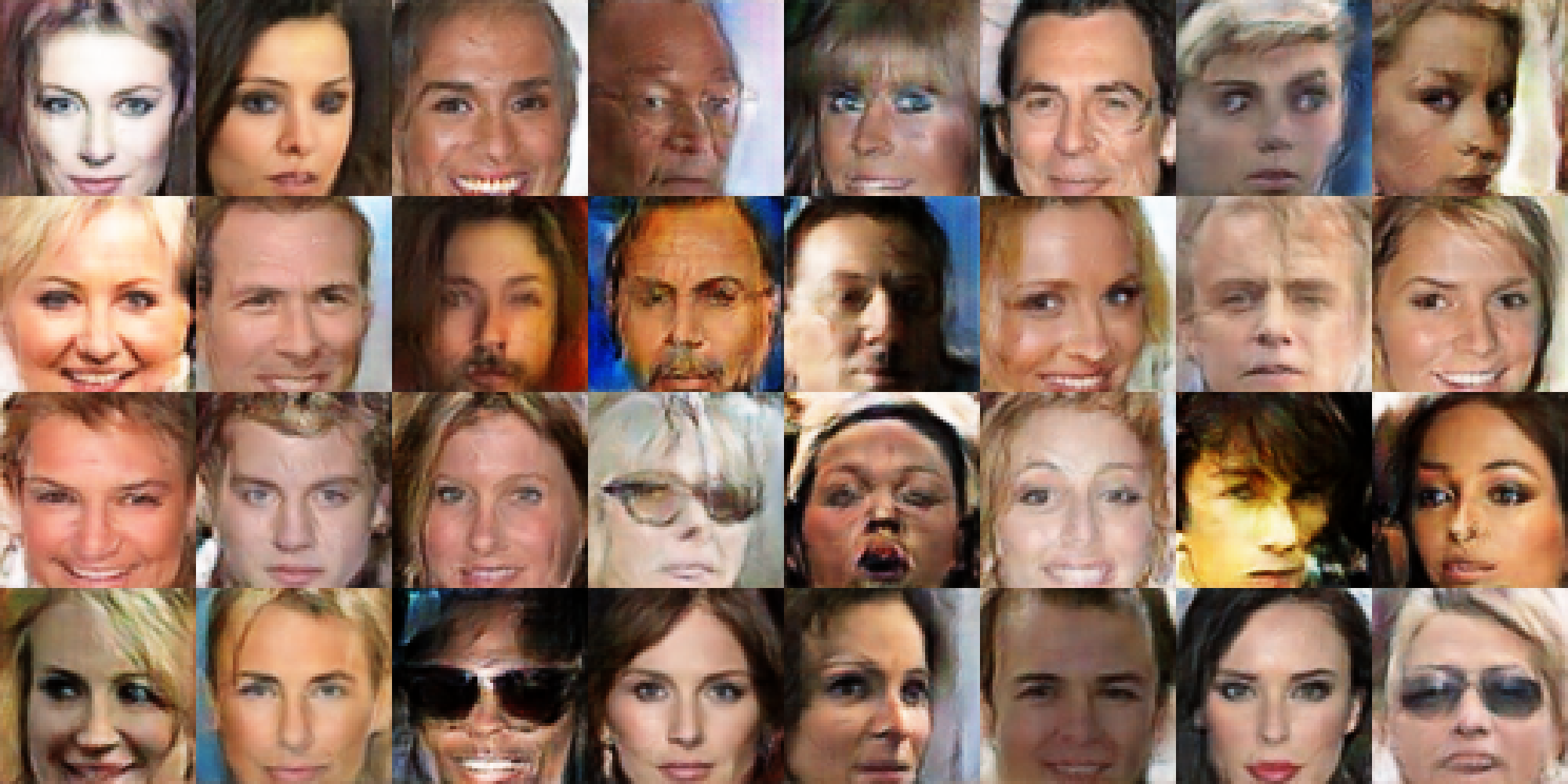}
      };
      \node[inner sep=0pt,right of=22,node distance=\figgap,
            label=below:{\small $G_m$ samples}] (23) {
        \includegraphics[width=\figwidth]{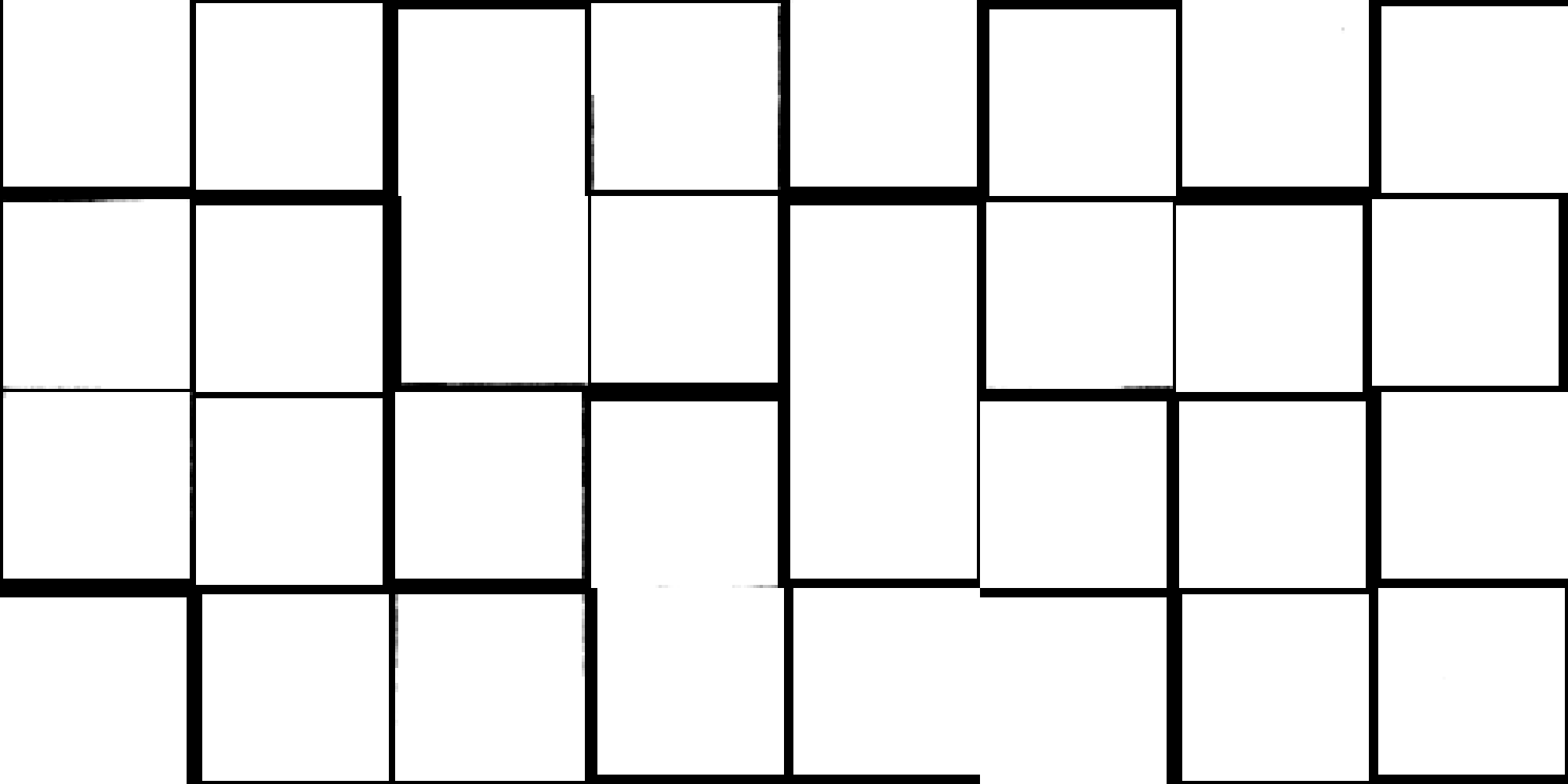}
      };
    \end{tikzpicture}
    \vspace*{-.5em}
    \caption{61$\times$61 block (10\% missing)}
  \end{subfigure}

  \vspace{1em}
  \begin{subfigure}[b]{\textwidth}
    \centering
    \begin{tikzpicture}
      \node[inner sep=0pt,
            label=below:{\small training samples}] (31) {
        \includegraphics[width=\figwidth]{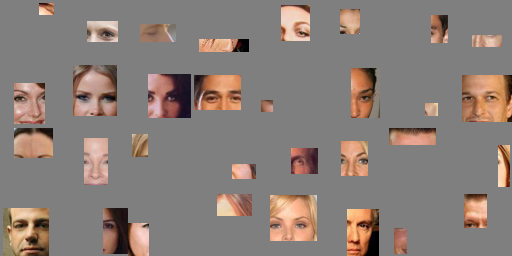}
      };
      \node[inner sep=0pt,right of=31,node distance=\figgap,
            label=below:{\small $G_x$ samples}] (32) {
        \includegraphics[width=\figwidth]{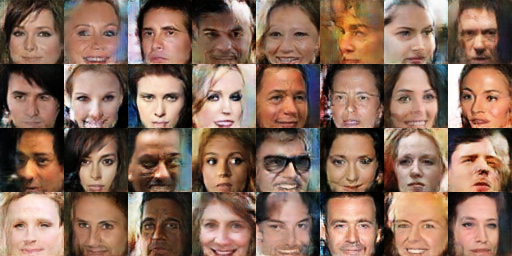}
      };
      \node[inner sep=0pt,right of=32,node distance=\figgap,
            label=below:{\small $G_m$ samples}] (33) {
        \includegraphics[width=\figwidth]{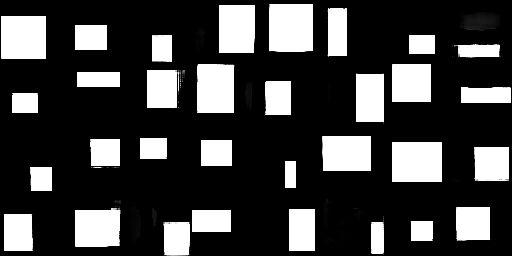}
      };
    \end{tikzpicture}
    \vspace*{-.5em}
    \caption{Variable-size block
    (75\% missing on average)}
  \end{subfigure}
  \caption{{\misgan} on CelebA with block observation missingness}
\label{fig:celebablock}
\end{figure}

\begin{figure}
  \def\figwidth{.32\textwidth}
  \def\figgap{\figwidth+.3em}

  \centering
  \begin{subfigure}[b]{\textwidth}
    \centering
    \begin{tikzpicture}
      \node[inner sep=0pt,
            label=below:{\small training samples}] (11) {
        \includegraphics[width=\figwidth]{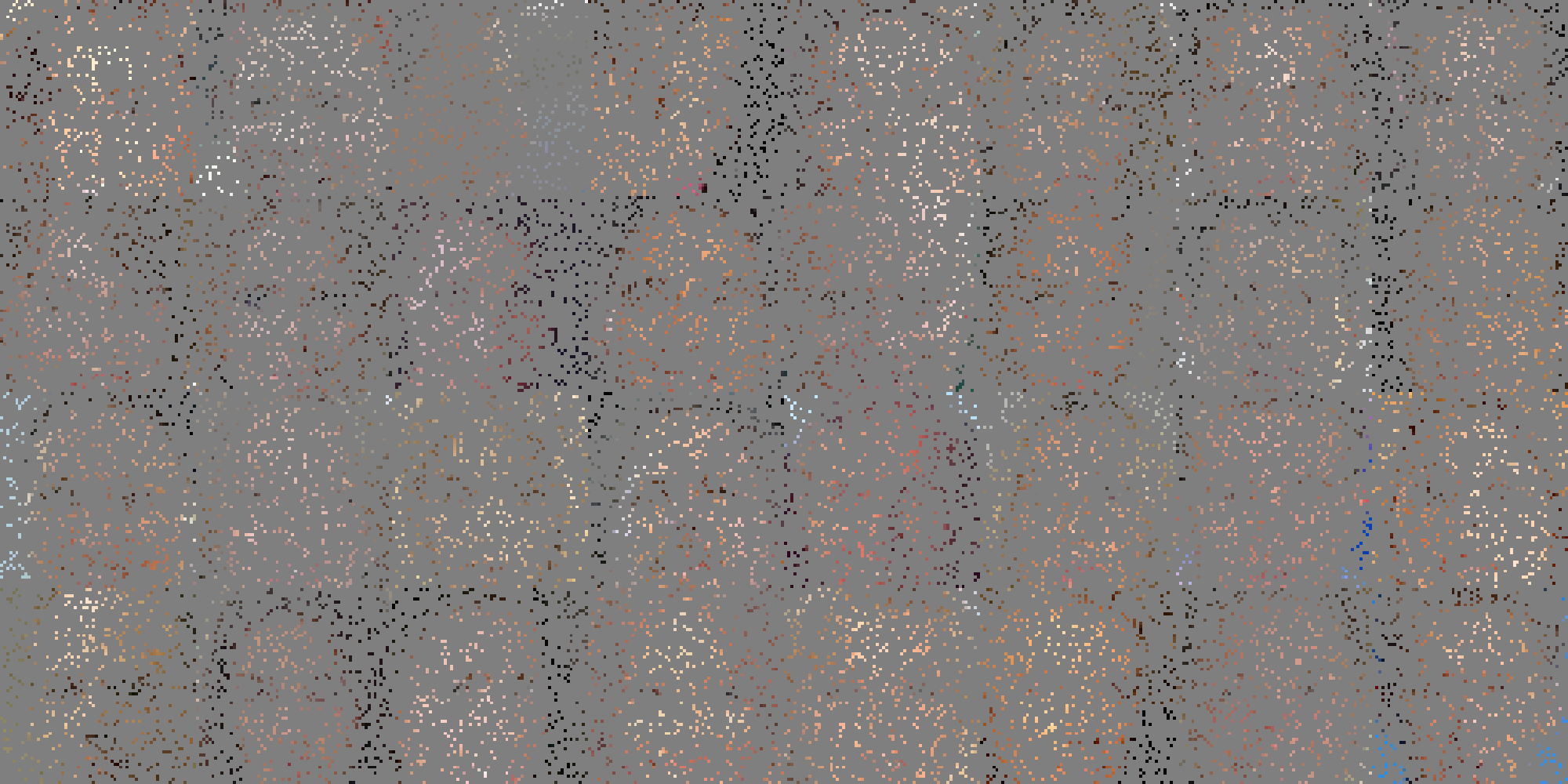}
      };
      \node[inner sep=0pt,right of=11,node distance=\figgap,
            label=below:{\small $G_x$ samples}] (12) {
        \includegraphics[width=\figwidth]{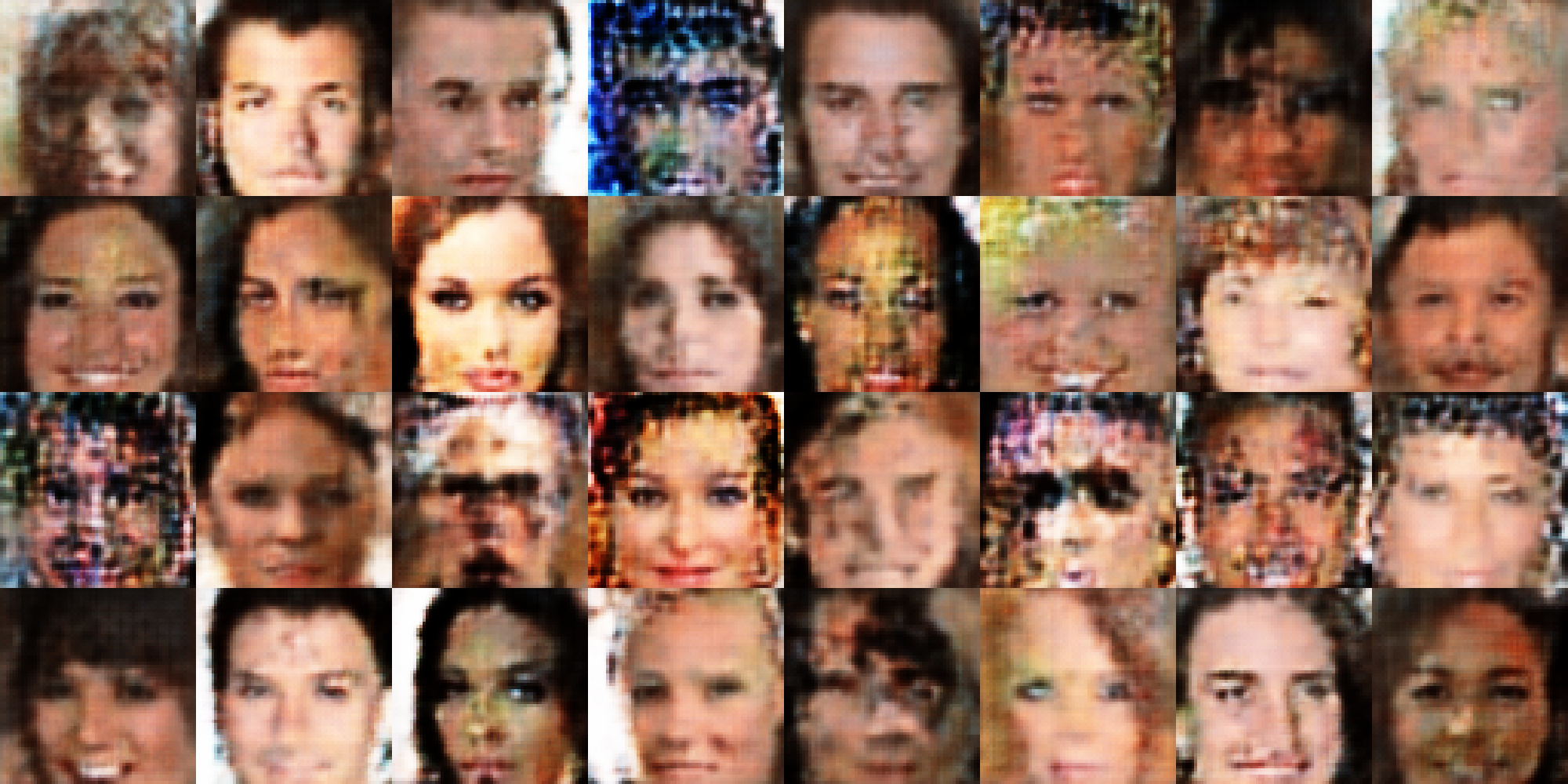}
      };
      \node[inner sep=0pt,right of=12,node distance=\figgap,
            label=below:{\small $G_m$ samples}] (13) {
        \includegraphics[width=\figwidth]{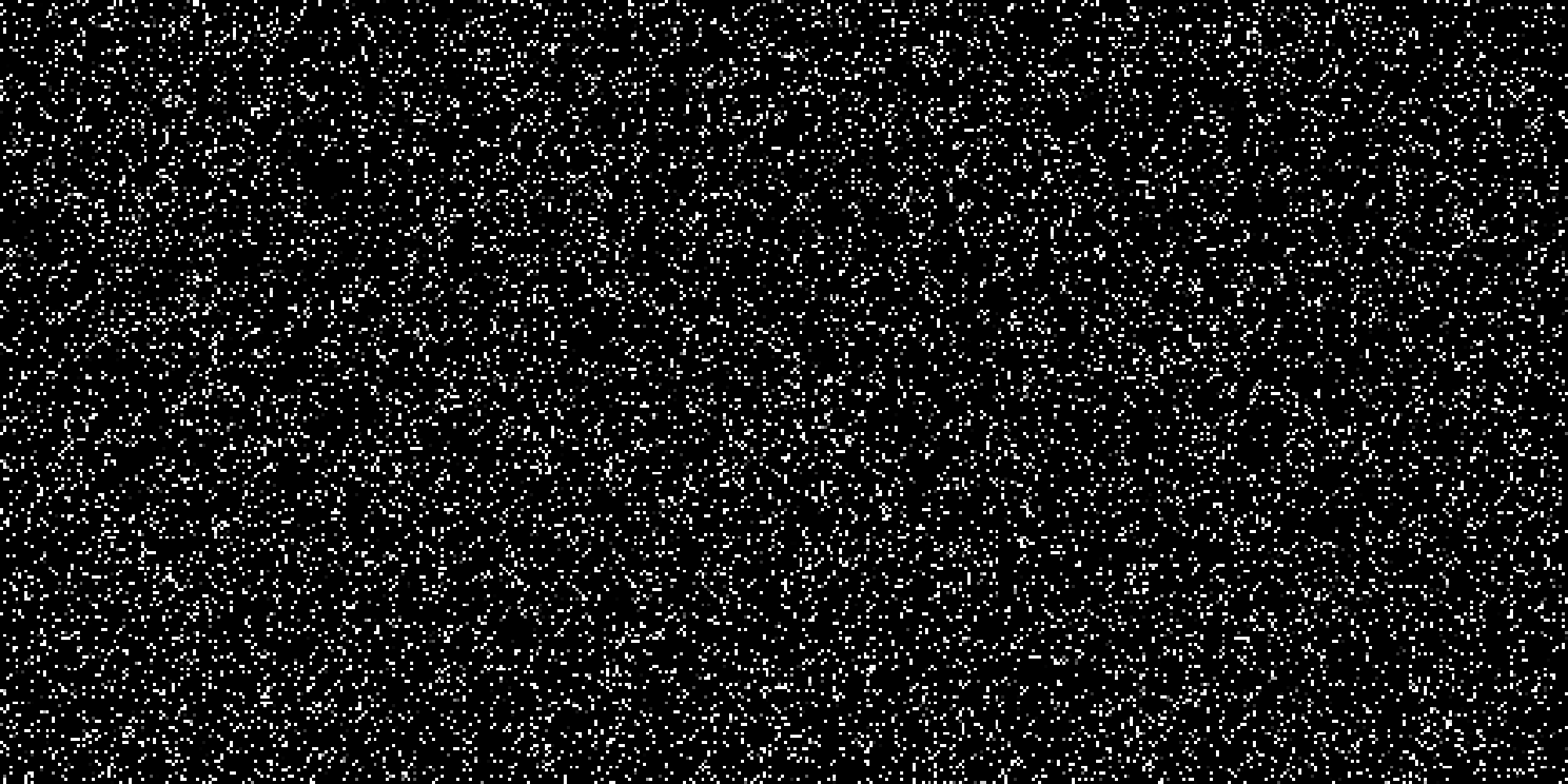}
      };
    \end{tikzpicture}
    \vspace*{-.5em}
    \caption{90\% missing}
  \end{subfigure}

  \vspace{1em}
  \begin{subfigure}[b]{\textwidth}
    \centering
    \begin{tikzpicture}
      \node[inner sep=0pt,
            label=below:{\small training samples}] (21) {
        \includegraphics[width=\figwidth]{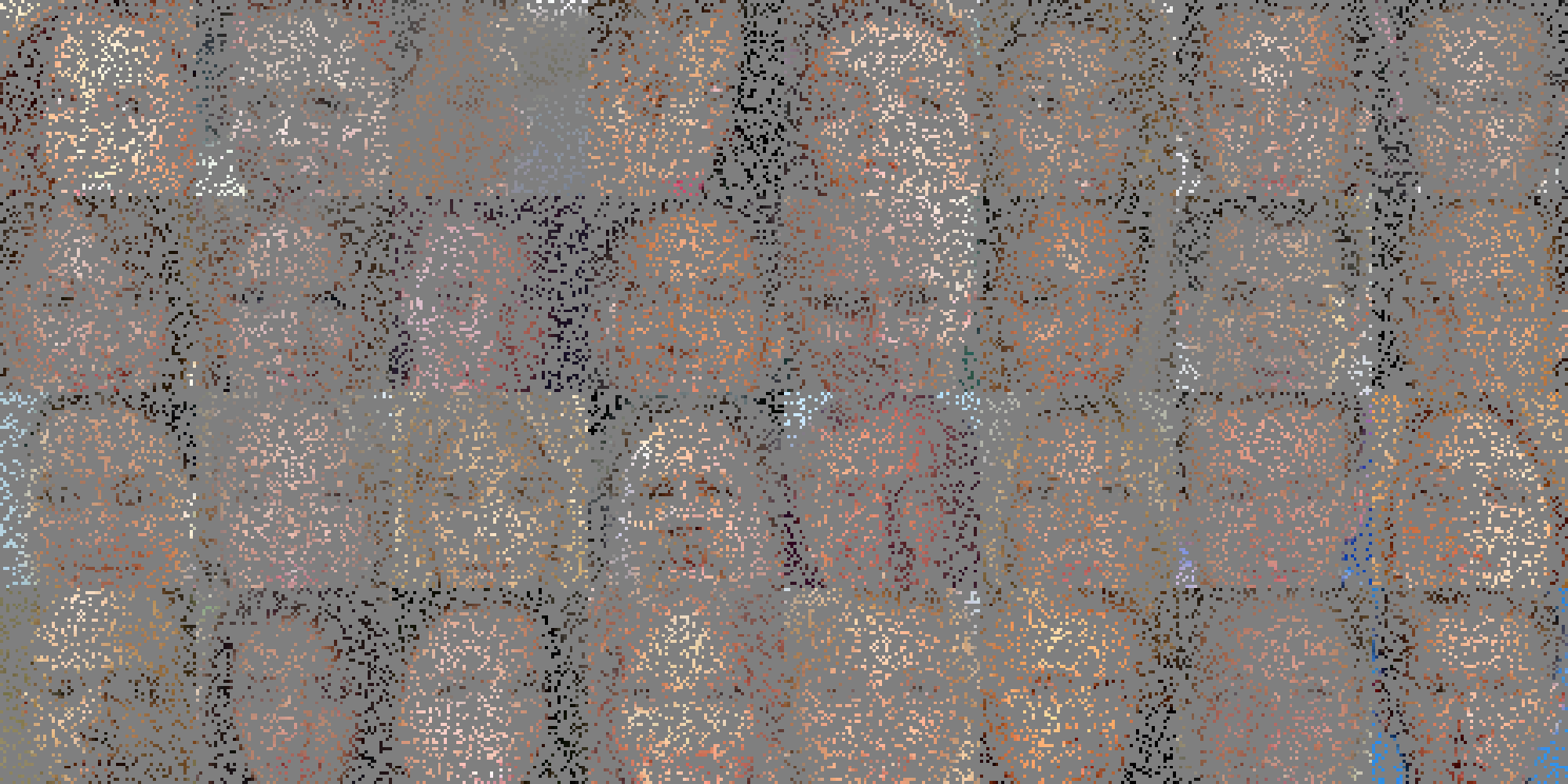}
      };
      \node[inner sep=0pt,right of=21,node distance=\figgap,
            label=below:{\small $G_x$ samples}] (22) {
        \includegraphics[width=\figwidth]{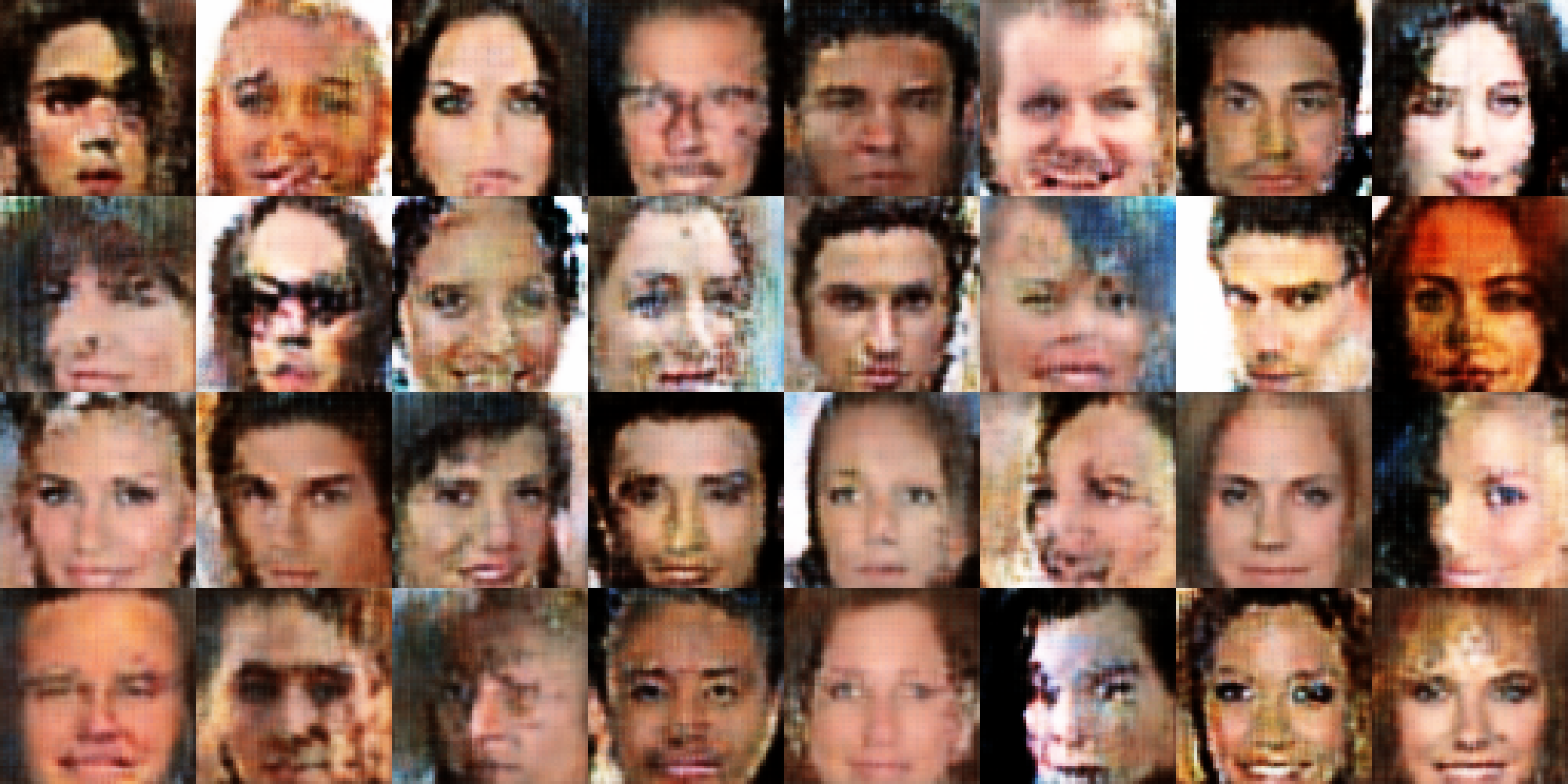}
      };
      \node[inner sep=0pt,right of=22,node distance=\figgap,
            label=below:{\small $G_m$ samples}] (23) {
        \includegraphics[width=\figwidth]{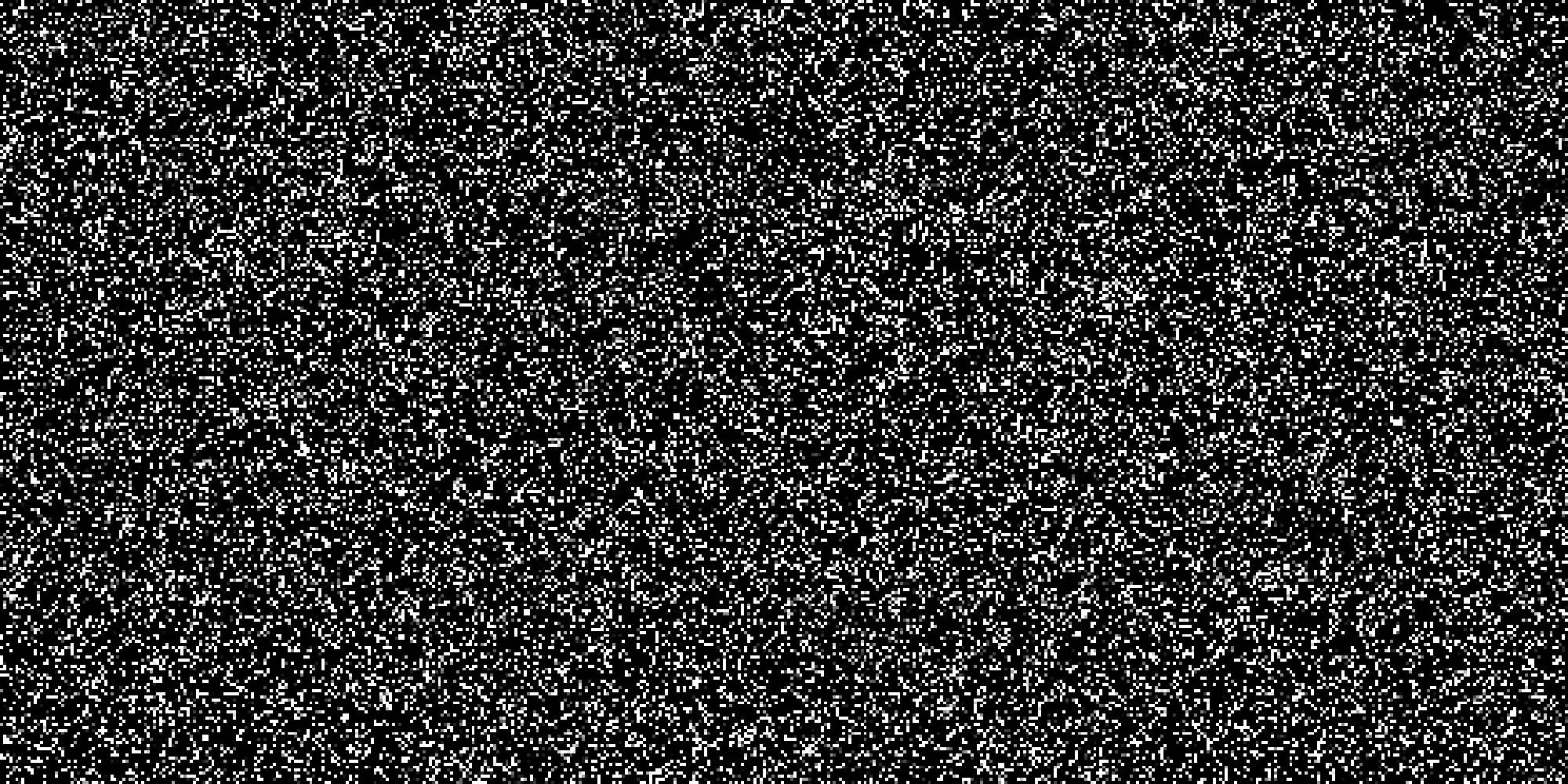}
      };
    \end{tikzpicture}
    \vspace*{-.5em}
    \caption{80\% missing}
  \end{subfigure}

  \vspace{1em}
  \begin{subfigure}[b]{\textwidth}
    \centering
    \begin{tikzpicture}
      \node[inner sep=0pt,
            label=below:{\small training samples}] (21) {
        \includegraphics[width=\figwidth]{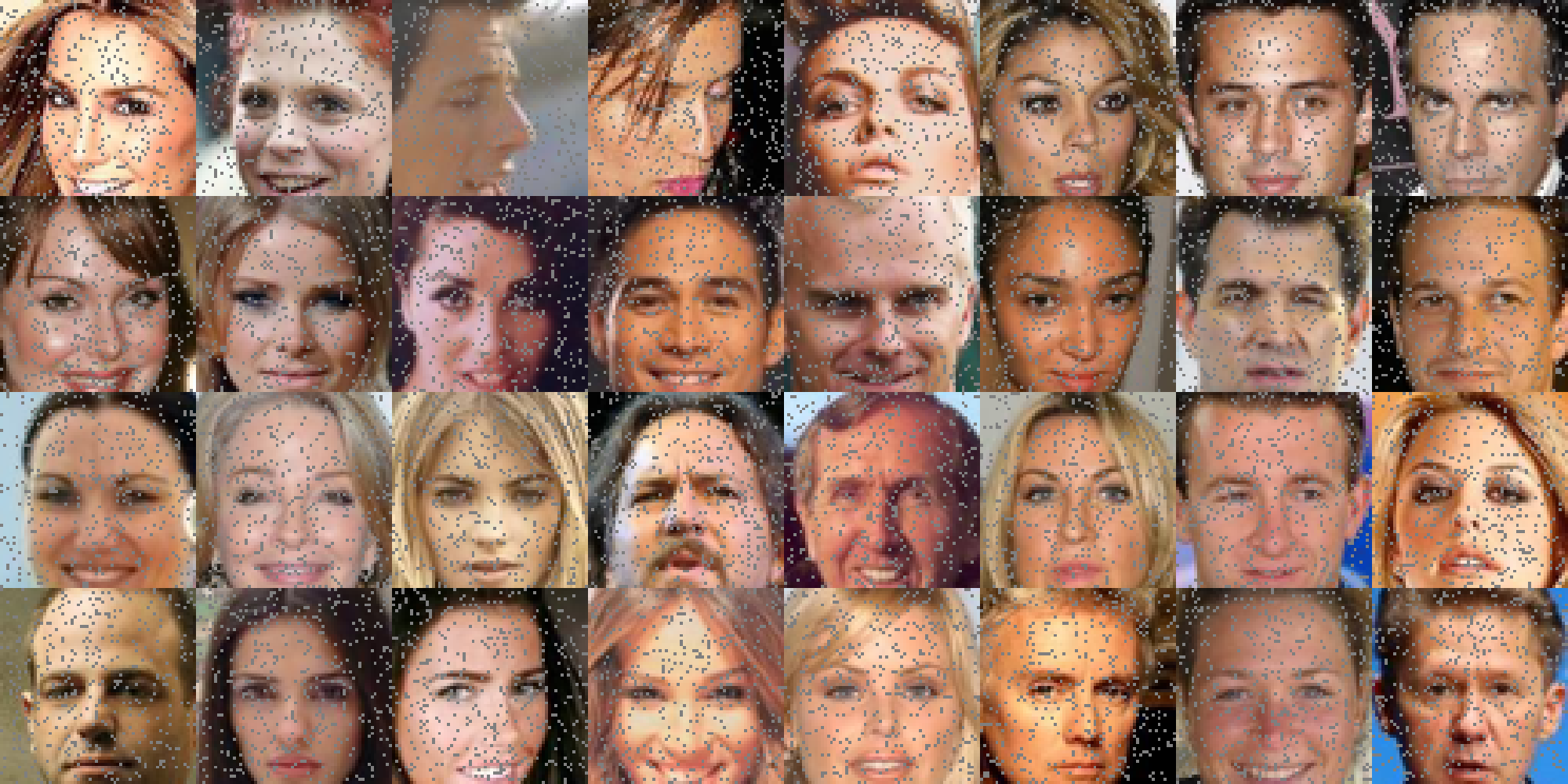}
      };
      \node[inner sep=0pt,right of=21,node distance=\figgap,
            label=below:{\small $G_x$ samples}] (22) {
        \includegraphics[width=\figwidth]{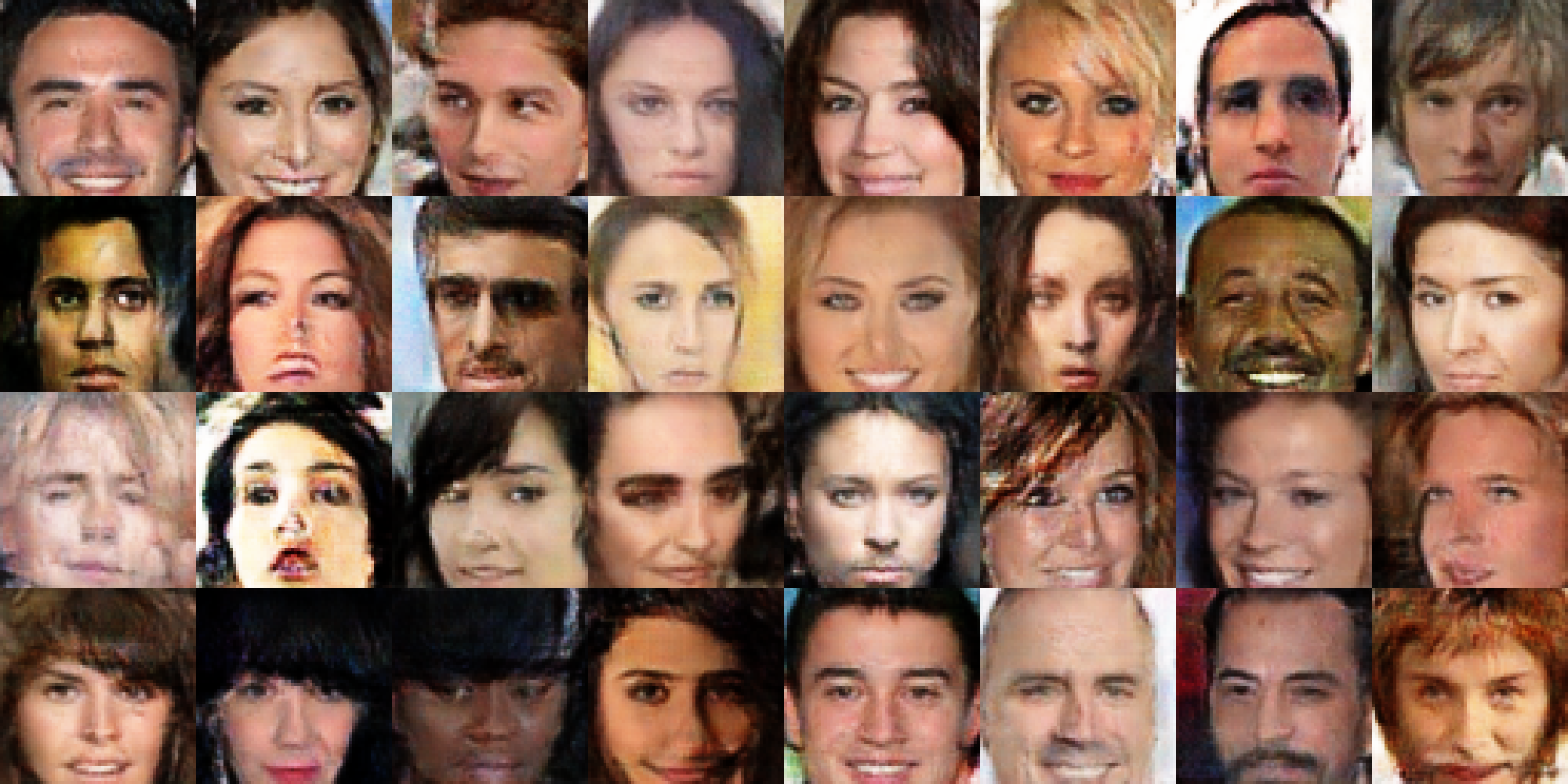}
      };
      \node[inner sep=0pt,right of=22,node distance=\figgap,
            label=below:{\small $G_m$ samples}] (23) {
        \includegraphics[width=\figwidth]{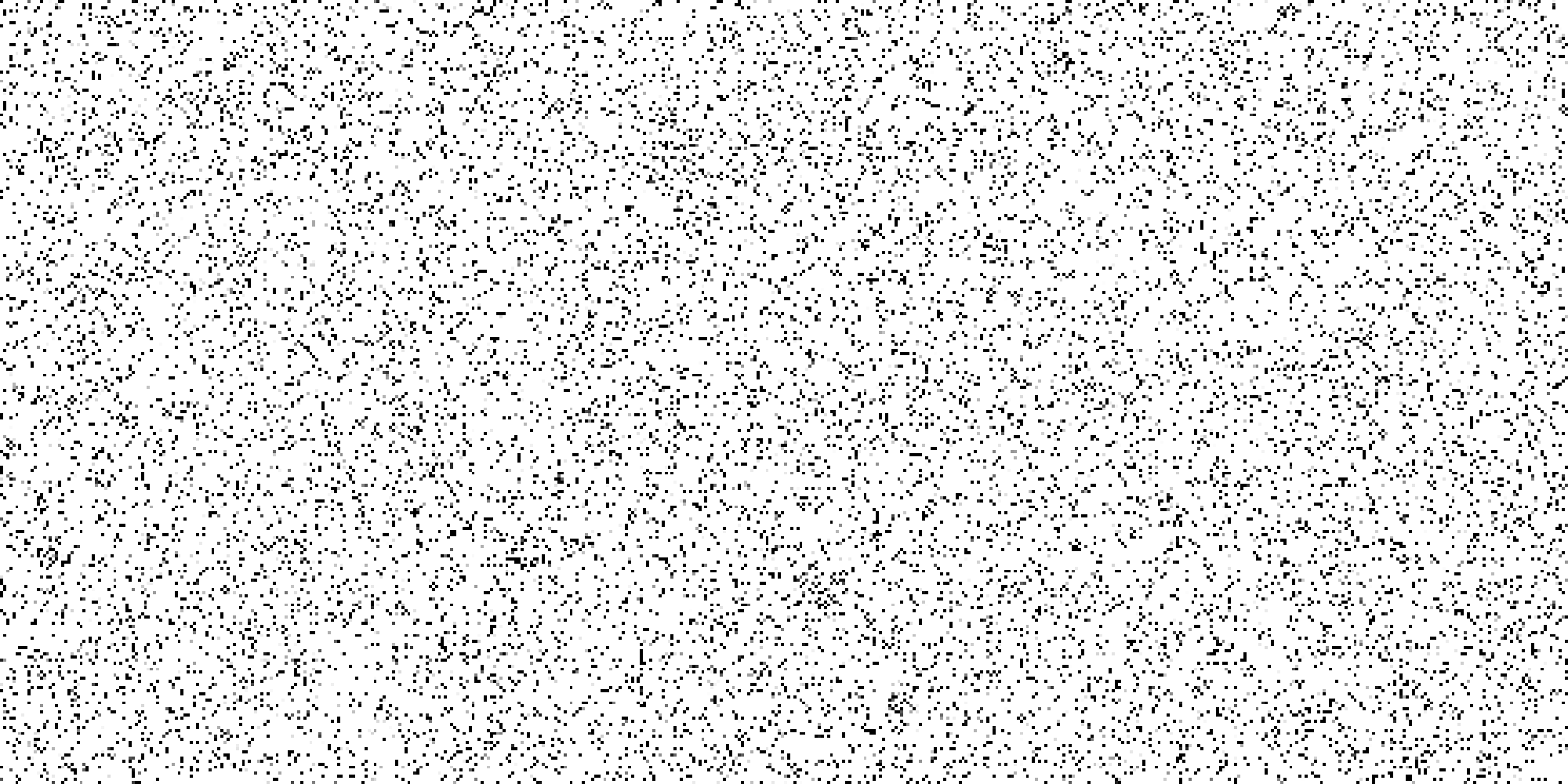}
      };
    \end{tikzpicture}
    \vspace*{-.5em}
    \caption{10\% missing}
  \end{subfigure}
  \caption{{\misgan} on CelebA with independent dropout missingness}
\label{fig:celebaindep}
\end{figure}

\begin{figure}
  \def\figwidth{.32\textwidth}
  \def\figgap{\figwidth+.3em}

  \centering
  \begin{subfigure}[b]{\textwidth}
    \centering
    \begin{tikzpicture}
      \node[inner sep=0pt,
            label=below:{\small 20$\times$20 (90\% missing)}] (21) {
        \includegraphics[width=\figwidth]{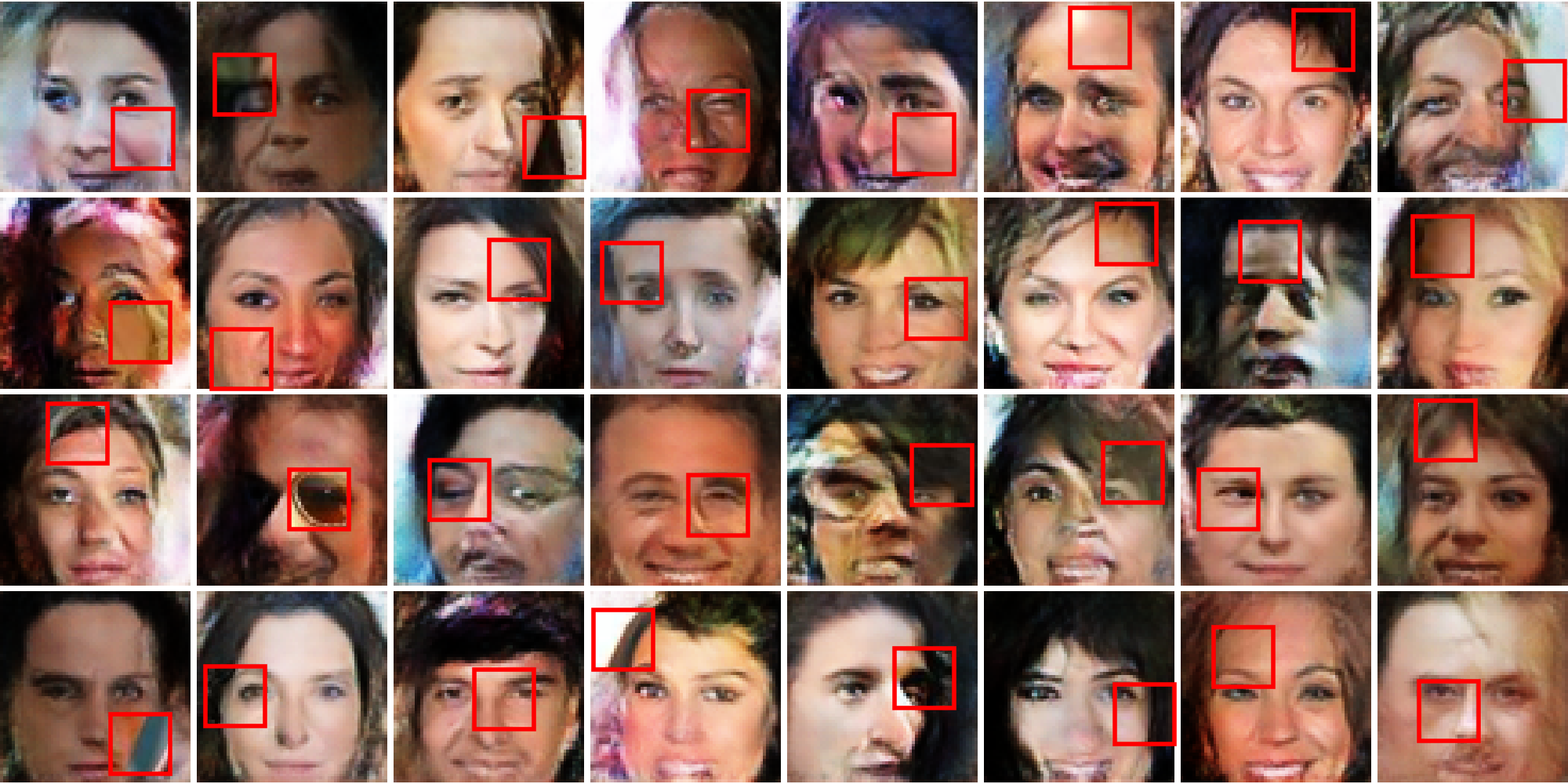}
      };
      \node[inner sep=0pt,right of=21,node distance=\figgap,
            label=below:{\small 29$\times$29 (80\% missing)}] (22) {
        \includegraphics[width=\figwidth]{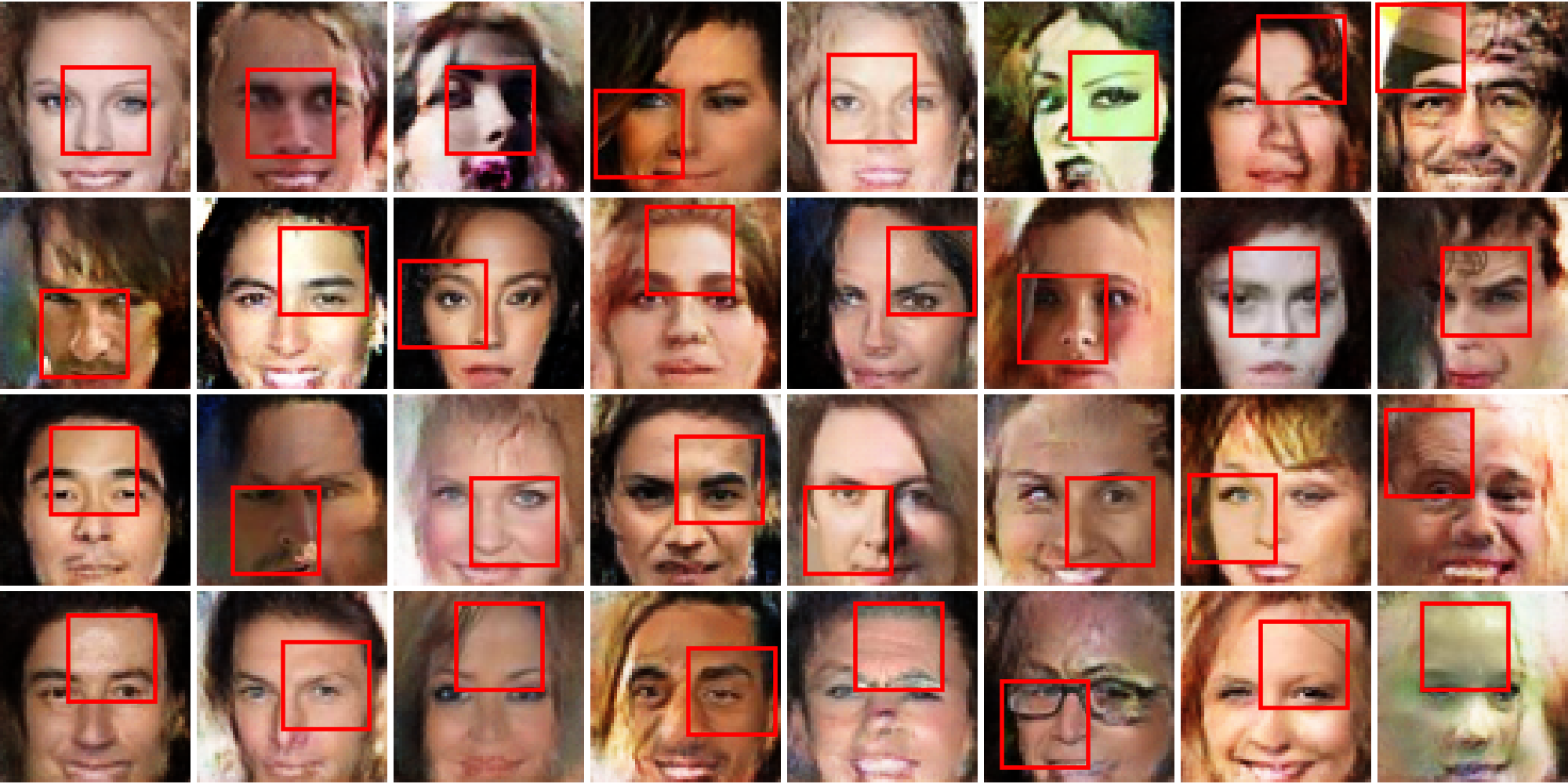}
      };
      \node[inner sep=0pt,right of=22,node distance=\figgap,
            label=below:{\small 61$\times$61 (10\% missing)}] (23) {
        \includegraphics[width=\figwidth]{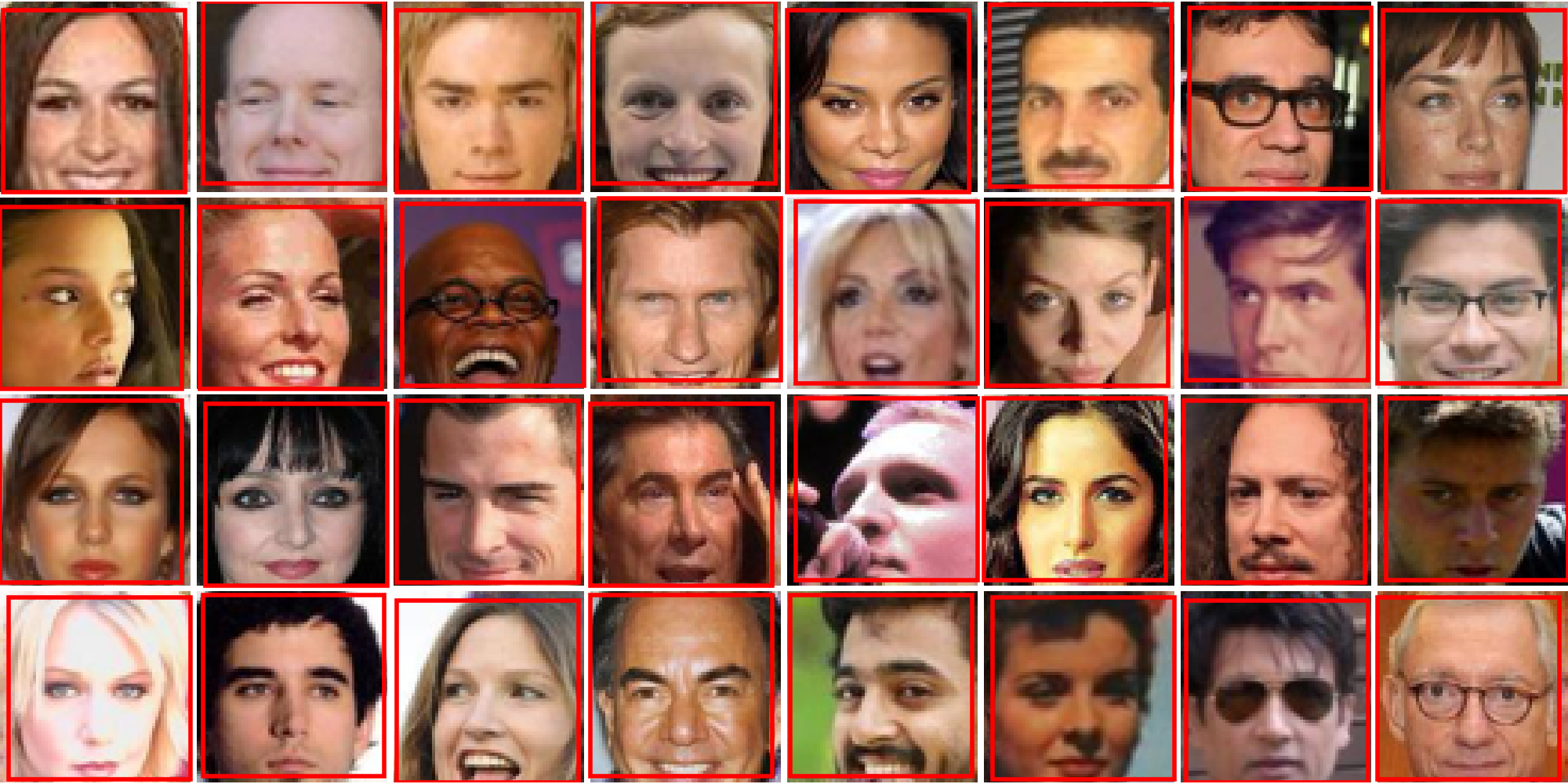}
      };
    \end{tikzpicture}
    \vspace*{-.5em}
    \caption{square observation missingness}
  \end{subfigure}

  \vspace{1em}
  \begin{subfigure}[b]{\textwidth}
    \centering
    \begin{tikzpicture}
      \node[inner sep=0pt,
            label=below:{\small 90\% missing}] (11) {
        \includegraphics[width=\figwidth]{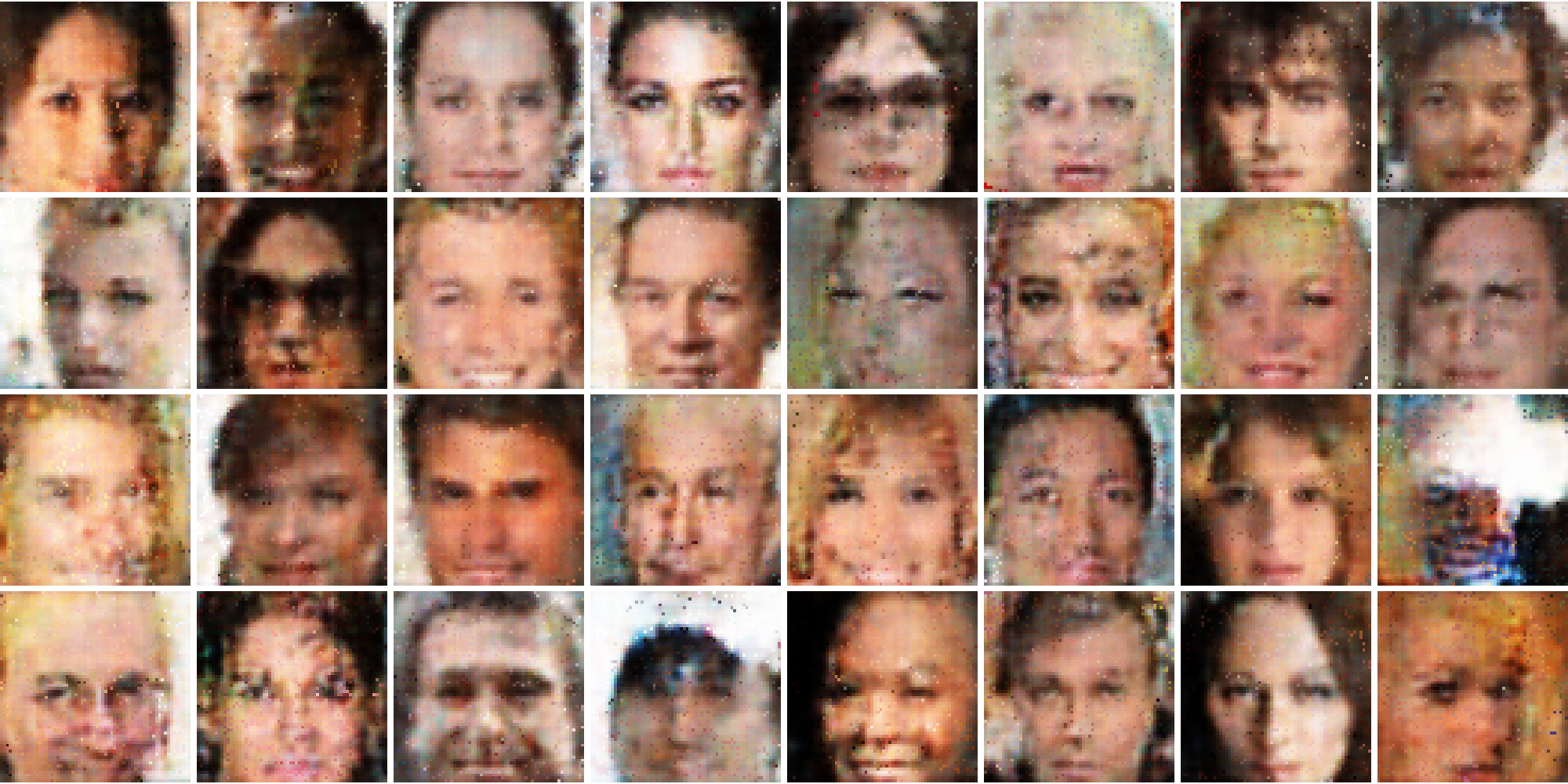}
      };
      \node[inner sep=0pt,right of=11,node distance=\figgap,
            label=below:{\small 80\% missing}] (12) {
        \includegraphics[width=\figwidth]{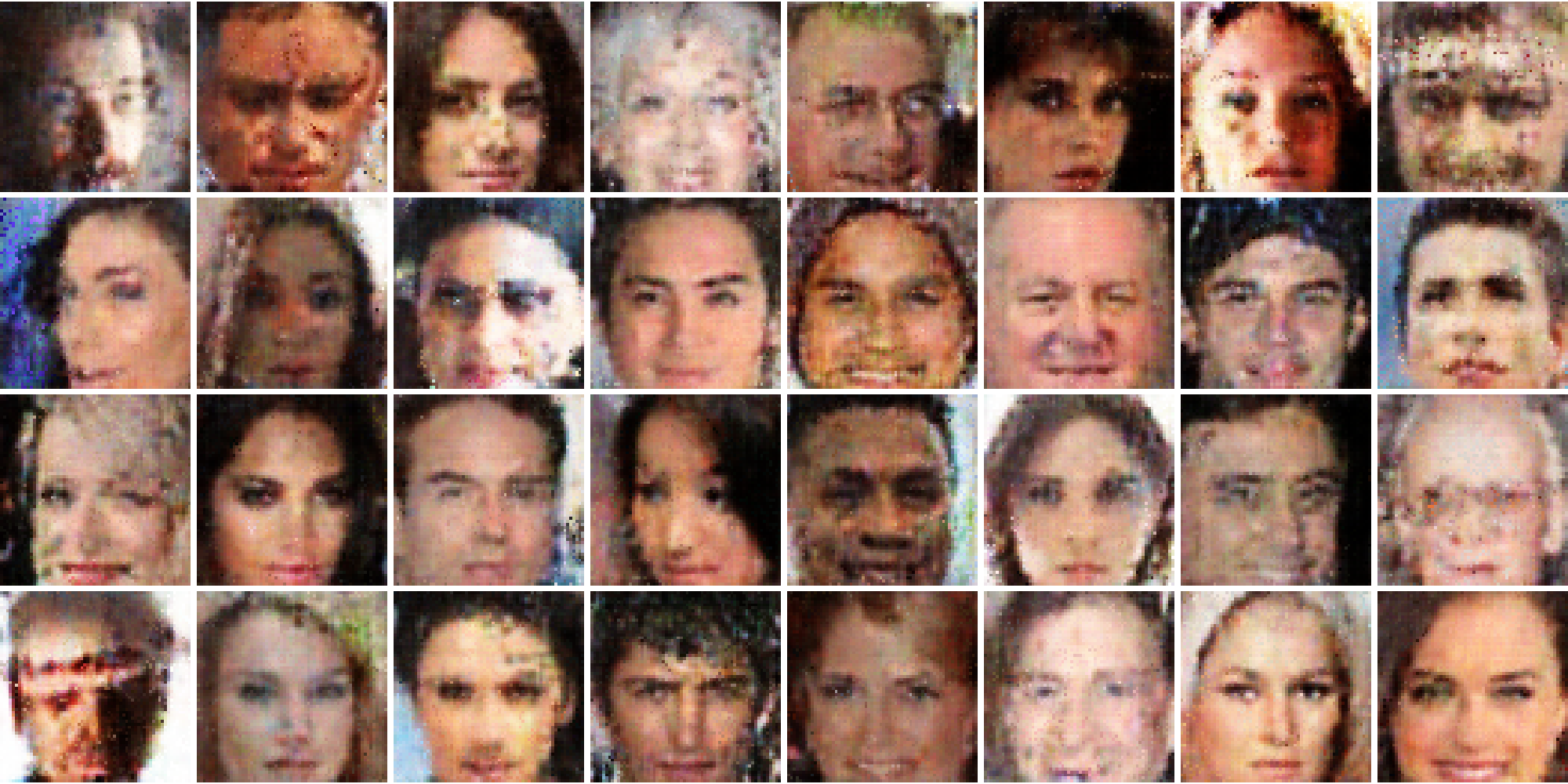}
      };
      \node[inner sep=0pt,right of=12,node distance=\figgap,
            label=below:{\small 10\% missing}] (13) {
        \includegraphics[width=\figwidth]{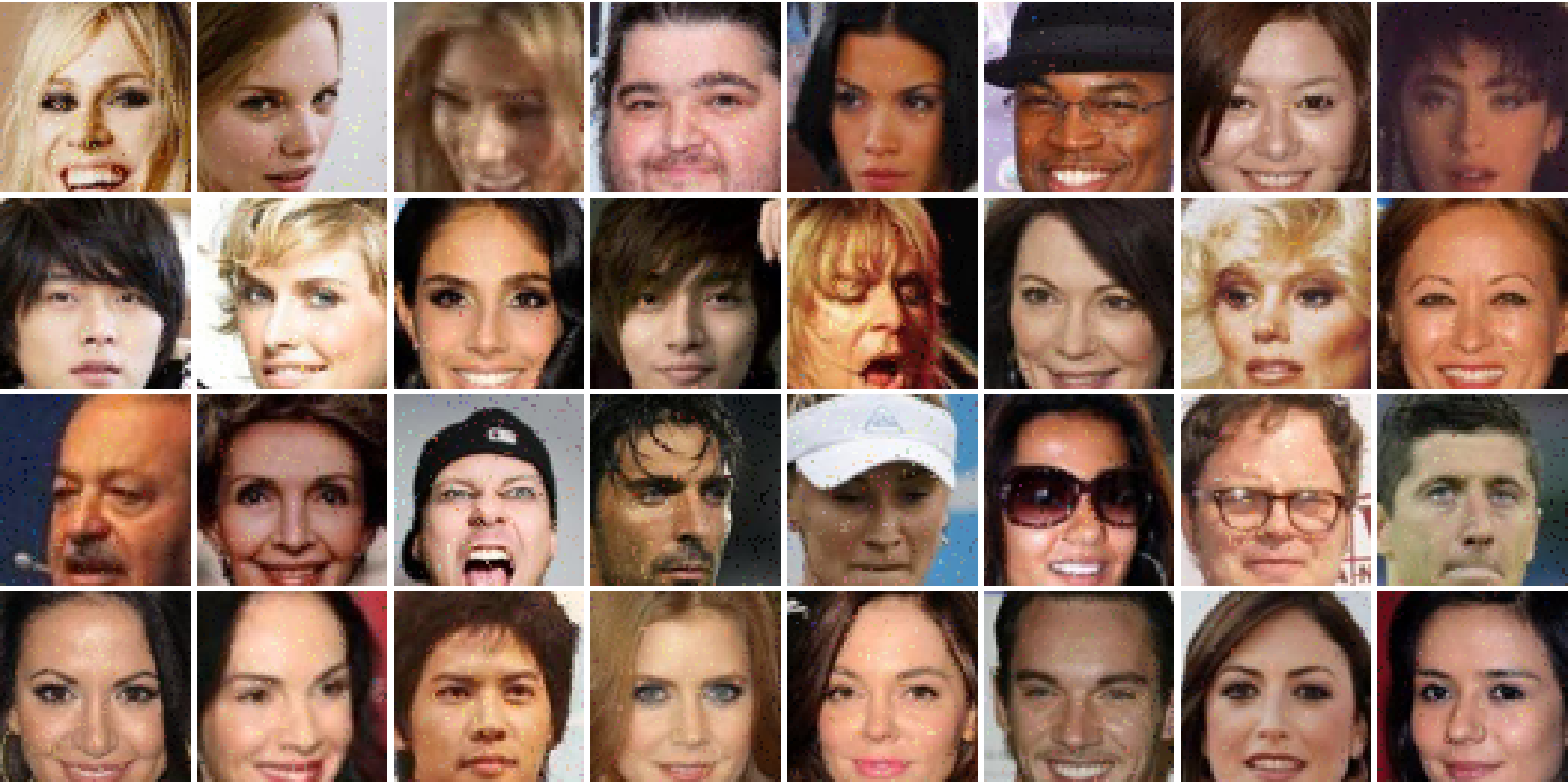}
      };
    \end{tikzpicture}
    \vspace*{-.5em}
    \caption{independent dropout missingness}
  \end{subfigure}

  \caption{{\misgan} imputation on CelebA}
\label{fig:celebaimpute}
\end{figure}

\section{Results of ConvAC}
\label{sec:convac}
Figure~\ref{fig:convac} shows the samples generated by ConvAC trained
with the square observation missing pattern on MNIST.

\def\figwidth{1.8in}
\begin{figure}
  \centering
  \begin{subfigure}[b]{0.48\textwidth}
    \centering
    \includegraphics[width=\figwidth]{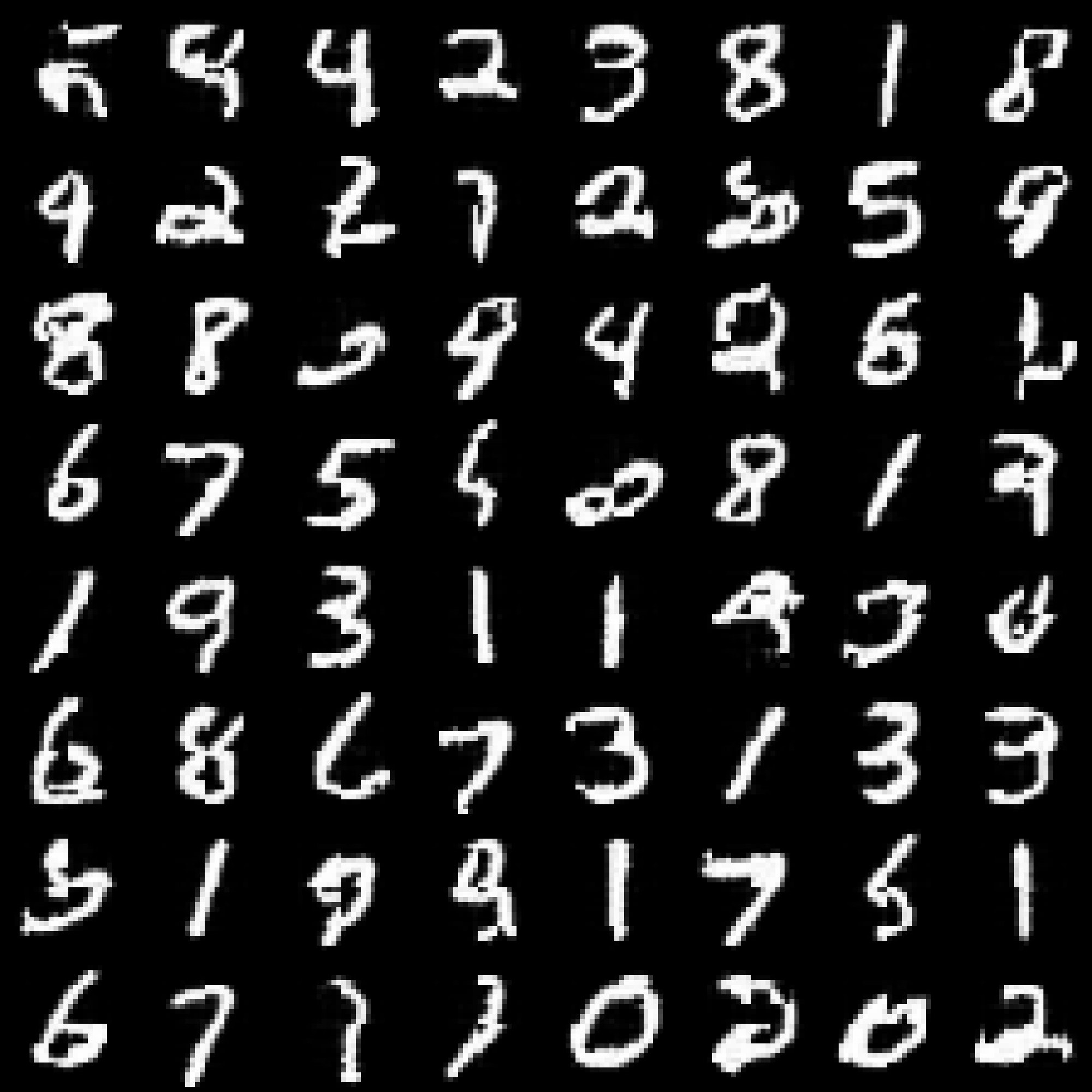}
    \caption{ConvAC: 9$\times$9 (90\%)}
  \end{subfigure}
  \begin{subfigure}[b]{0.48\textwidth}
    \centering
    \includegraphics[width=\figwidth]{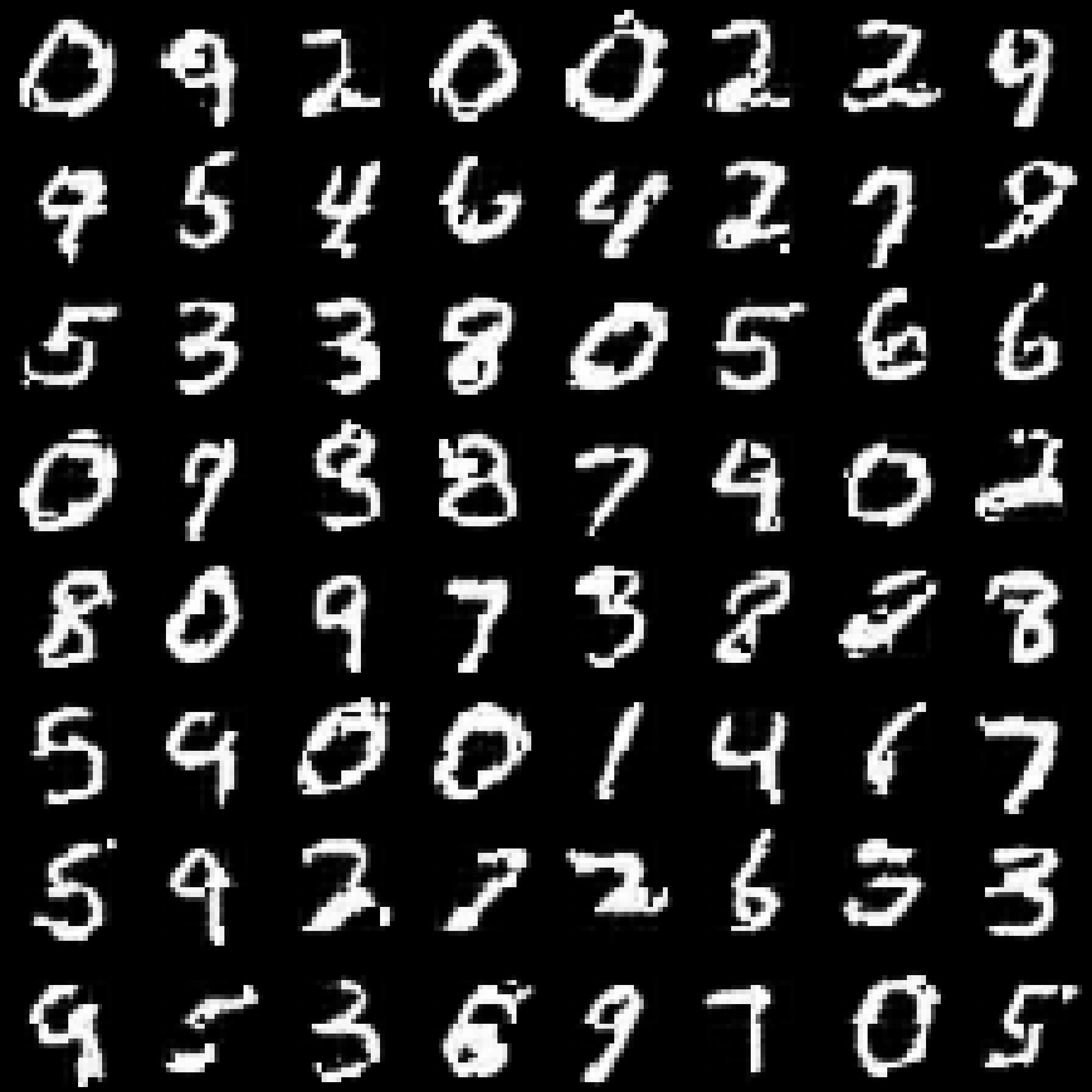}
    \caption{ConvAC: 13$\times$13 (80\%)}
  \end{subfigure}
  \hspace{.5em}
  \caption{Results of ConvAC trained with square observations of
  different sizes on MNIST.}
  \label{fig:convac}
\end{figure}

%
%
%

\section{Missing data imputation with GAIN}
\label{sec:gain}

Figure~\ref{fig:gainfail} shows the imputation results of GAIN on
different epochs during training with the 20$\times$20 square observation
missingnss.
We found that this is a common phenomenon for the square observation
missing pattern.
To obtain better results for GAIN, we analyze the FIDs during the course
of training and use the model that achieves the best FID to favorably compare
with {\misgan} for the square observation case.
For CIFAR-10, we use the results from the 500th epoch;
for CelebA, we use the results from the 50th epoch.
Otherwise, we train GAIN for 1000 epochs for CIFAR-10 and 300 epochs for CelebA.
Our implementation is adapted from the code released by the
authors of GAIN.\footnote{
\url{https://github.com/jsyoon0823/GAIN}}

Figure~\ref{fig:gainresults} shows the imputation results of GAIN
for both CIFAR-10 and CelebA.

\begin{figure}
  \def\figwidth{.32\textwidth}
  \def\figgap{\figwidth+.3em}

  \centering
  \begin{subfigure}[b]{\textwidth}
    \centering
    \begin{tikzpicture}
      \node[inner sep=0pt,
            label=below:{\small 30th epoch}] (11) {
        \includegraphics[width=\figwidth]{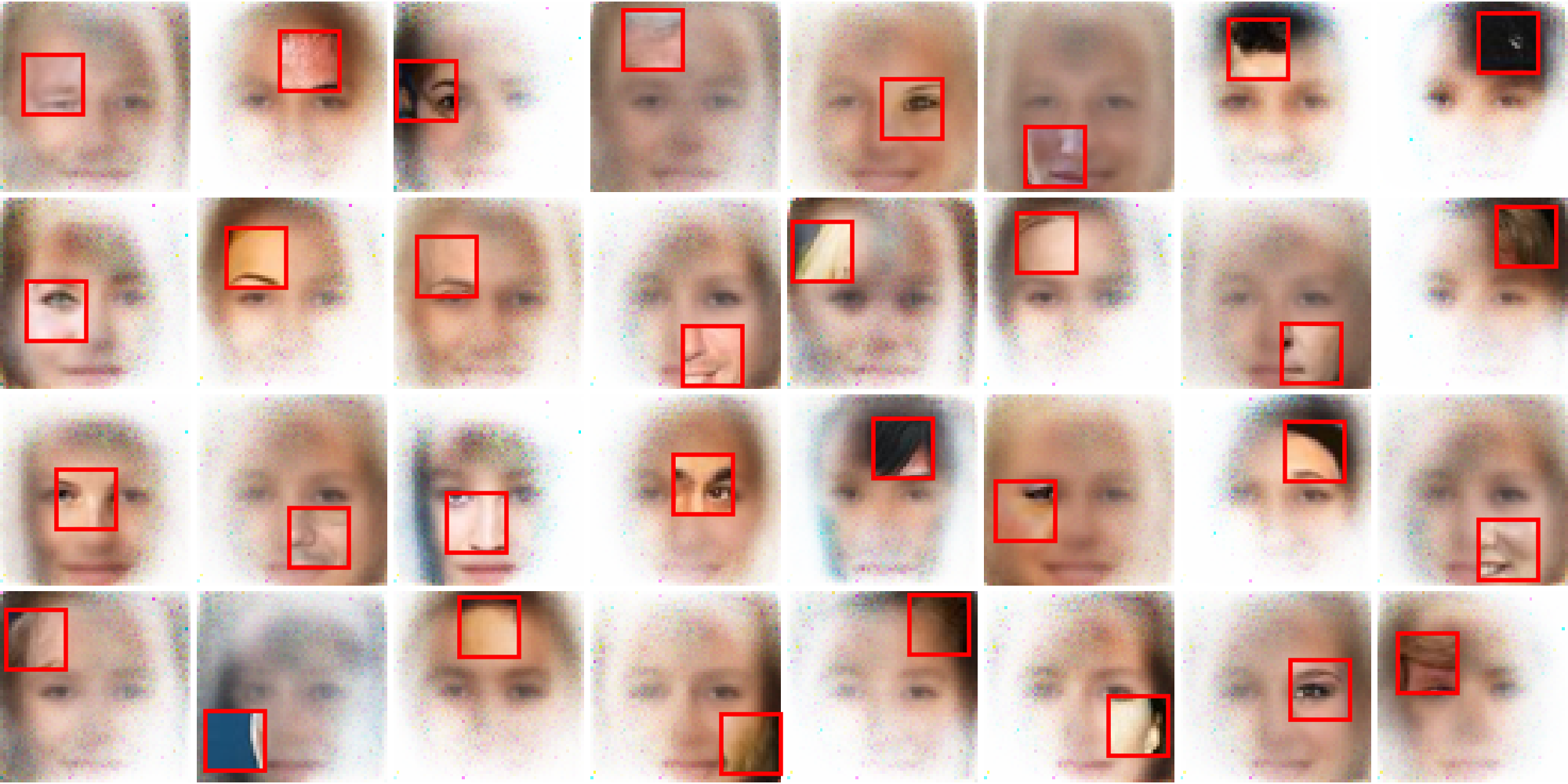}
      };
      \node[inner sep=0pt,right of=11,node distance=\figgap,
            label=below:{\small 60th epoch}] (12) {
        \includegraphics[width=\figwidth]{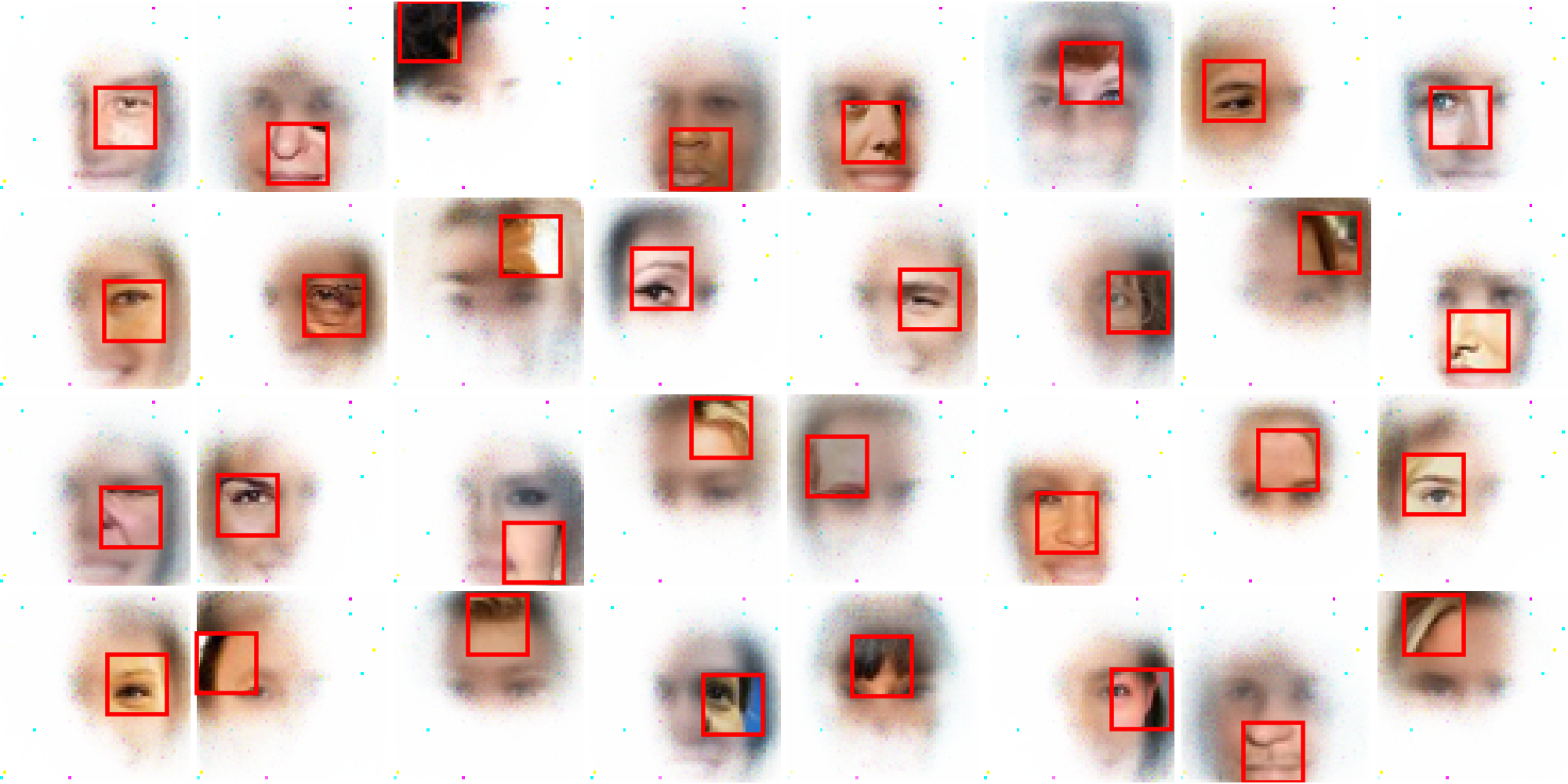}
      };
      \node[inner sep=0pt,right of=12,node distance=\figgap,
            label=below:{\small 90th epoch}] (13) {
        \includegraphics[width=\figwidth]{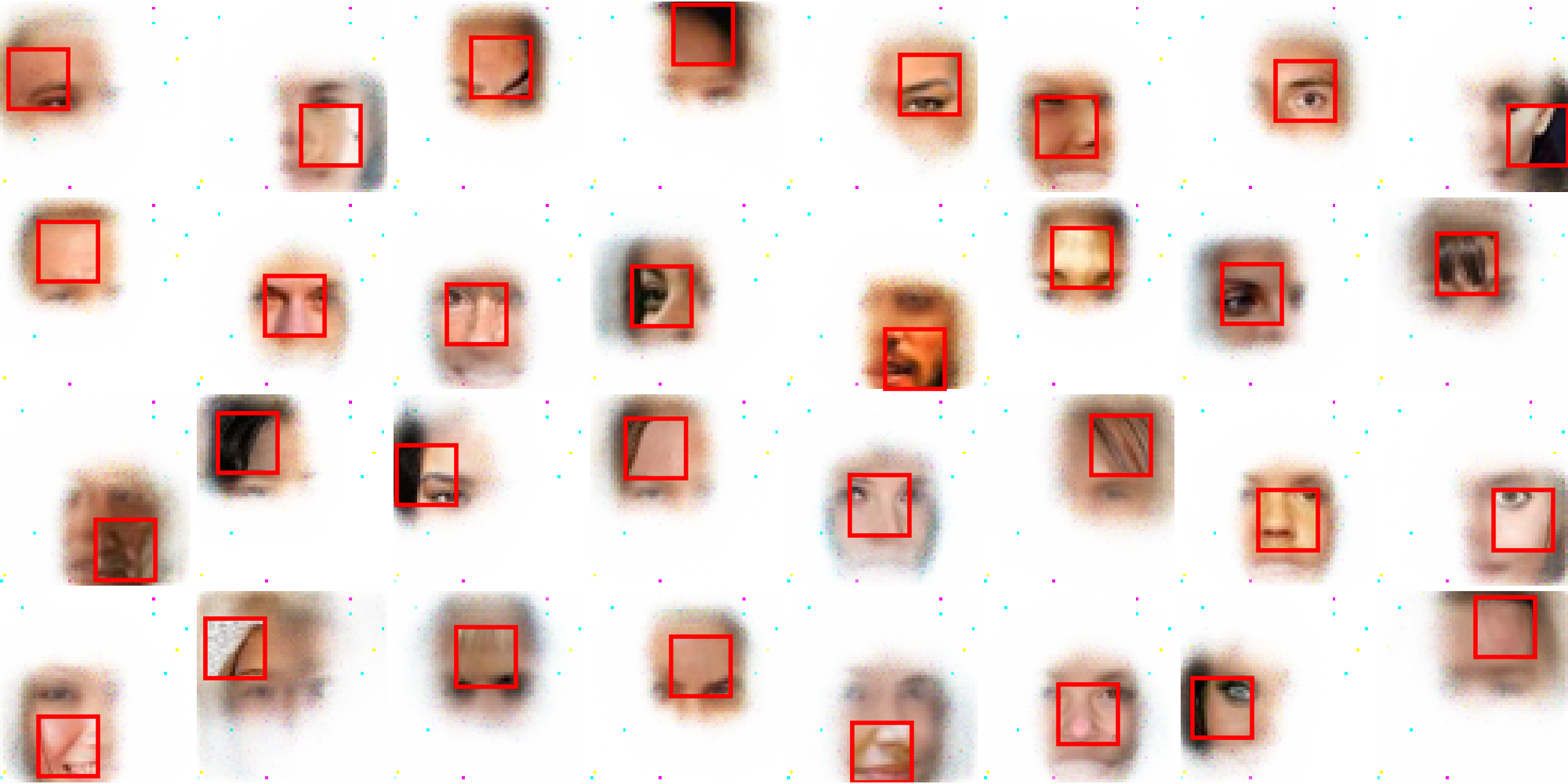}
      };
    \end{tikzpicture}
  \end{subfigure}

  \vspace{1em}
  \begin{subfigure}[b]{\textwidth}
    \centering
    \begin{tikzpicture}
      \node[inner sep=0pt,
            label=below:{\small 120th epoch}] (11) {
        \includegraphics[width=\figwidth]{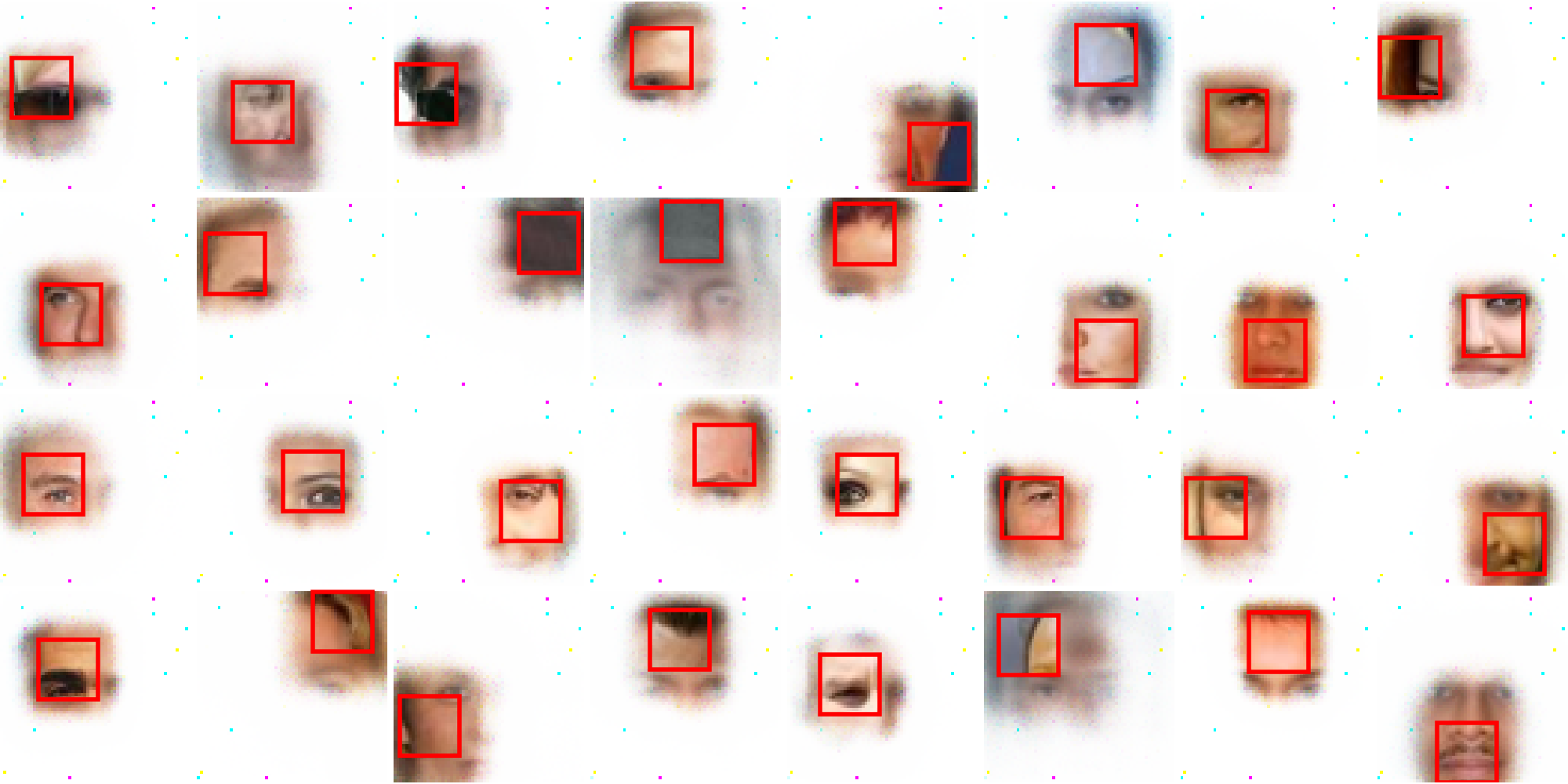}
      };
      \node[inner sep=0pt,right of=11,node distance=\figgap,
            label=below:{\small 150th epoch}] (12) {
        \includegraphics[width=\figwidth]{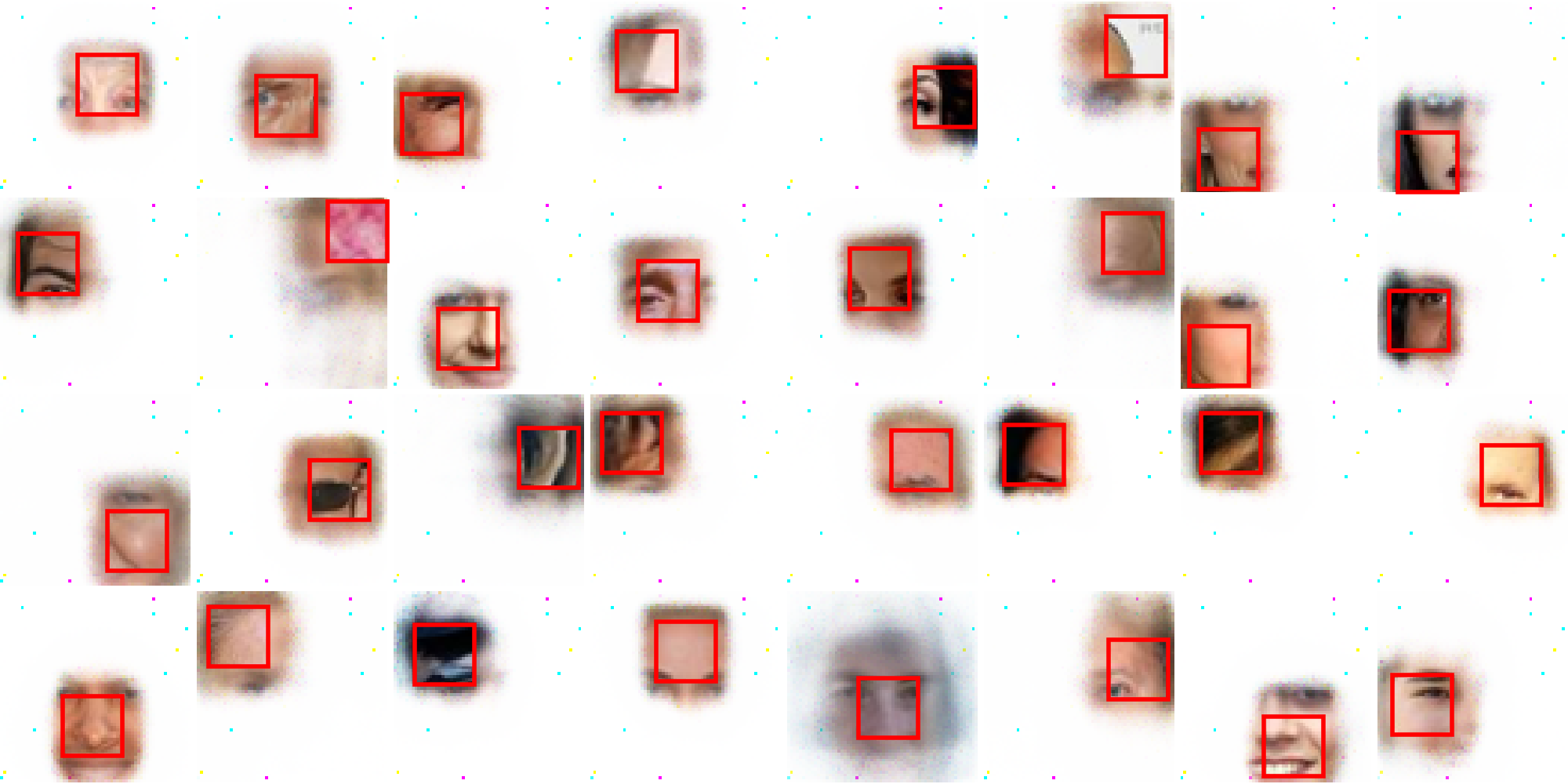}
      };
      \node[inner sep=0pt,right of=12,node distance=\figgap,
            label=below:{\small 180th epoch}] (13) {
        \includegraphics[width=\figwidth]{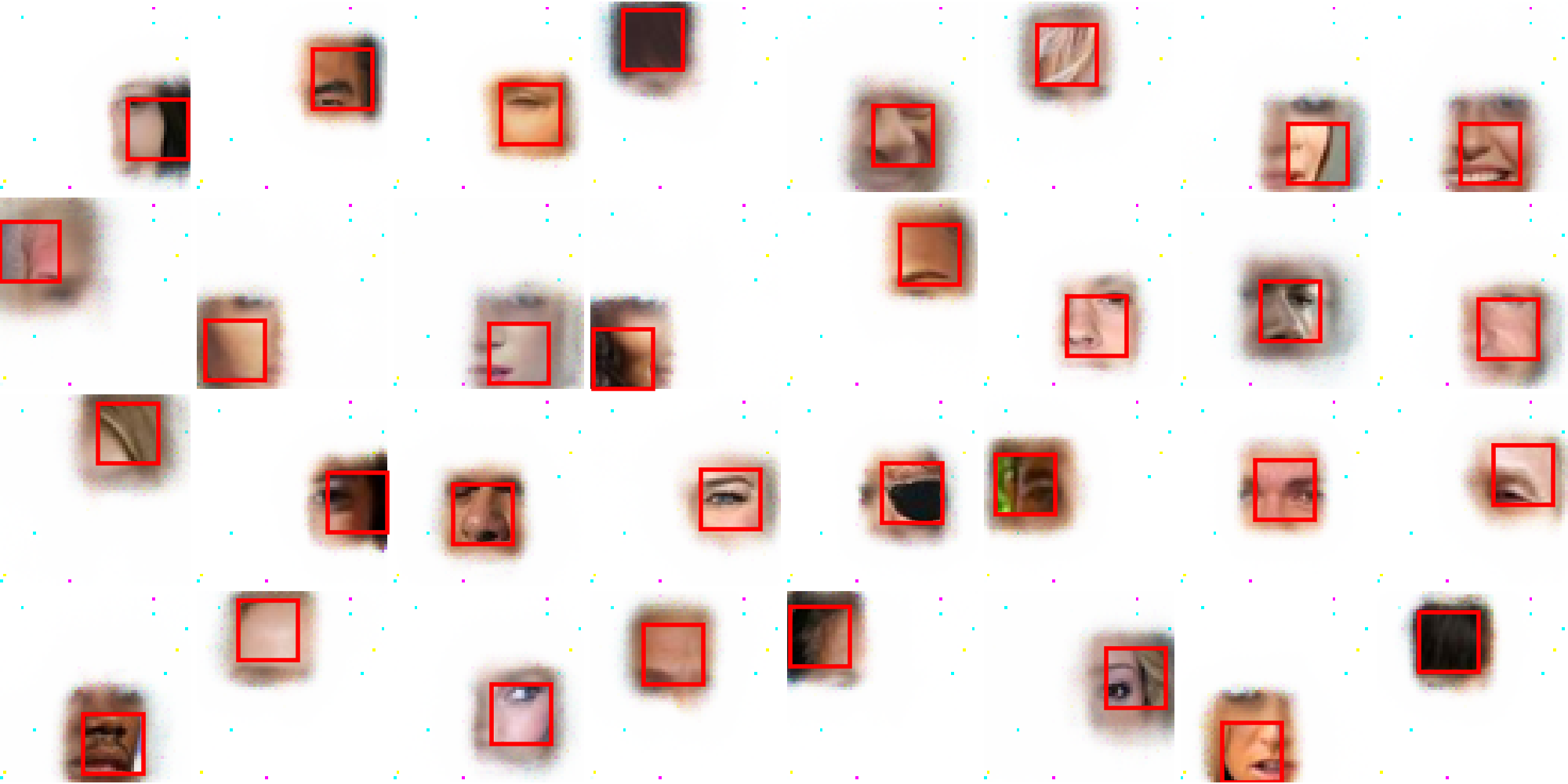}
      };
    \end{tikzpicture}
  \end{subfigure}

  \vspace{1em}
  \begin{subfigure}[b]{\textwidth}
    \centering
    \begin{tikzpicture}
      \node[inner sep=0pt,
            label=below:{\small 210th epoch}] (11) {
        \includegraphics[width=\figwidth]{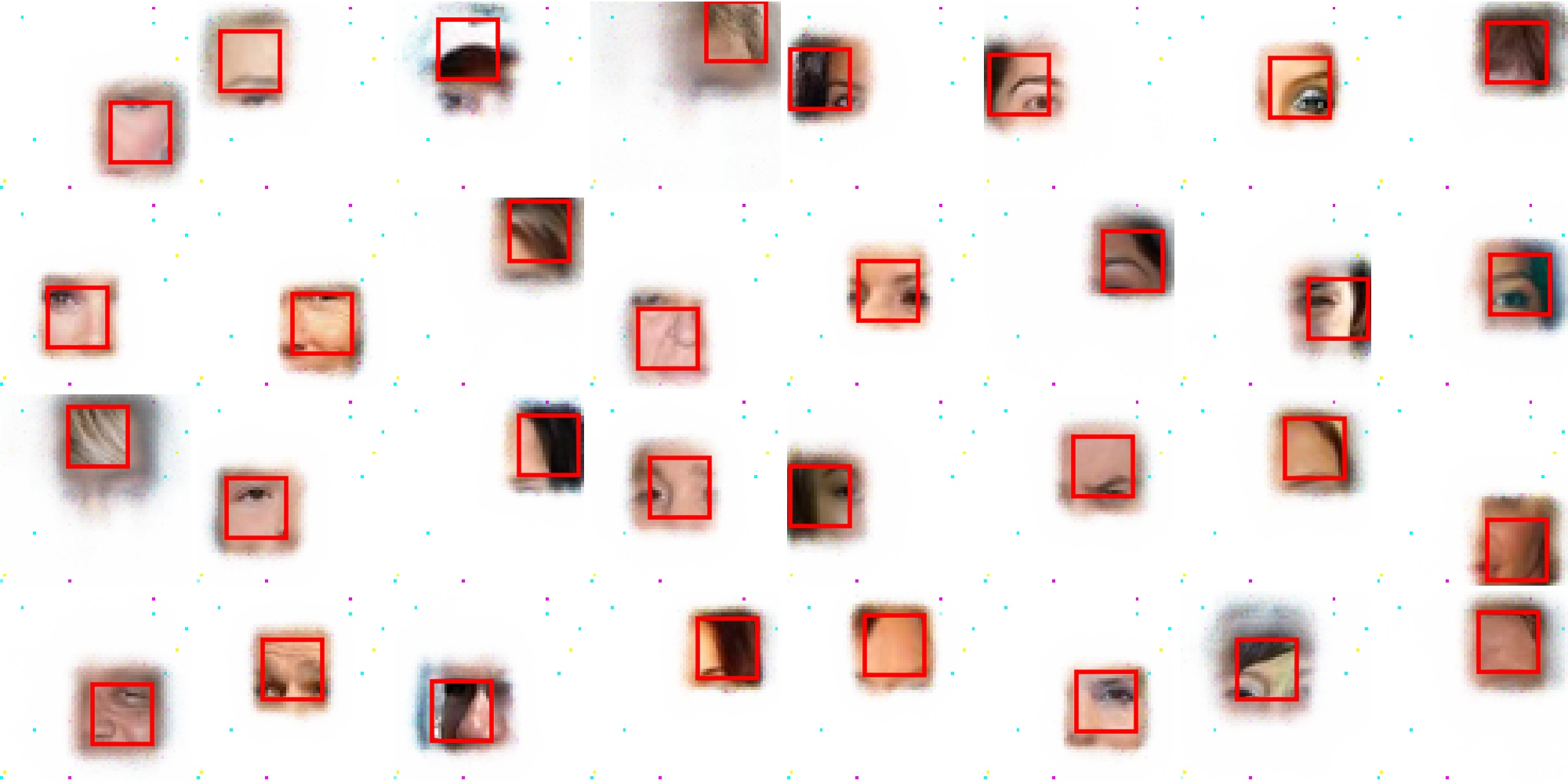}
      };
      \node[inner sep=0pt,right of=11,node distance=\figgap,
            label=below:{\small 240th epoch}] (12) {
        \includegraphics[width=\figwidth]{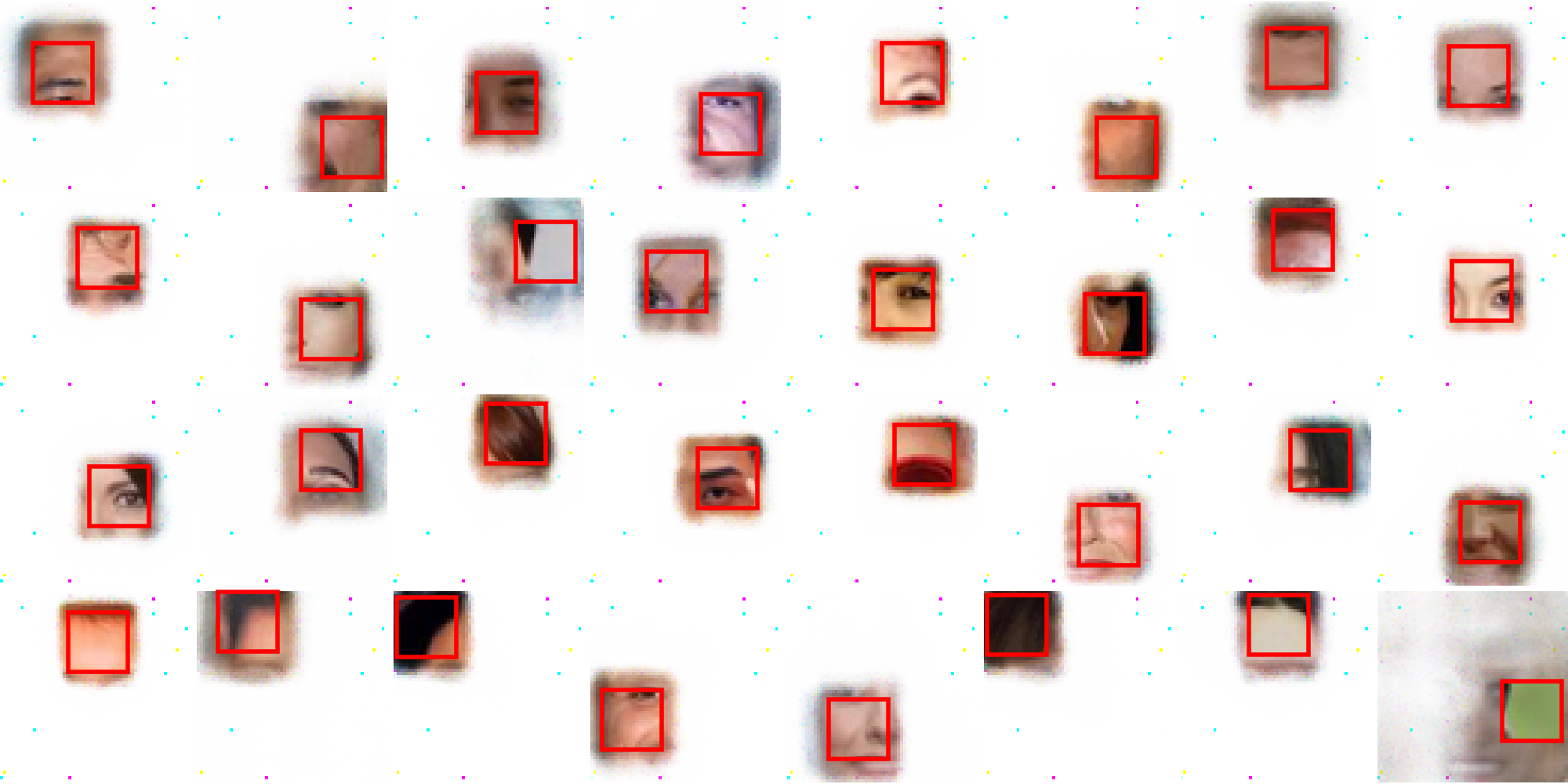}
      };
      \node[inner sep=0pt,right of=12,node distance=\figgap,
            label=below:{\small 270th epoch}] (13) {
        \includegraphics[width=\figwidth]{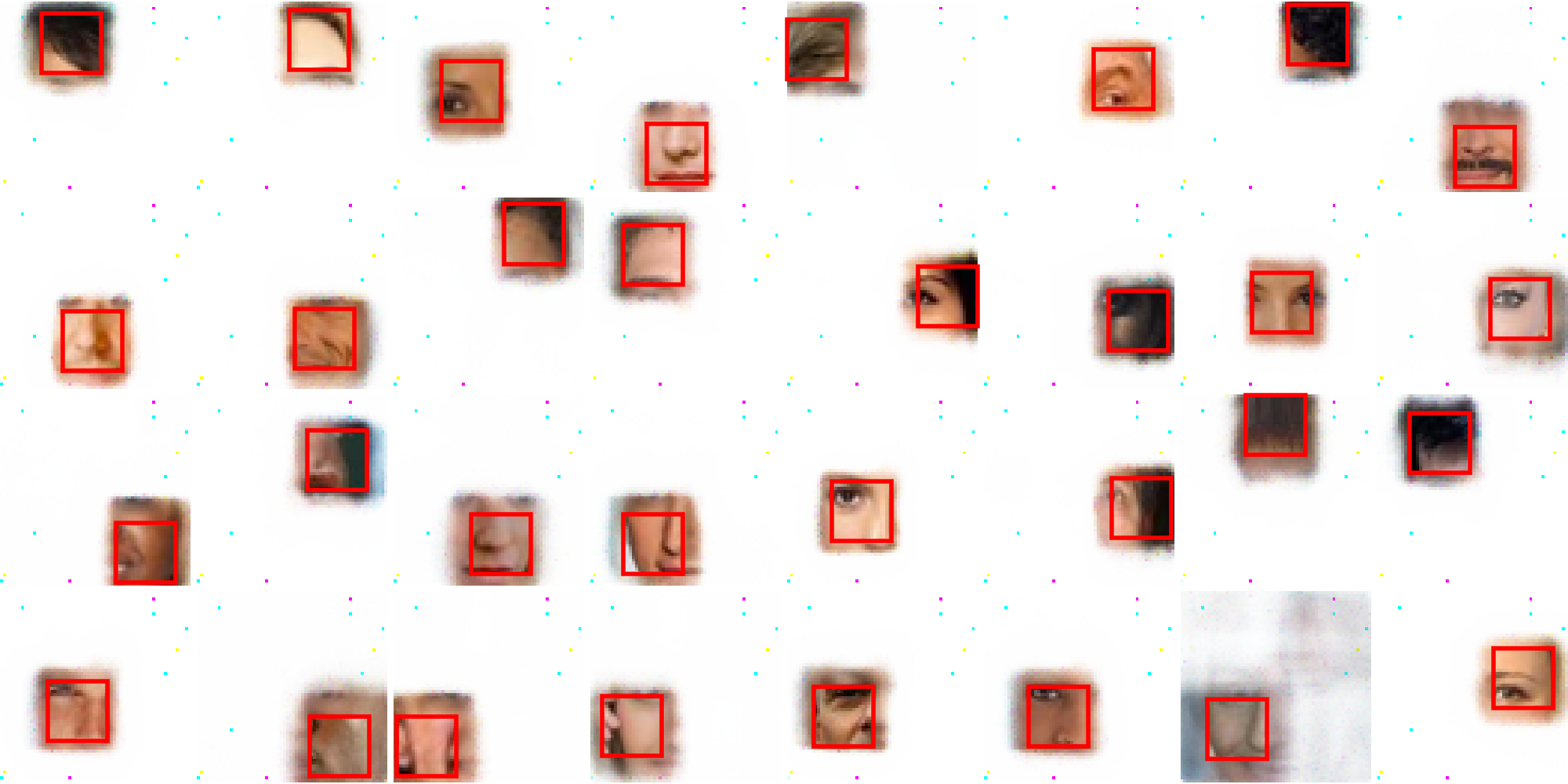}
      };
    \end{tikzpicture}
  \end{subfigure}

  \caption{Imputation results of GAIN on different epochs during training
    under 20$\times$20 square observation missingness.
    If over-trained,
    the imputation behavior of GAIN will gradually become similar
  to constant imputation.}
\label{fig:gainfail}
\end{figure}

\begin{figure}
  \def\figwidth{.32\textwidth}
  \def\figgap{\figwidth+.3em}

  \centering
  \begin{subfigure}[b]{\textwidth}
    \centering
    \begin{tikzpicture}
      \node[inner sep=0pt,
            label=below:{\small 10$\times$10 (90\% missing)}] (11) {
        \includegraphics[width=\figwidth]{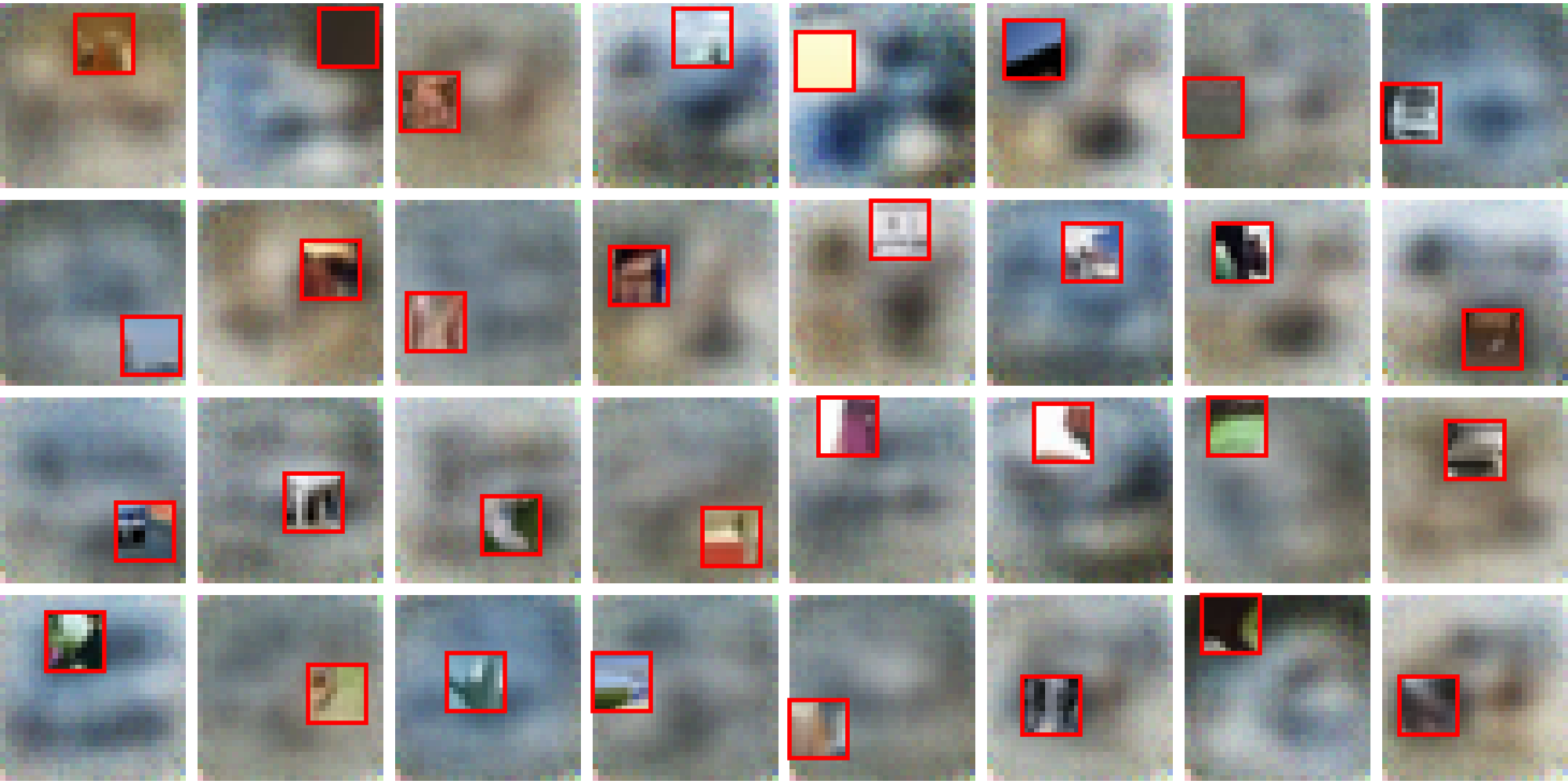}
      };
      \node[inner sep=0pt,right of=11,node distance=\figgap,
            label=below:{\small 14$\times$14 (80\% missing)}] (12) {
        \includegraphics[width=\figwidth]{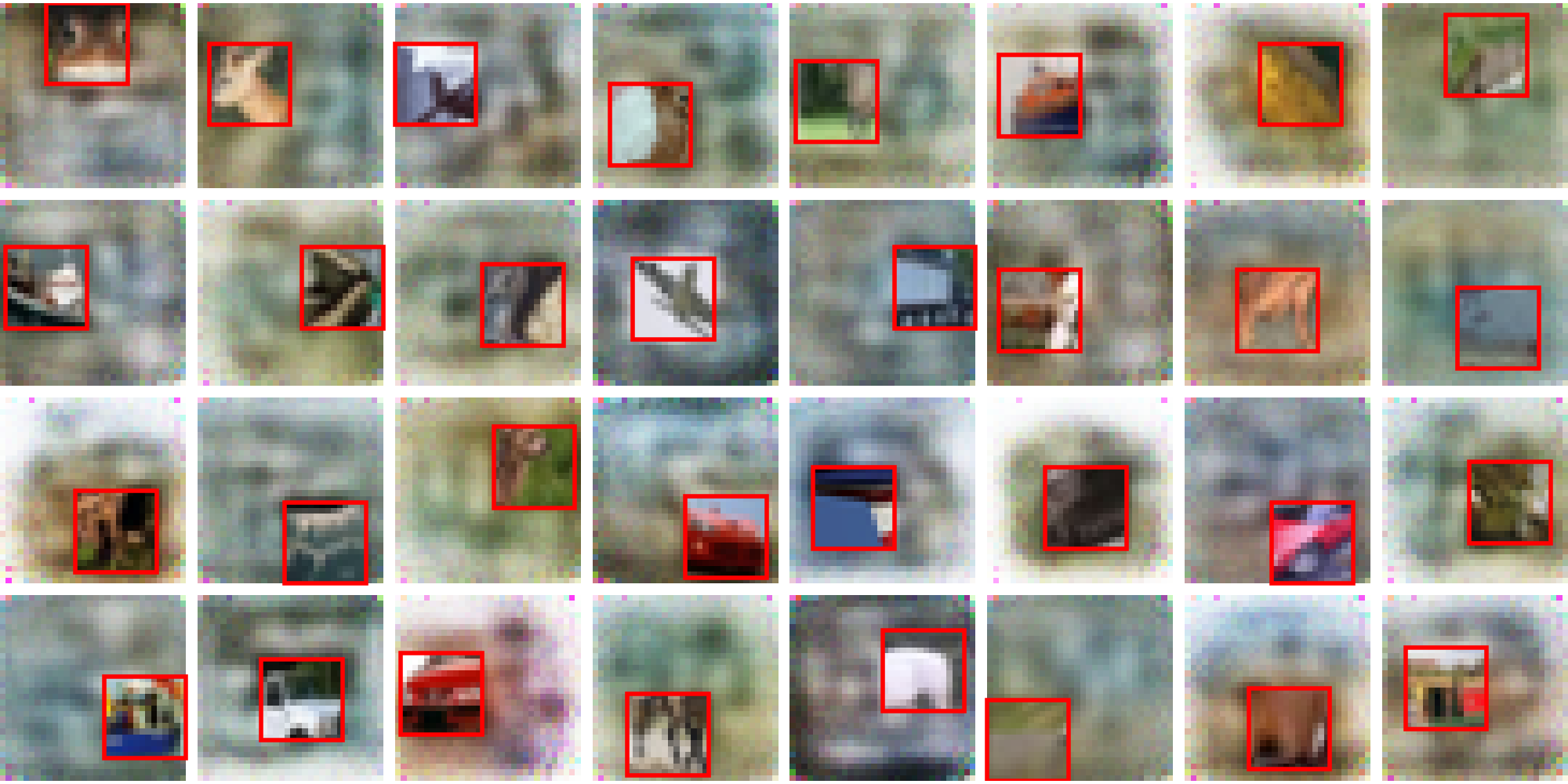}
      };
      \node[inner sep=0pt,right of=12,node distance=\figgap,
            label=below:{\small 30$\times$30 (10\% missing)}] (13) {
        \includegraphics[width=\figwidth]{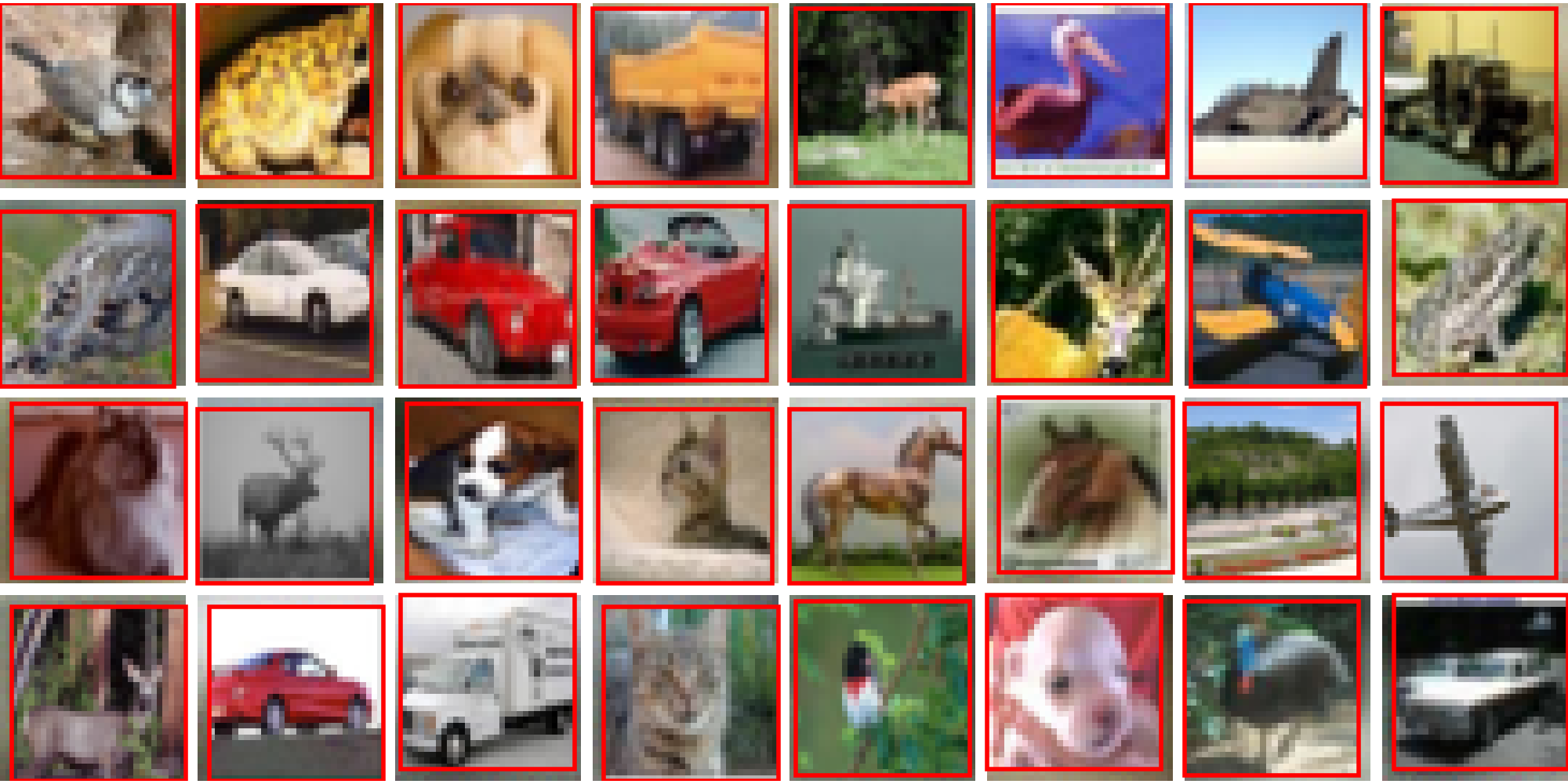}
      };
    \end{tikzpicture}
    \vspace*{-.5em}
    \caption{CIFAR-10 with square observation missingness}
  \end{subfigure}

  \vspace{1em}
  \begin{subfigure}[b]{\textwidth}
    \centering
    \begin{tikzpicture}
      \node[inner sep=0pt,
            label=below:{\small 90\% missing}] (21) {
        \includegraphics[width=\figwidth]{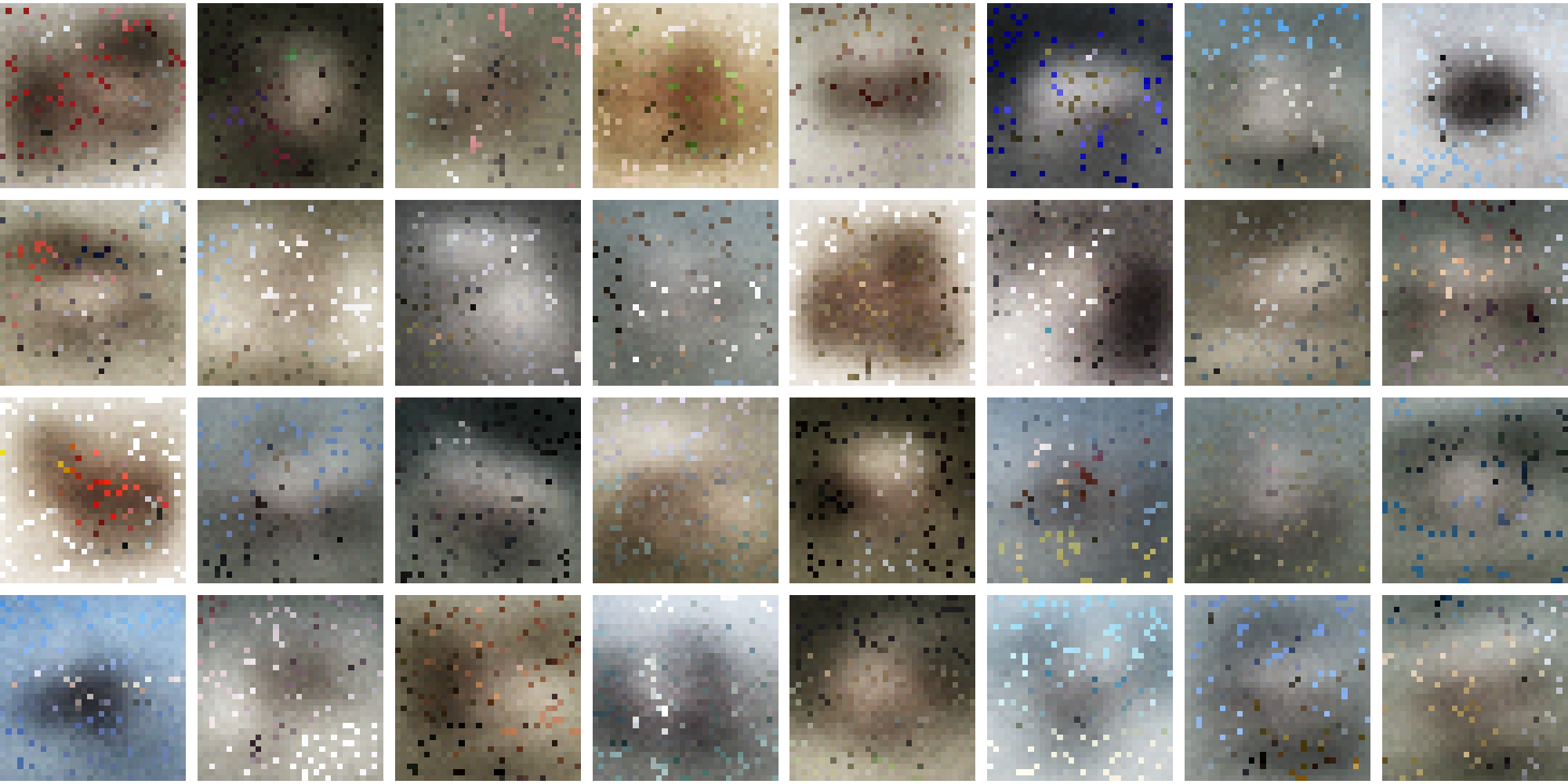}
      };
      \node[inner sep=0pt,right of=21,node distance=\figgap,
            label=below:{\small 80\% missing}] (22) {
        \includegraphics[width=\figwidth]{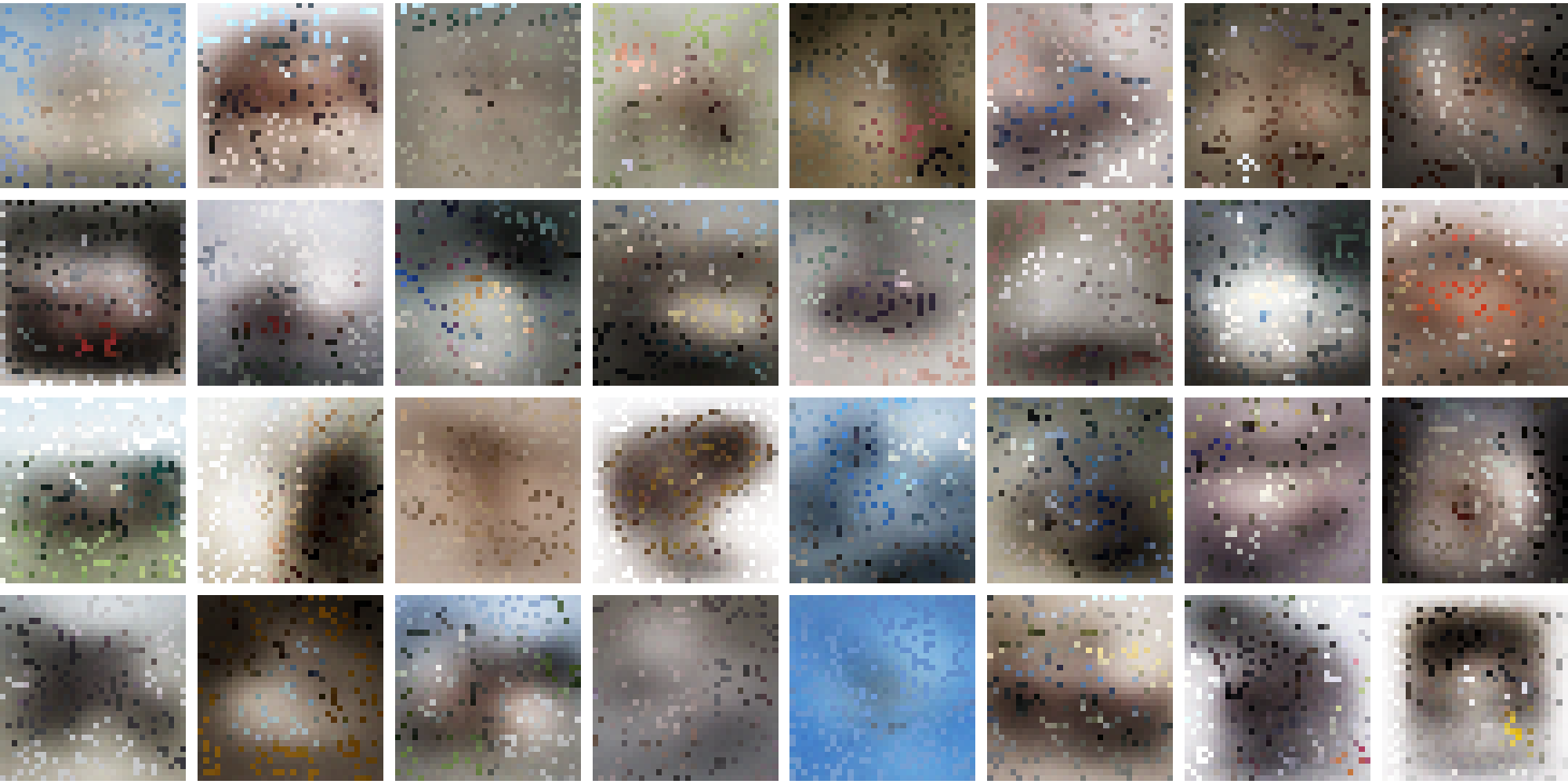}
      };
      \node[inner sep=0pt,right of=22,node distance=\figgap,
            label=below:{\small 10\% missing}] (23) {
        \includegraphics[width=\figwidth]{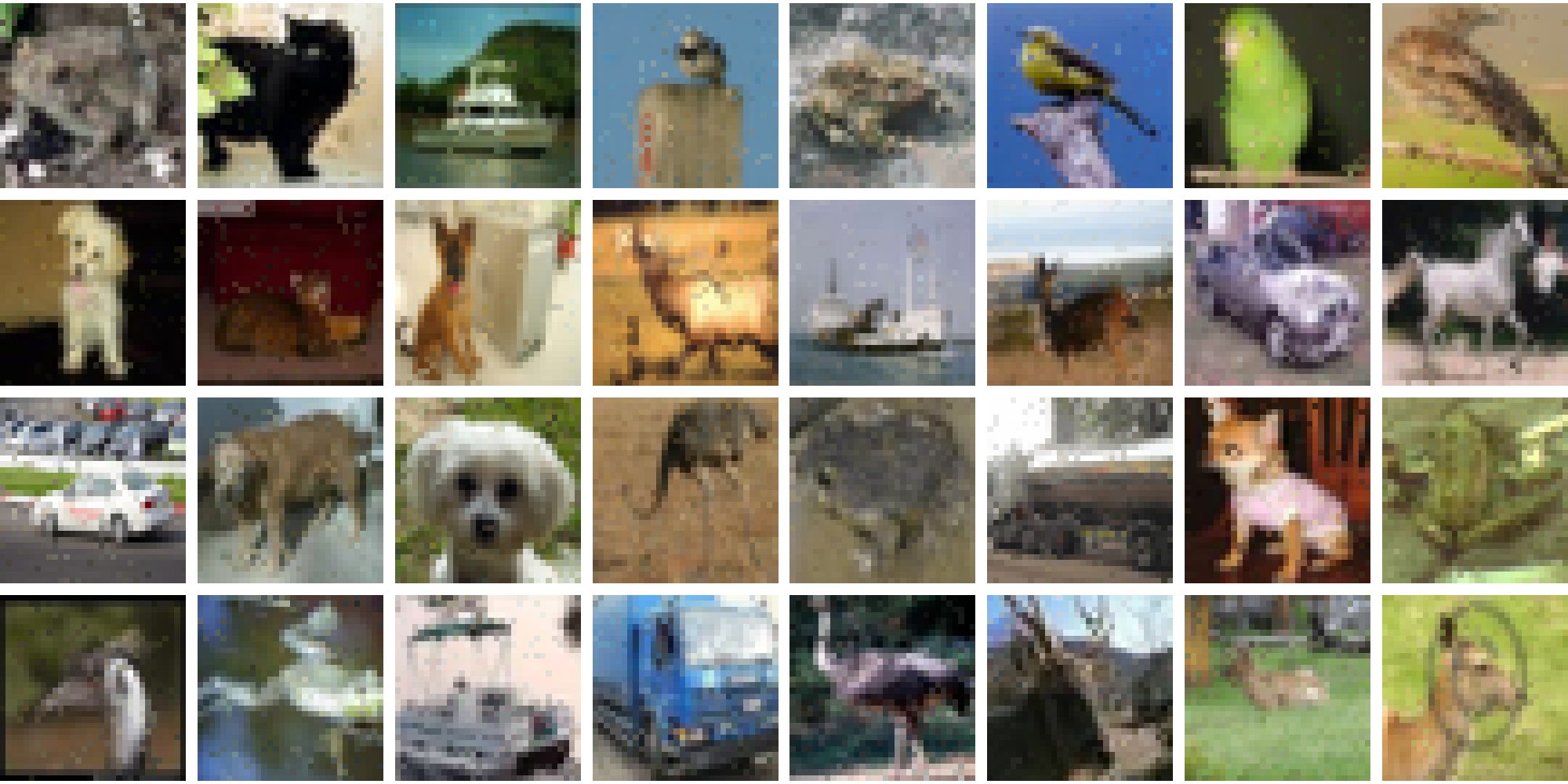}
      };
    \end{tikzpicture}
    \vspace*{-.5em}
    \caption{CIFAR-10 with independent dropout missingness}
  \end{subfigure}

  \vspace{1em}
  \begin{subfigure}[b]{\textwidth}
    \centering
    \begin{tikzpicture}
      \node[inner sep=0pt,
            label=below:{\small 20$\times$20 (90\% missing)}] (11) {
        \includegraphics[width=\figwidth]{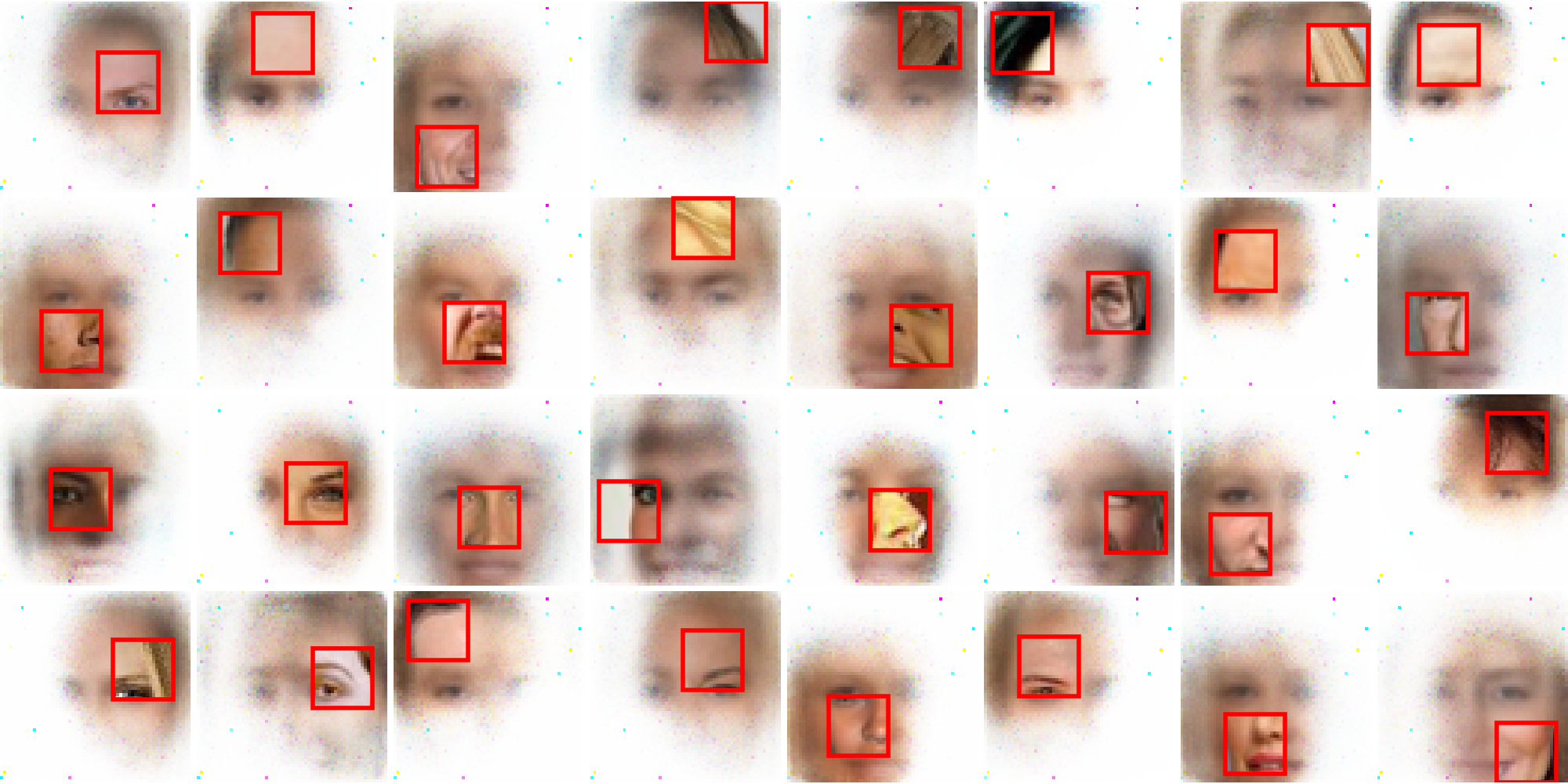}
      };
      \node[inner sep=0pt,right of=11,node distance=\figgap,
            label=below:{\small 29$\times$29 (80\% missing)}] (12) {
        \includegraphics[width=\figwidth]{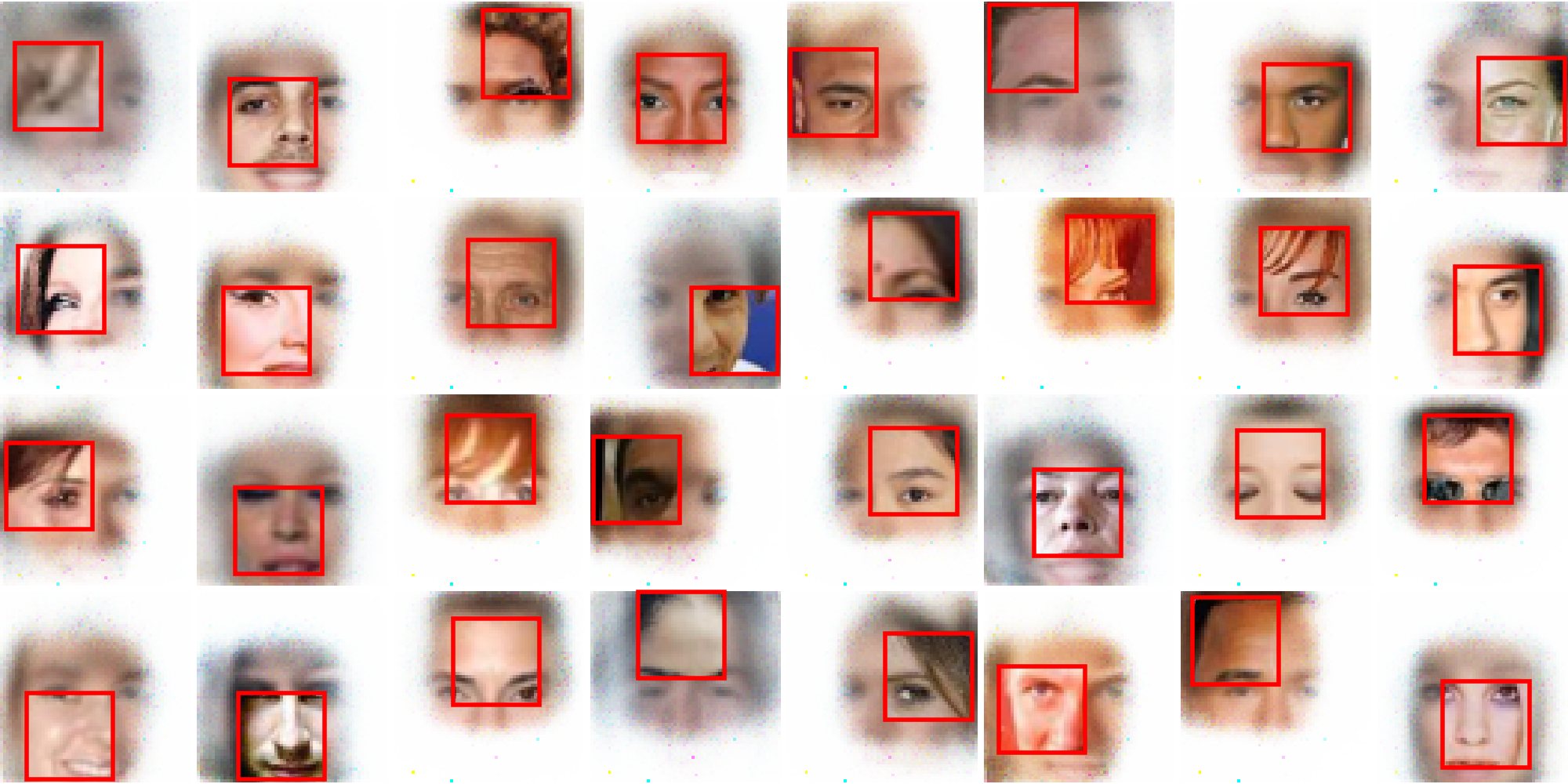}
      };
      \node[inner sep=0pt,right of=12,node distance=\figgap,
            label=below:{\small 61$\times$61 (10\% missing)}] (13) {
        \includegraphics[width=\figwidth]{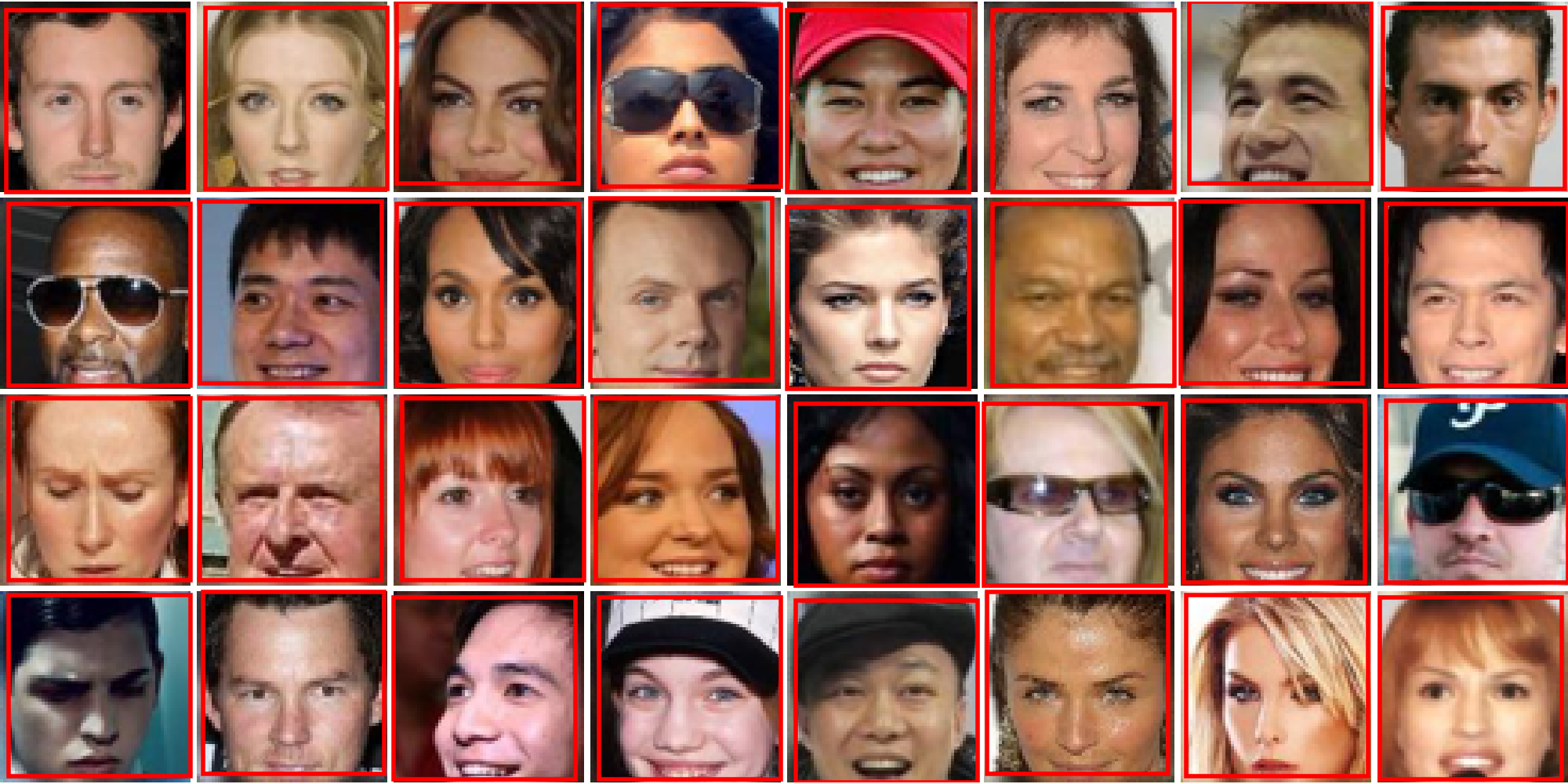}
      };
    \end{tikzpicture}
    \vspace*{-.5em}
    \caption{CelebA with square observation missingness}
  \end{subfigure}

  \vspace{1em}
  \begin{subfigure}[b]{\textwidth}
    \centering
    \begin{tikzpicture}
      \node[inner sep=0pt,
            label=below:{\small 90\% missing}] (21) {
        \includegraphics[width=\figwidth]{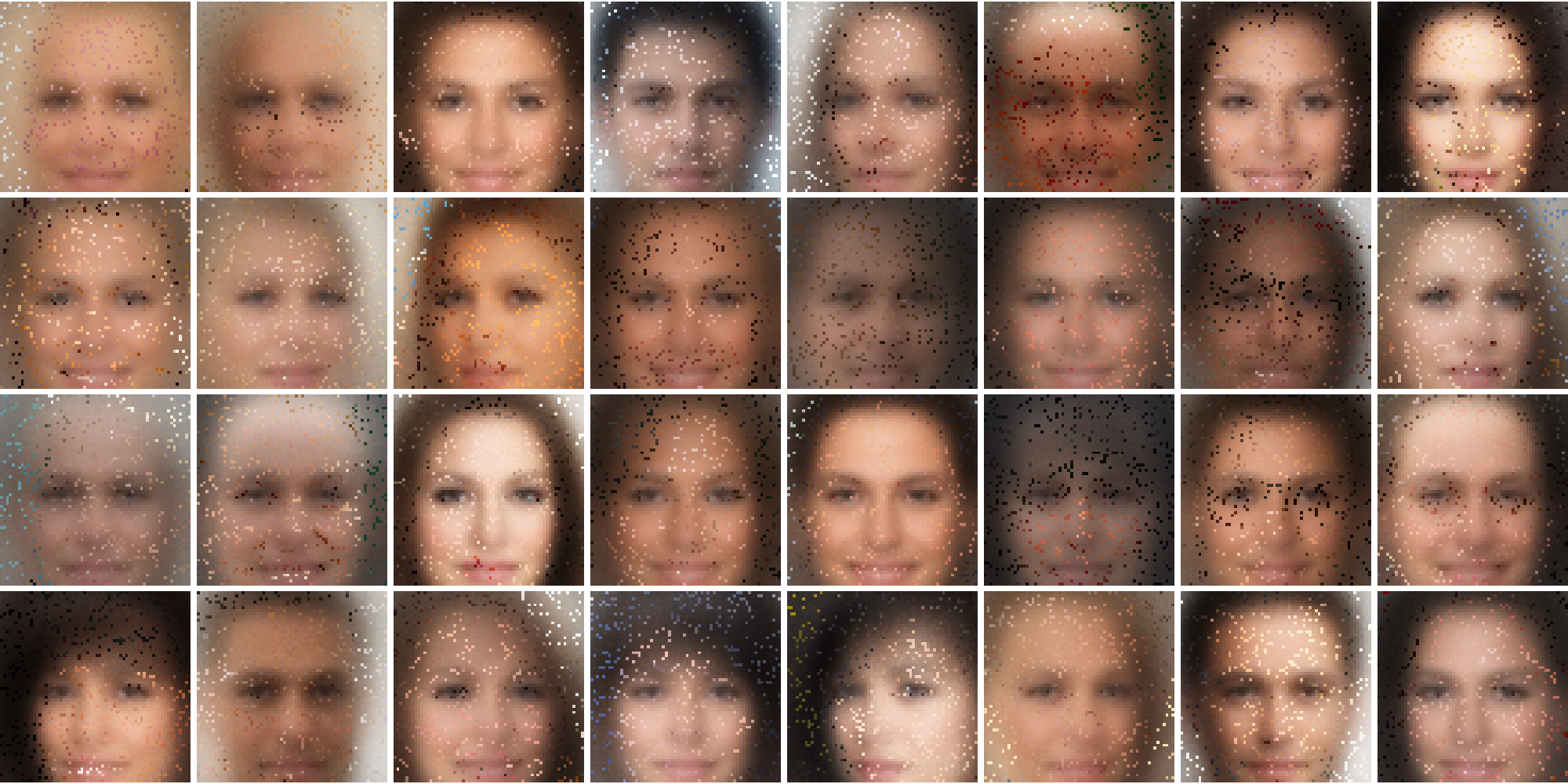}
      };
      \node[inner sep=0pt,right of=21,node distance=\figgap,
            label=below:{\small 80\% missing}] (22) {
        \includegraphics[width=\figwidth]{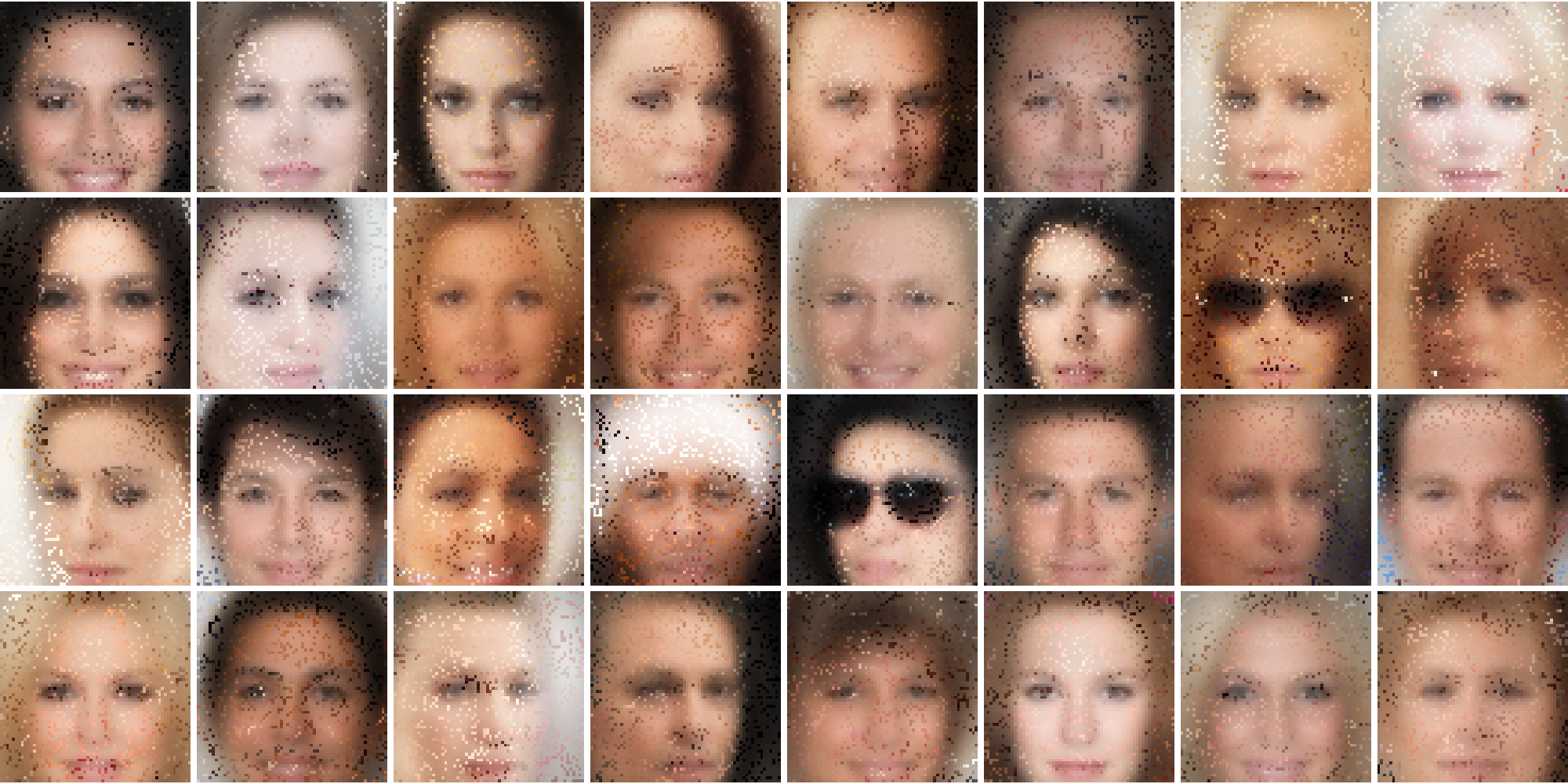}
      };
      \node[inner sep=0pt,right of=22,node distance=\figgap,
            label=below:{\small 10\% missing}] (23) {
        \includegraphics[width=\figwidth]{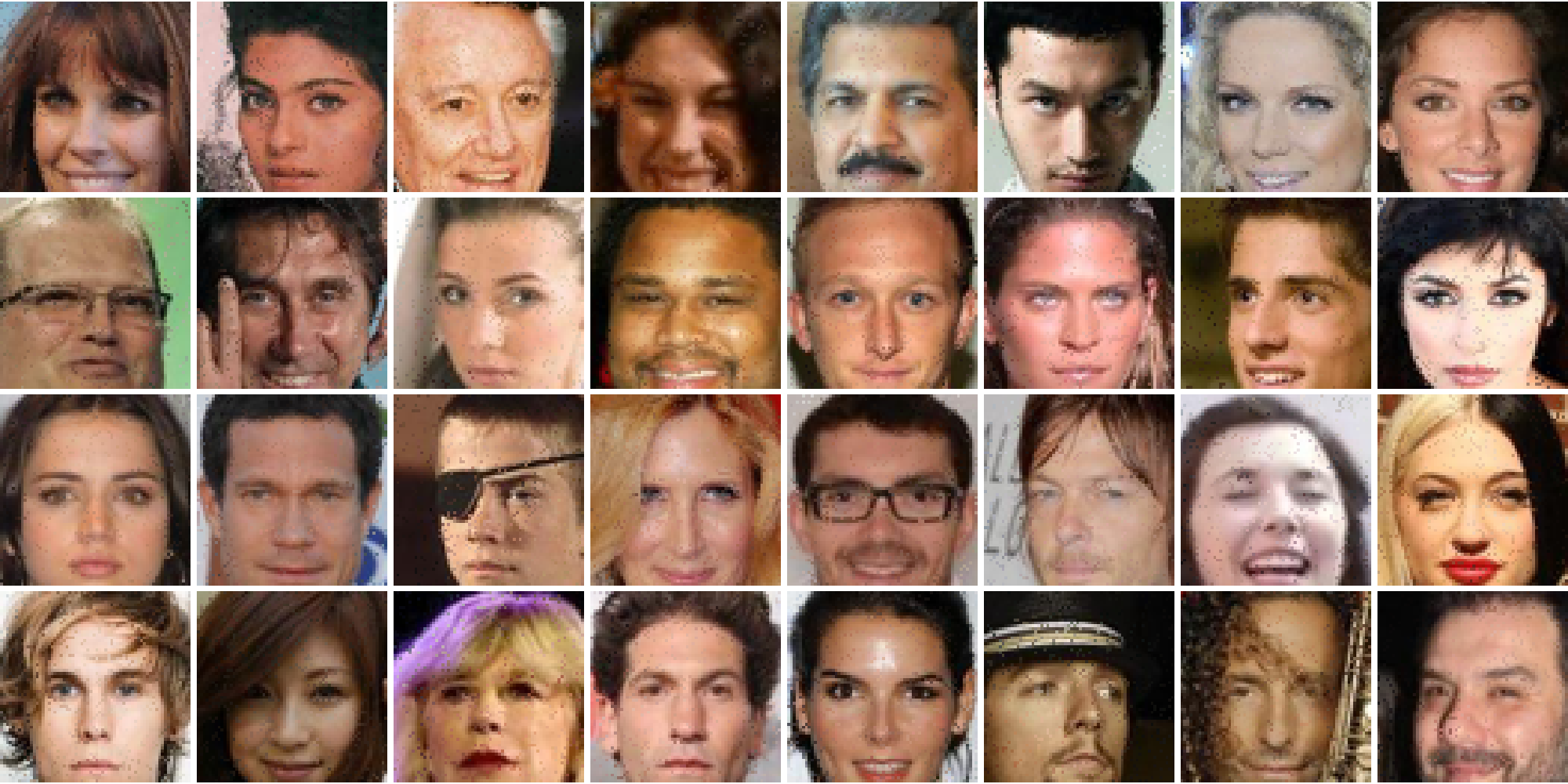}
      };
    \end{tikzpicture}
    \vspace*{-.5em}
    \caption{CelebA with independent dropout missingness}
  \end{subfigure}

  \caption{GAIN imputation}
\label{fig:gainresults}
\end{figure}

%

\end{document}